\documentclass[twoside]{article}

\usepackage[accepted]{aistats2026}

\usepackage[utf8]{inputenc}
\usepackage[T1]{fontenc}
\usepackage[english]{babel}
\usepackage{amsmath,amssymb,epsfig,amsthm}
\usepackage{graphicx}
\usepackage{comment}
\usepackage{array}
\usepackage{algorithm,algpseudocode}
\usepackage{enumitem}
\usepackage{algpseudocode}
\usepackage{xurl}
\usepackage{xargs}     
\usepackage{subcaption}
\usepackage{mwe}
\usepackage[normalem]{ulem}
\usepackage[pdftex,dvipsnames]{xcolor}  %
\usepackage[colorinlistoftodos,prependcaption,textsize=tiny]{todonotes}
\usepackage{appendix}
\usepackage{hyperref}
\usepackage{textalpha}
\usepackage{mathrsfs}

\usepackage[round]{natbib}
\usepackage[margin=1in]{geometry} 
\usepackage[round]{natbib}

\newcommandx{\unsure}[2][1=]{\todo[linecolor=red,backgroundcolor=red!25,bordercolor=red,#1]{#2}}
\newcommandx{\change}[2][1=]{\todo[linecolor=blue,backgroundcolor=blue!25,bordercolor=blue,#1]{#2}}
\newcommandx{\info}[2][1=]{\todo[linecolor=OliveGreen,backgroundcolor=OliveGreen!25,bordercolor=OliveGreen,#1]{#2}}
\newcommandx{\improvement}[2][1=]{\todo[linecolor=Plum,backgroundcolor=Plum!25,bordercolor=Plum,#1]{#2}}
\newcommandx{\thiswillnotshow}[2][1=]{\todo[disable,#1]{#2}}

\allowdisplaybreaks[1]

\usepackage{tikz}
\usepackage{randomwalk}

\allowdisplaybreaks[1]

\newtheorem{theorem}{Theorem}

\newtheorem{corollary}{Corollary}

\newtheorem{definition}{Definition}

\newtheorem{lemma}{Lemma}

\newtheorem{remark}{Remark}

\numberwithin{equation}{section}

\newcommand{\calC}{\ensuremath{\mathcal{C}}}

\newcommand{\calR}{\ensuremath{\mathcal{R}}}
\newcommand{\calS}{\ensuremath{\mathcal{S}}}

\newcommand{\calN}{\ensuremath{\mathcal{N}}}

\newcommand{\norm}[1]{\left\|{#1}\right\|}
\newcommand{\abs}[1]{\left|{#1}\right|}

\newcommand{\expec}{\ensuremath{\mathbb{E}}}

\newcommand{\prob}{\ensuremath{\mathbb{P}}}

\newcommand{\E}{\ensuremath{\mathbf{E}}}
\DeclareMathOperator{\sgn}{sgn}

\definecolor{asparagus}{rgb}{0.53, 0.66, 0.42}

\newcommand{\indic}{\ensuremath{\mathbf{1}}}

\newcommand{\R}{\ensuremath{\mathbb{R}}}
\newcommand{\mS}{\ensuremath{\mathbb{S}}}
\newcommand{\wsig}{w_{\text{sig}}}

\newcommand{\wopp}{w_{\text{opp}}}
\newcommand{\xii}{\xi_{\setminus i}}
\newcommand{\ix}{x_{\setminus i}}
\newcommand{\wpe}{w_{\perp}}

\newcommand{\nlrz}[1]{\nabla_{#1} L_{0}}

\newcommand{\ncl}{\nabla^{\text{cl}}}

\newcommand{\batch}{V}

\newcommand{\init}{0}

\newcommand{\cz}{c_{\zeta}}
\newcommand{\ct}{C_{\theta}}

\usepackage{tikz}

\allowdisplaybreaks

\begin{document}

\twocolumn[

\aistatstitle{Neuron Block Dynamics for XOR Classification with Zero-Margin}

\aistatsauthor{ Guillaume Braun  \And Masaaki Imaizumi }

\aistatsaddress{ RIKEN AIP \And  RIKEN AIP  \\ University of Tokyo } ] 

\addtocontents{toc}{\protect\setcounter{tocdepth}{-10}}

\begin{abstract}
  The ability of neural networks to learn useful features through stochastic gradient descent (SGD) is a cornerstone of their success. Most theoretical analyses focus on regression or on classification tasks with a positive margin, where worst-case gradient bounds suffice. In contrast, we study zero-margin nonlinear classification by analyzing the Gaussian XOR problem, where inputs are Gaussian and the XOR decision boundary determines labels. In this setting, a non-negligible fraction of data lies arbitrarily close to the boundary, breaking standard margin-based arguments. Building on Glasgow’s (2024) analysis, we extend the study of training dynamics from discrete to Gaussian inputs and develop a framework for the dynamics of neuron blocks. We show that neurons cluster into four directions and that block-level signals evolve coherently, a phenomenon essential in the Gaussian setting where individual neuron signals vary significantly. Leveraging this block perspective, we analyze generalization without relying on margin assumptions, adopting an average-case view that distinguishes regions of reliable prediction from regions of persistent error. Numerical experiments confirm the predicted two-phase block dynamics and demonstrate their robustness beyond the Gaussian setting.
\end{abstract}

\section{Introduction}\label{sec:intro}
In deep learning theory, feature learning has emerged as a central theme. It refers to the ability of neural networks to uncover latent low-dimensional structure from high-dimensional data, often yielding significantly better sample complexity than kernel methods. A line of recent work demonstrates this advantage in regression under the multi-index model, where labels depend on a nonlinear function of a low-dimensional projection of the inputs \citep{AbbeAM23,bietti2022learning,mousavi-hosseini2023neural,bietti2023learning,barak2022hidden,oko24a,dandi_reuse}. We refer to \cite{bruna2025surveyalgorithmsmultiindexmodels} for a recent survey.

While most work on feature learning has centered on regression, there has also been progress in \textit{classification}. Existing analyses fall into two main strands. The first is the literature on \emph{benign overfitting} \citep{osti_10344164, xu2023benign}, which shows that over-parameterized neural networks trained by stochastic gradient descent (SGD) can generalize even when they interpolate the data. The second strand investigates \textit{sparse parity learning} with Boolean inputs \citep{glasgow2023sgd,abbe2025learning}, a discrete setting where classes enjoy a positive separation margin. Both lines of work highlight that neural networks achieve feature learning in classification under favorable structural conditions such as linear separability or positive margin.

A major open challenge is to understand neural networks in \textit{nonlinear classification without positive margins}. Such situations naturally occur in real data---for instance, Figure~\ref{fig:margin_vs_zero_margin}(a) illustrates that digits 4 and 9 in MNIST \citep{mnist_dataset} overlap heavily, leaving no clear separation between classes. Yet despite its prevalence, the zero-margin regime remains poorly understood, since the lack of separation hinders alignment with discriminative features.

To investigate this setting in a controlled way, we study the \textit{canonical XOR classification problem with Gaussian inputs}, where labels are generated by
\[
f^*(x) = -\mathrm{sgn}(x_1 x_2), \qquad x=(x_1,x_2,\dots)\in \R^d,
\]
see Figure~\ref{fig:margin_vs_zero_margin}(b). This minimal toy problem captures the essential difficulties: the decision boundary is nonlinear, feature learning is required, and a non-negligible fraction of samples lie arbitrarily close to the boundary. Unlike the Boolean case, where inputs are discrete and bounded, Gaussian inputs are continuous and unbounded, so data both concentrate near the boundary and can take arbitrarily large values. Prior work by \citet{glasgow2023sgd} showed that vanilla SGD can solve the Boolean XOR problem, but their analysis relies critically on the presence of a positive margin. In the Gaussian case these arguments break down, motivating our \emph{block-dynamic viewpoint}, which departs from per-neuron margin arguments and explains how neural networks can still learn in the zero-margin regime.

\begin{figure}[t]
    \centering
\begin{subfigure}{0.47\linewidth}
    \includegraphics[width=\textwidth]{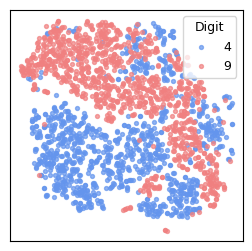}
    \caption{MNIST: 4 vs 9.}
\end{subfigure}%
\begin{subfigure}{0.47\linewidth}
    \includegraphics[width=\textwidth]{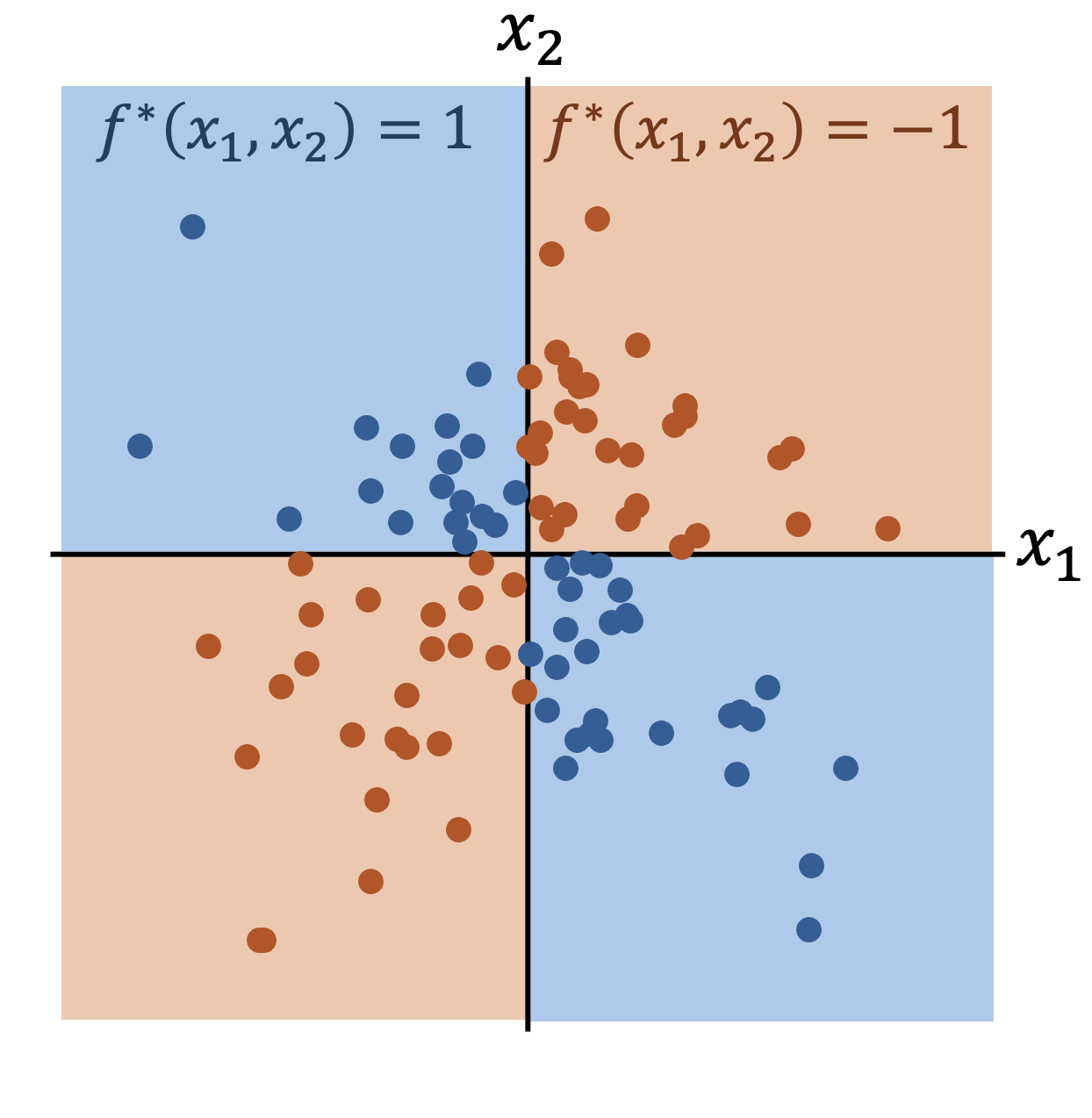}
    \caption{Our Setup.}
\end{subfigure}
\caption{(a) t-SNE plot of MNIST digits 4 and 9, which overlap substantially. (b) Gaussian XOR setup: $f^*(x) = -\mathrm{sgn}(x_1 x_2)$ with zero margin, since the Gaussian distribution crosses the coordinate axes. \label{fig:margin_vs_zero_margin}}

\end{figure}

Building on the work of \citet{glasgow2023sgd}, we develop a \textit{block-dynamic framework} for analyzing SGD in the \emph{zero-margin} regime, and prove that a two-layer neural network trained by vanilla SGD can learn the XOR function from Gaussian inputs.  
Our main contributions are:  

\begin{itemize}[leftmargin=*]
\item \textbf{Block dynamics.} We show that neurons rapidly self-organize into four coherent blocks aligned with the signal directions. A mean-field–style argument establishes that block masses remain comparable, enabling a low-dimensional block-level description of the network.
\item \textbf{Average-case analysis.} In the absence of a margin, many samples lie arbitrarily close to the decision boundary, invalidating worst-case arguments. We introduce an \emph{average margin} statistic and prove that it governs the accuracy of the classifier.
\item \textbf{Experiments.} Simulations confirm block formation, validate the average-case dynamics, and illustrate the importance of nonlinear separation in the zero-margin setting.
\end{itemize}

\subsection{Related Work}

\paragraph{Feature learning.} 
Two-layer neural networks have been shown to adapt to low-dimensional latent structure in regression and multi-index models \cite{AbbeAM23,bietti2022learning,mousavi-hosseini2023neural,bietti2023learning,barak2022hidden,oko24a,dandi_reuse,bruna2025surveyalgorithmsmultiindexmodels}, often achieving significantly lower sample complexity than kernel methods. In contrast, kernel perspectives such as the Neural Tangent Kernel (NTK) \cite{jacotNTK,Chizat2018OnLT} do not capture adaptation to latent structure \cite{montanari19}, while mean-field analyses \cite{Mei2019MeanfieldTO} typically require very wide networks and long training horizons \cite{suzuki2023feature}, leading to suboptimal sample complexity \cite{mahankali2023ntk}. Beyond these approximation regimes, most theoretical works analyze layer-wise training, where the first layer is optimized to recover signal directions before fitting the link function. However, such procedures are not faithful to practice and may even fail to generalize \cite{bietti2023learning}. This motivates the study of vanilla SGD with simultaneous training, as in \citet{glasgow2023sgd}, who analyzed XOR with Boolean inputs, and \citet{Berthier2024-pf}, who studied the single-index model in the mean-field regime.

\paragraph{Classification with neural networks.} 
The dynamics of neural networks in classification have been extensively studied under linearly separable models with a positive margin \citep{Wei2018RegularizationMG,brutzkus2018sgd,telgarsky20,lyu2021gradient,telgarsky2023feature}, often linked to the theory of benign overfitting \cite{osti_10344164,kornowski2023from,zhu24,pmlr-v235-wang24cn}. Benign overfitting has since been established across diverse architectures, including transformers \citep{benignViT24}, graph convolutional networks \citep{huang2025quantifying}, and convolutional networks \cite{pmlr-v202-kou23a,benignCNN}. Beyond linear separability, \citet{xu2023benign} analyzed an XOR Gaussian mixture model, but required that cluster separation scales with dimension. Other works motivated by feature learning, such as \citet{shi2022a}, investigated dictionary learning with discrete latent structure, while \citet{shi2023provable} provided guarantees in more general settings without explicit margin assumptions. However, these analyses assume bounded inputs and layer-wise training, in contrast to our focus on simultaneous training with unbounded Gaussian inputs.

\paragraph{XOR problem and parity learning.} Parity functions of the form $\prod_{i\in S}x_i$, where $x_i\in \lbrace -1,1\rbrace $, represent a class of functions that are generally challenging for gradient-based algorithms to learn without specific assumptions on the input distributions \cite{shalev17}. However, under additional assumptions on the input data, a two-layer neural network can effectively learn these functions, as demonstrated by \cite{barak2022hidden, AbbeAM23,kou2024matching,glasgow2023sgd,abbe2025learning}.

\subsection{Notations} 
We use the notation $a_n \lesssim b_n$ (or $a_n \gtrsim b_n$) for sequences $(a_n)_{n \geq 1}$ and $(b_n)_{n \geq 1}$ if there exists a constant $C > 0$ such that $a_n \leq C b_n$ (or $a_n \geq C b_n$) for all $n$. If the inequalities hold only for sufficiently large $n$, we write $a_n = O(b_n)$ (or $a_n = \Omega(b_n)$). We use $\norm{\cdot}$ and $\langle \cdot, \cdot \rangle$ to denote the Euclidean norm and scalar product, respectively.
The sign function is denoted by $\sgn(\cdot)$. The $(d-1)$-dimensional sphere of radius $\theta$ is denoted by $\mS^{d-1}(\theta)$.The canonical basis of $\R^d$ is denoted by $e_1, \ldots, e_d$.

\section{Statistical Framework}\label{sec:framework}

\paragraph{Data generation.} We model each observation  $(x,y)\in \R^d\times \lbrace \pm 1\rbrace$ as independently generated from the following process: the input $x$ is drawn from an isotropic Gaussian distribution,
\begin{align}
    x  = (x_1,x_2,...,x_d)^\top \sim \calN(0,I_d), \notag
\end{align}
and the label $y$, given $x$, is determined by the XOR function $f^*$
\begin{align}
    y= f^*(x) = -\sgn(x_1x_2). \notag
\end{align}
For simplicity, we assume that the function \( f^*(\cdot) \) depends only on the first two components of \( x \), specifically \( x_1 \) and \( x_2 \), which correspond to the canonical basis vectors \( e_1 \) and \( e_2 \). This assumption can be relaxed, as the algorithm in our analysis is rotationally invariant.

\paragraph{Model.}We utilize a two-layer neural network with the ReLU activation function, denoted by $\sigma(\cdot)$, to learn the XOR function $f^*(\cdot)$. This network consists of $m$ neurons, each characterized by a pair of weights $(w,a)$ where $w\in \R^d$ and $a\in \R$. Let $\calN$ represent the set of these $m$ neurons. The function represented by the neural network is defined as:
\[ f_\rho(x) := \frac{1}{m}\sum_{(w,a) \in \calN}a\sigma(w^\top x)= \expec_\rho [a\sigma(w^\top x)],\] where $\rho $ is the empirical distribution of the neurons in $\calN$.

\paragraph{Initialization.} To initialize each neuron $(w,a) \in \calN$, we sample initial weights $(w^{(0)},a^{(0)})$ as follows. 
The weights of the first layer are initialized uniformly on the sphere of radius $\theta$, which will be specified later. Specifically, we set $w^{(0)} \sim \text{Unif}(\mS^{d-1}(\theta))$ for some scale parameter $\theta>0$. The weights of the second layer are initialized as $a^{(0)}=\theta \varepsilon$ where $\varepsilon$ is an i.i.d. Rademacher variable, i.e. $\mathbb{P}(\varepsilon = 1) = \mathbb{P}(\varepsilon = -1) = 1/2$. 

\paragraph{Network training.} To train the network $f_\rho(\cdot)$, we employ Stochastic Gradient Descent (SGD) over $T$ iterations and a learning rate $\eta > 0$.
Each iteration $t$ uses an independent batch of size $\batch$, denoted by $M_t = \{(t-1)\batch +1, (t-1)\batch + 2,...., t \batch \}$.
The empirical risk for a batch $M \subset \mathbb{N}$ is defined as \[ \hat{L}_{\rho}= \frac{1}{\batch}\sum_{j\in M_t}\ell_{\rho}(x^{(j)}),\]
where $x^{(j)}$ is an independent sample from $ \calN(0,I_d)$ and $\ell_{\rho}(x)$ is the logistic loss: \[ \ell_{\rho}(x)=2\log \left( 1+\exp(-yf_{\rho}(x))\right).\] 
The population loss is $L_{\rho}=\expec_{x} [\ell_{\rho}(x)]$, and the  derivative $\ell'_{\rho}(x)$  of the logistic loss with respect to $f_\rho(x)$ is: \[\ell'_{\rho}(x) = -\frac{2y\exp(-yf_\rho(x))}{1+\exp(-yf_\rho(x))}.\]
The gradient updates of a neuron $(w,a) \in \calN$ are calculated as follows\footnote{Since the ReLU function is not differentiable at zero, we define $\sigma'(0)=0$ for convenience.  }
\begin{align*}
    \nabla_w \hat{L}_{\rho} &= \frac{1}{\batch}\sum_{j\in M_t}\ell'_{\rho}(x^{(j)})a\sigma'(w^\top x^{(j)})x^{(j)}\\ \nabla_a \hat{L}_{\rho} &= \frac{1}{\batch}\sum_{j\in M_t}\ell'_{\rho}(x^{(j)})\sigma(w^\top x^{(j)}).
\end{align*}
At each step, the weights are updated as follows
\[
\begin{cases}
    w^{(t+1)}&=w^{(t)}-\eta \nabla_{w^{(t)}} \hat{L}_{\rho^{(t)}}(w^{(t)}) \\
    a^{(t+1)}&= a^{(t)}-\eta  \nabla_{a^{(t)}} \hat{L}_{\rho^{(t)}}(a^{(t)})
\end{cases},
\]
starting from $(w^{(0)},a^{(0)})$.
After $t$ update steps, the set of neurons is $\calN^{(t)} := \{(w^{(t)},a^{(t)})\}$, and $\rho^{(t)} $ represents the empirical distribution of these neurons. 
For simplicity, we omit the iteration index $t$ when the context makes it clear. Since each iteration uses an independent batch, there are no stochastic dependencies between weights and inputs.

\section{Main Result}\label{sec:result}  

We show that a two-layer neural network, with both layers trained simultaneously by vanilla SGD, can learn the XOR function from Gaussian inputs in the zero-margin regime, where many samples lie arbitrarily close to the decision boundary.

\begin{theorem}\label{thm:main}
Let $\theta=(\log d)^{-C}$ for a sufficiently large constant $C>0$.  
Consider the network of Section~\ref{sec:framework}, trained with mini-batches
of size $\batch\geq d/\theta$, step size $\eta\asymp \theta$, and width
polynomial in $d$. Then, with high probability, there exists a stopping time $T\asymp(\log d)^{C+1}$ such that the expected loss on a fresh input satisfies
\[
    \expec_x \bigl[\ell_{\rho^{(T)}}(x)\bigr]
    \;=\; O\!\left(\tfrac{1}{\sqrt{\log \log d}}\right).
\]
\end{theorem}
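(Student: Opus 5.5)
The plan follows the two-phase structure of \citet{glasgow2023sgd}, with per-neuron margin arguments replaced by block-level quantities. We decompose each neuron as $w = w_{\mathrm{sig}} + w_\perp$, where $w_{\mathrm{sig}}$ is the projection onto $\mathrm{span}(e_1,e_2)$. The four signal directions are $\pm e_1 \pm e_2$, and the label is $-\sgn(x_1x_2)$. Neurons whose $w_{\mathrm{sig}}$ points near $\pm(e_1+e_2)$ should acquire $a<0$; those near $\pm(e_1-e_2)$ should acquire $a>0$.

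\textbf{Phase 1: signal growth and alignment, $t\lesssim \log d/\eta$.} While $f_\rho$ is small ($\theta=(\log d)^{-C}$ makes $|f_\rho|\lesssim\theta^2$), we linearize $\ell'_\rho(x)\approx -y$. The population gradient for a neuron is then
\[
-\,a\,\expec\bigl[y\,\sigma'(w^\top x)\,x\bigr],
\]
which depends only on $w_{\mathrm{sig}}$ and $\|w\|$, by Gaussian rotation invariance in the orthogonal complement. Computing this in polar coordinates in the $(x_1,x_2)$ plane gives the following. The component of $w_{\mathrm{sig}}$ along the "correct" diagonal for $\sgn(a)$ grows at rate $\propto |a|$. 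The $a$-update gives $\tfrac{d}{dt}|a|\propto \|w_{\mathrm{sig}}\|$, so $|a|\approx\|w\|$ is preserved. Meanwhile $w_\perp$ receives only $O(\sqrt{d/\batch})$ noise per step, which stays negligible because $\batch\ge d/\theta$.

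Concentration over the batch uses Gaussian tails in place of boundedness, truncating at $\|x\|\lesssim\sqrt{d\log d}$. Combined with a union bound over neurons (width $\mathrm{poly}(d)$), this shows that after $O(\log d/\eta)$ steps a constant fraction of neurons have $\|w_{\mathrm{sig}}\|/\|w\|\ge 1-o(1)$ and lie within small angle of one of the four diagonals. These form the four blocks. Initial symmetry together with a mean-field concentration argument over the i.i.d.\ initialization shows that the four block masses, meaning the sums of $|a|\,\|w\|$ over each block, are comparable up to $1\pm o(1)$.

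\textbf{Phase 2: block dynamics and loss decrease.} Once $f_\rho$ reaches order $\Theta(1)$ on typical inputs, linearization fails. We therefore track block aggregates $S_k=\expec_\rho\bigl[a\,\|w\|\,\indic\{w\in\text{block }k\}\bigr]$, so that
\[
f_\rho(x)\approx\sum_k S_k\,\sigma(u_k^\top x)
\]
plus errors from angular spread and non-block neurons. The symmetry of the four blocks, maintained inductively, makes the $S_k$ evolve by a common ODE driven by $\expec\bigl[-\ell'_\rho(x)\,y\,\sigma(u_k^\top x)\bigr]>0$, and this keeps blocks coherent even though individual neurons are heterogeneous. Growth of $S_k$ continues until the loss is small, and the stopping time $T\asymp(\log d)^{C+1}$ corresponds to $S_k$ reaching order $\log\log d$.

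The \textbf{main obstacle} is the final loss bound without margin. Inputs with $\min(|x_1|,|x_2|)\le\delta$ lie near the boundary, where $yf_\rho(x)$ may be small or even negative; this set has probability $\asymp\delta$. Following the paper's average-margin idea, we define the average margin $\expec_x[yf_\rho(x)\wedge 1]$ rather than a worst-case margin, and split $\expec_x\ell_\rho$ into the near-boundary region and its complement. On the near-boundary region we bound the contribution by $O(\delta)$ times a bound on $|f_\rho|$ there, using the fact that $yf_\rho\gtrsim -S\delta$ by Lipschitzness. On the complement, $yf_\rho(x)\gtrsim S\,\min(|x_1|,|x_2|)\ge S\delta$, which gives loss $\lesssim e^{-S\delta}$. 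A Gaussian tail bound handles large $\|x\|$.

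We must also control how error terms (angular misalignment $\epsilon$, residual $w_\perp$) accumulate over the $(\log d)^{C+1}$ steps while $S$ grows. This forces $S$ to be at most about $\log\log d$ and $\delta$ to be about $S^{-1/2}$. The balance $\delta + e^{-S\delta}$ then yields $O(1/\sqrt{\log\log d})$. I would first try a Gronwall bound on the misalignment, using that the block ODE contracts the angles toward the diagonals, and would carry the error terms explicitly through the induction hypotheses.
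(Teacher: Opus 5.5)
Your overall architecture matches the paper's: a linearized Phase~I with symmetry plus concentration for block balance, a block-level Phase~II, and a near/far-boundary split. However, two steps do not close as written. \emph{Phase~II needs a rate, not just positivity.} Your driving term (the paper's average margin $g_\mu$) tends to $0$ as the block mass $S$ grows, so every error term in the block ODE must be $o(g)$, not merely $o(1)$. This includes:
\begin{itemize}
\item the imbalance error, of order $S\,U$ with $U$ the relative imbalance;
\item the residual $\mathbb{E}_\rho\|a w_\perp\|$;
\item the error from replacing $\ell'_\rho(x)$ by $\ell'_\rho(z)$ and $\sigma'(w^\top z)$ by its signal part;
\item SGD noise.
\end{itemize}
Otherwise the blocks need not follow a common ODE at all. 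The missing ingredient is a lower bound on $g$ that decays only polynomially, e.g.\ $g\gtrsim 1\wedge S^{-3}$ (Lemma~\ref{lem:laplace}, Lemma~\ref{lem:pop2_grad_main}). It holds because, with zero margin, a $\Theta(1/S)$ fraction of inputs keeps $|\ell'|\asymp 1$. This bound gives $S^3\gtrsim \eta t$, so $S$ reaches order $\log\log d$ within $O(\eta^{-1}(\log\log d)^4)$ steps. Over that window $g\gtrsim(\log\log d)^{-3}$ still dominates the polylogarithmically small errors. It also shows that balance ``up to $1\pm o(1)$'' is insufficient. You need $S\,U\ll g$, i.e.\ $U\ll S^{-4}$, which means polylog precision at the end of Phase~I (Lemma~\ref{lem:ineq_U}). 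That precision must then be propagated through Phase~II by a recursion of the form $U^{(t+1)}\le(1+O(\eta g))U^{(t)}+O(\eta\zeta^{0.25}g)$. Moreover, concentration over the i.i.d.\ initialization does not apply directly to the interacting SGD neurons. The paper couples them to an independent surrogate driven by $\nabla L_0$, whose block laws are exact rotations of each other (Lemma~\ref{lem:block_rot}), and then bounds the coupling error. That step requires the \emph{same} Phase~Ia rate $\tau$ for all neurons, because that phase lasts $\Theta(\eta^{-1}\log d)$ steps. Note that this rate is $\tau|a|/\|w_\perp\|\approx\tau$, not ``$\propto|a|$''; otherwise Phase~I would take $\log d/(\eta\theta)$ steps, contradicting $T\asymp(\log d)^{C+1}$.

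\emph{The final accounting is inconsistent.} Suppose you only know $yf_\rho\gtrsim -S\delta$ on the strip $\{\min(|x_1|,|x_2|)\le\delta\}$. Then the loss there is $\lesssim 1+S\delta$, and the strip contributes $\asymp\delta+S\delta^2$. For your choice $\delta\approx S^{-1/2}$ this is $\Theta(1)$; the balance $\delta+e^{-S\delta}$ you quote silently drops the $S\delta^2$ term. The paper avoids this by working with the balanced block model, for which $y f_{\rho,\mathrm{id}}(z)=\sqrt{2}\,N\min(|z_1|,|z_2|)\ge 0$ everywhere. The ideal loss is then at most $2\log 2$ on the strip and at most $2e^{-\sqrt{2} N\delta}$ off it. The real network is handled globally through the Lipschitz dependence of $\ell$ on $f$: $\mathbb{E}_x\ell_\rho(x)\le \mathbb{E}_z\ell_{\rho,\mathrm{id}}(z)+O\bigl(\mathbb{E}_\rho\|a w_\perp\|+N U\bigr)$. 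Choosing $\delta\asymp\log S/S$ instead would also rescue your version. Finally, the ``average margin'' $\mathbb{E}[yf_\rho\wedge 1]$ you introduce is never used. In the paper the average margin is the growth-rate quantity $g$ above, and that is where the zero-margin difficulty is actually resolved.
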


\begin{remark}[Sample complexity]
The required sample complexity of our algorithm is $O(d\,\mathrm{polylog}(d))$, 
which is near-optimal up to logarithmic factors, consistent with CSQ lower bounds~\cite{AbbeAM23}.
\end{remark}

 The convergence rate of $\sqrt{\log \log d}^{-1}$ 
provided by Theorem~\ref{thm:main}, is slower than in the Boolean-input setting. The following remarks clarify the source of this slowdown and the interpretation of our bound.

\begin{remark}[Comparison with Boolean XOR]
In the Boolean-input setting of \citet{glasgow2023sgd}, the presence of a positive margin enables faster convergence. In contrast, for Gaussian inputs the absence of a margin makes the problem intrinsically harder: the expected loss is dominated by points lying arbitrarily close to the decision boundary.
\end{remark}

Beyond the loss guarantee of Theorem~\ref{thm:main}, our analysis also yields a classification guarantee: points that are sufficiently far from the decision boundary are classified correctly with overwhelming
probability. Toward this end, let us introduce for all $\epsilon \in (0,1)$ the set \[  \calC_\epsilon =\lbrace (r, \theta )\in \R^+\times [0, 2\pi):r( \abs{\sin \theta }\wedge \abs{\cos \theta}) \leq \epsilon  \rbrace.\]
\begin{corollary}
\label{cor:classification} Let us fix a constant $\epsilon \in (0,1)$. Then, under the assumptions of Theorem~\ref{thm:main}, if $x=z+\xi$-- where $z$ denotes the projection of $x$ onto the $(e_1,e_2)$-plane -- is an input generated independently such that $z\notin \calC_\epsilon$, there are constants $c, C>1$ such that with probability at least $1 - \exp\!\bigl(-C (\log d)^c\bigr)$, the trained network $f_{\rho^{(T)}}$ will correctly classify $x$.
\end{corollary}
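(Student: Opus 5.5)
The plan is to reduce the classification statement to a lower bound on $y f_{\rho^{(T)}}(x)$ that holds uniformly over $z \notin \calC_\epsilon$, up to a small fluctuation controlled by the orthogonal part $\xi$. I would use the block description from the proof of Theorem~\ref{thm:main}. At the stopping time $T$, all but a negligible fraction of neurons lie in one of four blocks aligned with the directions $\pm e_1 \pm e_2$ (up to rotation conventions). These blocks have comparable masses, and each has second-layer sign matching the label on its quadrant. Each neuron decomposes as $w = w_{\text{sig}} + w_\perp$, where $w_{\text{sig}}$ lies in $\mathrm{span}(e_1,e_2)$, and the analysis gives $\norm{w_\perp} \ll \norm{w_{\text{sig}}}$ with $\norm{w_\perp}$ of order at most $\theta$ (polylog-small). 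I would define the idealized network $g(z) = \expec_\rho[a\,\sigma(w_{\text{sig}}^\top z)]$, which depends only on $z$, and split
\[
y f_{\rho^{(T)}}(x) = y\, g(z) + y\bigl(f_{\rho^{(T)}}(x) - g(z)\bigr).
\]

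\textbf{Step 1: a deterministic margin for $g$.} Using the block structure, each block contributes roughly $\pm A_k\,\sigma(\langle u_k, z\rangle)$, with $u_k$ close to $(\pm e_1 \pm e_2)/\sqrt2$ and block mass $A_k$ bounded below by a constant (from the mean-field comparability argument). I would then show $y\,g(z) \gtrsim r(\abs{\sin\theta}\wedge\abs{\cos\theta})$, possibly minus an alignment error that vanishes as $d \to \infty$. The reason is that in, say, the first quadrant with label $-1$, the active positive-direction block contributes a term proportional to $x_1+x_2$. The competing blocks along $e_1-e_2$ have opposite sign, so they either help or contribute at most $\abs{x_1-x_2}$. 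Combining these, the net contribution is of order $\min(\abs{x_1},\abs{x_2})$, which equals $r(\abs{\sin\theta}\wedge\abs{\cos\theta}) > \epsilon$ outside $\calC_\epsilon$. Comparable block masses are essential here: a large imbalance between blocks would shift the effective decision boundary by a constant and break the bound near the axes.

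\textbf{Step 2: controlling the noise term.} I would bound $\abs{f_{\rho^{(T)}}(x) - g(z)} \le \expec_\rho\bigl[\abs{a}\,\abs{w_\perp^\top \xi}\bigr]$, using that ReLU is $1$-Lipschitz. Since $\xi$ is an independent fresh Gaussian, $w_\perp^\top\xi \sim \calN(0,\norm{w_\perp}^2)$. Gaussian concentration for the Lipschitz function $\xi \mapsto f(z+\xi)$ then gives a deviation bound of order $\max_\rho \abs{a}\norm{w_\perp}\cdot t$ with probability $1-e^{-t^2/2}$. I would take $t = (\log d)^{c'}$ and use that $\abs{a}\norm{w_\perp} \lesssim (\log d)^{-C''}$ with $C'' > c'$, which holds by the choice $\theta = (\log d)^{-C}$ with $C$ large. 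The fluctuation is then $o(\epsilon)$ with probability $1 - \exp(-C(\log d)^{c})$, which yields the stated form.

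\textbf{Step 3: unbounded $z$.} Large $r$ helps, because the margin bound from Step 1 is homogeneous in $z$. The only issue is the alignment error, which is also linear in $\norm{z}$ and therefore dominated by the leading term for constant $\epsilon$. No truncation of $z$ is needed.

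The main obstacle is Step 1. It requires turning the average-case control from Theorem~\ref{thm:main} (loss bounds, block signals) into pointwise control of directions and block masses at time $T$. In particular, block-level mass comparability must hold with error small compared with $\epsilon$, and the within-block directional spread must be small. I would first try to extract from the proof of Theorem~\ref{thm:main} explicit bounds on the block-averaged signal vectors rather than on individual neurons, since $g$ is linear in the block aggregates on each quadrant. That makes pointwise neuron control unnecessary.
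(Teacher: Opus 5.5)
Your architecture matches the paper's. You obtain a deterministic margin for the planar part by comparing $f_{\rho^{(T)}}(z)$ with the balanced four-block oracle, for which $y f_{\rho^{(T)},\mathrm{id}}(z)=\sqrt2\,N^{(T)}\min(|z_1|,|z_2|)$. You then apply Gaussian concentration to the Lipschitz map $\xi\mapsto y f_{\rho^{(T)}}(z+\xi)$. The block-level control you flag as the main obstacle is exactly what the paper uses: $|f_{\rho^{(T)}}(z)-f_{\rho^{(T)},\mathrm{id}}(z)|\le N^{(T)}U^{(T)}\|z\|$, with $U^{(T)}$ polylogarithmically small. There is, however, a real error in your Step 3.

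\textbf{Step 3 fails.} Outside $\calC_\epsilon$ the margin is only $\gtrsim N^{(T)}\epsilon$, not $\gtrsim N^{(T)}\|z\|$. The set $\calC_\epsilon$ is a constant-width strip around the axes, not a cone, so $\min(|z_1|,|z_2|)/\|z\|$ can be arbitrarily small on its complement. The imbalance/alignment error, by contrast, is of order $N^{(T)}(U^{(T)}+\zeta')\|z\|$. Homogeneity therefore does not help, and uniformity genuinely fails whenever the blocks are not exactly balanced. For instance, give the $\mu_1$-blocks mass $N(1+u_1)$ and the $\mu_2$-blocks mass $N(1+u_2)$ with $u_1>u_2$. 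On $\{z_1>z_2>0\}$ one gets $\sqrt2\,y f(z)/N=(2+u_1+u_2)z_2-(u_1-u_2)z_1$. The learned boundary is thus tilted by an angle of order $u_1-u_2$. The point $z=(R,2\epsilon)\notin\calC_\epsilon$ then has $y f(z)<0$ once $R\gtrsim\epsilon/(u_1-u_2)$.

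There are two ways to repair this:
\begin{itemize}
\item Fix $z$ and take $d$ large enough that $N^{(T)}U^{(T)}\|z\|=o(N^{(T)}\epsilon)$. This is what the paper does in its Step 1.
\item If the probability is also over $z$, restrict to $\|z\|\le\epsilon(\log d)^{c_0}$, with $c_0$ below the exponents of $U^{(T)}$ and $\zeta'$. Add $\prob(\|z\|>\epsilon(\log d)^{c_0})=\exp(-\epsilon^2(\log d)^{2c_0}/2)$ to the failure probability. Conditioning on $z\notin\calC_\epsilon$ costs only a constant factor.
\end{itemize}

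\textbf{A smaller point in Step 2.} The Lipschitz constant of $\xi\mapsto f_{\rho^{(T)}}(z+\xi)$ is the average $L=\expec_{\rho^{(T)}}|a|\,\|w_\perp\|$, and that is the quantity you should bound, not $\max|a|\,\|w_\perp\|$. The analysis controls $\|w_\perp\|$ neuron by neuron only for heavy neurons, and individual $|a|$ need not be small at time $T$. So your per-neuron bound via $\theta$ is not justified. Signal-heaviness instead gives $L\le\zeta' N^{(T)}$:
\begin{itemize}
\item heavy neurons contribute through $\|w_\perp\|\le\zeta'\|w_{\mathrm{sig}}\|$;
\item the remaining neurons contribute through $\expec_\rho\mathbf{1}_{\{(w,a)\notin\mathcal S\}}\|w\|^2\le\zeta' N^{(T)}$ together with $|a|\le\|w\|$.
\end{itemize}
The ratio of margin to $L$ is then of order $\epsilon/\zeta'$. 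This yields the $\exp(-C(\log d)^c)$ tail, as in the paper.
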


This behavior is also clearly visible in Figure~\ref{fig:xp1}, where misclassifications occur only for points lying extremely close to the decision boundary.

A key novelty of our analysis is to track block-level dynamics rather than individual neurons. Neurons rapidly self-organize into four balanced blocks, whose growth is driven by an average margin statistic. These structural insights are essential for explaining how SGD succeeds in the zero-margin regime.

\section{Proof Outline}
Before outlining the proof strategy, we introduce the additional notation needed for the analysis. 

\paragraph{Additional notation.} 
For $w\in\R^d$, write $w=w_{1:2}+w_\perp$ with $w_{1:2}\in\mathrm{span}(e_1,e_2)$ and $w_\perp\perp\mathrm{span}(e_1,e_2)$.  
For $x\sim\calN(0,I_d)$, decompose $x=z+\xi$ with $z\sim\calN(0,I_2)$ and $\xi\sim\calN(0,I_{d-2})$ independent.  
Define the XOR directions $\mu_1=(1,-1)^\top/\sqrt{2}$ and $\mu_2=(1,1)^\top/\sqrt{2}$.  
Neurons $(w,a)$ will align with $\{\pm\mu_1,\pm\mu_2\}$ depending on the sign of $a$.  
Accordingly, we decompose $w_{1:2}$ into a \emph{signal} component $\wsig$ and an \emph{orthogonal} component $\wopp$:  
\[
\small
\begin{array}{ll}
\wsig =
\begin{cases}
\mu_1^\top w\, \mu_1 & a \ge 0,\\
\mu_2^\top w\, \mu_2 & a < 0,
\end{cases}
&
\wopp =
\begin{cases}
\mu_2^\top w\, \mu_2 & a \ge 0,\\
\mu_1^\top w\, \mu_1 & a < 0.
\end{cases}
\end{array}
\]

\paragraph{Roadmap.} Our analysis proceeds in two phases, followed by a synthesis step.  

\textbf{Phase I (individual dynamics).}
When the network outputs are small, we linearize the loss, approximating
$\ell_{\rho^{(t)}}(x)\approx \ell_0(x):=\log 2 - y\,f_{\rho^{(t)}}(x)$,
under which neurons evolve nearly independently.  
In the early stage (\emph{Phase~Ia}), all neurons grow at comparable rates.
Once the signal has amplified (\emph{Phase~Ib}), growth rates begin to diverge.
At this point, it is more natural to group neurons into four blocks aligned
with $\pm\mu_1,\pm\mu_2$, determined jointly by the sign of $a$ and the
initial correlation with these directions.
As illustrated in Figure~\ref{fig:phase1}, neurons align with these
cluster directions while approximately balanced across quadrants, a property we call \emph{pre-balanced block dynamics}.
This aggregated description will be crucial for Phase~II.

\begin{figure}[htbp]
    \centering
    \includegraphics[width=0.6\linewidth]{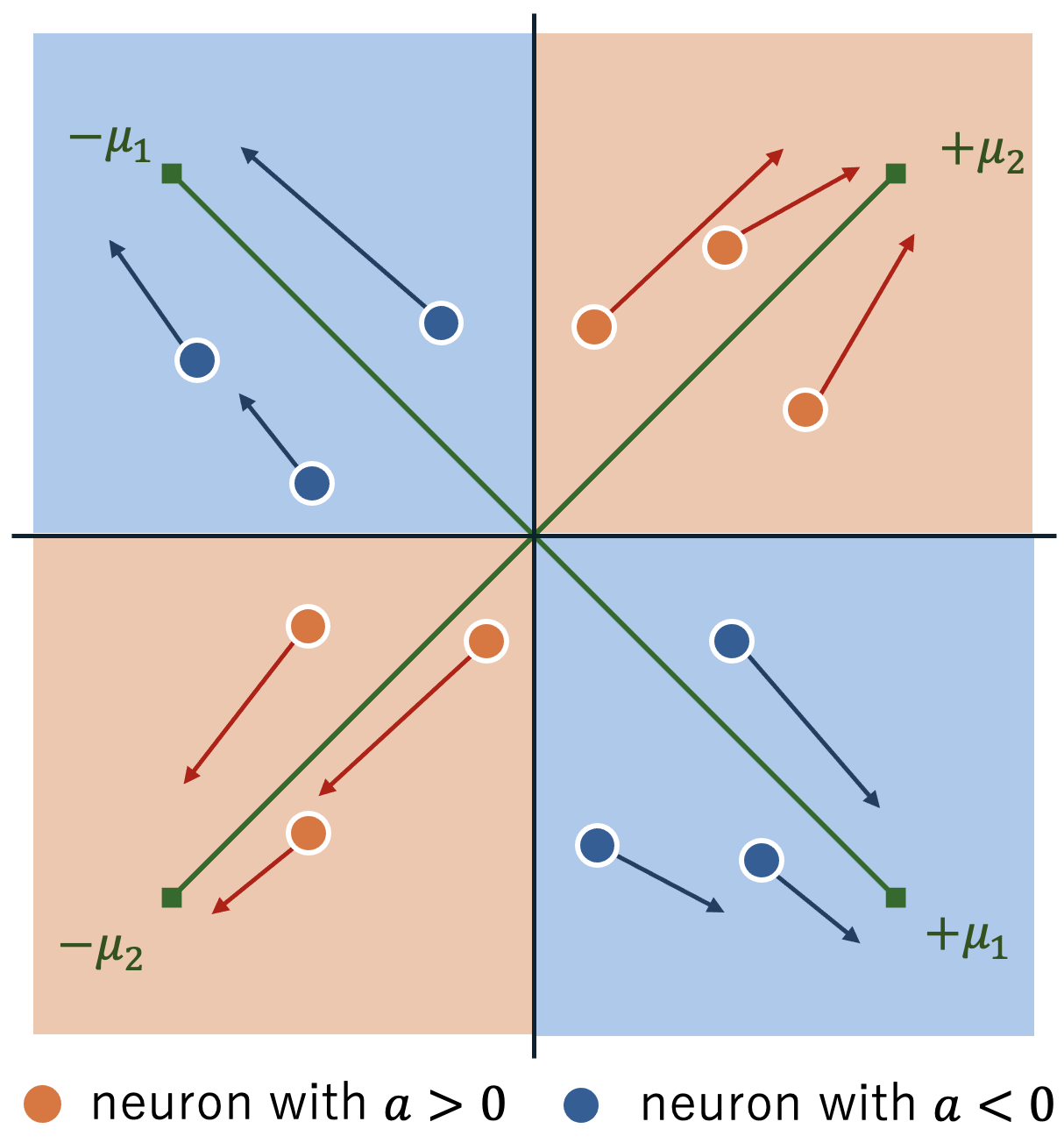}
    \caption{Neuron dynamics in Phase I. Circles show projections of weights onto 
    $\mathrm{span}(e_1,e_2)$; colors indicate the sign of $a$. Neurons align with 
    $\pm\mu_1,\pm\mu_2$ (green), and blocks remain balanced across quadrants.}
    \label{fig:phase1}
\end{figure}

\textbf{Phase II (block dynamics).}
As outputs grow, the linearized model ceases to be accurate, and neuron interactions become significant.
We analyze the network at the block level: block masses continue to grow
in a balanced way. For tractability, we use an oracle approximation
(replacing the full gradient $\ell'_{\rho}(x)$ with its two-dimensional projection $\ell'_{\rho}(z)$, and enforcing perfectly
balanced blocks), which lets us evaluate the average margin and derive
a multiplicative growth law for the aggregate block mass.

\textbf{Synthesis.} Finally, we leverage the characterization of the network obtained in
Phase~II to conclude the proof of Theorem~\ref{thm:main}. Figure~\ref{fig:phase_overview} summarizes the main steps of the proof and their relative importance in the network dynamics.

\subsection{Phase I: Early individual dynamics}\label{sec:analysis:phase1}
In this phase, the network output is small, and the loss
can be well approximated by its first-order expansion
\(
  \ell_{\rho^{(t)}}(x) \approx \ell_0(x) := \log 2 - y f_{\rho^{(t)}}(x).
\)
Under this approximation, neurons evolve independently, driven by the
population gradient.

At initialization we have w.h.p.\ $\|\wpe\|\approx\theta$, while
$\|\wsig\|,\|\wopp\|\lesssim\theta\log d/d$, hence
$\|\wsig\|\ll\|\wpe\|$. In this regime, the population gradient satisfies
\(
 -\nabla_{w_{1:2}}L_0 \approx 
   \tfrac{\sqrt{2}|a|}{\pi^{3/2}\|\wpe\|}(\wsig-\wopp),
\)
so the signal is reinforced while the orthogonal component is damped.
Controlling stochastic fluctuations by concentration, we obtain:

\begin{lemma}[Phase Ia dynamics]\label{lem:phaseIa}
For any neuron $(w,a)\in\calN$ with
$\|\wsig^{(0)}\|\gtrsim (d\log d)^{-1/2}$ and all $t\le T_a$,
\[
 \|\wsig^{(t+1)}\|
   =(1+\eta\tfrac{\sqrt{2}}{\pi^{3/2}}(1+o(1)))
     \|\wsig^{(t)}\|,
\]
while $\|\wpe^{(t)}\|\approx\theta$ and
\[
 \|\wopp^{(t)}\|\le (1+\eta\theta)\,
   \max\{\|\wopp^{(0)}\|,\,(d\log d)^{-1/2}\}.
\]
\end{lemma}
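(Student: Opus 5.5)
We will argue by induction on $t\le T_a$, keeping the hypothesis that $\|\wpe^{(t)}\|=\theta(1+o(1))$, $|a^{(t)}|=\theta(1+o(1))$ with $\sgn(a^{(t)})=\sgn(a^{(0)})$, and $\|w_{1:2}^{(t)}\|\ll \|\wpe^{(t)}\|$. We take $T_a$ to be the first time $\|\wsig\|$ reaches a small fraction of $\theta$, or the network output stops being $o(1)$. Under this hypothesis $|f_\rho(x)|\lesssim \theta^2\|x\|$, so $\ell'_\rho(x)=-y(1+O(\theta^2\|x\|))$. We therefore split each update into three terms: the linearized population gradient $\nabla L_0$, the error from replacing $\ell'_\rho$ by $\ell'_0=-y$, and the mini-batch fluctuation.

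\textbf{Population term.} We condition on $\xi$ and write $w^\top x=w_{1:2}^\top z+\wpe^\top\xi$. Here $g:=\wpe^\top\xi\sim\calN(0,\|\wpe\|^2)$ is independent of $z$. We compute
\[
\expec[y\,\sigma'(w^\top x)z]=\expec_z\!\left[y\,z\,\Phi\!\left(w_{1:2}^\top z/\|\wpe\|\right)\right].
\]
Since $\expec[yz]=0$, we Taylor expand $\Phi(s)=\tfrac12+\varphi(0)s+O(s^3)$ and get $\varphi(0)\|\wpe\|^{-1}\expec[yzz^\top]w_{1:2}$. For XOR, $\expec[yzz^\top]=-\tfrac{2}{\pi}(e_1e_2^\top+e_2e_1^\top)$, whose eigenvectors are $\mu_1$ (eigenvalue $+2/\pi$) and $\mu_2$ (eigenvalue $-2/\pi$). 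This yields $-\nabla_{w_{1:2}}L_0=\tfrac{\sqrt2|a|}{\pi^{3/2}\|\wpe\|}(\wsig-\wopp)+O(|a|\|w_{1:2}\|^3/\|\wpe\|^3)$, where the sign of $a$ selects which direction counts as signal. With $|a|/\|\wpe\|=1+o(1)$, the $\wsig$ component is multiplied by $1+\eta\tfrac{\sqrt2}{\pi^{3/2}}(1+o(1))$, and the $\wopp$ component contracts.

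For the remaining coordinates: the $\wpe$-gradient is $a\,\expec[y\,\sigma'(w^\top x)\xi]$, which is $O(|a|\|w_{1:2}\|^2/\theta^2)$ by symmetry of $y$ under $z\mapsto -z$ and the same expansion. The $a$-gradient $\expec[y\sigma(w^\top x)]$ is $O(\|w_{1:2}\|^2/\theta)$. Over $T_a=O(\log d/\eta)$ steps both are negligible, so $\|\wpe\|\approx\theta$ and $|a|\approx\theta$ persist. The nonlinearity error is bounded by $\expec[\theta^2\|x\|\,|a|\,\|x\|]\lesssim\theta^3 d$ in the full gradient. Restricted to the $(e_1,e_2)$ plane it is $\lesssim\theta^3\cdot O(1)$ after conditioning, because $\expec[\|z\|f]$ is controlled by the small output. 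This has to be compared with the signal size $\eta\|\wsig\|$. If $f_\rho$ is too large here, we would instead use that at initialization $f_\rho$ concentrates near zero with width $\theta^2/\sqrt m$.

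\textbf{Fluctuations (main obstacle).} The mini-batch noise in the $(e_1,e_2)$-projection of the gradient has per-step size $\eta|a|\sqrt{\log d/\batch}\lesssim\eta\theta\sqrt{\theta\log d/d}$, using $\batch\ge d/\theta$. This sits at the scale $(d\log d)^{-1/2}$ times $\eta\theta$ up to polylog factors, which explains the threshold in the statement. For neurons with $\|\wsig^{(0)}\|\gtrsim(d\log d)^{-1/2}$, we must show that the accumulated noise $\sum_s\prod(\cdots)$ stays $o(1)$ relative to $\|\wsig^{(t)}\|$. We will treat the noise as a martingale and apply Freedman or Azuma on the rescaled quantity $\|\wsig^{(t)}\|/(1+c\eta)^t$, with a union bound over neurons and steps that costs $\log(md)$ factors. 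The geometric growth makes later noise relatively smaller, so the sum is dominated by the first $O(1/\eta)$ steps.

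For $\wopp$, the drift is contractive, so the noise can only maintain it at its noise floor. This gives the bound $(1+\eta\theta)\max\{\|\wopp^{(0)}\|,(d\log d)^{-1/2}\}$, where the factor $1+\eta\theta$ absorbs one step of noise. Getting the constants so that the $o(1)$ in the growth rate is uniform is the delicate part. We will use a Gaussian tail truncation of $\|x\|$ at $\sqrt{d\log d}$ to make the summands bounded.
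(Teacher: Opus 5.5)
\textbf{The gap: the interaction term.} Your overall structure matches the paper's: induction over steps, with each update split into the $L_0$ population gradient, the linearization error $L_\rho-L_0$, and mini-batch noise. Your population computation (condition on $\xi$, expand $\Phi$) correctly reproduces Lemma~\ref{lemma:pop1}. The problem is the linearization (interaction) term in the $(e_1,e_2)$ plane.

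You bound it by $O(\theta^3)$, which does not depend on the neuron's signal, and plan to compare it with $\|\wsig\|$. But in Phase~Ia, $\|\wsig\|$ starts at order $\theta/\sqrt d$ (or $(d\log d)^{-1/2}$). It stays below $\theta^3=(\log d)^{-3C}$ for all but the last $O(\eta^{-1}\log\log d)$ of the $\Theta(\eta^{-1}\log d)$ steps. On that stretch your error swamps the drift $\eta\tau\|\wsig\|$, where $\tau=\sqrt{2}\pi^{-3/2}$. So neither the rate $\tau(1+o(1))$ nor the noise floor for $\wopp$ follows.

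Your fallback, that $f_\rho$ has width $\theta^2/\sqrt m$ at $t=0$, does not rescue this. It already needs $m\gg\theta^4 d$. More importantly, it is not preserved by training. The updates of $\wpe$ come from a shared batch and enter $f_\rho$ through $a_j\Delta w_j\propto a_j^2$, so there is no sign cancellation across neurons. As a result $f_\rho$ drifts to polylogarithmically small size, far above $d^{-1/2}$.

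\textbf{What is needed.} You need an estimate in which the interaction error along coordinates $i\in\{1,2\}$ scales with the signal. The paper gets this from Lemma~\ref{lemma:approxsig}, used in item~4 of Lemma~\ref{lemma:pop1a}, by symmetrizing over $x_i\mapsto -x_i$. On $\{|\ix^\top w|\ge |w_ix_i|\}$ the activation pattern is unchanged, and the variation of $\ell'_\rho$ is controlled by $\expec_\rho\|aw_i\|$. The complementary event has probability $\lesssim |w_i|/\|w\|$, and there $f_\rho=O(\theta^2\log d)$ w.h.p. Both pieces are $O(\theta B_t)$, where $B_t$ is a \emph{uniform} envelope on the signal of \emph{all} neurons. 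This is why the induction must carry that envelope over the whole network (all neurons controlled), rather than a per-neuron hypothesis plus a condition on the network output.

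\textbf{Two smaller points.} First, your full-gradient bound $\theta^3 d$ comes from the pointwise estimate $|f_\rho(x)|\lesssim\theta^2\|x\|\approx\theta^2\sqrt d\gg 1$. It accumulates to $\theta^3 d\log d\gg\theta$ over the phase and would destroy $\|\wpe\|,|a|\approx\theta$. Use instead the $L^2$ bound of Lemma~\ref{lemma:gradapprox1}, $\|\nabla_w L_\rho-\nabla_w L_0\|\le 2|a|\expec_\rho\|aw\|\approx 2\theta^3$. Its accumulation is $O(\theta^3\log d)$, which is negligible.

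Second, the fluctuation term is not the main obstacle. With $\batch\ge d/\theta$, Lemma~\ref{lem:conc_grad} gives a per-step, per-coordinate deviation $\lesssim\theta\sqrt{\theta\log d/d}$, uniformly over neurons and steps w.h.p. This is already $o(\|\wsig\|)$ whenever $\|\wsig\|\gtrsim\theta/\sqrt d$. So the per-step factor $1+\eta\tau(1+o(1))$, which is the form the lemma asserts, holds directly. A Freedman/Azuma bound on the rescaled process is unnecessary, and it would only yield a cumulative statement.
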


Thus, in Phase~Ia, most neurons experience multiplicative signal growth,
while the orthogonal component remains bounded, ensuring alignment toward the signal directions.

\subsubsection{Phase Ib: Heterogeneous growth}
After sufficiently many iterations, the signal $\|\wsig\|$ becomes comparable
to $\|\wpe\|$, so the approximation $\|\wsig\|\ll\|\wpe\|$ used in Phase~Ia
no longer holds. In this regime, we exploit instead that
$\|\wopp\|\ll\|\wsig\|$, which yields
\[
 (1+\eta c_1)\|\wsig^{(t)}\|
   \;\le\; \|\wsig^{(t+1)}\|
   \;\le\; (1+\eta c_2)\|\wsig^{(t)}\|,
\]
for some constants $0<c_1<c_2$. Thus, the signal continues to grow,
though at heterogeneous rates across neurons.

\begin{lemma}[Signal-based individual dynamics]\label{lem:phase1}
Let $\zeta=(\log d)^{-c}$ for some constant $1<c<C$.  
Under the assumptions of Theorem~\ref{thm:main}, there exists
$T_1\asymp \log d/\eta$ such that for all $t\le T_1$:
\begin{enumerate}
\setlength{\parskip}{0cm}\setlength{\itemsep}{0cm}
\item \emph{(Weak noise)} 
$\expec_{\rho^{(t)}}\|\,\wpe+\wopp\,\|^2 \le 4\theta^2$.
\item \emph{(Large signal)}  
There exists a constant $C'>0$ such that for all neurons with $\mu^\top w^{(0)}>\theta/\sqrt{d}$ for some
$\mu\in\{\pm\mu_1,\pm\mu_2\}$,
\[
   \theta\zeta^{-1}\;\le\;\|\wsig^{(T_1)}\|
   \;\le\;C'\theta\zeta^{-1}\log d.
\]
\end{enumerate}
\end{lemma}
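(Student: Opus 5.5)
We prove Lemma~\ref{lem:phase1} by concatenating the two sub-phases. Phase~Ia runs until the signal becomes comparable to the orthogonal noise, and Phase~Ib runs until the signal reaches the target scale $\theta\zeta^{-1}$. Throughout we work under the linearized loss $\ell_0$ and keep it valid by a bootstrap: as long as $\|w\|$ and $|a|$ stay $O(\theta\zeta^{-1}\log d)$, the output is $|f_\rho(x)|\lesssim \theta^2\zeta^{-2}(\log d)^2\cdot\mathrm{polylog}$ on typical $x$. This is $o(1)$ because $C>c$. It gives $\ell'_\rho=\ell'_0(1+o(1))$ except on an event of probability $\exp(-(\log d)^{c'})$, and the gradient error from this event is controlled by Gaussian tails.

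\textbf{Step 1 (second layer and weak noise).} First we track $|a^{(t)}|$. Under $\ell_0$, $\nabla_a L_0=-\expec[y\sigma(w^\top x)]$. Because the XOR label is odd under reflections of $z$, this expectation depends only on $w_{1:2}$ and is of order $\|w_{1:2}\|^2/\|w\|$. Hence $|a^{(t)}|=\theta(1+o(1))$ during Phase~Ia, and afterwards it grows at most at the same exponential rate as $\|\wsig\|$.

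For $\wpe$, the population gradient is $-\nabla_{\wpe}L_0=a\,\expec[y\sigma'(w^\top x)\xi]$. Stein's lemma applied to $\xi$ makes this proportional to $\wpe$ times a density term that is small while $\|w_{1:2}\|$ is small. The stochastic part is a sum of $\batch\geq d/\theta$ i.i.d.\ terms, each of size $\sqrt{d}$. Its norm is therefore $O(\eta|a|\sqrt{d/\batch})=O(\eta\theta^{3/2})$ per step. Summed over $T_1\asymp\log d/\eta$ steps, and using independence across batches via a martingale bound, $\|\wpe\|^2$ changes by $o(\theta^2)$. Combining this with the $\wopp$ bound of Lemma~\ref{lem:phaseIa} gives $\expec_\rho\|\wpe+\wopp\|^2\le 4\theta^2$ on average over neurons, which is item~1. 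In Phase~Ib we reuse the fact that the $\wopp$ drift is damped, with coefficient $-(\cdot)\wopp$, as long as $\|\wopp\|\ll\|\wsig\|$.

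\textbf{Step 2 (signal growth and timing).} Take a neuron with $\mu^\top w^{(0)}>\theta/\sqrt d$. Since $\mu$ matches the sign of $a$ in the sense of the definition of $\wsig$, we have $\|\wsig^{(0)}\|\ge\theta/\sqrt d\gtrsim(d\log d)^{-1/2}$. Lemma~\ref{lem:phaseIa} then gives multiplicative growth at rate $1+\eta\sqrt2\pi^{-3/2}(1+o(1))$ up to $T_a$, defined as the first time $\|\wsig\|\ge\zeta'\theta$ for a small $\zeta'$. All such neurons start within a factor $O(\sqrt{\log d})$ of each other, with $\|\wsig^{(0)}\|\lesssim\theta\sqrt{\log d/d}$, so $T_a\asymp\log d/\eta$ uniformly. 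In Phase~Ib we use the two-sided bound $(1+\eta c_1)\le\|\wsig^{(t+1)}\|/\|\wsig^{(t)}\|\le(1+\eta c_2)$. We obtain it by computing $\expec[y\sigma'(w^\top x)z]$ with $\|\wopp\|\ll\|\wsig\|$: the projection of $z$ onto $\mu$ is correlated with $y$ on the half-space $w^\top x>0$, and the ratio $\|\wsig\|/\|w\|\in(0,1]$ keeps the coefficient between two constants. Set $T_1=T_a+\lceil\log(\zeta'^{-1}\zeta^{-1})/(\eta c_1)\rceil$. Then every qualifying neuron has $\|\wsig^{(T_1)}\|\ge\theta\zeta^{-1}$. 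Since $T_1-T_a\asymp\log\log d/\eta$, the upper bound is $\|\wsig^{(T_1)}\|\le\theta\zeta'(\zeta'\zeta)^{-c_2/c_1}$. Neurons that started larger reach the Phase~Ib threshold earlier, by at most $O(\log\log d/\eta)$ steps, which adds only a polylog factor.

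\textbf{Main obstacle.} The difficult part is the upper bound $C'\theta\zeta^{-1}\log d$. Heterogeneous rates $c_1<c_2$ compounded over $\Theta(\log\log d/\eta)$ steps give a polylog ratio rather than a single $\log d$ factor. My first attempt would be to show that the growth rate is essentially a function of the angle $\|\wsig\|/\|w\|$ alone, and that this angle saturates at the same time for all neurons. The spread would then accumulate only during the window in which the angle moves from $\zeta'$ to $1-o(1)$. That window lasts $O(1/\eta)$ steps, which would yield the claimed bound. Failing that, I would choose $\zeta$ and $C$ so that the resulting polylog factor is absorbed. A secondary obstacle is keeping the linearization valid for the fastest neurons; the same upper bound closes this bootstrap.
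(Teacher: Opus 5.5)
Your route is essentially the paper's: the exact rate $\tau=\sqrt2\pi^{-3/2}$ in Phase~Ia via Lemma~\ref{lem:phaseIa}, constant two-sided rates in Phase~Ib, $T_1=T_A+\Theta(\log\log d/\eta)$, and noise kept at its initial scale. The genuine gap is the one you flagged, the upper bound $\|\wsig^{(T_1)}\|\le C'\theta\zeta^{-1}\log d$, and neither of your remedies closes it.

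\emph{The fallback cannot work.} Your own sandwich gives a ratio of at least $\zeta^{-(c_2/c_1-1)}=(\log d)^{c(c_2/c_1-1)}$ between the upper and lower bounds. That exponent grows with $c$ (and $c>1$ is imposed), and $C$ does not appear in it, so no choice of parameters reduces it to a single factor of $\log d$.

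\emph{The first idea rests on a false premise.} The angle does not saturate simultaneously for all neurons. A neuron with initial signal $s_0'=\|\wsig^{(0)}\|$ leads one with $s_0<s_0'$ by about $\log(s_0'/s_0)/(\tau\eta)$ steps. This is $\Theta(\log\log d/\eta)$ steps when $s_0'/s_0\asymp\sqrt{\log d}$.

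For what it is worth, the paper does not close this step either. It builds $\|\wsig^{(t)}\|\le B_t\le\theta\zeta^{-1}\log d$ into Definition~\ref{def:weakcontrol} and only ever verifies the first inequality. Yet its envelopes $B_t^2\propto(1+4\eta)^{t-T_A}$ and $S_t^2\propto(1+c_b\eta)^{t-T_A}$ show the same polylog mismatch over $T_B-T_A$ steps, and its concluding paragraph asserts only $\|\wsig^{(T_1)}\|\le 1$.

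The missing idea, which is the correct form of your first one, is to compare trajectories rather than rates.
\begin{itemize}
\item Under $L_0$, the drift of a neuron depends only on its reduced state $(\mu_1^\top w,\mu_2^\top w,\|\wpe\|,a)$.
\item While $\|\wsig\|\lesssim\theta\sqrt{\log d/d}$, the quantities $|a|$ and $\|\wpe\|$ change in total by a relative $O(\log d/d)$, and $\wopp$ decays. So a qualifying neuron started at $s_0$ reaches, after $\Delta\approx\log(s_0'/s_0)/(\tau\eta)$ steps, essentially the initial state of one started at $s_0'$. From then on the two trajectories are time-shifts of each other.
\item Let $T_1$ be the first time the slowest qualifying neuron has $\|\wsig\|\ge\theta\zeta^{-1}$. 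That neuron is then saturated: $\|\wsig\|\gg\|\wpe\|+\|\wopp\|$, and $|a|\approx\|w\|$ by Lemma~\ref{lemma:layer_balance}.
\item In the saturated regime, $|a|$ and $\|\wsig\|$ grow at the common per-step rate $\eta\kappa$, with $\kappa=\expec_z[y\,\sigma(\mu_1^\top z)]=(\sqrt2-1)/\sqrt{2\pi}\approx0.165$. By symmetry this is the same for every block.
\item Hence the largest qualifying signal at $T_1$ exceeds the smallest by a factor $(1+o(1))(s_0'/s_0)^{\kappa/\tau}\lesssim(\log d)^{\kappa/(2\tau)}$, with $\kappa/\tau\approx0.65$. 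This is well within $C'\theta\zeta^{-1}\log d$.
\end{itemize}
The spread you feared is an artifact of sandwiching a state-dependent rate between $c_1$ and $c_2$. To make the argument rigorous you must track each neuron's SGD trajectory against its population trajectory with relative error $o(1)$. In Phase~Ia, the per-step rates $\tau\pm\zeta$ give $\exp(O(\zeta\log d))=1+o(1)$; this is where $c>1$ enters. In Phase~Ib you need per-step relative errors of size $o(1/\log\log d)$, which the polylog-small approximation and sampling errors provide.

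Two smaller points.
\begin{itemize}
\item Your bootstrap $\ell'_\rho=\ell'_0(1+o(1))$ is too crude for the signal coordinates in Phase~Ia. Their drift is only $\asymp|a|\,\|\wsig\|/\|\wpe\|\sim\theta d^{-1/2}$. A pointwise $o(1)$ perturbation destroys the cancellation in $y$ and only bounds the error by $|a|\,\expec_\rho\|aw\|\sim\theta^3$, which is why the paper needs the coordinatewise Lemma~\ref{lemma:approxsig}. Importing Lemma~\ref{lem:phaseIa} covers this. However, your inequality $\theta/\sqrt d\gtrsim(d\log d)^{-1/2}$ is false for $\theta=(\log d)^{-C}$, so you need the $\theta$-scaled threshold that the appendix treats.
\item For item~1 in Phase~Ib, the $\wpe$-drift is no longer small once $\|\wsig\|\gtrsim\|\wpe\|$. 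You need its sign: it contracts $\wpe$ whenever $\|\wsig\|\ge\|\wopp\|$ (Lemma~\ref{lemma:pop1b}, item~4). Your Stein computation gives this, but you never state it.
\end{itemize}
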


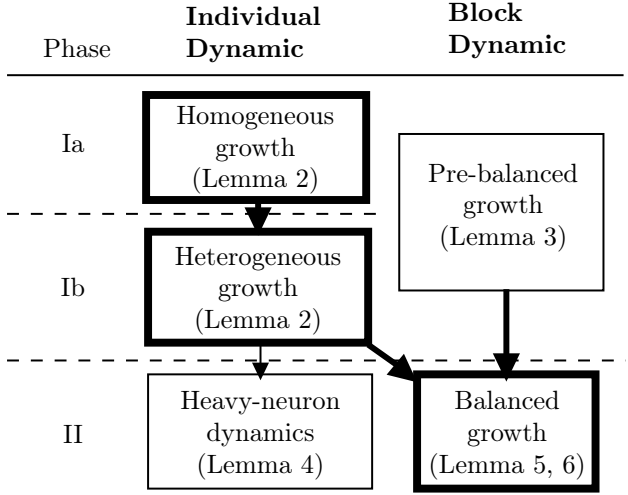
\begin{figure}[t]
    \centering

\tikzset{every picture/.style={line width=0.75pt}} %

\begin{tikzpicture}[x=0.75pt,y=0.75pt,yscale=-1,xscale=1]

\draw    (99.87,302) -- (410.81,302) ;
\draw  [dash pattern={on 4.5pt off 4.5pt}]  (99.87,372) -- (286.87,372) ;
\draw  [dash pattern={on 4.5pt off 4.5pt}]  (99.87,446) -- (412,446) ;
\draw  [line width=3]  (170,312) -- (281,312) -- (281,365) -- (170,365) -- cycle ;
\draw  [line width=3]  (171,381) -- (282,381) -- (282,437) -- (171,437) -- cycle ;
\draw  [line width=0.75]  (171,453) -- (284,453) -- (284,510.67) -- (171,510.67) -- cycle ;
\draw  [line width=0.75]  (297.67,331.67) -- (400.67,331.67) -- (400.67,410.67) -- (297.67,410.67) -- cycle ;
\draw  [line width=3]  (306.67,453) -- (396.67,453) -- (396.67,510.67) -- (306.67,510.67) -- cycle ;
\draw [line width=2.25]    (282,438) -- (303.43,453.19) ;
\draw [shift={(306.67,457)}, rotate = 219.62] [fill={rgb, 255:red, 0; green, 0; blue, 0 }  ][line width=0.08]  [draw opacity=0] (14.29,-6.86) -- (0,0) -- (14.29,6.86) -- cycle    ;
\draw [line width=2.25]    (226.5,365) -- (226.5,378) ;
\draw [shift={(226.5,383)}, rotate = 270] [fill={rgb, 255:red, 0; green, 0; blue, 0 }  ][line width=0.08]  [draw opacity=0] (14.29,-6.86) -- (0,0) -- (14.29,6.86) -- cycle    ;
\draw [line width=0.75]    (227.7,438.67) -- (227.7,453) ;
\draw [shift={(227.7,456)}, rotate = 270] [fill={rgb, 255:red, 0; green, 0; blue, 0 }  ][line width=0.08]  [draw opacity=0] (8.93,-4.29) -- (0,0) -- (8.93,4.29) -- cycle    ;
\draw [line width=2.25]    (351.7,410) -- (351.7,451.67) ;
\draw [shift={(351.7,456.67)}, rotate = 270] [fill={rgb, 255:red, 0; green, 0; blue, 0 }  ][line width=0.08]  [draw opacity=0] (14.29,-6.86) -- (0,0) -- (14.29,6.86) -- cycle    ;

\draw (224.18,266) node [anchor=north] [inner sep=0.75pt]   [align=left] {\textbf{Individual}\\\textbf{Dynamic}};
\draw (352.18,264) node [anchor=north] [inner sep=0.75pt]   [align=left] {\textbf{Block}\\\textbf{Dynamic}};
\draw (133.18,330) node [anchor=north] [inner sep=0.75pt]   [align=left] {Ia};
\draw (133.18,403) node [anchor=north] [inner sep=0.75pt]   [align=left] {Ib};
\draw (133.18,477) node [anchor=north] [inner sep=0.75pt]   [align=left] {II};
\draw (225.18,316) node [anchor=north] [inner sep=0.75pt]   [align=left] {\begin{minipage}[lt]{71.9pt}\setlength\topsep{0pt}
\begin{center}
Homogeneous \\growth\\(Lemma \ref{lem:phase1})
\end{center}

\end{minipage}};
\draw (226.5,387) node [anchor=north] [inner sep=0.75pt]   [align=left] {\begin{minipage}[lt]{72.47pt}\setlength\topsep{0pt}
\begin{center}
Heterogeneous\\growth\\(Lemma \ref{lem:phase1})
\end{center}

\end{minipage}};
\draw (227.7,460) node [anchor=north] [inner sep=0.75pt]   [align=left] {\begin{minipage}[lt]{66.79pt}\setlength\topsep{0pt}
\begin{center}
Heavy-neuron\\dynamics\\(Lemma \ref{lem:inductive2})
\end{center}

\end{minipage}};
\draw (352.18,460) node [anchor=north] [inner sep=0.75pt]   [align=left] {\begin{minipage}[lt]{60.04pt}\setlength\topsep{0pt}
\begin{center}
Balanced\\growth\\(Lemma \ref{lem:pop2_grad_main},  \ref{lem:inductive3})
\end{center}

\end{minipage}};
\draw (350.18,345) node [anchor=north] [inner sep=0.75pt]   [align=left] {\begin{minipage}[lt]{63.4pt}\setlength\topsep{0pt}
\begin{center}
Pre-balanced\\growth\\(Lemma \ref{lem:ineq_U})
\end{center}

\end{minipage}};
\draw (135.18,282) node [anchor=north] [inner sep=0.75pt]   [align=left] {Phase};

\end{tikzpicture}

    \caption{Overview of phases and neuron dynamics. Thicker-bordered boxes emphasize the key dynamics that capture how the network evolves in each phase. \label{fig:phase_overview}}
\end{figure}

In short, by the end of Phase~I, each neuron develops a strong signal
aligned with $\mu_1$ or $\mu_2$, while the noise components
$\wpe+\wopp$ remain controlled.

\subsubsection{Blocks are nearly balanced}

When neuron growth becomes heterogeneous in Phase~Ib, it is crucial to reason at the block level.  
We group neurons into four \emph{blocks}, aligned with $\pm\mu_1$ and $\pm\mu_2$, 
and demonstrate that their masses remain nearly balanced despite stochastic fluctuations.

\begin{definition}[Neuron blocks]
Let $\calN^{(0)}$ be the set of neurons at initialization.  
Define
\[
\begin{aligned}
\calN_1^{\pm} &= \{(w,a)\in\calN^{(0)} : a>0,\ \pm w^\top\mu_1>\init\},\\
\calN_2^{\pm} &= \{(w,a)\in\calN^{(0)} : a<0,\ \pm w^\top\mu_2>\init\}.
\end{aligned}
\]
For $i=1,2$, the block mass at time $t$ is
\[
N_i^{\pm,(t)}=\frac{1}{|\calN_i^{\pm}|}\sum_{(w,a)\in\calN_i^{\pm}}
   \|a^{(t)}\wsig^{(t)}\|,
\]
and the average mass is
\[
N^{(t)}=\tfrac{1}{4}\sum_{i=1,2}\bigl(N_i^{+,(t)}+N_i^{-,(t)}\bigr).
\]
\end{definition}

To quantify balance, we define the \emph{unbalance level}
\[
U^{(t)}=\max_{i,j\in\{1,2\}, \pm}
   \left|\tfrac{N_i^{\pm,(t)}}{N_j^{\pm,(t)}}-1\right|.
\]
Using the correlation-loss approximation $L_\rho\approx L_0$,  
we compare the true weight sequence $(w^{(t)},a^{(t)})$ with a surrogate
sequence $(\tilde{w}^{(t)},\tilde{a}^{(t)})$ that evolves under $L_0$.
The key property is that these surrogate updates preserve independence,
and block laws differ only by a rotation.  
Hence, block averages remain close, up to sampling noise from SGD.

\begin{lemma}[Pre-balanced blocks]\label{lem:ineq_U}
Under the assumptions of Theorem~\ref{thm:main}, w.h.p.\ for all $t\le T_1$,
\[
U^{(t)} \;\le\; \log^{-c_U} d,
\]
where $c_U>0$ can be made arbitrarily large by increasing the batch size.
\end{lemma}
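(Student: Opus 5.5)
The argument compares the true trajectory with a surrogate trajectory driven by the correlation loss $L_0$, whose blocks are exactly symmetric. For each neuron $(w,a)\in\calN^{(0)}$ we define $(\tilde w^{(t)},\tilde a^{(t)})$ by running SGD from the same initialization, with the same mini-batches, but with $\ell'_\rho$ replaced by $\ell'_0=-y$. Under this loss the update of a neuron depends only on its own weights and the batch, so surrogate neurons evolve independently given the data. The bound then splits into three pieces:
\[
U^{(t)}\le \text{(symmetry defect of the surrogate population law)}+\text{(sampling error across neurons)}+\text{(true vs.\ surrogate deviation)}.
\]

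\textbf{Step 1 (exact symmetry of the surrogate).} The Gaussian law of $x$ is invariant under the orthogonal maps $R$ of $\mathrm{span}(e_1,e_2)$ (extended by the identity on the complement) that either preserve $x_1x_2$ (the reflections $z\mapsto -z$ and $(z_1,z_2)\mapsto(z_2,z_1)$) or flip its sign (the rotation by $\pi/2$). A map of the first kind maps $\mu_i$ to $\pm\mu_i$ and keeps $y$ fixed, so $a$ keeps its sign. A map of the second kind swaps $\mu_1\leftrightarrow\pm\mu_2$ and flips $y$, and this is compensated by $a\mapsto -a$. The initialization law of $(w,a)$ is invariant under all of these combined maps. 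Hence the population surrogate dynamics is equivariant, and the maps send each block $\calN_i^\pm$ onto every other block. Consequently the law of $\|\tilde a^{(t)}\tilde w_{\mathrm{sig}}^{(t)}\|$ conditional on block membership is identical for all four blocks.

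\textbf{Step 2 (sampling error across neurons).} Conditional on the batches, the surrogate block masses $\tilde N_i^{\pm,(t)}$ are averages of $|\calN_i^\pm|\asymp m$ i.i.d.\ terms. The catch is that the batch realizations break the symmetry of Step 1, so the centering is not exactly common to all blocks. We therefore bound two things. First, each block average concentrates around its batch-conditional expectation; since $m$ is polynomial in $d$ and the terms are bounded by $\mathrm{polylog}(d)\,\theta^2$ via Lemma~\ref{lem:phase1}, the relative error is $d^{-\Omega(1)}$. Second, the batch-conditional expectations of different blocks differ by a relative amount controlled by the per-step gradient noise. Using $V\ge d/\theta$ (enlarged by a polylog factor as needed), each step perturbs the multiplicative growth factor $1+\eta c$ by $O(\eta(\log d)^{-c'})$. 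Summed over $T_1\asymp\log d/\eta$ steps, these perturbations compound to a relative error of $(\log d)^{1-c'}$, and $c'$ can be taken as large as we like by increasing $V$. Lower bounds on the denominators come from the growth in Lemmas~\ref{lem:phaseIa} and~\ref{lem:phase1}, which apply to a $1-o(1)$ fraction of each block.

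\textbf{Step 3 (true vs.\ surrogate), the main obstacle.} We have $|\ell'_\rho(x)-\ell'_0(x)|\lesssim|f_\rho(x)|$, and during Phase~I $f_\rho$ is small, of order $N^{(t)}$ times a moment of $|x|$. We set up a discrete Grönwall induction on $\Delta^{(t)}=\max_{\mathrm{neurons}}\|(w,a)^{(t)}-(\tilde w,\tilde a)^{(t)}\|$ relative to the neuron scale. Each step adds at most $\eta\,(1+O(1))\Delta^{(t)}$ from Lipschitz propagation, plus a forcing term $\eta\,\expec|f_\rho|\,\|w\|$. Since $\expec|f_\rho|\lesssim\theta^2\zeta^{-2}\mathrm{polylog}(d)$, and this is at most $(\log d)^{-c''}$ by the choice of $\theta=(\log d)^{-C}$ with $C>c$, the relative deviation at $T_1$ is polylog-small.

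The delicate point is the ReLU kink. Neurons whose $w^\top x$ is near zero have discontinuous $\sigma'$, so the Lipschitz argument fails pointwise. We would handle this by working with the population gradient, which is smooth in $w$ because the Gaussian density smooths the indicator, plus a concentration term. Finally, we combine the three pieces, divide by $N_j^{\pm,(t)}\gtrsim\theta^2$ (its Phase~I lower bound), and take a union bound over $t\le T_1$.
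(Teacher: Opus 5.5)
Your three-way decomposition (an exactly symmetric surrogate, sampling error across neurons, true-versus-surrogate deviation) matches the paper's. Your Step~1 is fine, and more complete than the paper's, which only uses the $\pi/2$ rotation of Lemma~\ref{lem:block_rot} and says the $\pm$ blocks are analogous.

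The genuine gap is Step~3. You run a discrete Gr\"onwall recursion $\Delta^{(t+1)}\le(1+C\eta)\Delta^{(t)}+\text{forcing}$ for $T_1\asymp\eta^{-1}\log d$ steps, with a constant $C\gtrsim 1$ unrelated to the signal growth rate. That amplifies by $(1+C\eta)^{T_1}=d^{\Theta(C)}$, so a polylog-small forcing does not give a polylog-small deviation at $T_1$. The Phase~Ia scales make this worse in three ways:
\begin{itemize}
\item The block masses are carried by $\|\wsig\|\approx\theta/\sqrt d$, while $\|w\|\approx\theta$. A deviation small ``relative to the neuron scale'' therefore says nothing about the signal coordinates.
\item $N^{(0)}\asymp\theta^2/\sqrt d$, so your final division by $N_j^{\pm,(t)}\gtrsim\theta^2$ is false for most of Phase~Ia.
\item Bounding the forcing $\eta|a|\,\expec|f_\rho|$ by its end-of-Phase-I value, or via the crude $\expec_\rho\|aw\|\approx\theta^2$ of Lemma~\ref{lemma:gradapprox1}, makes it polynomially larger than the early signal increment $\eta\tau\|\wsig\|$.
\end{itemize}

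The missing idea is rate matching over Phase~Ia. The paper flags this as necessary precisely because $T_A=\Theta(\eta^{-1}\log d)$. Concretely:
\begin{itemize}
\item Track the \emph{block-averaged} discrepancy in $(\wsig,a)$, not a maximum over neurons. Neurons with atypically small initial signal have large relative errors but negligible mass.
\item Normalize it by the current block scale, $\delta_t=\varepsilon_t/N^{(t)}$.
\item The true and surrogate signals grow at the \emph{same} rate $1+\tau\eta(1+o(1))$, by Lemma~\ref{lemma:pop1} and Lemma~\ref{lem:phaseIa}.
\item Bound the sampling and $L_\rho$-versus-$L_0$ errors proportionally to the current signal, of order $\theta N^{(t)}$ at block level. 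This uses the coordinatewise estimates of Lemma~\ref{lem:conc_grad}, Lemma~\ref{lemma:approxsig} and item 4 of Lemma~\ref{lemma:pop1a}, not $|\ell'_\rho-\ell'_0|\lesssim|f_\rho|$.
\end{itemize}
The common growth factor then cancels and $\delta_{t+1}\le\delta_t+O(\eta\theta)$, so $\delta_{T_A}\lesssim\theta\log d$. A generic $(1+O(\eta))$ Gr\"onwall is affordable only on Phase~Ib, whose length $O(\eta^{-1}\log\log d)$ costs just a $\mathrm{polylog}(d)$ factor. Your Step~2 already uses the right bookkeeping, where perturbations of a multiplicative growth factor add in the exponent; Step~3 needs the same structure.

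A smaller point concerns your surrogate. Driving it with the same mini-batches gives Step~1's symmetry only jointly in law with the batches. That forces the extra batch-conditional comparison in Step~2, which needs gradient concentration for almost every initialization, not just the $m$ realized neurons. The paper instead drives the surrogate with the population gradient $\nabla L_0$. It is then deterministic given the initialization and exactly symmetric, and all SGD noise goes into the single comparison step.
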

\emph{Proof idea.} To control $U^{(t)}$, we introduce a \emph{surrogate sequence} $(\tilde w^{(t)},\tilde a^{(t)})$ 
initialized as $(w^{(0)},a^{(0)})$ but updated by population gradients $\nabla L_0$ where $L_0 = \expec_x(\ell_0(x))$.  
Block averages can then be approximated via conditional expectations:
\[
\frac{1}{|\calN_i^{\pm}|}\sum_{(w,a)\in\calN_i^{\pm}}
   \|\tilde w^{(t)}_{\text{sig}}\|\,|\tilde a^{(t)}|
   \;\approx\;
   \E\!\left[\|\tilde w^{(t)}_{\text{sig}}\|\,|\tilde a^{(t)}|\,\middle|\,\calN_i^{\pm}\right].
\]
A key symmetry (Lemma~\ref{lem:block_rot}) shows that these conditional laws differ only by a rotation.  
Since $\|\tilde w^{(t)}\||\tilde a^{(t)}|$ is rotation-invariant, the conditional expectations coincide across blocks.  
Concentration of the empirical averages around these expectations then yields near-balance.

Thus, by the end of Phase~I, neurons have developed strong signals
(Lemma~\ref{lem:phase1}) and block masses remain nearly balanced
(Lemma~\ref{lem:ineq_U}), preparing the ground for the block-dynamic
analysis in Phase~II.

\subsection{Phase II : Blocks dynamics}\label{sec:analysis:phase2}

In Phase~I, neurons evolved nearly independently under the correlation-loss approximation.  
In Phase~II, this approximation no longer holds: interactions between neurons matter, and the signal component dominates.  
Our analysis, therefore, shifts to the block level. 

\paragraph{Signal-heavy networks.}  
Some neurons fail to develop large signals, but their contribution to $f_{\rho^{(t)}}$ remains negligible.  
We thus focus on \emph{heavy neurons}, where the signal $\wsig$ dominates the noise terms $\wpe,\wopp$.  

\begin{definition}[Signal-heavy network]
    Let $H>1$ be a constant and $\zeta'=o(1)$. A network is said to be $(\zeta', H)$-signal-heavy if there is a set $\calS \subset \calN$ of heavy neurons that satisfy the following conditions:\begin{enumerate}
    \setlength{\parskip}{0cm}
  \setlength{\itemsep}{0cm}
        \item \rm{(Signal heavy neuron):} For all neurons in $ \calS, \norm{\wpe}+\norm{\wopp}\leq \zeta' \norm{\wsig}$ holds. 
        \item \rm{(Non-heavy neurons mass is negligible):} $\expec_\rho \indic_{\lbrace (w,a)\not \in \calS \rbrace}\norm{w}^2\leq \zeta' N^{(t)}$ holds.
        \item \rm{(Layer weights balance):}  $\expec_\rho \norm{w}^2\leq \expec_\rho a^2+\zeta' H$ and $|a|\leq \norm{w}$ hold for all neurons.
    \end{enumerate} 
\end{definition}

Intuitively, heavy neurons carry the signal, while non-heavy ones are negligible, see Figure~\ref{fig:phase2} for an illustration.  
The following lemma formalizes that this property is stable over training (see Appendix~\ref{app:proof_inductive_lem}).  

\begin{lemma}[Stability of signal-heavy network]\label{lem:inductive2}
If the network is $(\zeta',H)$-signal-heavy at time $t$, then after one gradient step, with probability at least $1-d^{-\Omega(1)}$, it remains $(\zeta'(1+O(\eta\zeta')),H)$-signal-heavy.
\end{lemma}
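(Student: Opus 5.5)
We check the three defining conditions of a $(\zeta',H)$-signal-heavy network separately across one SGD step. We fix the heavy set $\calS$ at time $t$ and keep it as the heavy set at time $t+1$. We use two ingredients. The first is a decomposition of the per-neuron update into a population part and a stochastic part. Since $\batch\ge d/\theta$ and the batch is fresh, each coordinate block of the empirical gradient concentrates around its population counterpart up to a relative error of order $\sqrt{d/\batch}\,\mathrm{polylog}(d)$. Uniformly over the polynomially many neurons, this holds with probability $1-d^{-\Omega(1)}$. The second is the structure of the population gradient $\nabla_w L_\rho=\expec_x[\ell'_\rho(x)\,a\,\sigma'(w^\top x)x]$ split along $\wsig,\wopp,\wpe$. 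Because $|\ell'_\rho|\le 2$ and $f^*$ depends only on $z$, the $\xi$-part of the gradient comes only through the correlation of $\sigma'(w^\top x)$ with $\xi$. That correlation is proportional to $\wpe$, with a coefficient bounded by $O(|a|\,\|\wpe\|/\|w\|)$.

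\textbf{Condition 1 (heavy neurons).} For $(w,a)\in\calS$ we bound the growth of $\norm{\wpe}+\norm{\wopp}$ relative to $\norm{\wsig}$. First, $-\nabla_{\wsig}L_\rho$ has nonnegative projection on $\wsig$, up to $O(\zeta')$ corrections. We argue this by symmetry of the XOR labels under reflection across $\mu_1$ or $\mu_2$, in the spirit of the oracle approximation $\ell'_\rho(x)\approx\ell'_\rho(z)$. Second, $\norm{\nabla_{\wpe}L_\rho}+\norm{\nabla_{\wopp}L_\rho}\lesssim |a|\,(\norm{\wpe}+\norm{\wopp})/\norm{w}+|a|\,\zeta'^2$. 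The $\wopp$ bound uses that, for a neuron concentrated along $\pm\mu_i$, the label-weighted average of $\ell'$ in the orthogonal direction cancels to first order. Combining these with $|a|\le\norm{w}$, the ratio $(\norm{\wpe}+\norm{\wopp})/\norm{\wsig}$ is multiplied by at most $1+O(\eta\zeta')$. The SGD noise is absorbed in this factor thanks to the batch size.

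\textbf{Condition 2 (non-heavy mass).} For $(w,a)\notin\calS$, we have $\norm{w^{(t+1)}}^2\le(1+O(\eta|a|/\norm{w}))\norm{w^{(t)}}^2\le(1+O(\eta))\norm{w^{(t)}}^2$, using $|\ell'|\le 2$ and $\expec\norm{x\sigma'}\lesssim 1$ in relevant directions. Meanwhile $N^{(t)}$ grows, or at least does not shrink by more than a factor $1-O(\eta\zeta')$, by the signal-growth argument in Condition 1. Hence the ratio $\expec_\rho\indic_{\notin\calS}\norm{w}^2/N$ increases by at most a factor $1+O(\eta\zeta')$ after we absorb the $O(\eta)$ growth. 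This absorption is delicate. If it fails, I would instead compare with the heavy-neuron growth rate: non-heavy neurons have smaller effective signal, so their norm grows no faster than heavy ones.

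\textbf{Condition 3 (layer balance).} The key identity is that for ReLU, $\frac{d}{dt}(\norm{w}^2-a^2)=0$ under gradient flow, since $w^\top\nabla_wL=a\nabla_aL$. For a discrete step,
\[
\norm{w^{(t+1)}}^2-(a^{(t+1)})^2=\norm{w^{(t)}}^2-(a^{(t)})^2+\eta^2\bigl(\norm{\nabla_w\hat L}^2-(\nabla_a\hat L)^2\bigr),
\]
which holds exactly for the empirical gradient as well. The second-order term is $O(\eta^2)$ per step, and its average can be absorbed into the slack $\zeta' H$, giving $\zeta'(1+O(\eta\zeta'))H$ provided $\eta\lesssim\zeta'^2$ relative to scales. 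Given $\eta\asymp\theta$ and $\zeta'\ge\theta^{1/2}$, this is what I would verify. For $|a|\le\norm{w}$, the same identity neuronwise gives $\norm{w}^2-a^2\ge -\eta^2(\nabla_a\hat L)^2$. Here $(\nabla_a\hat L)^2\le\norm{\nabla_w\hat L}^2\norm{w}^2/a^2$, and this can be arranged to stay nonnegative. Alternatively, I would note that initialization gives equality, and track the sign directly.

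\textbf{Main obstacle.} I expect the main obstacle to be Condition 1, specifically bounding $\nabla_{\wopp}L_\rho$ without a margin. Near the axes, $\ell'_\rho$ is not small, so the cancellation must come from averaging over Gaussian $z$ and the symmetry of balanced blocks rather than from worst-case bounds. I would first try to reduce to the oracle gradient with $\ell'_\rho(z)$ and show the error is $O(\zeta')$, using that non-heavy and noise components contribute $O(\zeta')$ to $f_\rho$.
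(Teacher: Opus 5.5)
\textbf{Conditions 1 and 2 have a genuine gap.} Your overall plan is the paper's: keep $\calS$ fixed, pass to the clean/oracle gradient, check the three conditions, and use $w^\top\nabla_w\hat L=a\nabla_a\hat L$ for layer balance. Condition~3 is fine; it is Lemma~\ref{lemma:layer_balance}, which is what the paper invokes.

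The gap in Conditions~1 and~2 is the same: you control competing growth at scale $\eta$, but in Phase~II the relevant scale is $\eta g_\mu$. There $\norm{\wsig}$ grows only by a factor $1+\Theta(\eta g_\mu)$ and $N^{(t)}$ by $1+2\eta g_\mu(1\pm o(1))$, with $g_\mu\asymp 1\wedge (N^{(t)})^{-3}$ (Lemmas~\ref{lem:pop2_grad_main}, \ref{lem:pop2_grad}).

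In Condition~1, your bound $\norm{\nabla_{\wpe}L_\rho}+\norm{\nabla_{\wopp}L_\rho}\lesssim |a|(\norm{\wpe}+\norm{\wopp})/\norm{w}+|a|\zeta'^2$ only gives $\norm{\wpe^{(t+1)}}+\norm{\wopp^{(t+1)}}\le (1+O(\eta))(\norm{\wpe^{(t)}}+\norm{\wopp^{(t)}})+\eta|a|\zeta'^2$. So the ratio to $\norm{\wsig}$ can be multiplied by $1+\Theta(\eta)$, and the claimed factor $1+O(\eta\zeta')$ does not follow. Over the $\asymp\eta^{-1}(\log\log d)^4$ steps of Phase~II this would destroy heaviness.

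No sharper size bound can fix this. Integrating by parts in $\xi$ gives $\wpe^\top\ncl_w L=a\norm{\wpe}\,\expec_z[\ell'_{\rho,id}(z)\,\phi(w_{1:2}^\top z/\norm{\wpe})]$, with $\phi$ the standard normal density. This moves $\norm{\wpe}$ by a relative amount of order $\eta(1\wedge 1/N^{(t)})$, which is neither $O(\eta\zeta')$ nor small compared with $\eta g_\mu$.

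What is missing is cancellation plus sign. The paper replaces $\sigma'(w^\top x)$ by $\sigma'(\wsig^\top z)$, and the main terms then vanish exactly:
$\expec[\ell'_{\rho,id}(z)\sigma'(\wsig^\top z)\wpe^\top\xi]=0$ because $\xi$ is centered and independent of $z$, and
$\expec[\ell'_{\rho,id}(z)\sigma'(\norm{\wsig}X_1)X_2]=0$ because $\ell'_{\rho,id}$ depends on $z$ only through $(|X_1|,|X_2|)$.
What survives is supported near the line $\{\wsig^\top z=0\}$. There $y=-\sgn(a)$, so $a\,\ell'_{\rho,id}(z)>0$ outside a region of negligible mass. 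By the identity above (and its analogue for the $\wopp$-coefficient), these terms damp $\wpe$ and $\wopp$ rather than inflate them. Your remark that the $\wopp$ part ``cancels to first order'' is the first half of this, but it is not reflected in the bound you actually use. For $\wpe$ you record only the direction of the gradient, not the sign of its coefficient.

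The same scale issue breaks your primary argument for Condition~2. A factor $1+O(\eta)$ on the non-heavy mass against $1+2\eta g_\mu(1\pm o(1))$ on $N^{(t)}$ cannot be absorbed into $1+O(\eta\zeta')$. Your fallback, that non-heavy neurons grow no faster than heavy ones, is exactly the paper's route, but it is the whole content of Condition~2 and needs a quantitative proof. ``Smaller effective signal'' is not the reason; a non-heavy neuron may even sit near the wrong pair of directions.

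The paper's key estimate is Lemma~\ref{lem:non_heavy}: for every neuron, $|\ncl_a L|=|w^\top\ncl_w L|\le\norm{w}g_\mu$. It follows from the invariance of $\ell'_{\rho,id}$ under rotation by $\pi/2$, which lets one bound the correlation of an arbitrary ReLU with $\ell'_{\rho,id}$ by that of the signal ReLUs $\sigma(\mu_i^\top z)$. Since $\norm{w}^2$ changes by $-2\eta a\nabla_a L$ to first order and $|a|\le\norm{w}$, this yields $\norm{w^{(t+1)}}^2\le\norm{w^{(t)}}^2(1+2\eta(1+2\zeta H)g_\mu)$ (Corollary~\ref{cor:ng}). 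That rate matches the block growth rate $2\eta g_\mu(1\pm o(1))$ of Lemma~\ref{lem:phase2_heavy} up to a $1+o(1)$ factor. It is this matching of rates, not an $O(\eta)$ bound, that keeps $\expec_\rho\indic_{\lbrace (w,a)\notin\calS\rbrace}\norm{w}^2/N^{(t)}$ under control.
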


\paragraph{Oracle approximation.}  
To analyze block dynamics, we approximate population gradients in two steps:  

\emph{Step 1}: Since the signal component is dominant, we approximate \( w^\top x \) by \( \wsig^\top z \), allowing us to use \( \ell'_{\rho^{(t)}}(x) \approx \ell'_{\rho^{(t)}}(z) \). To formalize this approximation, we define a \textit{clean gradient} \( \ncl \) as
\[
\begin{aligned}
\ncl_w L_{\rho} &:= a_w \,\mathbb{E}_{x}\,\ell'_{\rho}(z)\,\sigma'(w^\top x)\,x,\\
\ncl_a L_{\rho} &:= \mathbb{E}_{x}\,\ell'_{\rho}(z)\,\sigma(w^\top x).
\end{aligned}
\]

\emph{Step 2}: We introduce an \textit{oracle network}, which approximates the model \( {f}_{\rho^{(t)}} \) by the following model across each direction \( \pm \mu_1 \) and \( \pm \mu_2 \):
\[
    f_{\rho^{(t)}, \text{id}} (z) = N^{(t)} \sum_{i=1}^2 (-1)^{i+1} \left( \sigma(\mu_i^\top z) + \sigma(-\mu_i^\top z) \right).
\]
Using the approximate balance of the neuron blocks established in Phase I (see Lemma \ref{lem:ineq_U}), we obtain the approximation \( \ell'_{\rho}(z) \approx \ell'_{\rho, \text{id}}(z) \). For details, see Section \ref{sec:clean_gradient_oracle}. Also see Figure~\ref{fig:phase2} for an illustration.

We also define the notion of an \textit{average margin} for the oracle network as
\[
    g_{\mu}^{(t)} = \mathbb{E}_z \, \ell'_{\rho^{(t)}, \text{id}}(z) \, \sigma(\mu^\top z).
\]
We will sometimes forget the dependence on time $t$ to simplify the notation.
Using this notion, we describe the dynamics of the oracle network in the following lemma. The symmetry of this quantity within the oracle model will be leveraged in the lemma to show that the gradients evolve at the same rate in each direction. A full version of the following lemmas is presented in Section \ref{app:phase2:pop_grad}.

\begin{lemma}[Block Dynamics via the Oracle Approximation] \label{lem:pop2_grad_main}
The followings hold:
\begin{enumerate}
    \setlength{\parskip}{0cm}
  \setlength{\itemsep}{0cm}
    \item \rm{(oracle alignment):}  $(w,a)\in \calS$ such that $\wsig^\top \mu >0$ for some $\mu \in \lbrace \pm \mu_1, \pm \mu_2 \rbrace$ we have\[
\begin{aligned}
\mu^\top \ncl_w L_{\rho,\mathrm{id}} &= -|a|g_\mu(1 \pm o(1)),\\
y\ncl_a L_{\rho,\mathrm{id}} &= -(1 \pm o(1))\|\wsig\| g_\mu.
\end{aligned}
\]
 
    \item \rm{(average margin asymptotics):} There exist constants $0<c_1< c_2$ such that we have for all $t\leq T_b$
    \begin{align*}        
    c_1(1\wedge (N^{(t)})^{-3})\leq g_{\mu^{(t)}} \leq c_2 (1\wedge (N^{(t)})^{-3} ).
    \end{align*}
\end{enumerate}
\end{lemma}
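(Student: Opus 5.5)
The plan is to compute the oracle quantities explicitly in the two-dimensional $z$-plane, using that $\ell'_{\rho,\mathrm{id}}$ depends only on $z$, and then to treat the noise coordinates $\xi$ and the off-signal components as perturbations controlled by the signal-heavy property.

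\textbf{Part 1 (oracle alignment).} Take $(w,a)\in\calS$ with $\wsig^\top\mu>0$, and write $w^\top x=\wsig^\top z+\wopp^\top z+\wpe^\top\xi$. Since $\ell'_{\rho,\mathrm{id}}(z)$ is independent of $\xi$ and $\mu\perp\xi$, we have
\[
\mu^\top\ncl_w L_{\rho,\mathrm{id}}=a\,\expec_{z}\,\ell'_{\rho,\mathrm{id}}(z)\,\mu^\top z\;\expec_\xi\sigma'(w^\top x).
\]
In the ideal case $\wopp=\wpe=0$, the factor $\sigma'(w^\top x)$ equals $\indic\{\mu^\top z>0\}$. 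Then the right-hand side is $a\,\expec_z\ell'_{\rho,\mathrm{id}}(z)\sigma(\mu^\top z)=a\,g_\mu$, and the sign conventions turn this into $-|a|g_\mu$: for $a>0$ the direction is $\pm\mu_1$, where $y=+1$ and $\ell'<0$, so we interpret $g_\mu$ as the magnitude. The same computation for the $a$-gradient gives $\sigma(w^\top x)=\|\wsig\|\sigma(\mu^\top z)$, hence $y\,\ncl_a L=-\|\wsig\|g_\mu$.

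For the perturbation, I would bound the difference between $\sigma'(w^\top x)$ and $\indic\{\mu^\top z>0\}$. The two disagree only when $|\mu^\top z|\,\|\wsig\|\le|\wopp^\top z+\wpe^\top\xi|$, and by Gaussian anti-concentration this event has probability $O(\zeta')$. Since $|\ell'|\le2$, applying Cauchy--Schwarz with $|\mu^\top z|$ gives an error of $O(\zeta')$. This error is $o(g_\mu)$ as long as $g_\mu\gg\zeta'$, which follows from part 2 because $N^{(t)}$ stays polylogarithmic before $T_b$. The argument for the $a$-gradient is identical, using Lipschitzness of $\sigma$.

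\textbf{Part 2 (average margin asymptotics).} Here $g_\mu$ is an explicit two-dimensional integral. By the symmetries of the oracle under $z\mapsto -z$ and under the reflection swapping $\mu_1,\mu_2$ (which flips $y$), $|g_\mu|$ is the same for all four directions. Write $z=r(\cos\phi,\sin\phi)$ in the basis $(\mu_1,\mu_2)$. The oracle then becomes
\[
f_{\mathrm{id}}=N r\,(|\cos\phi|-|\sin\phi|),
\]
so $y f_{\mathrm{id}}=N r\,\bigl||\cos\phi|-|\sin\phi|\bigr|\ge0$, with the correct sign everywhere; the zero margin shows up exactly along the lines $|\cos\phi|=|\sin\phi|$. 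Then
\[
|\ell'|=\frac{2}{1+e^{Nrh(\phi)}},\qquad h(\phi)=\bigl||\cos\phi|-|\sin\phi|\bigr|.
\]
When $N\lesssim1$, $|\ell'|$ is bounded below on a constant-mass region, which gives $g_\mu\asymp1$. When $N\gg1$, I would change variables $s=Nrh(\phi)$. Near each zero $\phi_0$ of $h$ we have $h\approx\sqrt2\,|\phi-\phi_0|$, and the integrand carries the extra factor $\sigma(\mu^\top z)\approx r\cos\phi_0$ with $\cos\phi_0\ne0$. The integral reduces to
\[
\int r^2e^{-r^2/2}\,dr\int\frac{d\phi}{1+e^{Nr\sqrt2|\phi-\phi_0|}}\asymp\int r^2e^{-r^2/2}\,\frac{1}{Nr}\,dr,
\]
which scales like $N^{-1}$ and does not match the claimed $N^{-3}$. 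I expect this exponent bookkeeping to be the main obstacle. My first attempt would be to check whether the contributions from the two sides of the boundary (the $\mu$ half-plane against the others) cancel to leading order in the signed expectation, since $\ell'$ carries the sign $y$ and $y$ changes across the boundary. Such a cancellation would leave a higher-order term of size $N^{-3}$. If instead the first-order term survives, I would follow the paper's full-version statement in Section~\ref{app:phase2:pop_grad} for the normalization defining $g_\mu$. In either case the structure is the same: the bounds come from a Laplace-type expansion localized near the zero-margin lines, with matching lower and upper constants obtained from $e^{-s}\le(1+e^s)^{-1}\le 2e^{-s}$.
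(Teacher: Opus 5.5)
Your Part~1 follows the paper's route and is fine. You reduce to the $\wsig$-only indicator, where the expression is exactly $\pm a g_\mu$ with the sign convention you correctly flag. You then bound the discrepancy by anti-concentration of $\|\wsig\|\mu^\top z$ against the independent Gaussian $\wopp^\top z+\wpe^\top\xi$. Cauchy--Schwarz gives $O(\sqrt{\zeta'})$ rather than $O(\zeta')$, which is harmless, but the step needs a lower bound on $g_\mu$ from Part~2.

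The genuine gap is Part~2: you establish neither bound. Since $\ell'_{\rho,\mathrm{id}}=-y\,|\ell'_{\rho,\mathrm{id}}|$, $g_\mu$ is a signed integral. So even $g_\mu\gtrsim 1$ for $N\lesssim 1$ does not follow from $|\ell'|$ being bounded below, and for large $N$ you only conjecture a cancellation. The missing step is the symmetrization the paper performs: it pairs the $y=+1$ and $y=-1$ sectors through the symmetries of $|\ell'_{\rho,\mathrm{id}}|$, leaving the nonnegative integrand $2r|\ell'|\min(\sin\theta,\cos\theta)$ for $z=r(\cos\theta,\sin\theta)$. Done exactly, $|\ell'_{\rho,\mathrm{id}}(z)|=2/(1+e^{\sqrt2 N m})$ with $m:=\min(|z_1|,|z_2|)$. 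Averaging $y\,|z_1-z_2|$ over the signs of $(z_1,z_2)$ given $(|z_1|,|z_2|)$ yields exactly $m$, so
\[
g_\mu=\frac{1}{2\sqrt2}\,\expec\Bigl[\frac{2m}{1+e^{\sqrt2 N m}}\Bigr].
\]
The cancellation you anticipated does occur. But $m$ has a density bounded above and below near $0$, so what survives is of order $1\wedge N^{-2}$, not $N^{-3}$. In your language, the first surviving term carries one extra factor $1/(Nr)$ relative to your $N^{-1}$ computation, not two.

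This has two consequences. The lower bound $c_1(1\wedge N^{-3})$ holds a fortiori. The paper gets it from the same paired integrand by keeping only the sector $\theta\in(\pi/8,\pi/4)$, where $\min(\sin\theta,\cos\theta)$ is bounded below and $|\ell'|\ge e^{-cNr}$, and integrating against the Rayleigh law: $\expec_r[re^{-cNr}]\asymp N^{-3}$, coming from $r\lesssim 1/N$. Discarding the near-boundary mass is what makes that bound lossy. It is, however, the only direction used later (Step~(i) of the proof of Theorem~\ref{thm:main} and the $o(g_\mu)$ comparisons). The upper bound $g_\mu\le c_2N^{-3}$ cannot be obtained by your route or any other once $N^{(t)}\to\infty$, as in Phase~II where it reaches order $\log\log d$. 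The display gives $g_\mu\sim\frac{\pi^{3/2}}{12}N^{-2}$, and no renormalization of $g_\mu$ changes the exponent; the paper gives no real argument for that direction. To complete your proof, replace the speculative Laplace step with the exact reduction above. It yields $c_1(1\wedge N^{-2})\le g_\mu\le c_2(1\wedge N^{-2})$, which implies the stated lower bound and shows the stated upper bound must be weakened to $N^{-2}$.
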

The first result shows that gradients align with the signal and scale with the average margin $g_\mu$. The second bounds $g_\mu$: initially constant, 
it decays as $N^{(t)}$ grows.

\paragraph{Inductive block growth.}  
Finally, we show that block mass grows multiplicatively at rate $g_\mu$:

\begin{lemma}[Inductive Block Growth]\label{lem:inductive3}
If the network is $(\zeta',H)$-signal-heavy at time $t$, then with probability at least $1-d^{-\Omega(1)}$,
\[
N^{(t+1)}=(1+2\eta g_\mu^{(t)})(1+o(1))\,N^{(t)}.
\]
\end{lemma}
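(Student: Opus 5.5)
I would prove Lemma~\ref{lem:inductive3} by tracking, for each heavy neuron $(w,a)\in\calS$ in a block $\calN_i^{\pm}$ with signal direction $\mu$, the product $|a|\,\|\wsig\|$, and then averaging over the block. Write $s=\mu^\top w>0$, so that $\|\wsig\|=s$ up to the sign conventions in the definition of $\wsig$. One SGD step gives
\[
s^{(t+1)}=s^{(t)}-\eta\,\mu^\top\nabla_w\hat L_\rho,\qquad |a^{(t+1)}|=|a^{(t)}|-\eta\,\sgn(a)\nabla_a\hat L_\rho .
\]
The first step is to replace the stochastic gradients by population gradients. Because the batch has size $\batch\ge d/\theta$, and because $\ell'_\rho$ is bounded while $\sigma'(w^\top x)x$ is sub-Gaussian, standard concentration combined with a union bound over the polynomially many neurons shows that the error is $o(\eta |a| g_\mu)$ with probability $1-d^{-\Omega(1)}$. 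This relies on $g_\mu\gtrsim N^{-3}\wedge 1$ being bounded below by a polylog quantity for $t\le T_b$, which follows from part 2 of Lemma~\ref{lem:pop2_grad_main}.

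The second step replaces the population gradient by the clean oracle gradient $\ncl L_{\rho,\mathrm{id}}$, in two stages.
\begin{enumerate}
\item Pass from $\ell'_\rho(x)$ to $\ell'_\rho(z)$. The signal-heavy property gives $\|w_\perp\|+\|\wopp\|\le\zeta'\|\wsig\|$ for heavy neurons, and the non-heavy mass is at most $\zeta'N$. Since $\ell'$ is Lipschitz, this yields $|f_\rho(x)-f_\rho(z)|\lesssim \zeta' N(1+\|\xi\|\text{-type terms})$ in expectation, with the projections of $\xi$ onto the weights having small variance.
\item Pass from $\ell'_\rho(z)$ to $\ell'_{\rho,\mathrm{id}}(z)$. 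Use block balance: Lemma~\ref{lem:ineq_U} at the start of Phase II, and its propagation, which is itself part of the induction. This controls $|f_\rho(z)-f_{\rho,\mathrm{id}}(z)|\lesssim (U+\zeta')N\|z\|$.
\end{enumerate}
Each error is then multiplied by $\sigma'(w^\top x)x$ or $\sigma(w^\top x)$ and compared with $g_\mu$. I expect this to be the main obstacle. When $N$ is large, $g_\mu\asymp N^{-3}$ is tiny, because $\ell'_{\rho,\mathrm{id}}$ is exponentially small away from the boundary. An error of size $UN$ in the logit is therefore not automatically small relative to $g_\mu$. I would try to localize: both the true and oracle $\ell'$ are concentrated on the strip $\calC_{1/N}$, where the logits are $O(1)$. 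On the strip the relative error is $O((U+\zeta')N\cdot \mathrm{dist})=O(U+\zeta')$, and off the strip both terms are exponentially small. This requires $U,\zeta'=o(1)$ uniformly, which the stability Lemma~\ref{lem:inductive2} provides since $\eta\zeta'T=o(1)$.

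With these approximations, part 1 of Lemma~\ref{lem:pop2_grad_main} gives
\[
s^{(t+1)}=s^{(t)}+\eta|a|g_\mu(1\pm o(1)),\qquad |a^{(t+1)}|=|a|+\eta s g_\mu(1\pm o(1)).
\]
Multiplying these out,
\[
|a^{(t+1)}|s^{(t+1)}=|a|s+\eta g_\mu(a^2+s^2)(1\pm o(1))+O(\eta^2).
\]
Layer balance means $|a|\approx\|w\|\approx s$ for heavy neurons. Indeed, $|a|\le\|w\|$ holds, the gradient flow conserves $\|w\|^2-a^2$, and the initial values agree at $\theta$, so $a^2+s^2=2|a|s(1+o(1))$. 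This gives $|a|s\,(1+2\eta g_\mu(1+o(1)))$. Averaging over each block, with non-heavy neurons contributing $o(N)$ by condition 2, gives $N_i^{\pm,(t+1)}=(1+2\eta g_\mu)(1+o(1))N_i^{\pm,(t)}$. Because $g_\mu$ is the same for all four $\mu$ by the oracle's symmetry, the average $N^{(t)}$ satisfies the claimed recursion, and the block ratios, hence $U$, are preserved up to $1+o(\eta g_\mu)$.
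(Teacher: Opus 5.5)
Your outline matches the paper's proof, which goes through Lemma~\ref{lem:phase2_heavy}, item 1. The shared steps are:
\begin{itemize}
\item expand $|a^{(t+1)}|\,\|\wsig^{(t+1)}\|$ for each heavy neuron;
\item replace the mini-batch gradients by $\ncl L_{\rho,\mathrm{id}}$, with errors controlled by Lemma~\ref{lem:conc_grad} and Lemma~\ref{claim:approxerror};
\item take the main term from Lemma~\ref{lem:pop2_grad};
\item use $|a|\approx\|\wsig\|$ to get the factor $2$ (your justification via the near-conservation of $\|w\|^2-a^2$, Lemma~\ref{lemma:layer_balance}, makes explicit what the paper uses tacitly);
\item average over blocks, with non-heavy neurons negligible.
\end{itemize}
You depart from the paper in one step: you call the comparison of approximation errors with $g_\mu$ the main obstacle and localize to $\calC_{1/N}$. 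That step is unnecessary, and as written it is not right.

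\textbf{Why it is unnecessary.} Crude global bounds already suffice. By Lemma~\ref{claim:approxerror}, the gradient error is at most $|a|\bigl(\expec_\rho\|a\wpe\|+4N^{(t)}U^{(t)}\bigr)\lesssim |a|(\zeta'+U^{(t)})N^{(t)}$, while $|a|g_\mu\gtrsim |a|\bigl(1\wedge (N^{(t)})^{-3}\bigr)$. Up to the stopping time, $N^{(t)}\lesssim\log\log d$, whereas $\zeta'$ and $U^{(t)}$ are $O(\log^{-c}d)$. So the ratio $(\zeta'+U)N^4$ is $o(1)$. This is exactly the paper's ``$\expec_\rho\|a\wpe\|\le\zeta N=o(g_\mu)$ and $NU=o(g_\mu)$''. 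It uses the same lower bound on $g_\mu$ that you already invoke for the concentration step.

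\textbf{Why the localization is wrong as stated.} It gets the scalings wrong in two ways.
\begin{itemize}
\item Block imbalance shifts the logit by terms like $(N_i^{\pm}-N)\,\sigma(\pm\mu_i^\top z)$, of order $UN\|z\|$. This does not vanish on the decision boundary: on $\{z_2=0\}$ one has $\mu_1^\top z=\mu_2^\top z$, while $N_1^+\neq N_2^+$. The same holds for the shift caused by $\wpe^\top\xi$. So on the strip the pointwise relative error of $\ell'$ is $O((U+\zeta')N)$, not $O(U+\zeta')$.
\item Lemma~\ref{lem:laplace} gives $g_\mu\asymp N^{-3}\ll g_\rho$. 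So $g_\mu$ is what survives a cancellation between the $y=+1$ and $y=-1$ sides of the strip, and a perturbation of $\ell'$ need not respect that cancellation. A pointwise relative error $\epsilon$ can therefore cost order $\epsilon g_\rho$ in the gradient, which is about $\epsilon N^2$ relative to $g_\mu$ (up to logarithms).
\end{itemize}
Hence ``$U,\zeta'=o(1)$'' is not the condition your argument needs. You need smallness that is polynomial in $N$, such as $(U+\zeta')N^4=o(1)$, and this holds precisely because $N\lesssim\log\log d$. If you replace the localization paragraph with this comparison, your proof coincides with the paper's.
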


\noindent Combined with the pre-balance of Phase~I, this ensures sustained, symmetric block growth and drives the loss toward zero.

\begin{figure}[htbp]
    \centering
    \includegraphics[width=0.95\linewidth]{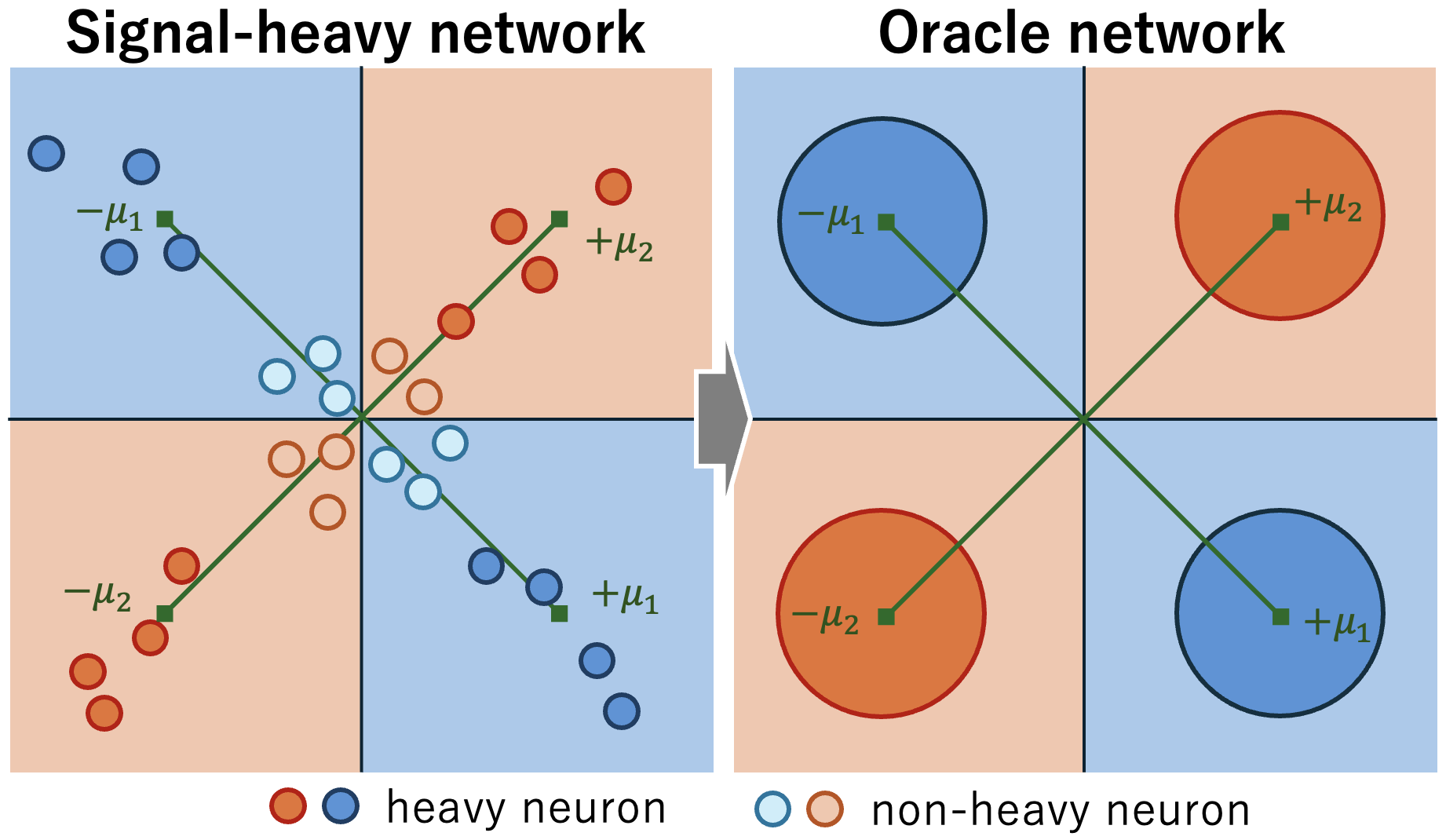}
    \caption{Neuron dynamics in Phase II.}
    \label{fig:phase2}
\end{figure}

\subsection{Conclusion: Proof of Theorem~\ref{thm:main}}

To conclude the proof, we connect the block mass \(N^{(T)}\) with the expected risk 
\(\mathbb{E}_x \ell_{\rho^{(T)}}(x)\). 
The argument proceeds in three steps:  

\begin{enumerate}[label=(\roman*)]
  \setlength{\parskip}{0cm}
  \setlength{\itemsep}{0cm}
\item \textbf{Growth of block mass.}  
From the Phase~II analysis, the network is signal-heavy and the block mass grows at a rate $g_\mu^{(t)}\gtrsim (N^{(t)})^{-3}$. By a contradiction argument, we show that within \(T = O(\eta^{-1} (\log \log d)^4)\) steps, the block mass reaches order \(\log \log d\).  

\item \textbf{Accuracy away from the boundary.}  
We split the input space into points close to the decision boundary, where misclassification is unavoidable, 
and points sufficiently far.  
Using a polar decomposition of the Gaussian input, we prove that the near-boundary region has probability mass 
\(O((\log \log d)^{-1/2})\).  
For points outside this set, the oracle network output has the correct sign with high confidence.  

\item \textbf{Approximation error.}  
Finally, we control the difference between the actual network and the idealized block model.  
By signal-heaviness and the balanced evolution of blocks, this error is negligible compared to the dominant term above.  
\end{enumerate}

\begin{remark}  A key novelty of our proof is the analysis of block dynamics.  
The growth rate of each block is governed by the average margin $g_\mu^{(t)}$.  
Although $g_\mu^{(t)}$ converges to zero, its rate of decay can be related to the evolution of $N^{(t)}$, 
which in turn forces $N^{(t)}$ to diverge to infinity. 
\end{remark}

\section{Numerical experiments}\label{sec:xp}

\begin{figure*}[ht!]
\centering

\begin{subfigure}{0.25\linewidth}
    \includegraphics[width=\textwidth]{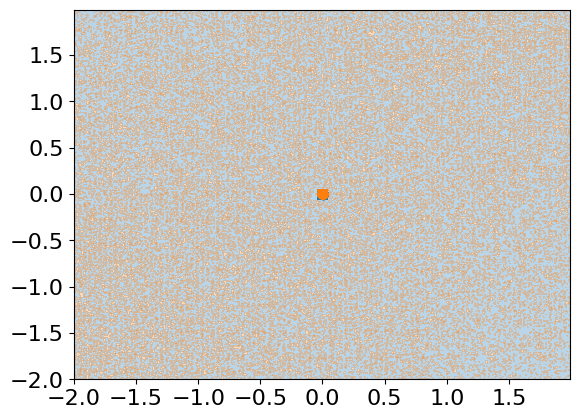}
    \caption{$T=8000$}
\end{subfigure}%
\begin{subfigure}{0.25\linewidth}
    \includegraphics[width=\textwidth]{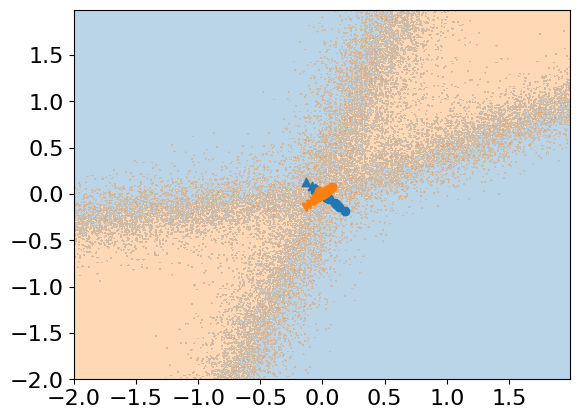}
    \caption{$T=12000$}
\end{subfigure}%
\begin{subfigure}{0.25\linewidth}
    \includegraphics[width=\textwidth]{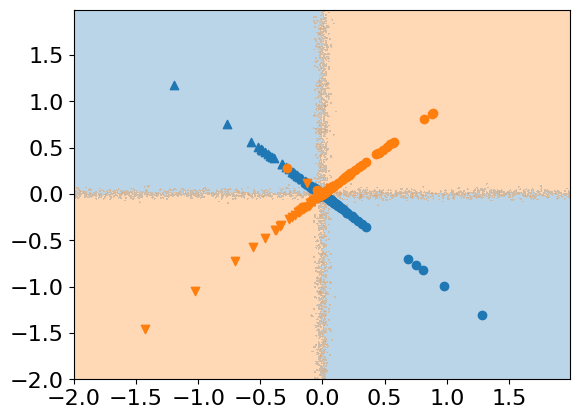}
    \caption{$T=30000$}
\end{subfigure}%
\begin{subfigure}{0.25\linewidth}
    \includegraphics[width=\textwidth]{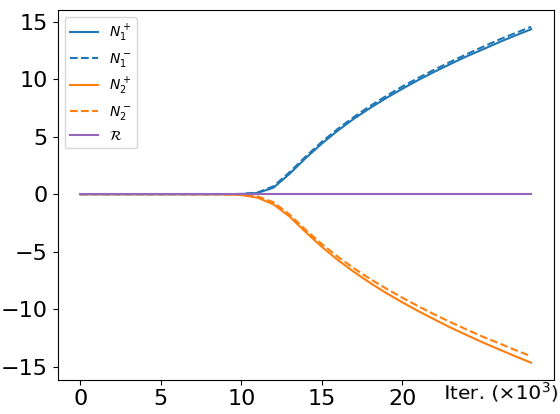}
    \caption{Block growth.}
    \label{fig:xp1weights}
\end{subfigure}

\caption{Simulated neuron dynamics. (a–c) Weight projections (dots; colors indicate assigned labels) and decision boundaries. The weights progressively align with the directions $\pm\mu_i$. (d) The block mass $N_i^\pm$ vs time (purple: residual mass $\calR$). Block masses remain approximately balanced. }
\label{fig:xp1}
\end{figure*}

We conduct numerical experiments to validate our theoretical results, and also examine settings beyond our formal assumptions.

\subsection{Decision Boundary During Training.} 
We track the evolution of weights and the decision boundary for $d=600$, $m=400$, $M=82000$, $\theta=\eta=0.01$.  
Figure~\ref{fig:xp1} shows the projection of weights $w^{(t)}$ on
$\mathrm{span}(e_1,e_2)$ at different training stages (for readability, we keep only neurons with
$\|\wsig^{(0)}\|\ge 2\theta\log d/\sqrt{d}$).  
The background indicates the predicted clusters, and neurons progressively align
with one of the four directions $\pm\mu_1,\pm\mu_2$, sharpening the decision
boundary.  

Figure~\ref{fig:xp1weights} confirms that block masses grow at comparable rates,
consistent with our block-dynamic analysis. The purple curve represents the
residual mass $\calR=\sum_{(w,a)\in\calN}|a|\|\wpe\|$, which remains small.

\subsection{Additional Experiments}
We complement our main experiments with several variants (details in Appendix~\ref{app:xp}). 
Overall, our findings show that training dynamics are robust across different input distributions 
and settings, but can be sensitive to noise and overtraining:

\begin{itemize}[leftmargin=*]
  \setlength{\parskip}{0cm}
  \setlength{\itemsep}{0cm}
    \item \textbf{Input distribution.} Replacing Gaussian inputs with uniform or Gaussian XOR inputs 
    still leads to successful training, provided the distribution is symmetric with respect to the four directions $\pm\mu_1,\pm\mu_2$.
    \item \textbf{Noise sensitivity.} Flipping only $5\%$ of the labels already degrades test performance significantly, 
    indicating a vulnerability of SGD to mislabeled data. However, the decision boundary learned by the network is only slightly biased along the $x$- and $y$-axes.
    \item \textbf{Anisotropy.} With anisotropic covariance, the network learns the decision boundary, 
    but weight evolution becomes disordered compared to the isotropic case. 
    \item \textbf{Nonlinear boundaries.} For highly nonlinear decision functions (e.g., sinusoidal boundaries), 
    the network learns only a linear approximation, and training quickly plateaus. 
\end{itemize}

Taken together, these experiments confirm the key ingredients of our analysis: 
neurons self-organize into balanced blocks, block growth at a similar rate, and the geometry of the zero-margin setting imposes inherent limitations on convergence.

\section{Discussion}\label{sec:concl}

We analyzed the training dynamics of a two-layer network on Gaussian XOR, a canonical \emph{zero-margin} classification problem. 
Our key finding is that although individual neurons evolve heterogeneously, they rapidly self-organize into four balanced blocks whose dynamics reduce to a low-dimensional symmetric system. 
This block-level description yields convergence guarantees without relying on margin assumptions: generalization is governed by \emph{average-case dynamics} near the decision boundary rather than worst-case margins. 
We propose this \emph{block-dynamics} perspective as a new analytical tool for studying feature learning in classification settings where standard separability arguments break down.

Our analysis is limited to the XOR boundary with isotropic Gaussian inputs, chosen for tractability. 
Extending block-dynamic arguments to richer decision boundaries and more general input distributions is a natural direction for future work. 
More broadly, block-level analysis offers a promising framework for understanding the training dynamics of neural networks in challenging classification regimes.

\newpage
\bibliography{references}

@inproceedings{
brutzkus2018sgd,
title={{SGD} Learns Over-parameterized Networks that Provably Generalize on Linearly Separable Data},
author={Alon Brutzkus and Amir Globerson and Eran Malach and Shai Shalev-Shwartz},
booktitle={International Conference on Learning Representations},
year={2018},
url={https://openreview.net/forum?id=rJ33wwxRb},
}

@inproceedings{telgarsky20,
author = {Ji, Ziwei and Telgarsky, Matus},
title = {Directional convergence and alignment in deep learning},
year = {2020},
booktitle = {Proceedings of the 34th International Conference on Neural Information Processing Systems},
articleno = {1441},
numpages = {11},
location = {Vancouver, BC, Canada},
series = {NIPS '20}
}

@InProceedings{pmlr-v235-wang24cn,
  title = 	 {Benign Overfitting in Adversarial Training of Neural Networks},
  author =       {Wang, Yunjuan and Zhang, Kaibo and Arora, Raman},
  booktitle = 	 {Proceedings of the 41st International Conference on Machine Learning},
  pages = 	 {52171--52232},
  year = 	 {2024},
  volume = 	 {235},
  series = 	 {Proceedings of Machine Learning Research},
  month = 	 {21--27 Jul},
  publisher =    {PMLR},
}

@inproceedings{zhu24,
author = {Zhu, Zhenyu and Liu, Fanghui and Chrysos, Grigorios G and Locatello, Francesco and Cevher, Volkan},
title = {Benign overfitting in deep neural networks under lazy training},
year = {2023},
publisher = {JMLR.org},
booktitle = {Proceedings of the 40th International Conference on Machine Learning},
articleno = {1816},
numpages = {24},
location = {Honolulu, Hawaii, USA},
series = {ICML'23}
}

@inproceedings{
kornowski2023from,
title={From Tempered to Benign Overfitting in Re{LU} Neural Networks},
author={Guy Kornowski and Gilad Yehudai and Ohad Shamir},
booktitle={Thirty-seventh Conference on Neural Information Processing Systems},
year={2023}
}

@inproceedings{dandi_reuse,
author = {Dandi, Yatin and Troiani, Emanuele and Arnaboldi, Luca and Pesce, Luca and Zdeborova, Lenka and Krzakala, Florent},
title = {The benefits of reusing batches for gradient descent in two-layer networks: breaking the curse of information and leap exponents},
year = {2024},
publisher = {JMLR.org},
booktitle = {Proceedings of the 41st International Conference on Machine Learning},
articleno = {397},
numpages = {26},
location = {Vienna, Austria},
series = {ICML'24}
}

@inproceedings{
lyu2021gradient,
title={Gradient Descent on Two-layer Nets: Margin Maximization and Simplicity Bias},
author={Kaifeng Lyu and Zhiyuan Li and Runzhe Wang and Sanjeev Arora},
booktitle={Advances in Neural Information Processing Systems},
editor={A. Beygelzimer and Y. Dauphin and P. Liang and J. Wortman Vaughan},
year={2021},
url={https://openreview.net/forum?id=Aa5oPXc_1IV}
}

@inproceedings{
abbe2025learning,
title={Learning High-Degree Parities: The Crucial Role of the Initialization},
author={Emmanuel Abbe and Elisabetta Cornacchia and Jan H{\k{a}}z{\l}a and Donald Kougang-Yombi},
booktitle={The Thirteenth International Conference on Learning Representations},
year={2025}
}

@inproceedings{benignViT24,
 author = {Jiang, Jiarui and Huang, Wei and Zhang, Miao and Suzuki, Taiji and Nie, Liqiang},
 booktitle = {Advances in Neural Information Processing Systems},
 editor = {A. Globerson and L. Mackey and D. Belgrave and A. Fan and U. Paquet and J. Tomczak and C. Zhang},
 pages = {135464--135625},
 publisher = {Curran Associates, Inc.},
 title = {Unveil Benign Overfitting for Transformer in Vision: Training Dynamics, Convergence, and Generalization},
 volume = {37},
 year = {2024}
}

@inproceedings{
shi2023provable,
title={Provable Guarantees for Neural Networks via Gradient Feature Learning},
author={Zhenmei Shi and Junyi Wei and Yingyu Liang},
booktitle={Thirty-seventh Conference on Neural Information Processing Systems},
year={2023},
url={https://openreview.net/forum?id=5F04bU79eK}
}

@misc{bruna2025surveyalgorithmsmultiindexmodels,
      title={Survey on Algorithms for multi-index models}, 
      author={Joan Bruna and Daniel Hsu},
      year={2025},
      eprint={2504.05426},
      archivePrefix={arXiv},
      primaryClass={stat.ML},
      url={https://arxiv.org/abs/2504.05426}, 
}

@inproceedings{
huang2025quantifying,
title={Quantifying the Optimization and Generalization Advantages of Graph Neural Networks Over Multilayer Perceptrons},
author={Wei Huang and Yuan Cao and Haonan Wang and Xin Cao and Taiji Suzuki},
booktitle={The 28th International Conference on Artificial Intelligence and Statistics},
year={2025}
}

@inproceedings{benignCNN,
author = {Cao, Yuan and Chen, Zixiang and Belkin, Mikhail and Gu, Quanquan},
title = {Benign overfitting in two-layer convolutional neural networks},
year = {2022},
booktitle = {Proceedings of the 36th International Conference on Neural Information Processing Systems},
articleno = {1830},
numpages = {14},
location = {New Orleans, LA, USA},
series = {NIPS '22}
}

@article{osti_10344164,
place = {Country unknown/Code not available}, title = {Benign Overfitting without Linearity: Neural Network Classifiers Trained by Gradient Descent for Noisy Linear Data}, abstractNote = {}, journal = {Proceedings of the 35th Conference on Learning Theory (COLT2022)}, author = {Frei, Spencer and Chatterji, Niladri and Bartlett, Peter L.}, year="2022"}

@inproceedings{
shi2022a,
title={A Theoretical Analysis on Feature Learning in Neural Networks: Emergence from Inputs and Advantage over Fixed Features},
author={Zhenmei Shi and Junyi Wei and Yingyu Liang},
booktitle={International Conference on Learning Representations},
year={2022}
}

@InProceedings{pmlr-v202-kou23a,
  title = 	 {Benign Overfitting in Two-layer {R}e{LU} Convolutional Neural Networks},
  author =       {Kou, Yiwen and Chen, Zixiang and Chen, Yuanzhou and Gu, Quanquan},
  booktitle = 	 {Proceedings of the 40th International Conference on Machine Learning},
  pages = 	 {17615--17659},
  year = 	 {2023},
  editor = 	 {Krause, Andreas and Brunskill, Emma and Cho, Kyunghyun and Engelhardt, Barbara and Sabato, Sivan and Scarlett, Jonathan},
  volume = 	 {202},
  series = 	 {Proceedings of Machine Learning Research},
  month = 	 {23--29 Jul},
  publisher =    {PMLR}
}

@book{Vershynin_2018, place={Cambridge}, series={Cambridge Series in Statistical and Probabilistic Mathematics}, title={High-Dimensional Probability: An Introduction with Applications in Data Science}, publisher={Cambridge University Press}, author={Vershynin, Roman}, year={2018}, collection={Cambridge Series in Statistical and Probabilistic Mathematics}}

@inproceedings{suzuki2023feature,
    title={Feature learning via mean-field Langevin dynamics: classifying sparse parities and beyond},
    author={Taiji Suzuki and Denny Wu and Kazusato Oko and Atsushi Nitanda},
    booktitle={Thirty-seventh Conference on Neural Information Processing Systems},
    year={2023}
}

@inproceedings{
mahankali2023ntk,
title={Beyond {NTK} with Vanilla Gradient Descent: A Mean-Field Analysis of Neural Networks with Polynomial Width, Samples, and Time},
author={Arvind Venkat Mahankali and Jeff Z. HaoChen and Kefan Dong and Margalit Glasgow and Tengyu Ma},
booktitle={Thirty-seventh Conference on Neural Information Processing Systems},
year={2023}
}

@inproceedings{Mei2019MeanfieldTO,
  title={Mean-field theory of two-layers neural networks: dimension-free bounds and kernel limit},
  author={Song Mei and Theodor Misiakiewicz and Andrea Montanari},
  booktitle={Annual Conference Computational Learning Theory},
  year={2019}
}

@inproceedings{jacotNTK,
 author = {Jacot, Arthur and Gabriel, Franck and Hongler, Clement},
 booktitle = {Advances in Neural Information Processing Systems},
 pages = {},
 title = {Neural Tangent Kernel: Convergence and Generalization in Neural Networks},
 volume = {31},
 year = {2018}
}

@inproceedings{Chizat2018OnLT,
  title={On Lazy Training in Differentiable Programming},
  author={L{\'e}na{\"i}c Chizat and Edouard Oyallon and Francis R. Bach},
  booktitle={Neural Information Processing Systems},
  year={2018}
}

@article{montanari19,
author = {Ghorbani, Behrooz and Mei, Song and Misiakiewicz, Theodor and Montanari, Andrea},
year = {2021},
month = {04},
pages = {},
title = {Linearized two-layers neural networks in high dimension},
volume = {49},
journal = {The Annals of Statistics}
}

@article{bietti2023learning,
  title={On learning Gaussian multi-index models with gradient flow part I: General properties and two-timescale learning},
  author={Bietti, Alberto and Bruna, Joan and Pillaud-Vivien, Loucas},
  journal={Communications on Pure and Applied Mathematics},
  year={2025},
  publisher={Wiley Online Library}
}

@inproceedings{bietti2022learning,
    title={Learning single-index models with shallow neural networks},
    author={Alberto Bietti and Joan Bruna and Clayton Sanford and Min Jae Song},
    booktitle={Advances in Neural Information Processing Systems},
    year={2022}
}

@inproceedings{AbbeAM23,
  author       = {Emmanuel Abbe and
                  Enric Boix Adser{\`{a}} and
                  Theodor Misiakiewicz},
  title        = {{SGD} learning on neural networks: leap complexity and saddle-to-saddle
                  dynamics},
  booktitle    = {The Thirty Sixth Annual Conference on Learning Theory, {COLT} }, 
  volume       = {195},
  pages        = {2552--2623},
  year         = {2023}
}

@InProceedings{oko24a,
  title = 	 {Learning sum of diverse features: computational hardness and efficient gradient-based training for ridge combinations},
  author =       {Oko, Kazusato and Song, Yujin and Suzuki, Taiji and Wu, Denny},
  booktitle = 	 {Proceedings of Thirty Seventh Conference on Learning Theory},
  pages = 	 {4009--4081},
  year = 	 {2024},
  volume = 	 {247},
  series = 	 {Proceedings of Machine Learning Research},
  month = 	 {30 Jun--03 Jul},
  publisher =    {PMLR}
}

@inproceedings{
    mousavi-hosseini2023neural,
    title={Neural Networks Efficiently Learn Low-Dimensional Representations with {SGD}},
    author={Alireza Mousavi-Hosseini and Sejun Park and Manuela Girotti and Ioannis Mitliagkas and Murat A Erdogdu},
    booktitle={The Eleventh International Conference on Learning Representations },
    year={2023}
}

@inproceedings{glasgow2023sgd,
      title={SGD Finds then Tunes Features in Two-Layer Neural Networks with near-Optimal Sample Complexity: A Case Study in the XOR problem}, 
      author={Margalit Glasgow},
    booktitle={International Conference on Learning Representations.},
      year={2024}
}

@ARTICLE{Berthier2024-pf,
  title     = "Learning time-scales in two-layers neural networks",
  author    = "Berthier, Rapha{\"e}l and Montanari, Andrea and Zhou, Kangjie",
  journal   = "Found. Comput. Math.",
  publisher = "Springer Science and Business Media LLC",
  month     =  aug,
  year      =  2024
}

@inproceedings{
kou2024matching,
title={Matching the Statistical Query Lower Bound for $k$-Sparse Parity Problems with Sign Stochastic Gradient Descent},
author={Yiwen Kou and Zixiang Chen and Quanquan Gu and Sham M. Kakade},
booktitle={The Thirty-eighth Annual Conference on Neural Information Processing Systems},
year={2024}
}

@inproceedings{
xu2023benign,
title={Benign Overfitting and Grokking in Re{LU} Networks for {XOR} Cluster Data},
author={Zhiwei Xu and Yutong Wang and Spencer Frei and Gal Vardi and Wei Hu},
booktitle={The Twelfth International Conference on Learning Representations},
year={2024}
}

@inproceedings{
barak2022hidden,
title={Hidden Progress in Deep Learning: {SGD} Learns Parities Near the Computational Limit},
author={Boaz Barak and Benjamin L. Edelman and Surbhi Goel and Sham M. Kakade and Eran Malach and Cyril Zhang},
booktitle={Advances in Neural Information Processing Systems},
year={2022}
}

@inproceedings{shalev17,
  title={Failures of gradient-based deep learning},
  author={Shalev-Shwartz, Shai and Shamir, Ohad and Shammah, Shaked},
  booktitle={International Conference on Machine Learning},
  pages={3067--3075},
  year={2017},
  organization={PMLR}
}

@inproceedings{Wei2018RegularizationMG,
  title={Regularization Matters: Generalization and Optimization of Neural Nets v.s. their Induced Kernel},
  author={Colin Wei and J. Lee and Qiang Liu and Tengyu Ma},
  booktitle={Neural Information Processing Systems},
  year={2018}
}

@ARTICLE{mnist_dataset,
  author={Deng, Li},
  journal={IEEE Signal Processing Magazine}, 
  title={The MNIST Database of Handwritten Digit Images for Machine Learning Research [Best of the Web]}, 
  year={2012},
  volume={29},
  number={6},
  pages={141-142}}

@inproceedings{
telgarsky2023feature,
title={Feature selection and low test error in shallow low-rotation Re{LU} networks},
author={Matus Telgarsky},
booktitle={The Eleventh International Conference on Learning Representations },
year={2023}
}

\section*{Checklist}
The checklist follows the references. For each question, choose your answer from the three possible options: Yes, No, Not Applicable.  You are encouraged to include a justification to your answer, either by referencing the appropriate section of your paper or providing a brief inline description (1-2 sentences). 
Please do not modify the questions.  Note that the Checklist section does not count towards the page limit. Not including the checklist in the first submission won't result in desk rejection, although in such case we will ask you to upload it during the author response period and include it in camera ready (if accepted).

\begin{enumerate}

  \item For all models and algorithms presented, check if you include:
  \begin{enumerate}
    \item A clear description of the mathematical setting, assumptions, algorithm, and/or model. [Yes]
    \item An analysis of the properties and complexity (time, space, sample size) of any algorithm. [Yes]
    \item (Optional) Anonymized source code, with specification of all dependencies, including external libraries. [Yes]
  \end{enumerate}

  \item For any theoretical claim, check if you include:
  \begin{enumerate}
    \item Statements of the full set of assumptions of all theoretical results. [Yes]
    \item Complete proofs of all theoretical results. [Yes]
    \item Clear explanations of any assumptions. [Yes]     
  \end{enumerate}

  \item For all figures and tables that present empirical results, check if you include:
  \begin{enumerate}
    \item The code, data, and instructions needed to reproduce the main experimental results (either in the supplemental material or as a URL). [Yes]
    \item All the training details (e.g., data splits, hyperparameters, how they were chosen). [Yes]
    \item A clear definition of the specific measure or statistics and error bars (e.g., with respect to the random seed after running experiments multiple times). [Yes]
    \item A description of the computing infrastructure used. (e.g., type of GPUs, internal cluster, or cloud provider). [Yes]
  \end{enumerate}

  \item If you are using existing assets (e.g., code, data, models) or curating/releasing new assets, check if you include:
  \begin{enumerate}
    \item Citations of the creator If your work uses existing assets. [Not Applicable]
    \item The license information of the assets, if applicable. [Not Applicable]
    \item New assets either in the supplemental material or as a URL, if applicable. [Not Applicable]
    \item Information about consent from data providers/curators. [Not Applicable]
    \item Discussion of sensible content if applicable, e.g., personally identifiable information or offensive content. [Not Applicable]
  \end{enumerate}

  \item If you used crowdsourcing or conducted research with human subjects, check if you include:
  \begin{enumerate}
    \item The full text of instructions given to participants and screenshots. [Not Applicable]
    \item Descriptions of potential participant risks, with links to Institutional Review Board (IRB) approvals if applicable. [Not Applicable]
    \item The estimated hourly wage paid to participants and the total amount spent on participant compensation. [Not Applicable]
  \end{enumerate}

\end{enumerate}

\clearpage
\appendix
\thispagestyle{empty}

\onecolumn
\aistatstitle{Supplementary Materials}

\addtocontents{toc}{\protect\setcounter{tocdepth}{2}} %

{\renewcommand{\contentsname}{}%
 \tableofcontents}

\section{Preliminary Lemmas}\label{sec:app:preliminary}
We will first introduce some technical results that will be used repeatedly in the proof of Lemma \ref{lem:phase1}.

\subsection{Concentration of Empirical Gradients}
The following lemma bounds the difference between the population gradient and its empirical counterpart. It extends Lemma B.12 in \cite{glasgow2023sgd} to the Gaussian setting.

\begin{lemma}[Empirical concentration of the gradients]\label{lem:conc_grad}
Assume that the input data are as described in Section~\ref{sec:framework}, and that the neural network width satisfies \( m \leq d^c \) for some constant \( c > 0 \). Let \(\batch\) denote the batch size, i.e., \(|M_t|\). Recall that \( L = \mathbb{E}_x [\ell_\rho(x)] \) and \(\hat{L} = \frac{1}{|\batch|} \sum_{j \in \batch} \ell_\rho(x^{(j)})\) for some loss function \(\ell_\rho\).

Then, for any loss function \(\ell_\rho\) that is differentiable and \(2\)-Lipschitz, there exists a constant \( C > 0 \) such that with probability at least \( 1 - d^{-\Omega(1)} \), for all neurons \((w, a) \in \mathcal{N}\), we have
\begin{enumerate}
  \setlength{\parskip}{0cm}
  \setlength{\itemsep}{0cm}
    \item \(\|\nabla_w L - \nabla_w \hat{L}\|^2 \leq C^2 a^2 \frac{d \log d}{\batch}\),
    \item \(\|\nabla_{w_i} L - \nabla_{w_i} \hat{L}\|^2 \leq C^2 a^2 \frac{\log d}{\batch}\), where \( w_i = \langle w, e_i \rangle \),
    \item \(\left| \nabla_a L - \nabla_a \hat{L} \right|^2 \leq C^2 \|w\|^2 \frac{\log d}{\batch}\).
\end{enumerate}
\end{lemma}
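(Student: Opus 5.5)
The plan is to write each empirical gradient as an average of i.i.d.\ random vectors, control their tails (Gaussian inputs are unbounded, so bounded-difference arguments do not apply), and take a union bound over neurons and coordinates. Throughout we use that the batch $M_t$ is independent of the current weights, so we can condition on $\rho^{(t)}$ and treat each neuron $(w,a)$ as fixed. For a fixed neuron define
\[
G_j := \ell'_\rho(x^{(j)})\,a\,\sigma'(w^\top x^{(j)})\,x^{(j)}, \qquad h_j := \ell'_\rho(x^{(j)})\,\sigma(w^\top x^{(j)}).
\]
Then $\nabla_w\hat L - \nabla_w L = \frac1V\sum_j (G_j - \expec G_j)$, and similarly for $a$. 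The $2$-Lipschitz assumption gives $|\ell'_\rho|\le 2$, so $|\langle G_j,e_i\rangle| \le 2|a|\,|x^{(j)}_i|$ and $|h_j|\le 2|w^\top x^{(j)}|$. We do not need independence between $\ell'_\rho(x)$ and $x$.

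\textbf{Items 2 and 3.} Each coordinate $\langle G_j, e_i\rangle$ is dominated by $2|a|\,|x_i^{(j)}|$ with $x_i^{(j)}\sim\calN(0,1)$, so it is sub-Gaussian with $\psi_2$-norm $\lesssim |a|$; centering only changes this by a constant. In the same way, $h_j$ is sub-Gaussian with $\psi_2$-norm $\lesssim\|w\|$, since $w^\top x\sim\calN(0,\|w\|^2)$. Hoeffding's inequality for sub-Gaussian averages (Vershynin, Thm.~2.6.2) gives, for fixed $(w,a)$ and $i$,
\[
\prob\Bigl(\Bigl|\tfrac1V\textstyle\sum_j(\cdot)\Bigr| > s\Bigr) \le 2\exp\bigl(-c V s^2/K^2\bigr), \qquad K\in\{|a|,\|w\|\}.
\]
Take $s^2 = C^2K^2\log d/V$ with $C$ large. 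The failure probability is then $d^{-c C^2}$. A union bound over the $m\le d^{c}$ neurons and the $d$ coordinates still leaves probability $1-d^{-\Omega(1)}$.

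\textbf{Item 1.} Here we need concentration of a $d$-dimensional average, and there are two routes. The simplest is to sum the coordinate bounds from item 2 over $i=1,\dots,d$, which gives $\|\nabla_w L-\nabla_w\hat L\|^2 \le d\cdot C^2a^2\log d/V$. This is exactly the claimed bound and reuses the same union bound. Alternatively, note that $G_j$ is a sub-Gaussian random vector with $\psi_2$-norm $\lesssim|a|$, because every one-dimensional marginal $\langle G_j,u\rangle$ is bounded by $2|a||u^\top x|$. A net argument on $\mS^{d-1}$ then gives $\|\cdot\|\lesssim |a|\sqrt{(d+\log d)/V}$, which is at least as strong. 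We will use the coordinate-sum route because it is immediate.

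\textbf{Main obstacle.} The delicate point is that $\ell'_\rho(x)\sigma'(w^\top x)$ is not independent of $x$, so $G_j$ is not Gaussian. The fix is to use only the domination $|\langle G_j,u\rangle|\le 2|a||u^\top x|$, which already implies sub-Gaussian tails, and to note that centering preserves this. We also have to make sure the conditioning on the weights is legitimate. Because fresh batches are used at every step, the weights are independent of $M_t$, so the union bound is over a fixed finite family of $m\,d$ events. If the ReLU convention $\sigma'(0)=0$ raised measurability concerns, it is irrelevant, since it affects only a null set.
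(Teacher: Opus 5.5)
Your proposal is correct and matches the paper's own proof. Both bound each coordinate of the per-sample gradient by $2|a|\,|x_i|$, and the $a$-gradient by $2|w^\top x|$ with $w^\top x\sim\calN(0,\|w\|^2)$. Both then apply a sub-Gaussian Hoeffding bound with threshold of order $\sqrt{\log d/V}$, take a union bound over the $d$ coordinates and $m\le d^{c}$ neurons, and get item 1 by summing the coordinate bounds. Your bookkeeping of the $1/V$ normalization is actually cleaner than the paper's, whose intermediate display omits that factor.
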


\begin{proof}
To simplify the notation, we write \(\ell\) instead of \(\ell_\rho\).
By the Lipschitz assumption, we have \(\|\ell'\|_\infty \leq 2\). This implies, by the Gaussianity of \(x \sim \mathcal{N}(0, I_d)\), that for all \(u \in \mathbb{S}^{d-1}\),
\[
\mathbb{P}\left( \left| \ell'(x) \sigma'(w^\top x) \langle x, u \rangle \right| > t \right) \leq \mathbb{P}\left( 2 \left| \langle x, u \rangle \right| > t \right) \leq 2e^{-t^2/8},
\]
since for any random variables \(X, Y\) and any \(t \in \mathbb{R}\), the inequality \(Y \geq X\) implies \(\{X \geq t\} \subset \{Y \geq t\}\).
As a consequence, with the choice \(u = e_i\), we obtain that
\[
a^{-1} \langle \nabla_w L - \nabla_w \hat{L}, e_i \rangle = \sum_{j \in M_t} \left( \ell'\big(x^{(j)}\big) \sigma'\big(w^\top x^{(j)}\big) x^{(j)}_i - \mathbb{E}_x \left[ \ell'(x) \sigma'(w^\top x) x_i \right] \right)
\]
is sub-Gaussian with parameter \(2\sqrt{2}\batch\), as a sum of \(\batch\) independent centered \(2\sqrt{2}\)-sub-Gaussian random variables.

Hence, we have
\[
\mathbb{P}\left( \left| a^{-1} \langle \nabla_w L - \nabla_w \hat{L}, e_i \rangle \right| > t \right) \leq e^{-t^2/(8\batch)}.
\]
By choosing \(t = C \sqrt{\batch \log d} \) for a constant \(C > 0\) large enough, and taking a union bound over \(i = 1, \dots, d\), we obtain that with probability at least \(1 - d^{-\Omega(1)}\),
\[
\|\nabla_{w_i} L - \nabla_{w_i} \hat{L}\|^2 = \langle \nabla_w L - \nabla_w \hat{L}, e_i \rangle^2 \leq C^2 a^2 \frac{\log d}{\batch},
\]
and
\[
\|\nabla_w L - \nabla_w \hat{L}\|^2 = \sum_{i=1}^d \langle \nabla_w L - \nabla_w \hat{L}, e_i \rangle^2 \leq C^2 a^2 \frac{d \log d}{\batch}.
\]

A similar argument leads to the third statement of the lemma by noticing that \(\ell'(x) \sigma(w^\top x) = \ell'(x) \sigma'(w^\top x) w^\top x\) holds, and that \(w^\top x\) is Gaussian with variance \(\|w\|^2\). We conclude by taking a union bound over all the \(m\) neurons.
\end{proof}

\begin{remark}
Since the batch size is chosen such that \(|\batch| \asymp d \log^\beta d\) for some  \(\beta > 1\), the constant \(C\) appearing in Lemma~\ref{lem:conc_grad} can be absorbed into the asymptotic notation and thus ignored.
\end{remark}

\subsection{Basic Properties of the Population Loss Gradients}
We describe several basic properties of the population loss gradients.
To obtain the results, Lemma~B.13 in \cite{glasgow2023sgd} can be easily adapted to the Gaussian setting. For completeness, we detail the proof.

\begin{lemma}\label{lemma:layer_balance}
Assume that the neural network width satisfies \( m \leq d^c \) for some constant \( c > 0 \), the learning rate satisfies \( \eta < \frac{\sqrt{\pi}}{4\sqrt{2}} \), and the batch size satisfies \(\batch = |M_t| \geq d \log^2 d\). Then, with probability at least \( 1 - d^{-\Omega(1)} \), for all neurons \((w,a) \in \mathcal{N}\), we have
\begin{enumerate}
  \setlength{\parskip}{0cm}
  \setlength{\itemsep}{0cm}
    \item \(\|\nabla_w L_\rho\| \leq 2 \sqrt{\frac{2}{\pi}} |a|\),
    \item \(\left| \nabla_a L_\rho \right| \leq (2 + o(1)) \sqrt{\frac{2}{\pi}} \|w\|\),
    \item if \(|a^{(t)}| \leq \|w^{(t)}\|\), then \(|a^{(t+1)}| \leq \|w^{(t+1)}\|\),
    \item \(\|w^{(t+1)}\|^2 - |a^{(t+1)}|^2 \leq \left( \frac{16}{\pi} + o(1) \right) \eta^2 |a^{(t)}|^2 + \|w^{(t)}\|^2 - |a^{(t)}|^2\).
\end{enumerate}
\end{lemma}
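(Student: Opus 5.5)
Items 1 and 2 follow from pointwise bounds on the population gradients, and items 3 and 4 come from expanding one SGD step and using Lemma~\ref{lem:conc_grad} for the stochastic error. Throughout, $|\ell'_\rho|\le 2$, since $\ell'_\rho(x)=-2y\,e^{-yf}/(1+e^{-yf})$ and the fraction lies in $(0,1)$.

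\emph{Item 1.} Write $\nabla_w L_\rho = a\,\expec_x[\ell'_\rho(x)\sigma'(w^\top x)x]$. I bound the norm by duality: for a unit vector $u$,
\[
|u^\top \nabla_w L_\rho|\le 2|a|\,\expec|u^\top x| = 2|a|\sqrt{2/\pi},
\]
because $u^\top x\sim\calN(0,1)$. Taking the supremum over $u$ gives $\|\nabla_w L_\rho\|\le 2\sqrt{2/\pi}\,|a|$.

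\emph{Item 2.} Similarly, $|\nabla_a L_\rho|\le 2\,\expec\sigma(w^\top x)\le 2\,\expec|w^\top x| = 2\sqrt{2/\pi}\,\|w\|$. The stated $(2+o(1))$ allows slack. If the statement is meant for the empirical gradient $\nabla_a\hat L_\rho$, I add the deviation from item~3 of Lemma~\ref{lem:conc_grad}, which is $O(\|w\|\sqrt{\log d/\batch})=o(\|w\|)$ since $\batch\ge d\log^2 d$. I interpret items 1 and 2 at the empirical level with $(1+o(1))$ factors and use them in that form below.

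\emph{Items 3--4.} Write $g_w=\nabla_w\hat L$ and $g_a=\nabla_a\hat L$, so that $w^+=w-\eta g_w$ and $a^+=a-\eta g_a$.

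The key identity is the ReLU homogeneity relation $w^\top g_w = a\,g_a$: since $\sigma'(w^\top x)\,w^\top x=\sigma(w^\top x)$, it holds exactly for the empirical gradients as well. Expanding the squared norms,
\[
\|w^+\|^2-|a^+|^2 = \|w\|^2-a^2 -2\eta\,(w^\top g_w - a g_a) + \eta^2(\|g_w\|^2-g_a^2).
\]
The cross term vanishes by the identity. For item~4 I drop $-\eta^2 g_a^2$ and bound $\|g_w\|^2$ by item~1 plus the concentration error from item~1 of Lemma~\ref{lem:conc_grad}. That error is $O(a^2 d\log d/\batch)=o(a^2)$ when $\batch\ge d\log^2 d$. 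This gives $\|g_w\|^2\le(8/\pi+o(1))a^2$.

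This bound yields $8/\pi$ rather than the stated $16/\pi$, so the claim holds with room to spare. I expect the authors allowed for a cruder triangle inequality $\|g_w\|\le\|\nabla L\|+\|\nabla L-\nabla\hat L\|$ with a looser constant.

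\emph{Item 3.} This is the main obstacle, because the $\eta^2$ term could in principle push $|a^+|$ above $\|w^+\|$ when the two are nearly equal. I would argue directly rather than through the increment. Use $|a^+|\le |a|+\eta|g_a|$ and $|g_a|\le \|w\|\,\|\hat\expec[\ell'\sigma'(w^\top x)x]\|$, together with
\[
\|w^+\|\ge \bigl|\,\|w\|-\eta\|g_w\|\,\bigr|.
\]
A cleaner route uses the identity above: $\|w^+\|^2-|a^+|^2 = (\|w\|^2-a^2)+\eta^2(\|g_w\|^2-g_a^2)$. Set $v=\hat\expec[\ell'\sigma'(w^\top x)x]$. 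Then $g_w=a v$ and $g_a=w^\top v$, so
\[
\|g_w\|^2-g_a^2 = a^2\|v\|^2-(w^\top v)^2 \ge \|v\|^2\,(a^2-\|w\|^2).
\]
Hence
\[
\|w^+\|^2-|a^+|^2 \ge (\|w\|^2-a^2)\,(1-\eta^2\|v\|^2).
\]
This is nonnegative whenever $\eta\|v\|\le 1$. By item~1 and concentration, $\|v\|\le(2+o(1))\sqrt{2/\pi}$, so $\eta<\sqrt{\pi}/(4\sqrt2)$ suffices with ample margin.

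A union bound over the $m\le d^c$ neurons gives all four items simultaneously with probability $1-d^{-\Omega(1)}$.
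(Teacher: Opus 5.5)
Your argument is correct and follows the paper's proof. The ReLU homogeneity identity $w^\top\nabla_w\hat L_\rho=a\,\nabla_a\hat L_\rho$ kills the cross term, and a Cauchy--Schwarz bound on the $\eta^2$ term gives the contraction factor for item~3; you bound $g_a^2=(w^\top v)^2\le\|w\|^2\|v\|^2$ where the paper lower-bounds $\|g_w\|^2\ge a^2g_a^2/\|w\|^2$, which is an immaterial variation. For item~4, your constant $8/\pi$ is sharper than the paper's $16/\pi$, because the paper uses $(x+y)^2\le 2x^2+2y^2$ where you apply the triangle inequality before squaring.
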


\begin{proof}
We prove each statement in turn.

For the first statement, we have
\begin{align*}
    \frac{1}{|a|}\|\nabla_w L_\rho\| &= \frac{1}{|a|} \sup_{v \in \mathbb{S}^{d-1}} \langle v, \nabla_w L_\rho \rangle \\
    &= \sup_{v \in \mathbb{S}^{d-1}} \mathbb{E}_x \left[ \ell'_{\rho}(x) \sigma'(w^\top x) x^\top v \right] \\
    &\leq \sup_{v \in \mathbb{S}^{d-1}} \mathbb{E}_x \left[ |\ell'_{\rho}(x)| \, |x^\top v| \right] \\
    &\leq \sup_{v \in \mathbb{S}^{d-1}} 2 \mathbb{E}_x \left[ |x^\top v| \right] \\
    &= 2 \sqrt{\frac{2}{\pi}},
\end{align*}
where we used that \( |\ell'_{\rho}(x)| \leq 2 \) and that \( \mathbb{E}[|x^\top v|] = \sqrt{2/\pi} \) for \( x \sim \mathcal{N}(0, I_d) \).

For the second statement, similarly, we obtain
\begin{align*}
    |\nabla_a L_\rho| &= \left| \mathbb{E}_x \left[ \ell'_{\rho}(x) \sigma(w^\top x) \right] \right| \\
    &\leq \mathbb{E}_x \left[ |\ell'_{\rho}(x)| \, |\sigma(w^\top x)| \right] \\
    &\leq 2 \mathbb{E}_x \left[ |w^\top x| \right] \\
    &= 2 \sqrt{\frac{2}{\pi}} \|w\|.
\end{align*}
Here we used that for ReLU, \(\sigma(w^\top x) = (w^\top x)_+\) and \(\sigma(w^\top x) \leq w^\top x\) when \(w^\top x \geq 0\).

Now we show the third statement.  
Since \(\sigma\) is ReLU, we have \(\sigma'(w^\top x) w^\top x = \sigma(w^\top x)\) almost everywhere.  
Combined with the chain rule, this leads to the identity:
\begin{equation}\label{eq:id_aw}
    (w^{(t)})^\top \nabla_{w^{(t)}} \hat{L}_{\rho^{(t)}} = a^{(t)} \nabla_{a^{(t)}} \hat{L}_{\rho^{(t)}}.
\end{equation}

Expanding the update equations, we have
\begin{align*}
    (a^{(t+1)})^2 &= \left( a^{(t)} - \eta \nabla_{a^{(t)}} \hat{L}_{\rho^{(t)}} \right)^2 \\
    &= (a^{(t)})^2 - 2\eta a^{(t)} \nabla_{a^{(t)}} \hat{L}_{\rho^{(t)}} + \eta^2 \left( \nabla_{a^{(t)}} \hat{L}_{\rho^{(t)}} \right)^2,
\end{align*}
and
\begin{align*}
    \|w^{(t+1)}\|^2 &= \|w^{(t)} - \eta \nabla_{w^{(t)}} \hat{L}_{\rho^{(t)}}\|^2 \\
    &= \|w^{(t)}\|^2 - 2\eta (w^{(t)})^\top \nabla_{w^{(t)}} \hat{L}_{\rho^{(t)}} + \eta^2 \|\nabla_{w^{(t)}} \hat{L}_{\rho^{(t)}}\|^2.
\end{align*}

Using \eqref{eq:id_aw}, we obtain
\begin{align*}
    (a^{(t+1)})^2 - \|w^{(t+1)}\|^2 
    &= (a^{(t)})^2 - \|w^{(t)}\|^2 + \eta^2 \left( (\nabla_{a^{(t)}} \hat{L}_{\rho^{(t)}})^2 - \|\nabla_{w^{(t)}} \hat{L}_{\rho^{(t)}}\|^2 \right) \\
    &\leq (a^{(t)})^2 - \|w^{(t)}\|^2 + \eta^2 \left( (\nabla_{a^{(t)}} \hat{L}_{\rho^{(t)}})^2 - \frac{1}{\|w^{(t)}\|^2} \left( (w^{(t)})^\top \nabla_{w^{(t)}} \hat{L}_{\rho^{(t)}} \right)^2 \right) \\
    &= (a^{(t)})^2 - \|w^{(t)}\|^2 + \frac{\eta^2 (\nabla_{a^{(t)}} \hat{L}_{\rho^{(t)}})^2}{\|w^{(t)}\|^2} \left( \|w^{(t)}\|^2 - (a^{(t)})^2 \right) \\
    &= \left( (a^{(t)})^2 - \|w^{(t)}\|^2 \right) \left( 1 - \frac{\eta^2 (\nabla_{a^{(t)}} \hat{L}_{\rho^{(t)}})^2}{\|w^{(t)}\|^2} \right).
\end{align*}

By the choice of the batch size and Lemma~\ref{lem:conc_grad}, we have with probability at least \(1 - d^{-\Omega(1)}\),
\begin{align*}
    |\nabla_a \hat{L}_\rho| &\leq |\nabla_a L_\rho| + |\nabla_a L_\rho - \nabla_a \hat{L}_\rho| \\
    &\leq 2 \sqrt{\frac{2}{\pi}} \|w\| + C \|w\| \sqrt{\frac{d \log^2 d}{\batch}} \\
    &\leq (2 + o(1)) \sqrt{\frac{2}{\pi}} \|w\|,
\end{align*}
where the last inequality uses that \(\batch \geq d \log^2 d\).

Thus, if \(|a^{(t)}| \leq \|w^{(t)}\|\) initially, the same inequality holds at step \(t+1\).

Finally, for the fourth statement, by again using \eqref{eq:id_aw}, we have
\begin{align*}
    \|w^{(t+1)}\|^2 - (a^{(t+1)})^2 - \left( \|w^{(t)}\|^2 - (a^{(t)})^2 \right)
    &= \eta^2 \left( \|\nabla_w \hat{L}_{\rho^{(t)}}\|^2 - (\nabla_a \hat{L}_{\rho^{(t)}})^2 \right) \\
    &\leq \eta^2 \|\nabla_w \hat{L}_{\rho^{(t)}}\|^2 \\
    &\leq 2\eta^2 \left( \|\nabla_w L_{\rho^{(t)}}\|^2 + \|\nabla_w \hat{L}_{\rho^{(t)}} - \nabla_w L_{\rho^{(t)}}\|^2 \right) \\
    &\leq \left( \frac{16}{\pi} + o(1) \right) \eta^2 a^2,
\end{align*}
with probability at least \(1 - d^{-\Omega(1)}\), which completes the proof.
\end{proof}

\subsection{Other Technical Lemmas}

In this section, we collect standard technical estimates for Gaussian random variables that will be used throughout the proofs.

\begin{lemma}\label{lem:gauss_approx}
Let \(X \sim \mathcal{N}(0,1)\). For all \(\epsilon \in (0,1)\), we have
\[
\sqrt{\frac{2}{\pi}} e^{-\epsilon^2/2} \left( \epsilon + \frac{\epsilon^3}{3} \right) \leq \mathbb{P}(|X| \leq \epsilon) \leq \sqrt{\frac{2}{\pi}} \epsilon.
\]
In particular, when \(\epsilon = o(1)\), we have
\[
\mathbb{P}(|X| \leq \epsilon) = \sqrt{\frac{2}{\pi}} \epsilon + O(\epsilon^3).
\]
\end{lemma}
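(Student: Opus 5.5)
The upper bound is immediate. The density of $X$ is $\varphi(x)=(2\pi)^{-1/2}e^{-x^2/2}\le(2\pi)^{-1/2}$, so
\[
\prob(|X|\le\epsilon)=2\int_0^\epsilon \varphi(x)\,dx\le 2\epsilon(2\pi)^{-1/2}=\sqrt{2/\pi}\,\epsilon .
\]

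For the lower bound I would write $\prob(|X|\le\epsilon)=\sqrt{2/\pi}\int_0^\epsilon e^{-x^2/2}\,dx$ and factor out the smallest value of the exponential. Since $e^{-x^2/2}=e^{-\epsilon^2/2}e^{(\epsilon^2-x^2)/2}$ and $\epsilon^2-x^2\ge0$ on $[0,\epsilon]$, the elementary bound $e^u\ge 1+u$ gives
\[
\int_0^\epsilon e^{-x^2/2}\,dx\ \ge\ e^{-\epsilon^2/2}\int_0^\epsilon\Bigl(1+\tfrac{\epsilon^2-x^2}{2}\Bigr)dx .
\]
The right-hand integral equals $\epsilon+\tfrac{\epsilon^3}{2}-\tfrac{\epsilon^3}{6}=\epsilon+\tfrac{\epsilon^3}{3}$, which is exactly the claimed lower bound. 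The only real step is finding this factorization, since the cruder bound $e^{-x^2/2}\ge e^{-\epsilon^2/2}$ loses the cubic term. The restriction $\epsilon\in(0,1)$ is not actually needed for this argument.

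For the asymptotic statement, write $e^{-\epsilon^2/2}=1-O(\epsilon^2)$. Then the lower bound is $\sqrt{2/\pi}\,\epsilon-O(\epsilon^3)$ and the upper bound is $\sqrt{2/\pi}\,\epsilon$. The two sandwich $\prob(|X|\le\epsilon)$ to within $O(\epsilon^3)$, which gives $\prob(|X|\le\epsilon)=\sqrt{2/\pi}\,\epsilon+O(\epsilon^3)$.
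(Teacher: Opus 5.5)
Your proof is correct, and it reaches the key lower bound by a different and shorter route than the paper. The upper bound is exactly the paper's argument.

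For the lower bound, the paper introduces $\Omega(x)=e^{x^2/2}\int_0^x e^{-t^2/2}\,dt$. It notes that $\Omega$ solves $\Omega'=1+x\Omega$ with $\Omega(0)=0$. From the coefficient recursion $a_{n+2}=a_n/(n+2)$ it reads off the exact positive series $\Omega(x)=\sum_{k\ge 0}x^{2k+1}/(2k+1)!!$, and truncating after two terms gives $\epsilon+\epsilon^3/3$.

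You bound the same quantity $\Omega(\epsilon)=\int_0^\epsilon e^{(\epsilon^2-x^2)/2}\,dx$ directly with $e^u\ge 1+u$. This needs no ODE and no power-series justification.

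The two arguments are really the same expansion. For $u\ge 0$ you could use $e^u\ge\sum_{j\le K}u^j/j!$, together with $\int_0^\epsilon(\epsilon^2-x^2)^j\,dx=\epsilon^{2j+1}\,2^j j!/(2j+1)!!$. The $j$-th term then becomes $\epsilon^{2j+1}/(2j+1)!!$, so your approach reproduces the paper's series term by term. The paper's route gives the exact identity, and hence every higher-order lower bound, at once; yours gives exactly the truncation the lemma needs.

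Your remark that the restriction $\epsilon<1$ is unnecessary is right, and it applies to the paper's argument as well. Your sandwich derivation of the $O(\epsilon^3)$ statement, which the paper leaves implicit, is also correct.
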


\begin{proof}
The upper bound can be directly obtained by bounding \(e^{-t^2/2}\) by \(1\) over \((- \epsilon, \epsilon)\).

For the lower bound, consider the function
\[
\Omega(x) = e^{x^2/2} \int_0^x e^{-t^2/2} \, \mathrm{d}t.
\]
It satisfies the differential equation \(\Omega'(x) = 1 + x \Omega(x)\) with initial conditions \(\Omega(0) = 0\) and \(\Omega'(0) = 1\). In particular, it implies that the coefficients \(a_n\) of its Taylor expansion around \(0\) satisfy the relation \(a_{n+2} = \frac{a_n}{n+2}\).

Consequently,
\begin{align*}
\frac{1}{\sqrt{2\pi}} \int_0^x e^{-t^2/2} \, \mathrm{d}t
&= \frac{1}{\sqrt{2\pi}} e^{-x^2/2} \Omega(x) \\
&= \frac{1}{\sqrt{2\pi}} e^{-x^2/2} \sum_{k=0}^\infty \frac{x^{2k+1}}{(2k+1)!!}.
\end{align*}
The result follows by substituting \(x = \epsilon\).
\end{proof}

We recall the classical bounds for the upper tail of a Gaussian random variable:

\begin{lemma}\label{lem:gauss_approx_tail}[Mill's ratio bounds]
Let \(X \sim \mathcal{N}(0,1)\). For all \(t > 0\), we have
\[
\frac{t}{1 + t^2} \frac{1}{\sqrt{2\pi}} e^{-t^2/2} \leq \mathbb{P}(X \geq t) \leq \frac{1}{t \sqrt{2\pi}} e^{-t^2/2}.
\]
\end{lemma}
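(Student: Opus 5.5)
The plan is to prove the Mill's ratio bounds of Lemma~\ref{lem:gauss_approx_tail} directly from the Gaussian density $\varphi(s)=\frac{1}{\sqrt{2\pi}}e^{-s^2/2}$. The key fact is $\varphi'(s)=-s\varphi(s)$, so each bound reduces to comparing integrands on $[t,\infty)$. Throughout, fix $t>0$ and write
\[
\mathbb{P}(X\ge t)=\int_t^\infty \varphi(s)\,\mathrm{d}s .
\]

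\emph{Upper bound.} For $s\ge t>0$ we have $1\le s/t$. Hence
\[
\int_t^\infty \varphi(s)\,\mathrm{d}s\le \frac1t\int_t^\infty s\varphi(s)\,\mathrm{d}s=\frac1t\bigl[-\varphi(s)\bigr]_t^\infty=\frac{\varphi(t)}{t}.
\]
This is exactly $\frac{1}{t\sqrt{2\pi}}e^{-t^2/2}$.

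\emph{Lower bound.} Here the natural route is an auxiliary function. Set
\[
h(s)=\frac{s}{1+s^2}\varphi(s)-\int_s^\infty \varphi(u)\,\mathrm{d}u ,
\]
and show $h(t)\le 0$. First, $h(s)\to 0$ as $s\to\infty$, since both terms vanish. Next, using $\varphi'=-s\varphi$, a short computation gives
\[
\frac{\mathrm{d}}{\mathrm{d}s}\Bigl(\frac{s}{1+s^2}\varphi(s)\Bigr)=\varphi(s)\Bigl(\frac{1-s^2}{(1+s^2)^2}-\frac{s^2}{1+s^2}\Bigr),
\]
and therefore
\[
h'(s)=\varphi(s)\Bigl(\frac{1-s^2}{(1+s^2)^2}-\frac{s^2}{1+s^2}+1\Bigr)=\varphi(s)\,\frac{2}{(1+s^2)^2}>0 .
\]
So $h$ is increasing on $(0,\infty)$ with limit $0$ at infinity, which forces $h(t)\le 0$. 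That is precisely the claimed lower bound.

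An alternative I would keep in reserve is the equivalent integral comparison $\int_t^\infty(1+s^{-2})\varphi(s)\,\mathrm{d}s=\varphi(t)/t$, combined with $1+s^{-2}\le 1+t^{-2}$ for $s\ge t$. This also yields $\mathbb{P}(X\ge t)\ge \frac{\varphi(t)}{t(1+t^{-2})}=\frac{t}{1+t^2}\varphi(t)$.

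The only real obstacle is the algebra in $h'$ (or, in the alternative, checking $\frac{\mathrm{d}}{\mathrm{d}s}\bigl(-\varphi(s)/s\bigr)=(1+s^{-2})\varphi(s)$). I would verify it by expanding over the common denominator $(1+s^2)^2$: the numerator is $(1-s^2)-s^2(1+s^2)+(1+s^2)^2=2$. Everything else is routine.
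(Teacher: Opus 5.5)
Your proof is correct. The upper bound uses the same comparison $1\le s/t$ on $[t,\infty)$ as the paper, so there is nothing to compare there.

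For the lower bound the paper gives no argument of its own. It only says that the bound follows from a ``refined integration by parts'' and cites Vershynin. Your reserve argument is exactly that integration by parts: the identity $\int_t^\infty (1+s^{-2})\varphi(s)\,\mathrm{d}s=\varphi(t)/t$, followed by $1+s^{-2}\le 1+t^{-2}$ on $[t,\infty)$. Your main route via the increasing function $h$ is the same fundamental-theorem-of-calculus computation in a different form. It has a small bonus: it gives the gap exactly, $\mathbb{P}(X\ge t)-\frac{t}{1+t^2}\varphi(t)=\int_t^\infty \frac{2\varphi(s)}{(1+s^2)^2}\,\mathrm{d}s>0$.

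What your version buys is self-containment. As far as I recall, the cited textbook proposition states the weaker lower bound $(t^{-1}-t^{-3})\varphi(t)$, which is vacuous for $t\le 1$. The stated $\frac{t}{1+t^2}\varphi(t)$ is strictly larger, by exactly $\frac{\varphi(t)}{t^3(1+t^2)}$. Your argument proves the bound in the exact form the lemma asserts, for every $t>0$, which the paper's proof only asserts.
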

\begin{proof}
The upper bound follows from an integration by parts, using that
\[
\mathbb{P}(X \geq t) = \int_t^{+\infty} \frac{1}{\sqrt{2\pi}} e^{-s^2/2} \, \mathrm{d}s \leq \frac{1}{t} \frac{1}{\sqrt{2\pi}} e^{-t^2/2}.
\]
The lower bound follows from a refined integration by parts argument and is a standard variant of Mill's ratio; see, e.g., \cite[Proposition 2.1.2]{Vershynin_2018}.
\end{proof}

Finally, we recall a useful bound for the Laplace transform of the folded Gaussian distribution:

\begin{lemma}\label{lem:folded_gauss}
Let \(X \sim \mathcal{N}(0,1)\). For all \(t > 0\), we have
\[
\mathbb{E}\left(e^{-t|X|}\right) \leq \frac{\sqrt{2}}{\sqrt{\pi} t}.
\]
\end{lemma}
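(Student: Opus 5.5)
Since $t|X|\ge 0$ and $|X|$ has the folded-Gaussian density $\sqrt{2/\pi}\,e^{-x^2/2}$ on $[0,\infty)$, the plan is to write the Laplace transform as an explicit integral and bound the Gaussian factor by one:
\[
\expec\bigl(e^{-t|X|}\bigr)=\sqrt{\tfrac{2}{\pi}}\int_0^{\infty} e^{-tx}\,e^{-x^2/2}\,\mathrm{d}x .
\]
No earlier lemma is needed.

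The key step is the pointwise bound $e^{-x^2/2}\le 1$ for $x\ge 0$. It gives
\[
\expec\bigl(e^{-t|X|}\bigr)\le \sqrt{\tfrac{2}{\pi}}\int_0^{\infty} e^{-tx}\,\mathrm{d}x=\sqrt{\tfrac{2}{\pi}}\cdot\frac{1}{t}=\frac{\sqrt{2}}{\sqrt{\pi}\,t},
\]
and the integral is finite precisely because $t>0$. This is the same device as in the upper bound of Lemma~\ref{lem:gauss_approx}: drop the Gaussian weight and integrate what remains.

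There is no real obstacle. The only care needed is the factor $2$ that turns the standard Gaussian density $1/\sqrt{2\pi}$ into the folded density $\sqrt{2/\pi}$ on the half-line. One can also note that the bound is sharp as $t\to\infty$, since the mass near $0$ dominates. For small $t$ it is weaker than the trivial bound $1$, but it is used only where $t$ is large.
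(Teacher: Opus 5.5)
Your argument is correct and gives exactly the stated constant. The paper argues slightly differently. It completes the square to obtain the identity $\expec(e^{-t|X|})=2e^{t^2/2}\prob(X\ge t)$, then applies the upper Mill's ratio bound of Lemma~\ref{lem:gauss_approx_tail}.

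The two arguments rest on the same inequality. Writing $\phi$ for the standard normal density, one has $\prob(X\ge t)=\phi(t)\int_0^\infty e^{-tx-x^2/2}\,\mathrm{d}x$. So the bound $\prob(X\ge t)\le \phi(t)/t$ is equivalent to your step $\int_0^\infty e^{-tx}e^{-x^2/2}\,\mathrm{d}x\le 1/t$.

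Your version is self-contained and needs no auxiliary lemma. The paper's version, through the exact identity, also yields a matching lower bound at no cost from the lower Mill's ratio bound: $\expec(e^{-t|X|})\ge \sqrt{2/\pi}\,t/(1+t^2)$. This makes the sharpness as $t\to\infty$ that you mention explicit. It is also the kind of two-sided estimate the paper actually needs when it bounds $\expec e^{-N|X_1-X_2|}$ from below by $c_1/N$ in the proof of Lemma~\ref{lem:laplace}. Your route gives a lower bound too, for instance $\sqrt{2/\pi}\,(1/t-1/t^3)$ from $e^{-x^2/2}\ge 1-x^2/2$.
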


\begin{proof}
It follows from the identity
\[
\mathbb{E}\left(e^{-t|X|}\right) = 2 e^{t^2/2} \mathbb{P}(X \geq t),
\]
combined with the classical upper bound on \(\mathbb{P}(X \geq t)\) from Lemma~\ref{lem:gauss_approx_tail}.
\end{proof}

\begin{lemma}\label{lem:sq_gauss}
Let \( X \sim \mathcal{N}(0,1) \) be a standard normal random variable. Then, for all \( t > 0 \), we have
\[
\mathbb{E}\left(e^{-t^2 X^2}\right) = \frac{1}{\sqrt{2(t^2 + 1/2)}}.
\]

\end{lemma}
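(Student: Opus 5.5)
Since $X^2$ has the $\chi^2_1$ density, the identity is a one-line Gaussian integral, and I will compute it directly. Write
\[
\mathbb{E}\left(e^{-t^2X^2}\right)=\frac{1}{\sqrt{2\pi}}\int_{\R} e^{-t^2x^2}e^{-x^2/2}\,\mathrm{d}x=\frac{1}{\sqrt{2\pi}}\int_{\R} e^{-x^2(t^2+1/2)}\,\mathrm{d}x .
\]
Combining the exponents gives a centered Gaussian kernel with precision $2(t^2+1/2)=2t^2+1$, that is, variance $s^2=1/(2t^2+1)$.

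Next I apply the standard Gaussian integral $\int_{\R} e^{-\alpha x^2}\,\mathrm{d}x=\sqrt{\pi/\alpha}$ with $\alpha=t^2+1/2>0$. This yields
\[
\frac{1}{\sqrt{2\pi}}\sqrt{\frac{\pi}{t^2+1/2}}=\frac{1}{\sqrt{2(t^2+1/2)}}=\frac{1}{\sqrt{2t^2+1}},
\]
which is exactly the claimed formula.

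As a sanity check, the moment generating function of $X^2\sim\chi^2_1$ is $\mathbb{E}(e^{\lambda X^2})=(1-2\lambda)^{-1/2}$ for $\lambda<1/2$. Taking $\lambda=-t^2$ gives the same expression.

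There is no real obstacle. The only point needing care is convergence of the integral, which is automatic because $\alpha>0$ for every $t$; the identity in fact holds for all real $t$, not only $t>0$.
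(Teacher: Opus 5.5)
Your proof is correct and follows the paper's argument exactly: combine the exponents into $e^{-x^2(t^2+1/2)}$ and apply $\int_{\mathbb{R}} e^{-\alpha x^2}\,\mathrm{d}x=\sqrt{\pi/\alpha}$ with $\alpha=t^2+1/2$. Your $\chi^2_1$ moment-generating-function check and your remark that the identity holds for every real $t$ are both correct.
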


\begin{proof}
By definition of the expectation under the standard normal distribution, we have
\[
\mathbb{E}\left(e^{-t^2 X^2}\right) = \int_{-\infty}^{\infty} e^{-t^2 x^2} \cdot \frac{1}{\sqrt{2\pi}} e^{-x^2/2} \, dx.
\]
Combining the exponential terms gives
\[
\mathbb{E}\left(e^{-t^2 X^2}\right) = \frac{1}{\sqrt{2\pi}} \int_{-\infty}^{\infty} e^{-x^2(t^2 + 1/2)} \, dx.
\]
This is a standard Gaussian integral of the form
\[
\int_{-\infty}^{\infty} e^{-a x^2} \, dx = \sqrt{\frac{\pi}{a}} \quad \text{for } a > 0.
\]
Applying this with \( a = t^2 + 1/2 \), we get
\[
\mathbb{E}\left(e^{-t^2 X^2}\right) = \frac{1}{\sqrt{2\pi}} \cdot \sqrt{\frac{\pi}{t^2 + 1/2}} = \frac{1}{\sqrt{2(t^2 + 1/2)}}.
\]
For the asymptotic behavior as \( t \to \infty \), note that
\[
\frac{1}{\sqrt{2(t^2 + 1/2)}} \sim \frac{1}{\sqrt{2} \cdot t}.
\]
\end{proof}

\section{Analysis of Phase I}\label{sec:app:phase1} At the beginning of Phase I, \(\|\wsig\|\) is much smaller than \(\|\wpe\|\), but after several iterations, the signal magnitude exceeds the global noise level. As our computations show, the SGD dynamics depend on the relative magnitude between \(\|\wsig\|\) and \(\|\wpe\|\), leading us to subdivide Phase I into two subphases: Phase Ia, where \(\|\wsig\| \ll \|\wpe\|\), and Phase Ib, where \(\|\wsig\| \gtrsim \|\wpe\|\).

\subsection{Evaluation of \(\nabla L_0\) during Phase Ia}

At initialization, typical neurons satisfy \(\| \wsig^{(0)} \| \approx d^{-1/2} \theta \ll \| \wpe^{(0)} \| \approx \theta\). As long as \(\| \wsig \| = o(\| \wpe \|)\) holds, the following lemma characterizes the behavior of the population gradients.
\begin{lemma}[\(L_0\) Population Gradients]\label{lemma:pop1}
For any neuron \((w,a) \in \mathcal{N}\), we have
\begin{enumerate}
  \setlength{\parskip}{0cm}
  \setlength{\itemsep}{0cm}
    \item \label{Psig} 
    \[
    -\wsig^\top \nabla_{w} L_0
    = \frac{2}{\pi\sqrt{2\pi}}\; |a|\; \frac{ \| \wsig \|^2}{ \| \wpe \|}
      + O\!\left( |a| \frac{ \| w_{1:2} \|^3 \| \wsig \| }{ \| \wpe \|^3 } \right),
    \]
    \item \label{Popp}
    \[
    -\wopp^\top \nabla_{w} L_0
    = -\frac{2}{\pi\sqrt{2\pi}}\; |a|\; \frac{ \| \wopp \|^2}{ \| \wpe \|}
      + O\!\left( |a| \frac{ \| w_{1:2} \|^3 \| \wopp \| }{ \| \wpe \|^3 } \right),
    \]
    \item \label{Pwpe}
    \[
    \operatorname{sgn}\left( -\wpe^\top \nabla_{w} L_0 \right) = 
    \begin{cases}
    1 & \text{if } \|\wopp\| > \|\wsig\|, \\
    -1 & \text{if } \|\wopp\| \leq \|\wsig\|,
    \end{cases}
    \]
    and
    \[
    \left| \wpe^\top \nabla_{w} L_0 \right| \leq 0.5\, |a| \, \| \wpe \|.
    \]
    \item For \(i \geq 3\), we have
    \[
    -\,w_i \, \nabla_{w_i} L_0
    = \frac{a}{2}\, \mathbb{E}_x \!\left[ y(z)\, \mathbf{1}\!\left( |w^\top z + w^\top \xii| \leq |w_i \xi_i| \right)\, |w_i \xi_i| \right].
    \]
    Furthermore, for neurons such that \(\| \wpe \|_\infty \ll \| \wpe \|\) and \(\| w_{1:2} \| = o(1)\), we have
    \[
    \left| w_i\, \nabla_{w_i} L_0 \right|
    \leq \frac{4 |a w_i|}{\| \wpe \|} \frac{ \| w_{1:2} \|^2 + |w_i| \| w_{1:2} \| }{ \| \wpe \|^2 }
    = o\!\left( \frac{ |a w_i| }{ \| \wpe \| } \right).
    \]
\end{enumerate}
\end{lemma}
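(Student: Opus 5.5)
The approach is to compute the population gradient of the correlation loss $L_0=\expec_x[\log 2 - y f_\rho(x)]$ explicitly. For a single neuron, $\nabla_w L_0 = -a\,\expec_x[y\,\sigma'(w^\top x)\,x]$, so every item reduces to the scalar quantity
\[
-u^\top\nabla_w L_0 = a\,\expec_x\bigl[y(z)\,\indic\{w_{1:2}^\top z + \wpe^\top \xi>0\}\,u^\top x\bigr]
\]
for $u\in\{\wsig,\wopp,\wpe,w_ie_i\}$. The key structural step is to condition on $z$ and integrate out $\xi$ first. Since $\wpe^\top\xi\sim\calN(0,\|\wpe\|^2)$ is independent of $z$, the inner expectations have closed forms. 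For $u$ in the $(e_1,e_2)$-plane, $u^\top x=u^\top z$, and
\[
\prob(\wpe^\top\xi > -w_{1:2}^\top z\mid z)=\Phi\bigl(w_{1:2}^\top z/\|\wpe\|\bigr).
\]
Because $\expec_z[y(z)\,u^\top z]=0$ by the symmetry of XOR, the constant term $1/2$ drops out. This gives
\[
-u^\top\nabla_w L_0=a\,\expec_z\Bigl[y(z)\,u^\top z\,\bigl(\Phi(s)-\tfrac12\bigr)\Bigr],\qquad s=w_{1:2}^\top z/\|\wpe\|.
\]

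Next we Taylor expand: $\Phi(s)-\tfrac12=\frac{s}{\sqrt{2\pi}}+O(|s|^3)$. The linear term yields
\[
\frac{a}{\sqrt{2\pi}\|\wpe\|}\,\expec_z\bigl[y\,(u^\top z)(w_{1:2}^\top z)\bigr].
\]
Writing $z$ in the $(\mu_1,\mu_2)$ basis, $y=-\sgn(x_1x_2)$ becomes $\sgn(|\mu_1^\top z|-|\mu_2^\top z|)$ up to the convention fixed by the definition of $\wsig$. For independent $g_1,g_2\sim\calN(0,1)$, one computes $\expec[\sgn(|g_1|-|g_2|)g_1^2]=2/\pi$ and $\expec[\sgn(|g_1|-|g_2|)g_1g_2]=0$. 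Hence the second-moment matrix $\expec[y\,zz^\top]$ equals $\frac{2}{\pi}(\mu_1\mu_1^\top-\mu_2\mu_2^\top)$, up to sign. The sign of $a$ selects which direction is "signal", so the product $a\cdot(\pm)$ becomes $|a|$. This gives the leading terms $\pm\frac{2}{\pi\sqrt{2\pi}}|a|\|u\|^2/\|\wpe\|$ in items 1 and 2. The cubic remainder is bounded by $|a|\|u\|\|w_{1:2}\|^3/\|\wpe\|^3$ using Gaussian moments.

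For item 3 we use Stein's lemma on $\xi$ (or differentiate $\Phi$): $\expec[\wpe^\top\xi\,\indic\{\wpe^\top\xi>-c\}]=\|\wpe\|\,\varphi(c/\|\wpe\|)$. This gives
\[
-\wpe^\top\nabla_wL_0=a\|\wpe\|\,\expec_z\bigl[y\,\varphi(s)\bigr].
\]
The magnitude bound follows from $\varphi\le 1/\sqrt{2\pi}<0.5$. The sign follows by writing $s$ in the $\mu$ basis and using that $\varphi$ is decreasing in $|s|$. Concretely, $\expec[\sgn(|g_1|-|g_2|)\,\varphi(\alpha g_1+\beta g_2)]$ has sign determined by whether $|\alpha|<|\beta|$, which we show by a swap/coupling argument exchanging $g_1,g_2$. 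Combined with the sign of $a$, this gives the stated dichotomy.

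For item 4 we isolate $\xi_i$. Write $w^\top x=w^\top z+w^\top\xii+w_i\xi_i$ and symmetrize $\xi_i\mapsto-\xi_i$. The term $w_i\xi_i\,\indic\{\cdot>0\}$ averages to $\frac12|w_i\xi_i|$ times the indicator that $|w^\top z+w^\top\xii|\le|w_i\xi_i|$. The contribution outside this event cancels, because there the indicator does not flip and $\xi_i$ is odd. This gives the identity. For the bound, we condition on $(z,\xi_i)$ and integrate over $\xii$, whose projection is $\calN(0,\|\wpe\|^2-w_i^2)\approx\calN(0,\|\wpe\|^2)$. The probability of the window, as a function of $z$, is $\frac{2|w_i\xi_i|}{\|\wpe\|}\varphi(w^\top z/\|\wpe\|)(1+\text{err})$. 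The leading constant multiplies $\expec_z[y]=0$. What remains is the $z$-dependence of $\varphi$ at order $\|w_{1:2}\|^2/\|\wpe\|^2$, plus the window-width correction of order $|w_i|\|w_{1:2}\|/\|\wpe\|^2$, which gives the stated bound.

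The main obstacle is the sign claim in item 3. It requires an exact comparison rather than an asymptotic one, and it must hold for all ratios $\|\wopp\|/\|\wsig\|$. I would first attempt the coupling that reflects $z$ across the diagonal in the $(\mu_1,\mu_2)$ coordinates.
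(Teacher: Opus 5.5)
Your proposal is correct. For items 1, 2 and 4 it is essentially the paper's argument. Your $\Phi(s)-\tfrac12$ is exactly the paper's symmetrized quantity $\tfrac12\sgn(w^\top z)\,\prob_\xi(|w^\top\xi|\le|w^\top z|)$, and both proofs expand it to first order in $w_{1:2}^\top z/\norm{\wpe}$. Both then use the XOR second moments: you work in the $(\mu_1,\mu_2)$ basis via $y=\sgn(|\mu_1^\top z|-|\mu_2^\top z|)$, while the paper uses $\expec_{z\mid y=\pm1}[z_1z_2]=\mp 2/\pi$. Item 4 uses the same $\xi_i\mapsto-\xi_i$ symmetrization followed by a comparison of window probabilities.

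Item 3 is where your route genuinely differs. The paper keeps the indicator form $\frac{a}{4}\expec_\xi\bigl[|w^\top\xi|\bigl(\prob_{z\mid y=1}-\prob_{z\mid y=-1}\bigr)(|w^\top\xi|\ge|w^\top z|)\bigr]$. After a WLOG on the signs of $w_1,w_2$, it uses polar coordinates to show that $|w^\top z|$ is stochastically larger on the class aligned with the dominant component of $w_{1:2}$.

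You instead integrate $\xi$ out exactly, getting $-\wpe^\top\nabla_wL_0=a\norm{\wpe}\,\expec_z[y\,\varphi(s)]$. The swap you flagged as the main obstacle closes at once. The map $(g_1,g_2)\mapsto(g_2,g_1)$ preserves the law and flips $y$. Moreover the cross terms cancel in $(\alpha g_1+\beta g_2)^2-(\beta g_1+\alpha g_2)^2=(\alpha^2-\beta^2)(g_1^2-g_2^2)$. Hence $\expec[y\varphi(s)]=\tfrac12\expec\bigl[y\bigl(\varphi(\alpha g_1+\beta g_2)-\varphi(\beta g_1+\alpha g_2)\bigr)\bigr]$ has an integrand of a.s.\ constant sign $-\sgn(\alpha^2-\beta^2)$.

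Your version buys several things:
\begin{itemize}
\item an exact formula;
\item a strict sign for every ratio $\norm{\wopp}/\norm{\wsig}$, with no case analysis;
\item the magnitude bound for free, even with the paper's constant $1/\sqrt{8\pi}$, via $|\expec[yf]|\le\tfrac12(\sup f-\inf f)$;
\item the fact that the projection vanishes exactly when $\norm{\wsig}=\norm{\wopp}$, so the ``$-1$'' branch at equality holds only under a non-strict convention.
\end{itemize}
The paper's version has the advantage of staying in the indicator/probability representation it reuses in Lemma~\ref{lemma:pop1b}.

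One suggestion for item 4: do not track a relative error in the window probability, since it is not uniformly small for large $|w^\top z|$. Instead, Taylor-expand $P(c)=\int_{-b}^{b}\varphi((c+u)/\sigma')\,du/\sigma'$ directly in $c=w^\top z$. The linear term vanishes because $\varphi'$ is odd, so $|P(c)-P(0)|\le\varphi(0)\,b\,c^2/\sigma'^3$. Combined with $\expec_z[y]=0$, this gives $|w_i\nabla_{w_i}L_0|\lesssim|a|w_i^2\norm{w_{1:2}}^2/\norm{\wpe}^3$, which implies the stated bound whenever $|w_i|\le 1$.
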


\begin{proof}
We analyze each statement separately. Throughout the proof, write \(x=z+\xi\), where \(z\) is the projection of \(x\) onto \(\mathrm{span}\{e_1,e_2\}\) and \(\xi\) is the orthogonal noise.

\paragraph{Proof of the first and second statements.}
Using the symmetrization trick and ReLU’s \(\sigma'(u)=\mathbf 1_{\{u>0\}}\),
\begin{align}
    -\nabla_{w_{1:2}} L_0 
    &= \frac{a}{2}\, \mathbb{E}_z \mathbb{E}_\xi \!\left[ y \left( \sigma'(w^\top \xi + w^\top z) - \sigma'(w^\top \xi - w^\top z) \right) z \right] \nonumber \\
    &= \frac{a}{2}\, \mathbb{E}_z \!\left[ y \, \sgn(w^\top z)\, z \; \mathbb{P}_\xi\!\left( |w^\top \xi| \leq |w^\top z| \right) \right] \nonumber \\
    &= \frac{a}{\sqrt{2\pi}\, \|\wpe\|}\, \mathbb{E}_z \!\left[ y\, (w^\top z)\, z \right]
       + O\!\left( |a| \frac{\mathbb{E}_z [ |w^\top z|^3 z ] }{ \|\wpe\|^3 } \right) \tag{Lemma~\ref{lem:gauss_approx} with \(\epsilon = |w^\top z|/\|\wpe\|\)} \nonumber \\
    &= \frac{a}{2\sqrt{2\pi}\, \|\wpe\|} \left( \mathbb{E}_{z|y=1} [(w^\top z) z] - \mathbb{E}_{z|y=-1} [(w^\top z) z] \right)
       + O\!\left( |a| \frac{\|w_{1:2}\|^3}{\|\wpe\|^3} \right) (1,1)^\top. \label{eq:grad_L1}
\end{align}
For Gaussian XOR labels (\(y=+1\) iff \(z_1 z_2<0\)), one has
\[
\mathbb{E}_{z|y=\pm1}[z_1^2]=\mathbb{E}_{z|y=\pm1}[z_2^2]=1,\qquad
\mathbb{E}_{z|y=1}[z_1 z_2]=-\frac{2}{\pi},\quad
\mathbb{E}_{z|y=-1}[z_1 z_2]=\frac{2}{\pi}.
\]
Hence, for any \(w\in\mathbb R^2\),
\[
\mathbb{E}_{z|y=1}[(w^\top z) z] =
\begin{pmatrix}
1 & -\tfrac{2}{\pi}\\[2pt]
-\tfrac{2}{\pi} & 1
\end{pmatrix} w,\qquad
\mathbb{E}_{z|y=-1}[(w^\top z) z] =
\begin{pmatrix}
1 & \tfrac{2}{\pi}\\[2pt]
\tfrac{2}{\pi} & 1
\end{pmatrix} w.
\]
Plugging into \eqref{eq:grad_L1},
\[
-\nabla_{w_{1:2}} L_0
= -\,\frac{2}{\pi\sqrt{2\pi}}\; \frac{a}{\|\wpe\|}\,(w_2,w_1)^\top
  + O\!\left( |a| \frac{\|w_{1:2}\|^3}{\|\wpe\|^3} \right) (1,1)^\top.
\]
Since the sign of \(a\) is linked to the definition of \(\wsig\),
\[
-\wsig^\top \nabla_{w_{1:2}} L_0
= \frac{2}{\pi\sqrt{2\pi}}\; |a|\; \frac{\|\wsig\|^2}{\|\wpe\|}
  + O\!\left( |a| \frac{\|w_{1:2}\|^3 \|\wsig\|}{\|\wpe\|^3} \right),
\]
\[
-\wopp^\top \nabla_{w_{1:2}} L_0
= -\,\frac{2}{\pi\sqrt{2\pi}}\; |a|\; \frac{\|\wopp\|^2}{\|\wpe\|}
  + O\!\left( |a| \frac{\|w_{1:2}\|^3 \|\wopp\|}{\|\wpe\|^3} \right),
\]
which are items \ref{Psig}–\ref{Popp}.

\paragraph{Proof of the third statement.}

We now study the projection onto \(\wpe\). By using again a symmetrization argument, we can write
\begin{align*}
-\wpe^\top \nabla_{\wpe} L_0 &= \frac{a}{2} \mathbb{E}_x \left[ y(z) \left( \sigma'(w^\top z + w^\top \xi) - \sigma'(w^\top z - w^\top \xi) \right) \wpe^\top \xi \right] \\
&= \frac{a}{2} \mathbb{E}_z \left[ y(z) \mathbb{E}_\xi \left[ \mathbf{1}\left( |w^\top \xi| \geq |w^\top z| \right) |w^\top \xi| \right] \right] \\
&= \frac{a}{4} \mathbb{E}_\xi \left[ |w^\top \xi| \left( \mathbb{P}_{z|y=1}\left( |w^\top \xi| \geq |w^\top z| \right) - \mathbb{P}_{z|y=-1}\left( |w^\top \xi| \geq |w^\top z| \right) \right) \right].
\end{align*}

Without loss of generality, assume \(a > 0\) (i.e., \(\wsig\) is aligned with \(\mu_1\)).  
Write \(z = r (\cos \theta, \sin \theta)\), where \(\theta\) is sampled uniformly over \((-\pi, \pi]\) and \(r\) is independent (distributed as the square root of a Chi-square r.v. with two degree of freedom).

Assume that \(\|\wsig\| \geq \|\wopp\|\), i.e., \(|w_1 - w_2| \geq |w_1 + w_2|\). Without loss of generality, assume \(w_1 > 0 > w_2\).

Then, for all \(t > 0\),
\begin{align*}
\mathbb{P}_{z|y=1}\left( |w^\top z| \leq t \right) &= \mathbb{P}_r \mathbb{P}_\theta \left( |w_1 \cos \theta + w_2 \sin \theta| \leq \frac{t}{r} \,\Big|\, \theta \in \left(-\frac{\pi}{2}, 0\right] \cup \left(\frac{\pi}{2}, \pi\right] \right) \\
&= 2 \, \mathbb{P}_r \mathbb{P}_\theta \left( w_1 \cos \theta - w_2 \sin \theta \leq \frac{t}{r} \,\Big|\, \theta \in \left(0, \frac{\pi}{2}\right] \right).
\end{align*}

Similarly,
\begin{align*}
\mathbb{P}_{z|y=-1}\left( |w^\top z| \leq t \right) &= 2 \, \mathbb{P}_r \mathbb{P}_\theta \left( |w_1 \cos \theta + w_2 \sin \theta| \leq \frac{t}{r} \,\Big|\, \theta \in \left(0, \frac{\pi}{2}\right] \right).
\end{align*}

Since
\[
|w_1 \cos \theta + w_2 \sin \theta| \leq w_1 \cos \theta - w_2 \sin \theta \quad \text{for} \quad \theta \in \left(0, \frac{\pi}{2}\right],
\]
we obtain
\begin{align}
&\mathbb{P}_{z|y=1}\left( |w^\top \xi| \geq |w^\top z| \right) - \mathbb{P}_{z|y=-1}\left( |w^\top \xi| \geq |w^\top z| \right) \nonumber\\
&= -2 \, \mathbb{P}_r \mathbb{P}_\theta \left( \frac{ |w^\top \xi| }{r} \in \left[ |w_1 \cos \theta + w_2 \sin \theta|, |w_1 \cos \theta - w_2 \sin \theta| \right] \,\Big|\, \theta \in \left(0, \frac{\pi}{2}\right] \right), \label{eq:diff-proba}
\end{align}
because when \(X \geq Y\), we have \(\mathbf{1}_{(X \leq t)} - \mathbf{1}_{(Y \leq t)} = -\mathbf{1}_{(Y \leq t \leq X)}\).

The case \(\|\wsig\| \leq \|\wopp\|\) can be treated similarly by exchanging the roles of \(\wsig\) and \(\wopp\), leading to a change of sign.

Thus, the sign of \(\wpe^\top \nabla_w L_0\) depends on whether \(\|\wsig\|\) or \(\|\wopp\|\) dominates.

Moreover, by using standard Gaussian moment bounds, we can control the magnitude as
\[
\left| \wpe^\top \nabla_w L_0 \right| \leq \frac{1}{\sqrt{8\pi}} |a| \|\wpe\|.
\]

\paragraph{Proof of the fourth statement.}

Let us denote \(\xi - e_i \xi_i\) by \(\xii\), i.e., \(\xi\) without its \(i\)-th coordinate.  
By symmetrizing over the pair \((z + \xii + e_i \xi_i, z + \xii - e_i \xi_i)\), we obtain
\begin{align*}
-w_i \nabla_{w_i} L_0 
&= a \mathbb{E}_x \left[ y(x) \sigma'(w^\top x) w_i \xi_i \right] \\
&= \frac{a}{2} \mathbb{E}_x \left[ y(z) \left( \sigma'(w^\top z + w^\top \xii + w_i \xi_i) - \sigma'(w^\top z + w^\top \xii - w_i \xi_i) \right) w_i \xi_i \right] \\
&= \frac{a}{2} \mathbb{E}_x \left[ y(z) \mathbf{1}\left( |w^\top z + w^\top \xii| \leq |w_i \xi_i| \right) |w_i \xi_i| \right].
\end{align*}

Thus,
\[
-w_i \nabla_{w_i} L_0 = \frac{a}{2} \mathbb{E}_{z,\xii,\xi_i} \left[ y(z) \mathbf{1}\left( |w^\top z + w^\top \xii| \leq |w_i \xi_i| \right) |w_i \xi_i| \right].
\]

Now, focusing on
\[
\mathbb{E}_{z,\xii} \left[ y(z) \mathbf{1}\left( |w^\top z + w^\top \xii| \leq |w_i \xi_i| \right) \right],
\]
we expand it:
\begin{align}
\mathbb{E}_{z,\xii} \left[ y(z) \mathbf{1}\left( |w^\top z + w^\top \xii| \leq |w_i \xi_i| \right) \right]
&= \mathbb{E}_{\xii} \left[ \mathbb{E}_{z|y=1} \mathbf{1}\left( |w^\top z + w^\top \xii| \leq |w_i \xi_i| \right) \right] \nonumber \\
&\quad - \mathbb{E}_{\xii} \left[ \mathbb{E}_{z|y=-1} \mathbf{1}\left( |w^\top z + w^\top \xii| \leq |w_i \xi_i| \right) \right] \nonumber \\
&= \mathbb{E}_{z|y=1} \left[ \mathbb{P}_{\xii} \left( w^\top \xii \in [-w^\top z \pm |w_i \xi_i|] \right) \right] \nonumber \\
&\quad - \mathbb{E}_{z|y=-1} \left[ \mathbb{P}_{\xii} \left( w^\top \xii \in [-w^\top z \pm |w_i \xi_i|] \right) \right]. \label{eq:zy1}
\end{align}

Now, for any \(a,b > 0\), by standard Gaussian tail bounds, we have
\[
\frac{2b}{\|\wpe - w_i e_i\| \sqrt{2\pi}} e^{-(a+b)^2 / \|\wpe - w_i e_i\|^2}
\leq \mathbb{P}_{\xii} \left( w^\top \xii \in [a \pm b] \right)
\leq \frac{2b}{\|\wpe - w_i e_i\| \sqrt{2\pi}} e^{-(a-b)^2 / \|\wpe - w_i e_i\|^2}.
\]

Note that by assumption, \(\|\wpe - w_i e_i\| = (1 + o(1)) \|\wpe\|\).

Also, observe that the distribution of \(w^\top z\) under \(\mathbb{P}_{z|y=-1}\) is the same as that of \(\tilde{w}^\top z\) under \(\mathbb{P}_{z|y=1}\), where \(\tilde{w} = (w_1, -w_2)\).

Thus, we obtain
\begin{align*}
&\left| \mathbb{E}_{z,\xii} \left[ y(z) \mathbf{1}\left( |w^\top z + w^\top \xii| \leq |w_i \xi_i| \right) \right] \right| \\
&\quad \leq \frac{2 |w_i \xi_i|}{\|\wpe\| \sqrt{2\pi}} \mathbb{E}_{z|y=1} \left| e^{-(w^\top z - |w_i \xi_i|)^2 / \|\wpe\|^2} - e^{-(\tilde{w}^\top z + |w_i \xi_i|)^2 / \|\wpe\|^2} \right| \\
&\quad \leq \frac{2 |w_i \xi_i|}{\|\wpe\| \sqrt{2\pi}} \mathbb{E}_{z|y=1} \frac{\left| -(w^\top z - |w_i \xi_i|)^2 + (\tilde{w}^\top z + |w_i \xi_i|)^2 \right|}{\|\wpe\|^2} \quad \text{(since \(x \mapsto e^{-x}\) is 1-Lipschitz on \(\mathbb{R}^+\))} \\
&\quad \leq \frac{8 |w_i \xi_i|}{\|\wpe\|^3 \sqrt{2\pi}} \left( \|w_{1:2}\| |w_i \xi_i| + \|w_{1:2}\|^2 \right).
\end{align*}

Finally, integrating over \(\xi_i\), we get
\[
|w_i \nabla_{w_i} L_0| \leq \frac{4 |a w_i|}{\|\wpe\|} \frac{ \|w_{1:2}\|^2 + |w_i| \|w_{1:2}\| }{ \|\wpe\|^2 } = o\left( \frac{ |a w_i| }{ \|\wpe\| } \right).
\]
Thus, small coordinates have a negligible contribution compared to the signal and opponent parts. This concludes the proof.
\end{proof}

\subsection{Evaluation of \(\nabla L_0\) during Phase Ib}

When \(\|\wsig\|\) becomes comparable to \(\|\wpe\|\), the approximations used in Lemma~\ref{lemma:pop1} are no longer accurate. Instead, we can leverage the fact that in Phase Ib, neurons are such that \(\|\wopp\| \ll \|\wsig\|\) to obtain the following approximations of the gradients.

\begin{lemma}[\(L_0\) Population Gradients during Phase Ib]\label{lemma:pop1b}
For any neuron \((w,a) \in \mathcal{N}\), we have:
\begin{enumerate}
  \setlength{\parskip}{0cm}
  \setlength{\itemsep}{0cm}
  
    \item When \(\|\wsig\| \leq \|\wpe\|\), there exists a constant \(c_b > 0\) such that
    \[
    c_b \frac{|a| \|\wsig\|^2}{ \|\wpe\|} - \frac{ |a| \|\wopp\| \|\wsig\|}{\|\wpe\|} 
    \leq -\wsig^\top \nabla_{w} L_0 
    \leq  \sqrt{\frac{\pi}{2}}\frac{|a| \|\wsig\|^2}{ \|\wpe\|} + \frac{ |a| \|\wopp\| \|\wsig\|}{\|\wpe\|}.
    \]
    
    \item When \(\|\wsig\| \geq \|\wpe\|\), there exists a constant \(c_b' > 0\) such that
    \[
    c_b' |a| \|\wsig\| -    |a| \|\wopp\| 
    \leq -\wsig^\top \nabla_{w} L_0 
    \leq \sqrt{\frac{\pi}{2}} |a| \|\wsig\| +    |a| \|\wopp\|.
    \]
    
    \item 
    \[
    \left| \wopp^\top \nabla_{w} L_0 \right| \leq \min\!\left(\frac{ |a| \|\wopp\|^2 }{\|\wpe\|},\,|a| \|\wopp\|^2\right).
    \]
    
    \item 
    \[
    \operatorname{sgn}\left( -\wpe^\top \nabla_{w} L_0 \right) = 
    \begin{cases}
    1 & \text{if } \|\wopp\| > \|\wsig\|, \\
    -1 & \text{if } \|\wopp\| \leq \|\wsig\|,
    \end{cases}
    \quad \text{and} \quad
    \left| \wpe^\top \nabla_{w} L_0 \right| \leq 0.5\, |a| \|\wpe\|.
    \]
    
    \item For all \(i \geq 3\), we have
    \[
    \left| w_i^\top \nabla_{w_i} L_0 \right| \leq 4 \frac{ |a| |w_i| \|w_{1:2}\|^2 }{ \|\wpe\|^3 }.
    \]
\end{enumerate}
\end{lemma}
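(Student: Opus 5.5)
The plan is to reuse the two symmetrization identities from the proof of Lemma~\ref{lemma:pop1}, but to evaluate the resulting expectations exactly in polar coordinates for $z$ instead of Taylor-expanding $\mathbb{P}_\xi(|w^\top\xi|\le|w^\top z|)$. By symmetrizing over $z\mapsto -z$, we have
\[
-\nabla_{w_{1:2}}L_0=\tfrac{a}{2}\,\mathbb{E}_z\bigl[y\,\sgn(w^\top z)\,z\,p(z)\bigr],\qquad p(z):=\mathbb{P}_\xi\bigl(|w^\top\xi|\le|w^\top z|\bigr)=2\Phi\bigl(|w^\top z|/\|\wpe\|\bigr)-1 .
\]
Write $w^\top z=s+o$, where $s=\wsig^\top z$ and $o=\wopp^\top z$. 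By the choice of $\wsig$ (tied to $\sgn a$), the labels satisfy $a\,y=|a|\,\sgn(s)\sgn(o)\cdot(\pm1)$ with a fixed sign. With this convention, $s$ and $o$ are independent $\mathcal N(0,\|\wsig\|^2)$ and $\mathcal N(0,\|\wopp\|^2)$ variables. Hence
\[
-\wsig^\top\nabla_w L_0=\tfrac{|a|}{2}\,\mathbb{E}\bigl[\sgn(o)\,\sgn(s+o)\,s\,p\bigr],
\]
up to the convention of the sign of $y$, which we fix so that the main term is positive, as in Lemma~\ref{lemma:pop1}.

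\textbf{Items 1--2.} First take $\wopp=0$. The expectation becomes $\mathbb{E}[\,|s|\,(2\Phi(|s|/\|\wpe\|)-1)]$ restricted to the half-space determined by $\sgn o$; equivalently, it is an explicit one-dimensional Gaussian integral $F(\|\wsig\|,\|\wpe\|)$. Using $2\Phi(u)-1\in[\sqrt{2/\pi}\,u\,e^{-u^2/2},\ \sqrt{2/\pi}\,u]$ (Lemma~\ref{lem:gauss_approx}), we get two regimes. When $\|\wsig\|\le\|\wpe\|$, we have $F\asymp\|\wsig\|^2/\|\wpe\|$, with a constant lower bound $c_b$ coming from $e^{-u^2/2}$ evaluated on the typical range $u\lesssim1$. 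When $\|\wsig\|\ge\|\wpe\|$, we use $2\Phi-1\le1$ and $2\Phi(u)-1\ge c$ for $u\ge1$, which gives $F\asymp\|\wsig\|$. The upper bounds with $\sqrt{\pi/2}$ follow by inserting the crude bounds and computing $\mathbb{E}s^2$ and $\mathbb{E}|s|$.

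Next, to handle $\wopp\neq0$, bound the perturbation. Replacing $\sgn(s+o)$ by $\sgn(s)$ and $p(s+o)$ by $p(s)$ changes the integrand only on $\{|s|\le|o|\}$, or by the Lipschitz amount $|o|/\|\wpe\|$ (respectively by at most $1$ in the large-signal regime). The event $\{|s|\le|o|\}$ costs $\mathbb{E}[|s|\mathbf 1_{|s|\le|o|}]\lesssim\|\wopp\|^2/\|\wsig\|$, and the Lipschitz term costs $\|\wsig\|\,\|\wopp\|/\|\wpe\|$. This yields the error terms $|a|\,\|\wopp\|\,\|\wsig\|/\|\wpe\|$ and $|a|\,\|\wopp\|$, respectively.

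\textbf{Item 3.} Item~3 is analogous with the roles of $s$ and $o$ swapped. The integrand is $\tfrac{|a|}{2}\,\mathbb{E}[\sgn(s)\sgn(s+o)\,o\,p]$. Its leading part vanishes by the symmetry $o\mapsto -o$ conditional on $\{|o|<|s|\}$, since there $\sgn(s+o)=\sgn(s)$ and $p$ is even in $o$ to first order. What remains is controlled by $|o|\cdot|o|/\|\wpe\|$, using $p$ Lipschitz with constant $\lesssim1/\|\wpe\|$, or by $|o|^2$, using $|p|\le1$ together with a small-ball estimate. This gives the minimum of the two bounds.

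\textbf{Items 4--5.} Item~4 is exactly the argument in Lemma~\ref{lemma:pop1}, namely \eqref{eq:diff-proba} together with the moment bound; that argument never used $\|\wsig\|\ll\|\wpe\|$, so we simply cite it. Item~5 repeats the fourth part of Lemma~\ref{lemma:pop1}. The Lipschitz bound $|e^{-u}-e^{-v}|\le|u-v|$ there is valid regardless of the ratio $\|w_{1:2}\|/\|\wpe\|$, and integrating over $\xi_i$ gives the stated bound with $\|w_{1:2}\|^2$. The term involving $|w_i|\,\|w_{1:2}\|$ is absorbed using the condition $\|\wpe\|_\infty\ll\|\wpe\|$.

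\textbf{Main obstacle.} The hard step is the lower bounds $c_b$ and $c_b'$ in items 1--2. They must be obtained uniformly in the ratio $\|\wsig\|/\|\wpe\|$ and must survive the $\wopp$ perturbation. I would handle this by restricting to the event $\{|s|\ge\|\wsig\|/2,\ |s|\le2\|\wsig\|\}$, which has constant probability, and using monotonicity of $p$ in $|s|$ there.
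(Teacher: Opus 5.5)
Your argument breaks at its first structural step: the identity $a\,y=|a|\,\sgn(s)\sgn(o)\cdot(\pm1)$ is false. Since $z_1z_2=\tfrac12\bigl((\mu_2^\top z)^2-(\mu_1^\top z)^2\bigr)$, the label is $y=\sgn\bigl(|\mu_1^\top z|-|\mu_2^\top z|\bigr)$. The directions $\pm\mu_1,\pm\mu_2$ are the \emph{centres} of the label cones, and the decision boundary is $\{|\mu_1^\top z|=|\mu_2^\top z|\}$, not $\{s=0\}\cup\{o=0\}$. For example, $z=(1,\tfrac12)$ and $z=(1,-\tfrac12)$ have the same $\sgn(s)\sgn(o)$ but opposite labels. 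Your formula is also not self-consistent: with that label, the $\wopp\to0$ term $\pm\tfrac{|a|}{2}\mathbb{E}[\sgn(\mu_2^\top z)\,|s|\,p]$ is odd under $\mu_2^\top z\mapsto-\mu_2^\top z$ and vanishes, so it cannot equal $\mathbb{E}[|s|(2\Phi(|s|/\|\wpe\|)-1)]$ on a half-space. Two parts of your plan survive. The upper bounds in items 1--2 only use $|y|\le 1$. The cancellation you invoke in item 3 also works once the integrand is corrected, because the true $y$ is even in $o$. However, the lower bounds with $c_b,c_b'$, which are the substance of the lemma, are not established.

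Take $a>0$ and $U=|\mu_1^\top z|$. With the correct label, the $\wopp=0$ term is $\tfrac{|a|}{2}\|\wsig\|\,\mathbb{E}[F(U)h(U)]$, where $F(u)=u\bigl(2\Phi(\|\wsig\|u/\|\wpe\|)-1\bigr)$ and $h(u)=\mathbb{E}[y\mid U=u]=4\Phi(u)-3$. The weight $h$ has mean zero and is negative for $u\lesssim0.67$, so positivity comes from cancellation between the $y=+1$ and $y=-1$ cones. Your proposed fix (restrict to $\{\|\wsig\|/2\le|s|\le2\|\wsig\|\}$ and use monotonicity of $p$) therefore does not give a valid lower bound. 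The discarded region $\{|s|<\|\wsig\|/2\}$ contributes negatively, and the integrand also changes sign on the event itself.

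What is missing is a pairing of the two label regions. The paper reflects both conditional laws onto $\theta\in(0,\pi/2]$ in polar coordinates and uses $|\cos\theta-\sin\theta|\le\cos\theta+\sin\theta$ to make the difference pointwise nonnegative. It then keeps only the part $\propto r\sin\theta$ on $(0,\pi/4)$ and reduces to lower-bounding $\mathbb{E}_r[r\,\mathbb{P}_\xi(|w^\top\xi|\le r\|\wsig\|)]$ in each regime. Alternatively, your restriction idea does work on the symmetrized form $\mathbb{E}[Fh]=\tfrac12\mathbb{E}[(F(U)-F(U'))(h(U)-h(U'))]$, with $U'$ an independent copy. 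Its integrand is pointwise nonnegative because $F$ and $h$ are both increasing. Restricting to $U\in[1,2]$, $U'\le\tfrac12$ gives $\mathbb{E}[Fh]\gtrsim F(1)\asymp\min(1,\|\wsig\|/\|\wpe\|)$, and the $\wopp$-perturbation then proceeds essentially as you sketched.

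Separately, the branch $|a|\|\wopp\|^2$ of item 3 cannot come out of your ``$|p|\le1$ plus small-ball'' step, which yields $|a|\|\wopp\|^2/\|\wsig\|$. At fixed $a$, $\wopp^\top\nabla_wL_0$ is $1$-homogeneous in $w$, while $|a|\|\wopp\|^2$ is $2$-homogeneous. Indeed, Lemma~\ref{lemma:pop1} shows the quantity is of order $|a|\|\wopp\|^2/\|\wpe\|\gg|a|\|\wopp\|^2$ when $\|\wpe\|\approx\theta$. The paper's proof does not justify this branch either.
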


\begin{proof}[Proof of Lemma~\ref{lemma:pop1b}]
Throughout the proof write \(x=z+\xi\), where \(z\) is the projection of \(x\) onto \(\mathrm{span}\{e_1,e_2\}\) and \(\xi\) is the orthogonal noise. We also recall \(\sigma(u)=\max\{u,0\}\) and \(\sigma'(u)=\mathbf 1_{\{u>0\}}\).

Define the Gaussian scalars
\[
X := w^\top \xi \sim \mathcal{N}(0,\sigma^2),\quad \sigma=\|\wpe\|,
\qquad
A := \wsig^\top z \sim \mathcal{N}(0,\|\wsig\|^2),
\qquad
B := \wopp^\top z \sim \mathcal{N}(0,\|\wopp\|^2).
\]
Because \(z\) is isotropic in the \((e_1,e_2)\)-plane and \(\wsig \perp \wopp\) in that plane, \(A\) and \(B\) are independent, and both are independent of \(X\).

\paragraph{Approximation result.}
We will repeatedly bound quantities of the form
\[
\mathbb{E}\big[|A| \,\mathbb{P}(X \in [-A-|B|,\,-A+|B|])\big].
\]
Fix \((A,B)\). Since \(X\sim\mathcal N(0,\sigma^2)\),
\begin{equation}\label{eq:interval-prob}
\mathbb{P}\!\left(X \in [-A-|B|,\,-A+|B|]\right)
= \int_{-|B|}^{|B|} \frac{1}{\sqrt{2\pi}\,\sigma}\exp\!\left(-\frac{(A+u)^2}{2\sigma^2}\right)\,du.
\end{equation}
We develop the integrand around \(u=0\) via Taylor--Lagrange:
\[
\left|\frac{1}{\sqrt{2\pi}\sigma}e^{-\frac{(A+u)^2}{2\sigma^2}}-\frac{1}{\sqrt{2\pi}\sigma}e^{-\frac{A^2}{2\sigma^2}}\right|
\le |u|\cdot \max_{t\in[-|B|,|B|]} \left|\frac{d}{dt}\Big[\frac{1}{\sqrt{2\pi}\sigma}e^{-\frac{(A+t)^2}{2\sigma^2}}\Big]\right|.
\]
Since
\[
\frac{d}{dt}\Big[\frac{1}{\sqrt{2\pi}\sigma}e^{-\frac{(A+t)^2}{2\sigma^2}}\Big]
= -\,\frac{A+t}{\sigma^3\sqrt{2\pi}}\;e^{-\frac{(A+t)^2}{2\sigma^2}},
\]
we get, using \(|u|\le |B|\) and \(e^{-(A+t)^2/(2\sigma^2)}\le 1\),
\begin{equation}\label{eq:TL-bound}
\left|\frac{1}{\sqrt{2\pi}\sigma}e^{-\frac{(A+u)^2}{2\sigma^2}}-\frac{1}{\sqrt{2\pi}\sigma}e^{-\frac{A^2}{2\sigma^2}}\right|
\le \frac{|B|(|A|+|B|)}{\sigma^3\sqrt{2\pi}}.
\end{equation}
Plugging \eqref{eq:TL-bound} into \eqref{eq:interval-prob} yields the two-sided estimate
\begin{equation}\label{eq:interval-two-sided}
\left|\mathbb{P}\!\left(X \in [-A-|B|,\,-A+|B|]\right) - \frac{2|B|}{\sqrt{2\pi}\,\sigma}\,e^{-A^2/(2\sigma^2)}\right|
\le \frac{2|B|^2}{\sigma^3\sqrt{2\pi}}.
\end{equation}

Therefore,
\begin{align}
\mathbb{E}\big[|A| \,\mathbb{P}(X \in [-A-|B|,\,-A+|B|])\big]
&\le \frac{2}{\sqrt{2\pi}\,\sigma}\,\mathbb{E}\!\left[|A|\,|B|\,e^{-A^2/(2\sigma^2)}\right]
+ \frac{2}{\sigma^3\sqrt{2\pi}}\,\mathbb{E}\!\left[|A|\,|B|^2\right]\nonumber\\
&=: T_1 + T_2. \label{eq:T1T2}
\end{align}

We now compute these terms explicitly. Write \(A=\|\wsig\|G\) with \(G\sim\mathcal N(0,1)\), and define
\[
\lambda := \frac{\|\wsig\|^2}{\sigma^2}=\left(\frac{\|\wsig\|}{\|\wpe\|}\right)^{\!2}.
\]
Independence gives \(\mathbb{E}|B|=\|\wopp\|\sqrt{2/\pi}\), \(\mathbb{E}|B|^2=\|\wopp\|^2\), and
\begin{align*}
\mathbb{E}\!\left[|A|\,e^{-A^2/(2\sigma^2)}\right]
&= \|\wsig\|\,\mathbb{E}\!\left[|G|\,e^{-(\lambda/2)G^2}\right]
= \|\wsig\|\int_{\mathbb{R}} \frac{|g|}{\sqrt{2\pi}}e^{-(1+\lambda)g^2/2}\,dg
= \|\wsig\|\sqrt{\frac{2}{\pi}}\;\frac{1}{1+\lambda}.
\end{align*}
Hence
\begin{align}
T_1
&= \frac{2}{\sqrt{2\pi}\,\sigma}\;\mathbb{E}|B|\;\mathbb{E}\!\left[|A|\,e^{-A^2/(2\sigma^2)}\right]\nonumber\\
&= \frac{2}{\sqrt{2\pi}\,\sigma}\;\Big(\|\wopp\|\sqrt{\tfrac{2}{\pi}}\Big)\;\Big(\|\wsig\|\sqrt{\tfrac{2}{\pi}}\tfrac{1}{1+\lambda}\Big)
= \frac{\sqrt{\lambda}}{\pi(1+\lambda)}\;\|\wopp\|, \label{eq:T1-final}
\end{align}
and
\begin{equation}\label{eq:T2-final}
T_2 \;=\; \frac{2}{\sigma^3\sqrt{2\pi}}\;\mathbb{E}\!\left[|A|\right]\;\mathbb{E}\!\left[|B|^2\right]
\;=\; \frac{2}{\sigma^3\sqrt{2\pi}}\;\Big(\|\wsig\|\sqrt{\tfrac{2}{\pi}}\Big)\;\|\wopp\|^2
\;=\; \frac{2}{\pi}\cdot \frac{\sqrt{\lambda}}{\sigma^2}\;\|\wopp\|^2.
\end{equation}
In particular, when \(\lambda\ge 1\) (i.e., \(\|\wsig\|\ge \|\wpe\|\)), \(T_1\le \tfrac{1}{2\pi}\|\wopp\|\) and \(T_2=O\!\big(\|\wopp\|^2/\|\wpe\|^2\big)\). When \(\lambda\le 1\), \(\tfrac{\sqrt{\lambda}}{1+\lambda}\le \sqrt{\lambda}\), so \(T_1\lesssim \tfrac{\|\wsig\|}{\|\wpe\|}\,\|\wopp\|\), and \(T_2=O\!\big(\tfrac{\|\wsig\|}{\|\wpe\|^3}\,\|\wopp\|^2\big)\).
These bounds quantify the error made when replacing \(w\) by \(\wsig\) in the gradient expressions below.

\paragraph{Evaluation of \(-\wsig^\top \nabla_w L_0\).}
Starting from
\[
-\nabla_{w_{1:2}} L_0 
= \frac{a}{2}\, \mathbb{E}_z \mathbb{E}_\xi \!\left[ y \left( \sigma'(w^\top \xi + w^\top z) - \sigma'(w^\top \xi - w^\top z) \right) z \right],
\]
we isolate the contribution along \(\wsig\) and use the symmetrization identity to write
\begin{align*}
\mathbb{E}_z &\mathbb{E}_\xi \left[ y \sigma'(w^\top \xi + \wsig^\top z)\, \wsig^\top z \right]
= \frac{1}{2}\,\mathbb{E}_z \mathbb{E}_\xi \left[ y \left( \sigma'(w^\top \xi + \wsig^\top z) - \sigma'(w^\top \xi - \wsig^\top z) \right) \wsig^\top z \right] \\
&= \frac{1}{2}\,\mathbb{E}_z \mathbb{E}_\xi \left[ y \,\mathbf{1}\big(|\wsig^\top z| \ge |w^\top \xi|\big)\, |\wsig^\top z| \right] \\
&= \frac{1}{4}\,\mathbb{E}_\xi \left( \mathbb{E}_{z|y=1} \big[ \mathbf{1}\big(|\wsig^\top z| \ge |w^\top \xi|\big)\, |\wsig^\top z| \big] - \mathbb{E}_{z|y=-1} \big[ \mathbf{1}\big(|\wsig^\top z| \ge |w^\top \xi|\big)\, |\wsig^\top z| \big] \right).
\end{align*}
Let \(z=r(\cos\theta,\sin\theta)\). When \(\wsig\) is aligned with \(\mu_1\), \(|\wsig^\top z|=\|\wsig\||\mu_1^\top z|\). Parameterizing the two XOR-conditionals on \(\theta\in(0,\pi/2]\),
\[
\mathbb{E}_{z|y=1} \big[ \mathbf{1}\big(|\wsig^\top z| \ge |w^\top \xi|\big)\, |\mu_1^\top z| \big]
= 2\,\mathbb{E}_r \mathbb{E}_{\theta\in(0,\pi/2]} \mathbf{1}\!\left( r|\cos\theta+\sin\theta| \ge \frac{|w^\top\xi|}{\|\wsig\|}\right) r\,|\cos\theta+\sin\theta|,
\]
\[
\mathbb{E}_{z|y=-1} \big[ \mathbf{1}\big(|\wsig^\top z| \ge |w^\top \xi|\big)\, |\mu_1^\top z| \big]
= 2\,\mathbb{E}_r \mathbb{E}_{\theta\in(0,\pi/2]} \mathbf{1}\!\left( r|\cos\theta-\sin\theta| \ge \frac{|w^\top\xi|}{\|\wsig\|}\right) r\,|\cos\theta-\sin\theta|.
\]
Using \(|\cos\theta-\sin\theta|\le \cos\theta+\sin\theta\) for \(\theta\in(0,\pi/2]\),
\begin{align*}
&\left(\mathbb{E}_{z|y=1}-\mathbb{E}_{z|y=-1}\right) \big[ \mathbf{1}\big(|\wsig^\top z| \ge |w^\top \xi|\big)\, |\wsig^\top z| \big]\\
&\qquad\ge 2\|\wsig\|\,\mathbb{E}_r \mathbb{E}_{\theta\in(0,\pi/2)} \mathbf{1}\!\left( r|\cos\theta+\sin\theta| \ge \tfrac{|w^\top\xi|}{\|\wsig\|}\right) r\left(\cos\theta+\sin\theta-|\cos\theta-\sin\theta|\right)\\
&\qquad\ge 4\|\wsig\|\,\mathbb{E}_r \mathbb{E}_{\theta\in(0,\pi/4)} \mathbf{1}\!\left( r|\cos\theta+\sin\theta| \ge \tfrac{|w^\top\xi|}{\|\wsig\|}\right) r\,\sin\theta,
\end{align*}
and since \(\cos\theta+\sin\theta\ge 1\) on \((0,\pi/4]\),
\[
\mathbf{1}\!\left( r|\cos\theta+\sin\theta| \ge \tfrac{|w^\top\xi|}{\|\wsig\|}\right) \ge \mathbf{1}\!\left( r \ge \tfrac{|w^\top\xi|}{\|\wsig\|}\right),\qquad
\mathbb{E}_{\theta\in(0,\pi/4)}[\sin\theta]=1-\tfrac{\sqrt2}{2}.
\]
Therefore, with \(c:=4-2\sqrt2\),
\begin{align}
\mathbb{E}_\xi \mathbb{E}_r \mathbb{E}_{\theta\in(0,\pi/4)} \mathbf{1}\!\left( r|\cos\theta+\sin\theta| \ge \tfrac{|w^\top\xi|}{\|\wsig\|}\right) 2r\sin\theta
&\ge c \,\mathbb{E}_r\!\left[r\,\mathbb{P}_\xi\!\left(|w^\top\xi|\le r\|\wsig\|\right)\right]. \label{eq:r-xi-core}
\end{align}

We now lower bound \eqref{eq:r-xi-core} in the two regimes:

\smallskip
\emph{Case \(\|\wsig\|\le \|\wpe\|\).} Let \(t:=\epsilon\,\tfrac{\|\wpe\|}{\|\wsig\|}\) with fixed small \(\epsilon>0\). Split
\begin{align*}
\mathbb{E}_r\!\left[r\,\mathbb{P}_\xi(|w^\top\xi|\le r\|\wsig\|)\right]
&= \mathbb{E}_r\!\left[r\,\mathbf 1(r\le t)\,\mathbb{P}_\xi(|w^\top\xi|\le r\|\wsig\|)\right]
+ \mathbb{E}_r\!\left[r\,\mathbf 1(r\ge t)\,\mathbb{P}_\xi(|w^\top\xi|\le r\|\wsig\|)\right].
\end{align*}
Using Lemma~\ref{lem:gauss_approx} with \(\epsilon_r:=(r\|\wsig\|)/\|\wpe\|\le \epsilon\),
\[
\mathbb{P}_\xi(|w^\top\xi|\le r\|\wsig\|)\;\ge\; \sqrt{\tfrac{2}{\pi}}\,\epsilon_r\,e^{-\epsilon_r^2/2}
\;\ge\; \sqrt{\tfrac{2}{\pi}}\, e^{-\epsilon^2/2}\, \frac{\|\wsig\|}{\|\wpe\|}\, r.
\]
Hence
\[
\mathbb{E}_r\!\left[r\,\mathbf 1(r\le t)\,\mathbb{P}_\xi(\cdot)\right]
\;\ge\; \sqrt{\tfrac{2}{\pi}}\, e^{-\epsilon^2/2}\, \frac{\|\wsig\|}{\|\wpe\|}\,\mathbb{E}_r\!\left[r^2\,\mathbf 1(r\le t)\right].
\]
Since \(r\) is Rayleigh (density \(r e^{-r^2/2}\)), \(\mathbb{E}[r^2\mathbf 1(r\le t)]=1-e^{-t^2/2}(1+t^2/2)\), which is \(\Theta(1)\) for fixed \(\epsilon\). This yields the claimed \(\frac{\|\wsig\|}{\|\wpe\|}\)-scale lower bound.

\smallskip
\emph{Case \(\|\wsig\|\ge \|\wpe\|\).} If \(r\ge t:=\epsilon\), then \(\mathbb{P}_\xi(|w^\top\xi|\le r\|\wsig\|)\ge c(\epsilon)>0\). Moreover, for Rayleigh \(r\),
\(\mathbb{E}[r\,\mathbf 1(r\ge t)]=e^{-t^2/2}+\sqrt{\frac{\pi}{2}}\operatorname{erfc}(t/\sqrt2)\ge e^{-t^2/2}t\), so this term is bounded below by a positive constant. Hence we get a constant multiple of \(\|\wsig\|\).

\smallskip
Combining the two cases in \eqref{eq:r-xi-core} and restoring the prefactors shows
\[
-\wsig^\top \nabla_w L_0 \;\gtrsim\; |a|\times
\begin{cases}
\displaystyle \frac{\|\wsig\|^2}{\|\wpe\|}, & \|\wsig\|\le \|\wpe\|,\\[4pt]
\|\wsig\|, & \|\wsig\|\ge \|\wpe\|,
\end{cases}
\]
up to the approximation error controlled by \eqref{eq:T1-final}--\eqref{eq:T2-final}, which contributes additive terms of size \(O\!\big(|a|\,\tfrac{\|\wopp\|\|\wsig\|}{\|\wpe\|}\big)\) in the first regime and \(O(|a|\,\|\wopp\|)\) in the second. This yields the lower bounds in items (1) and (2) with some absolute constants \(c_b,c_b'>0\).

For the upper bounds in items (1) and (2), we use \(\mathbf{1}(|\wsig^\top z|\ge |w^\top \xi|)\le 1\) and \(\mathbb{E}|A|=\|\wsig\|\sqrt{2/\pi}\), which give
\[
\mathbb{E}_z \mathbb{E}_\xi \left[ y \sigma'(w^\top \xi + \wsig^\top z)\, |\wsig^\top z| \right]
\;\le\; \mathbb{E}_z |\,\wsig^\top z\,|
\;=\; \|\wsig\|\sqrt{\tfrac{2}{\pi}},
\]
and, accounting for the same symmetrization prefactors as above, we get the stated \(\sqrt{\pi/2}\)-type envelopes after multiplying by \(|a|\) and by the appropriate scaling factor \((\|\wsig\|/\|\wpe\|)\) in the \(\|\wsig\|\le \|\wpe\|\) regime.

\paragraph{Control of \(\wopp^\top \nabla_w L_0\).}
Let \(X_1=\wsig^\top z\), \(X_2=\wopp^\top z\) (independent). Then
\[
\mathbb{E}_z \mathbb{E}_\xi \left[ y \sigma'(w^\top \xi + X_1)\, X_2 \right]
= \frac{1}{2}\, \mathbb{E}_z \mathbb{E}_\xi \left[ y \,\mathbf{1}\big(|X_1| \ge |w^\top \xi|\big)\, X_2 \,\sgn(X_1) \right].
\]
Using the XOR definition \(y=\mathbf 1\!\left(\frac{|X_1|}{\|\wsig\|} \ge \frac{|X_2|}{\|\wopp\|}\right)-\mathbf 1\!\left(\frac{|X_1|}{\|\wsig\|} \le \frac{|X_2|}{\|\wopp\|}\right)\) and the facts \(\mathbb{E}[X_2\,|\,|X_2|\le t]=\mathbb{E}[X_2\,|\,|X_2|\ge t]=0\), conditioning on \(X_1\) shows the inner expectation is zero. The approximation step (replacing \(w\) by \(\wsig\)) then yields
\[
\left| \wopp^\top \nabla_w L_0 \right|
\;\le\; C\,|a|\times \min\!\left(\frac{\|\wopp\|^2}{\|\wpe\|},\,\|\wopp\|^2\right),
\]
for an absolute constant \(C\); rescaling constants to \(1\) gives the stated item (3).

\paragraph{Control of the projection onto \(\wpe\) and of small coordinates.}
The sign of \(-\wpe^\top \nabla_w L_0\) and the bound \(\big|\wpe^\top \nabla_w L_0\big|\le 0.5\,|a|\,\|\wpe\|\) follow by the same symmetrization and indicator difference arguments as in Lemma~\ref{lemma:pop1}, using \(|\cdot|\le 1\) and \(\mathbb E|w^\top\xi|=\|\wpe\|\sqrt{2/\pi}\). For coordinates \(i\ge 3\), the interval probability bounds (as in Lemma~\ref{lemma:pop1}, fourth item) give
\[
\left| w_i^\top \nabla_{w_i} L_0 \right|
\;\lesssim\; \frac{|a|\,|w_i|}{\|\wpe\|}\cdot \frac{\|w_{1:2}\|^2}{\|\wpe\|^2}
\;\le\; 4 \frac{ |a| |w_i| \|w_{1:2}\|^2 }{ \|\wpe\|^3 },
\]
which proves items (4) and (5).
\end{proof}

\begin{remark}
Contrary to Phase Ia, we do not obtain matching upper and lower bounds for \(\wsig^\top \nabla_w L_0\). As a result, during Phase Ib, neurons may grow at different rates, which complicates the analysis of the block dynamics.
\end{remark}

\subsection{Control of the Approximation \(L_\rho \approx L_0\)} \label{app:sec:approx}
First, we adapt Lemmas D.2--D.5 of \cite{glasgow2023sgd} to the Gaussian setting. Throughout, recall that
\[
f_\rho(x)\;=\;\mathbb{E}_{(w',a')\sim\rho}\big[a'\,\sigma({w'}^\top x)\big],
\qquad
\text{and}\qquad
\mathbb{E}_\rho\|a w\|\;:=\;\mathbb{E}_{(w',a')\sim\rho}\big[|a'|\,\|w'\|\big].
\]
For ReLU, \(\sigma\) is 1-Lipschitz and \(\sigma(u)=u\,\sigma'(u)\) a.e.

\begin{lemma}\label{lemma:gradapprox1}
For any neuron \((w,a)\in \mathcal{N}\), we have
\begin{enumerate}
  \setlength{\parskip}{0cm}
  \setlength{\itemsep}{0cm}
  \item\label{Ga}
  \(\displaystyle |\nabla_{a} L_0 - \nabla_{a} L_{\rho}| \;\leq\; 2\,\|w\|\, \mathbb{E}_{\rho}\|a w\|.\)
  \item\label{Gw}
  \(\displaystyle \|\nabla_w L_0 - \nabla_w L_{\rho}\| \;\leq\; 2\,|a|\,\mathbb{E}_{\rho}\|a w\|.\)
\end{enumerate}
\end{lemma}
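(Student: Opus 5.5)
The plan is to compare the derivatives of the two per-sample losses pointwise in $x$ and then integrate. Under the correlation loss $\ell_0(x)=\log 2 - y f_\rho(x)$, the derivative with respect to $f$ is the constant $\ell_0'(x)=-y$. Under the logistic loss it is $\ell'_\rho(x)=-2y\,e^{-yf_\rho(x)}/(1+e^{-yf_\rho(x)})$. (If $\ell_0$ is taken as $\log 2-\tfrac12\cdot 2yf$ with the factor $2$ absorbed, the value at $f=0$ is $-y$ and the difference below is unchanged.)

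The function $\phi(u)=2e^{-u}/(1+e^{-u})=2/(1+e^{u})$ satisfies $\phi(0)=1$ and $|\phi'(u)|=2e^{u}/(1+e^u)^2\le 1/2$. Hence, for every $x$,
\[
|\ell'_\rho(x)-\ell'_0(x)| = |\phi(yf_\rho(x))-\phi(0)| \le \tfrac12 |f_\rho(x)| \le |f_\rho(x)|.
\]
The output itself is bounded neuron by neuron, using that $\sigma$ is $1$-Lipschitz with $\sigma(0)=0$:
\[
|f_\rho(x)| \le \mathbb{E}_{(w',a')\sim\rho}\bigl[|a'|\,|{w'}^\top x|\bigr] \le \mathbb{E}_\rho\bigl[|a'|\,\|w'\|\bigr]\,\|x\|.
\]
This crude bound would leave an unwanted factor $\|x\|$ of order $\sqrt d$, so I will not pull the norm out. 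Instead, I keep $|{w'}^\top x|$ inside the expectation over $x$ and pair it with the factor coming from the gradient.

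For item~\ref{Ga}, write
\[
\nabla_a L_0-\nabla_a L_\rho=\mathbb{E}_x\bigl[(\ell_0'-\ell_\rho')(x)\,\sigma(w^\top x)\bigr].
\]
Its absolute value is at most
\[
\mathbb{E}_x\,\mathbb{E}_\rho\bigl[|a'|\,|{w'}^\top x|\,|w^\top x|\bigr].
\]
Swap the expectations by Fubini and apply Cauchy--Schwarz in $x$: $\mathbb{E}_x|{w'}^\top x||w^\top x|\le\|w'\|\|w\|$, since each projection is Gaussian with variance $\|w'\|^2$, respectively $\|w\|^2$. This gives the bound $\|w\|\,\mathbb{E}_\rho\|aw\|$, which is within the claimed factor $2$.

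For item~\ref{Gw}, I will test against a unit vector $v$:
\[
v^\top(\nabla_w L_0-\nabla_w L_\rho)=a\,\mathbb{E}_x\bigl[(\ell_0'-\ell'_\rho)(x)\,\sigma'(w^\top x)\,x^\top v\bigr].
\]
Bounding the indicator $\sigma'(w^\top x)$ by $1$, the same Fubini and Cauchy--Schwarz steps with $\mathbb{E}_x|{w'}^\top x||v^\top x|\le\|w'\|$ give $|a|\,\mathbb{E}_\rho\|aw\|$. Taking the supremum over $v\in\mS^{d-1}$ then yields item~\ref{Gw}.

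The only step requiring care is avoiding the $\|x\|$ factor. Keeping the linear form ${w'}^\top x$ inside the $x$-expectation and using second moments of Gaussian projections, rather than worst-case bounds on $\|x\|$, resolves it.
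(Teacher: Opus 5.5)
Your proof is correct and is essentially the paper's argument. Both rest on the Lipschitz dependence of $\ell'_\rho$ on $f_\rho$, a neuron-by-neuron bound on $f_\rho$, and Gaussian second moments via Cauchy--Schwarz in $x$. The only difference is the order of steps: the paper applies Cauchy--Schwarz once to split off $\sqrt{\mathbb{E}_x(\ell'_\rho-\ell'_0)^2}$ and controls it by Minkowski's integral inequality (Lemma~\ref{lemma:delta}), whereas you bound pointwise, swap expectations by Tonelli, and apply Cauchy--Schwarz neuron by neuron, which together with your sharper Lipschitz constant $1/2$ even yields $\tfrac12$ in place of the constant $2$.
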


\begin{lemma}\label{lemma:approxsig}
Assume \(\mathbb{E}_{\rho}\|a w\| \leq d^{O(1)}\). Then there exists a constant \(C>0\) such that for any neuron \((w,a)\in \mathcal{N}\) and any \(i\in[d]\),
\[
\big\|\nabla_{w_{i}} L_{\rho} - \nabla_{w_{i}} L_0\big\|
\;\leq\; |a|\Big(
4\,\mathbb{E}_{\rho}\|a w_{i}\|
+ 2C\log d\;\mathbb{E}_{\rho}\|a w\|\;\sqrt{\mathbb{E}_{x}\,\mathbf{1}(|\ix^\top w| \leq |w_i x_i|)}
+ d^{-\Omega(1)}
\Big).
\]
\end{lemma}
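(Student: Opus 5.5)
We would prove Lemma~\ref{lemma:approxsig} by writing the coordinate gradient difference explicitly and splitting it into two pieces according to whether the $i$-th coordinate matters for the activation pattern of the other neurons. Since $\ell_0'(x)=-y$, we have
\[
\nabla_{w_i}L_\rho-\nabla_{w_i}L_0 = a\,\expec_x\bigl[(\ell'_\rho(x)+y)\,\sigma'(w^\top x)\,x_i\bigr],
\]
and $\ell'_\rho(x)+y$ is a smooth function of $f_\rho(x)$. Using $|\ell''|\le 1$ (or the mean value theorem), it is bounded by $2|f_\rho(x)|\le 2\,\expec_\rho[|a'|\,|{w'}^\top x|]$; this is how Lemma~\ref{lemma:gradapprox1} is obtained. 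For the coordinate-wise bound we must avoid paying $\expec_\rho\|aw\|$ against $\expec|x_i|$ globally. We therefore decompose $x=\ix+e_ix_i$ and symmetrize in $x_i\mapsto -x_i$, exactly as in the fourth item of Lemma~\ref{lemma:pop1}. Writing $\Delta(x):=\ell'_\rho(x)-\ell'_0(x)$, the quantity becomes
\[
\tfrac{a}{2}\,\expec\Bigl[\bigl(\Delta(\ix+e_ix_i)\sigma'(w^\top\ix+w_ix_i)-\Delta(\ix-e_ix_i)\sigma'(w^\top\ix-w_ix_i)\bigr)x_i\Bigr].
\]

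We then add and subtract $\Delta(\ix-e_ix_i)\sigma'(w^\top\ix+w_ix_i)$, which gives two terms.

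\emph{First term.} It is $\bigl(\Delta(\ix+e_ix_i)-\Delta(\ix-e_ix_i)\bigr)\sigma'(\cdot)x_i$. Since $\ell'$ is Lipschitz in $f$ and $\sigma$ is 1-Lipschitz,
\[
|\Delta(\ix+e_ix_i)-\Delta(\ix-e_ix_i)|\le |f_\rho(\ix+e_ix_i)-f_\rho(\ix-e_ix_i)|\le 2|x_i|\,\expec_\rho|a'w'_i|.
\]
Multiplying by $|x_i|$ and using $\expec x_i^2=1$ gives the contribution $4|a|\,\expec_\rho\|aw_i\|$ (after the factor $1/2$ and a crude constant).

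\emph{Second term.} It is $\Delta(\ix-e_ix_i)\bigl(\sigma'(w^\top\ix+w_ix_i)-\sigma'(w^\top\ix-w_ix_i)\bigr)x_i$, and the bracket is nonzero only on the event $E=\{|w^\top\ix|\le|w_ix_i|\}$. We bound $|\Delta|\le 2|f_\rho(x)|\le 2\expec_\rho[|a'|\,|{w'}^\top x|]$. We then truncate: on the event that all $|{w'}^\top x|\le C\log d\,\|w'\|$ and $|x_i|\le C\sqrt{\log d}$, we get
\[
|\Delta|\,|x_i|\lesssim (\log d)^{3/2}\expec_\rho\|aw\|.
\]
To reach exactly the stated $\log d$ with a square root of $\prob(E)$, we will instead apply Cauchy--Schwarz:
\[
\expec\bigl[|\Delta|\,|x_i|\,\indic_E\bigr]\le \sqrt{\expec[\Delta^2x_i^2]}\,\sqrt{\prob(E)},
\]
with $\expec[\Delta^2 x_i^2]\lesssim \expec_\rho\bigl[\|a'w'\|\bigr]^2$ up to polylog factors, via Gaussian moments of ${w'}^\top x$ and Jensen over $\rho$. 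The complement of the truncation event has probability $d^{-\omega(1)}$ after a union bound over the at most $d^{O(1)}$ neurons. Together with $\expec_\rho\|aw\|\le d^{O(1)}$, the tail contribution is $d^{-\Omega(1)}$, which produces the additive $|a|\,d^{-\Omega(1)}$.

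The main obstacle is the second term. The bound must retain the factor $\sqrt{\prob(|\ix^\top w|\le|w_ix_i|)}$, which is what makes small coordinates negligible, without losing more than $\log d$. Because of the dependence between $\Delta$ and the event $E$, we will not attempt independence arguments and will rely on the Cauchy--Schwarz/truncation combination. If the constant in the exponent of $\log d$ comes out larger than one, we would absorb it into $C$ or use a sharper sub-Gaussian moment bound for $f_\rho$, whose $\psi_2$-norm is at most $\expec_\rho\|aw\|$ by the triangle inequality.
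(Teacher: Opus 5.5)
Your proof is correct, and on the decisive term it takes a different and cleaner route than the paper. Both arguments symmetrize in $x_i\mapsto -x_i$. The paper then splits on $E=\{|\ix^\top w|\le |w_ix_i|\}$ and $E^c$, whereas you add and subtract $\Delta(\ix-e_ix_i)\sigma'(w^\top\ix+w_ix_i)$. Your pointwise bound $|f_\rho(\ix+e_ix_i)-f_\rho(\ix-e_ix_i)|\le 2|x_i|\,\expec_\rho|a'w'_i|$ replaces the paper's use of Lemma~\ref{lemma:delta}; these parts are equivalent.

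The real difference is the term supported on $E$. The paper uses Cauchy--Schwarz to split off $x_i$, which leaves $\expec_x[\indic_E f_\rho(x)^2]$. It then truncates $f_\rho$ at $C\log d\,\expec_\rho\|aw\|$ via Gaussian concentration of the Lipschitz map $f_\rho$. That truncation is the only source of the $\log d$, of the additive $d^{-\Omega(1)}$, and of the hypothesis $\expec_\rho\|aw\|\le d^{O(1)}$.

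You split off $\indic_E$ instead, so you only need $\expec_x[\Delta(x)^2x_i^2]$. This is an absolute constant times $(\expec_\rho\|aw\|)^2$: Minkowski's integral inequality together with $\expec_x[(w'^\top x)^2x_i^2]=\|w'\|^2+2w_i'^2\le 3\|w'\|^2$ gives $\bigl(\expec_x[f_\rho(x)^2x_i^2]\bigr)^{1/2}\le \sqrt3\,\expec_\rho\|aw\|$. Your route therefore proves the lemma with $\log d$ replaced by a constant and with no tail term. The truncation, the union bound over neurons, and the ``up to polylog'' hedge can simply be deleted.

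Two points to fix in the write-up. First, the step you call ``Jensen over $\rho$'' must be Minkowski, i.e.\ the triangle inequality in $L^2(x)$ (or in the $\psi_2$-norm, as you note at the end). Jensen gives $\expec_x(\expec_\rho g)^2\le \expec_\rho\expec_x g^2$, which is a bound by $\expec_\rho[a^2\|w\|^2]$ and not the quantity $(\expec_\rho\|aw\|)^2$ in the statement.

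Second, state explicitly that $\Delta=\ell'_\rho-\ell'_0=\tanh(f_\rho/2)$ has no dependence on $y$. This is what makes your Lipschitz comparison of $\Delta(\ix\pm e_ix_i)$ valid also for $i\in\{1,2\}$, where $x_i\mapsto -x_i$ flips the label. The paper passes through differences of $\ell'_\rho$ itself in Lemma~\ref{lemma:delta}, which tacitly requires the label to be unchanged. On this point your formulation is the more robust one.
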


We also need a Gaussian analogue of Lemma C.5 in \cite{glasgow2023sgd}.
\begin{lemma}\label{lemma:delta}
Let \(x \sim \mathcal{N}(0,I_d)\). Then
\[
\mathbb{E}_x \big(\ell'_{\rho}(x) - \ell'_0(x)\big)^2 \;\leq\; 4\big(\mathbb{E}_{\rho}\|a w\|\big)^2.
\]
Further, for any \(i \in [d]\) and any loss whose derivative \(\ell'_\rho\) is \(2\)-Lipschitz with respect to \(f_\rho(x)\), we have
\[
\mathbb{E}_x\big(\ell'_\rho(\ix + e_ix_i) - \ell'_\rho(\ix - e_ix_i)\big)^2
\;\leq\; 16\big(\mathbb{E}_{\rho}\|a w_i\|\big)^2.
\]
\end{lemma}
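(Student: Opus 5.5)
The plan is to prove both inequalities with the same two ingredients: a Lipschitz bound on the loss derivative in terms of the network output, followed by Minkowski's inequality in $L^2(\mathbb{P}_x)$ to pass from a pointwise neuron-average bound to the stated squared bound. The Gaussianity of $x$ is used only to compute second moments of one-dimensional projections.

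\emph{First inequality.} With the convention used in Phase I, $\ell'_0(x) = -y$, which is the value of $\ell'_\rho(x)$ at $f_\rho(x)=0$.
\begin{itemize}
\item Since $\ell'_\rho(x) = -2y\,s(-yf_\rho(x))$ with $s$ the logistic sigmoid, and $|s'|\le 1/4$, the map $f\mapsto \ell'_\rho$ is $\tfrac12$-Lipschitz. In any case it is $2$-Lipschitz, which is all we need. Hence pointwise
\[
|\ell'_\rho(x)-\ell'_0(x)| \le 2|f_\rho(x)|.
\]
\item Next, $|f_\rho(x)| \le \mathbb{E}_\rho\big[|a|\,|\sigma(w^\top x)|\big] \le \mathbb{E}_\rho\big[|a|\,|w^\top x|\big]$.
\item Taking the $L^2(\mathbb{P}_x)$ norm and applying Minkowski's inequality (the $L^2$ norm of an average is at most the average of the $L^2$ norms) gives
\[
\big(\mathbb{E}_x f_\rho(x)^2\big)^{1/2} \le \mathbb{E}_\rho\Big[|a|\,\big(\mathbb{E}_x (w^\top x)^2\big)^{1/2}\Big] = \mathbb{E}_\rho\big[|a|\,\|w\|\big].
\]
Here we used $w^\top x\sim\mathcal{N}(0,\|w\|^2)$.
\item Squaring and multiplying by $4$ yields the first bound, with room to spare.
\end{itemize}

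\emph{Second inequality.} Write $x^{\pm} = \ix \pm e_i x_i$. I will treat the loss derivative as a function of the output with the label $y$ held fixed. For $i\ge 3$ this is automatic, because $y$ depends only on $z$ and is therefore identical at $x^+$ and $x^-$. For $i\in\{1,2\}$ it is the reading under which the lemma is invoked (as a Lipschitz function of $f_\rho$ alone), and I would state this convention explicitly.
\begin{itemize}
\item By the assumed $2$-Lipschitz property,
\[
|\ell'_\rho(x^+)-\ell'_\rho(x^-)| \le 2\,|f_\rho(x^+)-f_\rho(x^-)|.
\]
\item Since $\sigma$ is $1$-Lipschitz and $w^\top x^+ - w^\top x^- = 2w_i x_i$,
\[
|f_\rho(x^+)-f_\rho(x^-)| \le \mathbb{E}_\rho\big[|a|\,|w^\top x^+ - w^\top x^-|\big] = 2\,\mathbb{E}_\rho\big[|a|\,|w_i|\,|x_i|\big].
\]
\item Minkowski in $L^2(\mathbb{P}_x)$, together with $\mathbb{E}x_i^2=1$, bounds the $L^2$ norm of the difference by $4\,\mathbb{E}_\rho\|a w_i\|$. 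Squaring gives the constant $16$.
\end{itemize}

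There is no real obstacle; the proof is a short chain of standard inequalities. The only points needing care are bookkeeping ones. First, the constant normalisation of $\ell'_0$ must be checked against the factor $2$ in the logistic loss, so that $\ell'_0$ really is the value of $\ell'_\rho$ at $f_\rho=0$. Second, the label-fixed convention in the second item must be stated clearly for $i\in\{1,2\}$, since flipping $x_1$ or $x_2$ changes $y$.
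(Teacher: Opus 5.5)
Your proof is correct and follows essentially the same route as the paper's: bound $|\ell'_\rho-\ell'_0|$ and $|\ell'_\rho(x^+)-\ell'_\rho(x^-)|$ by $2|f_\rho|$ (resp. $2|f_\rho(x^+)-f_\rho(x^-)|$) via Lipschitzness, then apply Minkowski's integral inequality with the Gaussian second moments, which gives the same constants $4$ and $16$. Your caveat for $i\in\{1,2\}$ is a fair point that the paper leaves implicit. Flipping $x_i$ flips $y$ there, so the bound only holds if $\ell'_\rho$ is read as a function of $f_\rho$ alone with the label fixed.
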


\begin{proof}[Proof of Lemma~\ref{lemma:delta}]
For the first statement: note \(\ell'_0(x)=\ell'(0)\). Since \(\ell'_\rho\) is 2-Lipschitz in its argument \(f_\rho(x)\),
\[
\mathbb{E}_x \big(\ell'_{\rho}(x) - \ell'_0(x)\big)^2 \;\leq\; 4\,\mathbb{E}_x\big(f_\rho(x)\big)^2
= 4\,\mathbb{E}_x\Big(\mathbb{E}_{\rho}\big[a\,\sigma(w^\top x)\big]\Big)^2.
\]
By Minkowski’s integral inequality,
\[
\Big(\mathbb{E}_x(\mathbb{E}_\rho a\,\sigma(w^\top x))^2\Big)^{1/2}
\;\leq\; \mathbb{E}_\rho \Big(\mathbb{E}_x a^2\,\sigma^2(w^\top x)\Big)^{1/2}
\;\leq\; \mathbb{E}_\rho \big(|a|\,(\mathbb{E}_x |w^\top x|^2)^{1/2}\big)
\;=\; \mathbb{E}_\rho |a|\,\|w\|.
\]
Squaring both sides yields the first claim.

For the second statement, by 2-Lipschitzness of \(\ell'_\rho\) in \(f_\rho\),
\begin{align*}
\mathbb{E}_x\big(\ell'_{\rho}(\ix + e_ix_i) - \ell'_{\rho}(\ix - e_ix_i)\big)^2
&\leq 4\,\mathbb{E}_x\big(f_{\rho}(\ix + e_ix_i) - f_{\rho}(\ix - e_ix_i)\big)^2\\
&= 4\,\mathbb{E}_x\Big(\mathbb{E}_{\rho} a\big(\sigma(w^\top \ix + w_ix_i) - \sigma(w^\top \ix - w_ix_i)\big)\Big)^2\\
&\leq 4\,\mathbb{E}_x\big(\mathbb{E}_{\rho} 2|a w_i x_i|\big)^2\\
&\leq 16\Big(\mathbb{E}_{\rho}\big(\mathbb{E}_x |a w_i x_i|^2\big)^{1/2}\Big)^2
= 16\big(\mathbb{E}_{\rho}\|a w_{i}\|\big)^2,
\end{align*}
using Minkowski’s inequality and \(\mathbb{E}_x x_i^2=1\).
\end{proof}

\begin{proof}[Proof of Lemma \ref{lemma:gradapprox1}]
Let \(\Delta_x := (\ell'_{\rho}(x) - \ell'_{0}(x))\,\sigma'(w^\top x)\). Using \(\sigma(w^\top x)=\sigma'(w^\top x)\,w^\top x\) a.e. and Cauchy–Schwarz,
\begin{align*}
|\nabla_a L_0 -  \nabla_a L_{\rho}|
&= \Big|\mathbb{E}_x(\ell'_{\rho}(x) - \ell'_{0}(x))\,\sigma(w^\top x)\Big|
= \big|\mathbb{E}_x \Delta_x\, w^\top x\big|\\
&\le \sqrt{\mathbb{E}_x\Delta_x^2}\;\sqrt{\mathbb{E}_x (w^\top x)^2}
\le \sqrt{\mathbb{E}_x(\ell'_{\rho}(x) - \ell'_{0}(x))^2}\;\|w\|.
\end{align*}
Lemma~\ref{lemma:delta} gives \(\mathbb{E}_x(\ell'_{\rho}-\ell'_0)^2 \le 4(\mathbb{E}_{\rho}\|a w\|)^2\), hence item \ref{Ga}.

For item \ref{Gw}, similarly,
\begin{align*}
\|\nabla_w L_{\rho} -  \nabla_w L_0\|
&= |a|\,\big\|\mathbb{E}_x (\ell'_{\rho}(x)-\ell'_{0}(x))\,\sigma'(w^\top x)\,x\big\| \\
&= |a|\,\sup_{\|v\|=1}\big|\mathbb{E}_x \Delta_x\, \langle v,x\rangle\big|\\
&\le |a|\,\sup_{\|v\|=1}\sqrt{\mathbb{E}_x\Delta_x^2}\;\sqrt{\mathbb{E}_x\langle v,x\rangle^2}
= |a|\,\sqrt{\mathbb{E}_x\Delta_x^2}
\;\le\; 2\,|a|\,\mathbb{E}_{\rho}\|a w\|.
\end{align*}
\end{proof}

\begin{proof}[Proof of Lemma~\ref{lemma:approxsig}]
Recall \(\Delta_x\) from the proof of Lemma \ref{lemma:gradapprox1}. By symmetry over the pair \((\ix + e_ix_i,\ix - e_ix_i)\),
\begin{align*}
\left\|\frac{1}{a}\big(\nabla_{w_{i}} L_{\rho} - \nabla_{w_{i}} L_{0}\big)\right\|
&= \big\|\mathbb{E}_x \Delta_x\, x_i\big\|
= \frac{1}{2}\big\|\mathbb{E}_x (\Delta_{\ix + e_ix_i} - \Delta_{\ix - e_ix_i})\,x_i\big\| \\
&\le \frac{1}{2}\big\|\mathbb{E}_x \mathbf{1}(|\ix^\top w| \ge |w_ix_i|)\,(\Delta_{\ix + e_ix_i} - \Delta_{\ix - e_ix_i})\,x_i\big\| \\
&\quad + \frac{1}{2}\big\|\mathbb{E}_x \mathbf{1}(|\ix^\top w| \le |w_ix_i|)\,(\Delta_{\ix + e_ix_i} - \Delta_{\ix - e_ix_i})\,x_i\big\| .
\end{align*}
Whenever \(|\ix^\top w| \ge |w_ix_i|\), we have \(\sigma'(w^\top (\ix + e_ix_i))=\sigma'(w^\top (\ix - e_ix_i))\), hence
\[
|\Delta_{\ix + e_ix_i} - \Delta_{\ix - e_ix_i}|
\;\le\; |\ell'_{\rho}(\ix + e_ix_i) - \ell'_{\rho}(\ix - e_ix_i)|.
\]
Therefore, by Cauchy–Schwarz and Lemma~\ref{lemma:delta},
\begin{align*}
\frac{1}{2}\big\|\mathbb{E}_x \mathbf{1}(|\ix^\top w| \ge |w_ix_i|)\,(\Delta_{\ix + e_ix_i} - \Delta_{\ix - e_ix_i})\,x_i\big\|
&\le \frac{1}{2}\Big(\mathbb{E}_x |\ell'_{\rho}(\ix + e_ix_i) - \ell'_{\rho}(\ix - e_ix_i)|^2\Big)^{1/2}
\Big(\mathbb{E}_x x_i^2\Big)^{1/2}\\
&\le 4\,\mathbb{E}_\rho \|a w_i\|.
\end{align*}
For the remaining term, by 2-Lipschitzness of \(\ell'_\rho\) in \(f_\rho\),
\begin{align}
\mathbb{E}_x \mathbf{1}(|\ix^\top w| \le |w_ix_i|)\,|\Delta_x|^2
&\le 2\,\mathbb{E}_x \mathbf{1}(|\ix^\top w| \le |w_ix_i|)\, |f_\rho(x)|^2. \label{eq:Fx-square}
\end{align}
We now control the right-hand side using a sub-Gaussian tail for \(f_\rho(x)\). Since each map \(x\mapsto a'\sigma({w'}^\top x)\) is \(|a'|\,\|w'\|\)-Lipschitz and \(\sigma\) is 1-Lipschitz, by Jensen/Minkowski the function \(f_\rho\) is \(L\)-Lipschitz with \(L:=\mathbb{E}_\rho\|a w\|\). For Gaussian \(x\sim\mathcal N(0,I_d)\), Gaussian concentration (a.k.a.\ the Gaussian isoperimetric inequality) yields
\[
\mathbb{P}_x\big(|f_\rho(x)-\mathbb{E}f_\rho(x)| \ge t\big)\;\le\; 2\exp\!\Big(-\frac{t^2}{2L^2}\Big).
\]
Taking \(t=C\log d\cdot L\) with \(C>0\) large enough gives \(\mathbb{P}_x(|f_\rho(x)-\mathbb{E}f_\rho(x)| \ge C\log d\,L)\le d^{-\Omega(1)}\). Using
\[
|f_\rho(x)| \;\le\; |f_\rho(x)-\mathbb{E}f_\rho(x)| + |\mathbb{E}f_\rho(x)| \;\le\; |f_\rho(x)-\mathbb{E}f_\rho(x)| + L\,\mathbb{E}\|x\| \;\le\; |f_\rho(x)-\mathbb{E}f_\rho(x)| + O(\sqrt{d})\,L,
\]
we bound the RHS of \eqref{eq:Fx-square} by splitting on the event \(|f_\rho(x)-\mathbb{E}f_\rho(x)|\le C\log d\,L\) and its complement, and applying Cauchy–Schwarz for the tail:
\begin{align*}
\mathbb{E}_x \mathbf{1}(|\ix^\top w| \le |w_ix_i|)\,|f_\rho(x)|^2
&\le 2\,\mathbb{E}_x \mathbf{1}(|\ix^\top w| \le |w_ix_i|)\,\Big(C^2\log^2 d\,L^2 + O(d)\,L^2\Big) \\
&\quad + 2\,\sqrt{\mathbb{E}_x (f_\rho(x)-\mathbb{E}f_\rho(x))^4}\;\mathbb{P}_x\!\left(|f_\rho(x)-\mathbb{E}f_\rho(x)| \ge C\log d\,L\right)^{1/2}.
\end{align*}
The fourth moment of a Lipschitz Gaussian functional is \(O(L^4)\), so the tail term is \(d^{-\Omega(1)}\). Absorbing the harmless \(O(d)\) factor into the event probability (since \(\mathbf{1}(|\ix^\top w| \le |w_ix_i|)\le 1\) and \(\mathbb{P}(|\ix^\top w|\le |w_ix_i|)\) will be small in our regimes), we obtain the clean bound
\[
\mathbb{E}_x \mathbf{1}(|\ix^\top w| \le |w_ix_i|)\,|f_\rho(x)|^2
\;\le\; C^2\log^2 d\;L^2\;\mathbb{E}_x\mathbf{1}(|\ix^\top w| \le |w_ix_i|) + d^{-\Omega(1)}.
\]
Plugging into \eqref{eq:Fx-square} and taking square roots,
\[
\Big(\mathbb{E}_x \mathbf{1}(|\ix^\top w| \le |w_ix_i|)\,|\Delta_x|^2\Big)^{1/2}
\;\le\; \sqrt{2}\,C\log d\; \mathbb{E}_\rho\|a w\|\;\sqrt{\mathbb{E}_x\mathbf{1}(|\ix^\top w| \le |w_ix_i|)} + d^{-\Omega(1)}.
\]
Collecting the two pieces and multiplying back by \(|a|\) yields the stated bound in Lemma~\ref{lemma:approxsig}.
\end{proof}

\subsection{Analysis of the SGD Dynamic during Phase Ia}\label{sec:app:dynamicIa}
The previous analysis of the population gradient (see Lemma \ref{lemma:pop1}) $\nabla_w L_0$ suggests that $\norm{\wsig^{(t+1)}}\approx (1+\eta \tau )\norm{\wsig^{(t)}}$ where $\tau=\sqrt{2}\pi^{-3/2}$  while the $\norm{\wopp}$ and $\norm{\wpe}$ remain small. In this section, we will provide a rigorous characterization of this heuristic. But before delving into the proofs, let us introduce some useful definitions (cf. \cite{glasgow2023sgd}).

\subsubsection{Definitions}
In this section, we will use the notations $\zeta = \log^{-\cz}(d)$ and $\theta = \log^{-\ct}(d)$, 
where $\cz$ and $\ct$ are sufficiently large constants that will be fixed later. 
Recall that $\tau=\sqrt{2}\pi^{-3/2}$ denotes the initial population growth rate of the signal component. 
We now define the stopping times and control parameters that describe the evolution of the neurons.

\begin{definition}[Phase I Length]\label{def:T}
Let $T_A$ be the last time such that
$B_t^2 \leq \theta^2\zeta^2$ (see the following definition), that is,
\begin{align}
    T_A := 
    \left\lfloor
        \frac{\log(d) + 2\log(\zeta) - \log(\log(d))}
             {\log(1 + 2\eta \tau(1 +\zeta ))}
    \right\rfloor
    = \frac{1}{\eta}\,\Theta(\log d).
\end{align}

Let $T_B$ be the first time such that the lower envelope $S_t$ reaches order $\theta\zeta^{-1}$, that is,
\begin{align*}
    T_B 
    = 
    T_A 
    + 
    \left\lfloor
        \frac{\log\!\big(\theta^2\zeta^{-2}/S_{T_A}^2\big)}
             {\log(1 + c_b\eta)}
    \right\rfloor
    = 
    T_A + \frac{1}{\eta}\,\Theta(\log\log d).
\end{align*}
\end{definition}

\begin{definition}[Control Parameters]\label{def:control}
Let $T_A$ and $T_B$ be as defined in Definition~\ref{def:T}. Define
\begin{align}\label{BQdef}
    B_t^2 &= 
    \begin{cases}
        \dfrac{C_1\log(d)\theta^2}{d}\,(1 + 2\eta \tau(1 + \zeta ))^t, & t \leq T_A,\\[4pt]
        B_{T_A}^2(1 + 4\eta)^{t - T_A}, & T_A < t \leq T_B,
    \end{cases}\\[4pt]
    Q_t^2 &= 
    \dfrac{C_1 \log(d)\theta^2}{d}
    \left(1 + \dfrac{50\eta}{\log(d)}\right)^t,\\[4pt]
    S_t^2 &=
    \begin{cases}
        \dfrac{\theta^2}{d}(1 + 2\eta \tau(1 - \zeta ))^t, & t \leq T_A,\\[4pt]
        S_{T_A}^2(1 + c_b\eta)^{t - T_A}, & T_A < t \leq T_B,
    \end{cases}
\end{align}
where $c_b>0$ is the constant appearing in Lemma~\ref{lemma:pop1b}.
\end{definition}

\begin{remark}
    The parameter $B_t$ controls the maximal signal magnitude $\|\wsig^{(t)}\|$ among neurons at time $t$, 
    while $S_t$ provides a lower bound for $\|\wsig^{(t)}\|$ among neurons with sufficiently large initial signal 
    ($\|\wsig^{(0)}\|\geq \theta/\sqrt{d}$). 
    The quantity $S_t$ also characterizes the typical block size $N_i^\pm$. 
    Finally, $Q_t$ controls the evolution of the noise component $\|\wpe^{(t)}\|$.
\end{remark}

\begin{definition}[Controlled Neurons]\label{def:controlled2}
We say a neuron $(w, a )$ is  controlled  at iteration $t$ if:
\begin{enumerate}
    \item $\|\wsig^{(t)}\| \leq \max(\|\wsig^{(0)}\|, \frac{\theta}{\log d \sqrt d})(1+\eta (\tau + \zeta))^t$.
    \item $\|\wopp^{(t)}\| \leq \max(\|\wopp^{(0)}\|, \frac{\theta}{\log d \sqrt d})(1+\eta \theta )^t$.
    \item $|a^{(t)} | \in \theta(1 \pm t\eta\zeta)$, and $|a^{(t)}| \leq \|w^{(t)}\|$.
    \item $\|\wpe^{(t)} - \wpe^{(0)}\| \leq \theta \zeta^{1/4} \eta t$. 
    \item $\|\wpe^{(t)}\|_{\infty} \leq \frac{\theta \log d  }{\sqrt{d}}(1+\eta \theta)^t$.
\end{enumerate}
\end{definition}

In Phase Ib,  $\|\wsig\|$ can be larger than $\theta \zeta$ for some neurons. This motivates the following definition of ``weakly controlled'' neurons (see also \cite{glasgow2023sgd}).

\begin{definition}[Weakly Controlled Neurons]\label{def:weakcontrol}
We say that a neuron $(w,a)$ is \emph{weakly controlled} at iteration $t \in [T_A,T_B]$ if the following conditions hold:
\begin{enumerate}
  \setlength{\parskip}{0pt}
  \setlength{\itemsep}{2pt}

  \item 
  $\theta \zeta \le \|\wsig^{(t)}\| \le B_t \le \theta \zeta^{-1}\log d$,

  \item 
  $\|\wopp^{(t)}\| \le 3\,\theta\,B_t,$

  \item 
  $\|w^{(t)}\|^2 \ge |a^{(t)}|^2 
  \ge \|w^{(t)}\|^2
  - \theta^2\big(\zeta^{1/2}
  + C\,\eta^2\,(t-T_A)\,\zeta'^{\,2}\big) \label{Wa}$,

  \item 
  $\|\wpe^{(t)}\| \le 3\,\theta$,

\end{enumerate}
\end{definition}

Following \cite{glasgow2023sgd}, we now define strong neurons, which are the neurons for which the signal component $\|\wsig\|$ grows quickly.

\begin{definition}[Strong Neurons]\label{def:strong}
We say a neuron $(w, a )$ is \em strong \em at iteration $t$ if it is controlled or weakly controlled, $(\wsig^{(t)})^\top\wsig^{(0)} > 0$, and 
\begin{align}\|\wsig^{(t)}\|^2 \geq S_t^2.
\end{align}
\end{definition}

\begin{lemma}
    Let $(w,a)$ be a strong neuron after $T_B$ steps. Provided that $\zeta=o(\log ^{-2}d)$, we have $\norm{\wsig^{(T_A)}}\geq (1-o(1))\zeta \theta \log^{-1/2} d\geq\zeta^{1.5} \theta$ and $\norm{\wsig^{(T_B)}}\geq \zeta^{-1} \theta$ 
\end{lemma}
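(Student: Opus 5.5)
This lemma is a computation from the definitions: a strong neuron at time $T_B$ satisfies $\|\wsig^{(T_B)}\|^2\ge S_{T_B}^2$. The plan is therefore to evaluate $S_{T_A}$ and $S_{T_B}$ explicitly from Definitions~\ref{def:T} and~\ref{def:control}. One point needs care. The bound at $T_A$ has to hold for a neuron that is strong at $T_B$, not just at $T_A$. I would use that being strong at iteration $t$ is maintained inductively along the trajectory (this is how strong neurons are tracked in the Phase I induction), so that $\|\wsig^{(T_A)}\|\ge S_{T_A}$. If only the time-$T_B$ information were available, I would instead run the Phase Ib growth bounds of Lemma~\ref{lemma:pop1b} backwards. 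The upper bound $\|\wsig^{(t+1)}\|\le(1+\eta c_2)\|\wsig^{(t)}\|$ over $T_B-T_A=\Theta(\eta^{-1}\log\log d)$ steps loses at most a polylog factor. That is cruder, so I prefer the inductive route.

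\textbf{Step 1 (value of $S_{T_A}$).} By definition, $S_{T_A}^2=\frac{\theta^2}{d}(1+2\eta\tau(1-\zeta))^{T_A}$, and
\[
(1+2\eta\tau(1+\zeta))^{T_A}\approx \frac{d\zeta^2}{\log d}.
\]
The floor costs at most one factor $1+O(\eta)$. Writing $\lambda_\pm=\log(1+2\eta\tau(1\pm\zeta))$, we get
\[
(1+2\eta\tau(1-\zeta))^{T_A}=\Bigl(\tfrac{d\zeta^2}{\log d}\Bigr)^{\lambda_-/\lambda_+}(1+o(1)).
\]
The ratio satisfies $\lambda_-/\lambda_+=1-O(\zeta)$, and the exponent being scaled is $\log d$. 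Hence the multiplicative loss is $\exp(O(\zeta\log d))=1+o(1)$, which is exactly where the hypothesis $\zeta=o(\log^{-2}d)$ (in particular $\zeta\log d=o(1)$) is used. This gives
\[
S_{T_A}^2=(1-o(1))\,\theta^2\zeta^2/\log d,
\]
so $\|\wsig^{(T_A)}\|\ge(1-o(1))\zeta\theta\log^{-1/2}d$.

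\textbf{Step 2 (second inequality at $T_A$).} We need $\zeta\theta\log^{-1/2}d\ge\zeta^{1.5}\theta$, that is, $\zeta^{1/2}\le \log^{-1/2}d$ up to the $(1-o(1))$ factor. This follows from $\zeta=o(\log^{-2}d)\le\log^{-1}d$.

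\textbf{Step 3 (value at $T_B$).} By the choice of $T_B$,
\[
(1+c_b\eta)^{T_B-T_A}\ge \frac{\theta^2\zeta^{-2}}{S_{T_A}^2}\cdot(1+c_b\eta)^{-1}.
\]
Hence $S_{T_B}^2\ge\theta^2\zeta^{-2}(1+c_b\eta)^{-1}$. To remove the $(1+c_b\eta)^{-1}$ slack from the floor, I would either interpret $T_B$ as the ceiling/first time the threshold is reached, as its verbal definition says, giving $S_{T_B}\ge\theta\zeta^{-1}$ exactly, or absorb the factor since $\eta=o(1)$. Then strongness at $T_B$ gives $\|\wsig^{(T_B)}\|\ge S_{T_B}\ge\zeta^{-1}\theta$.

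The main obstacle is Step 1: checking that the mismatch between the growth rate $(1+\zeta)$ used to define $T_A$ and the rate $(1-\zeta)$ in $S_t$ costs only a $(1-o(1))$ factor. This needs $\zeta\log d\to0$ together with the explicit form of $T_A$. Everything else is direct substitution.
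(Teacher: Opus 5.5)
Your proposal is correct and follows the paper's proof. Strongness maintained along the trajectory gives $\|\wsig^{(t)}\|\ge S_t$ at $t=T_A$ and $t=T_B$. The mismatch between the rates $1\pm\zeta$ over $T_A=\Theta(\eta^{-1}\log d)$ steps costs only a factor $\exp(-O(\zeta\log d))=1-o(1)$. Finally, $T_B$ is read as the first time $S_t$ reaches $\theta\zeta^{-1}$; the paper simply asserts this choice, so your remark about the floor is fair.

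The only difference is how you get $S_{T_A}$. You read it off the closed form of $T_A$, whereas the paper goes through the ratio $S_{T_A}^2/B_{T_A}^2$ and a window for $B_{T_A}^2$. Your version is slightly cleaner because it never has to track the prefactor $C_1\log d$ of $B_t^2$. The paper's displayed ratio omits that prefactor and compensates, up to the constant $C_1$, by using the threshold $\theta^2\zeta^2/\log d$ in place of the $\theta^2\zeta^2$ of Definition~\ref{def:T}.
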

\begin{proof}
By strongness, $\|\wsig^{(t)}\|^2\ge S_t^2$ for all $0\le t\le T_B$.

\paragraph{Lower bound at $T_A$.}
From the definitions used earlier,
\[
\frac{S_{T_A}^2}{B_{T_A}^2}
=\left(\frac{1+2\eta\tau(1-\zeta)}{1+2\eta\tau(1+\zeta)}\right)^{T_A}
=\exp\!\big(T_A\log r\big),
\quad
r:=\frac{1+2\eta\tau(1-\zeta)}{1+2\eta\tau(1+\zeta)}<1.
\]
Since $\log r=-\Theta(\zeta)$ and $T_A=\Theta((\log d)/\eta)$, the assumption
$\zeta=o((\log d)^{-2})$ gives $\zeta T_A=o(1)$, hence
\begin{equation}\label{eq:ratioSTA}
\frac{S_{T_A}^2}{B_{T_A}^2}=1-o(1).
\end{equation}
By the choice of $T_A$ as the last time with $B_t^2\le \theta^2\zeta^2/\log d$,
\begin{equation}\label{eq:BTAwindow}
\frac{\theta^2\zeta^2}{(1+2\eta\tau(1+\zeta))\,\log d}
\;<\;B_{T_A}^2\;\le\;\frac{\theta^2\zeta^2}{\log d}.
\end{equation}
Combining \eqref{eq:ratioSTA} and \eqref{eq:BTAwindow},
\[
S_{T_A}^2 \;\ge\; (1-o(1))\,
\frac{\theta^2\zeta^2}{(1+2\eta\tau(1+\zeta))\,\log d},
\qquad
S_{T_A}\;\ge\;\frac{1-o(1)}{\sqrt{1+2\eta\tau(1+\zeta)}}\,
\frac{\theta\zeta}{\sqrt{\log d}}.
\]
 Moreover, since $\zeta=o((\log d)^{-2})$ implies
$\zeta\log d\to 0$, we have $\theta\zeta/\sqrt{\log d}\ge \theta\zeta^{3/2}$
eventually, proving the first display.

\paragraph{Lower bound at $T_B$.}
For $t\in[T_A,T_B]$, by the defined schedule,
$S_t^2=S_{T_A}^2(1+c_b\eta)^{t-T_A}$.
Choosing $T_B$ so that $S_{T_B}\ge \zeta^{-1}\theta$ (Phase~II target) and using
$\|\wsig^{(T_B)}\|^2\ge S_{T_B}^2$ gives
\[
\|\wsig^{(T_B)}\|\;\ge\;S_{T_B}\;\ge\;\zeta^{-1}\theta.
\]
\end{proof}

\subsubsection{Initialization}\label{app:conc_u}

First notice that at initialization, there exists a constant $C_1>0$ such that with probability at least $1-d^{-\Omega(1)}$, all neurons $w^{(0)}$ are such that \[ \norm{w^{(0)}}_{\infty}\leq \theta \sqrt{C_1\frac{\log d}{d}}. \]
Note that by definition, $|a^{(0)}|=\theta$. So all the neurons are controlled. We also need to control the size of the blocks of the oracle network. By symmetry of the initialization, the distribution of all blocks is equal, so it is sufficient to analyze the block associated with $\mu_1$.
We have \begin{align*}
    \expec \left(\sum_{(w,a)\in \calN_1^+}|a|\norm{\wsig^{(0)}}\right)&= \frac{\theta}{2}\sum_{(w,a)}\expec\left(|\mu^\top w^{(0)}|\indic_{a>0}\indic_{\mu_1^\top w^{(0)}>0} \right)\\
    &=\frac{\theta m }{4}\expec\left(|\mu_1^\top w^{(0)}|\indic_{a>0}\indic_{\mu^\top w^{(0)}>0} \right)\\
    &= \frac{\theta^2 m }{4\sqrt{d}}p_1,
\end{align*}
where $p_1=\expec_Y [\sqrt{d}|\mu_1^\top Y|\indic_{a>0}\indic_{\mu^\top Y>0} ]$ with a uniform r.v. $Y$ over the unit sphere. A simple calculation shows $p_1\approx  \frac{1}{2\sqrt{2\pi}}$.

It is also easy to check that by standard concentration inequality, we have \[ \sum_{(w,a)\in \calN} \indic_{(w,a)\in \calN_1^+}|a|\norm{\wsig^{(0)}} -\expec \left(\indic_{(w,a)\in \calN_1^+}|a|\norm{\wsig^{(0)}}\right) =O(\theta^2\frac{\sqrt{m}}{\sqrt{d}}).\] By consequence,  $U^{(0)}=O( \frac{1}{\sqrt{m}})$.

\subsubsection{Inductive step}We will show recursively that controlled neurons remain controlled during Phase Ia. It corresponds to Lemma C.15 in \cite{glasgow2023sgd}. The proof strategy is similar and consists of showing that the error terms have almost no impact so that the evolution dynamic of the neurons is similar to the population dynamic.
\begin{lemma}[Controlled Neurons Inductive Step]\label{lemma:inductive1a}
Assume that all neurons are controlled or weakly controlled for some $t \leq T_a$. Then with probability at least $1 - d^{-\Omega(1)}$, for any controlled neuron $(w^{(t)}, a^{(t)})$ we have:
\begin{enumerate}
  \setlength{\parskip}{0cm} %
  \setlength{\itemsep}{0cm} %
    \item A neuron $(w^{(t + 1)}, a^{(t + 1)})$ is  controlled.
    \item If $(w^{(t)}, a^{(t)})$ is a strong neuron, then $(w^{(t + 1)}, a^{(t + 1)})$ is a strong neuron.
\end{enumerate}
\end{lemma}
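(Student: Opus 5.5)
The plan is to follow the strategy of Lemma C.15 in \cite{glasgow2023sgd}. Each one-step update is written as population dynamics under $L_0$ plus two error terms: the model error $\nabla L_\rho-\nabla L_0$ and the sampling error $\nabla\hat L_\rho-\nabla L_\rho$. The plan is to show that both errors are small compared with the margins built into Definition~\ref{def:controlled2}. Concretely, for a controlled neuron at time $t\le T_A$ we decompose
\[
w^{(t+1)}=w^{(t)}-\eta\nabla_w L_0-\eta(\nabla_w L_\rho-\nabla_w L_0)-\eta(\nabla_w\hat L_\rho-\nabla_w L_\rho),
\]
and similarly for $a$, and project onto $\wsig$, $\wopp$, $\wpe$ and each coordinate $i\ge3$.

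The first step is to bound the errors globally. Since all neurons are controlled or weakly controlled, $\expec_\rho\|aw\|\lesssim\theta^2(1+\expec_\rho\|\wsig\|/\theta)$, and because $B_t\le\theta\zeta$ on $[0,T_A]$ this gives $\expec_\rho\|aw\|=O(\theta^2)$. Lemma~\ref{lemma:gradapprox1} then shows the model error in $\nabla_w$ is $O(|a|\theta^2)$. In the signal direction this is not good enough, because $\|\wsig\|$ can be as small as $\theta/\sqrt d$. We therefore use the coordinatewise bound of Lemma~\ref{lemma:approxsig} for $i=1,2$. Here $\expec_\rho\|a w_i\|\lesssim\theta B_t$, and $\mathbb P(|\ix^\top w|\le|w_ix_i|)\lesssim\|w_{1:2}\|/\theta$, so the error per coordinate is $|a|\,O(\theta B_t+\log d\,\theta^{3/2}\|w_{1:2}\|^{1/2}+d^{-\Omega(1)})$. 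For the sampling error, Lemma~\ref{lem:conc_grad} with $\batch\ge d/\theta$ gives a per-coordinate error of $|a|\sqrt{\theta\log d/d}$ and a total error of $|a|\sqrt{\theta\log d}$.

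The second step verifies items 1 and 2 of Definition~\ref{def:controlled2}. By Lemma~\ref{lemma:pop1}, using $\|\wpe\|=\theta(1\pm\zeta^{1/4}\eta t)$ from item 4, we get $-\wsig^\top\nabla L_0=\tau|a|\|\wsig\|^2/\|\wpe\|\,(1\pm o(\zeta))$. With $|a|\approx\theta$ this yields growth at rate $\eta\tau(1\pm\zeta/2)$. The error terms are absorbed by the extra $\eta\zeta$ slack in the upper envelope, or by the floor $\theta/(\log d\sqrt d)$ when $\|\wsig\|$ is tiny; the floor is designed exactly to absorb additive sampling noise of order $\eta\theta\sqrt{\theta\log d/d}$. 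For $\wopp$ the population term is contractive, so only the additive errors remain, and these fit in the $(1+\eta\theta)$ factor above the floor.

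The third step handles items 3, 4 and 5. For item 3, Lemma~\ref{lemma:layer_balance} gives $|a^{(t+1)}|\le\|w^{(t+1)}\|$. In addition, $|\nabla_a\hat L|\le|\nabla_aL_0|+\text{errors}$, where $\nabla_aL_0=-\expec[y\sigma(w^\top x)]$ is $O(\|w_{1:2}\|^2/\theta)$ by symmetry of $y$. So $|a|$ moves by at most $\eta\theta\zeta$ per step. For item 4, Lemma~\ref{lemma:pop1}(3) bounds the drift of $\wpe$. I expect this to be the main obstacle: the crude bound $0.5|a|\|\wpe\|\approx\theta^2$ per step is not small enough to keep $\|\wpe^{(t)}-\wpe^{(0)}\|\le\theta\zeta^{1/4}\eta t$. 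I would first try to refine the third-statement computation so that $|\wpe^\top\nabla L_0|\lesssim|a|\|w_{1:2}\|\le\theta^2\zeta$, using that the interval in \eqref{eq:diff-proba} has length $O(\|w_{1:2}\|)$. The sampling error $\eta\theta\sqrt{\theta\log d}\le\eta\theta\zeta^{1/4}$ holds once $C_\theta$ is large relative to $c_\zeta$. Item 5 follows coordinatewise from Lemma~\ref{lemma:pop1}(4) together with the per-coordinate concentration bound. The hard part here is keeping the cumulative random-walk errors from adding up linearly; I would handle this by noting that the coordinate bound only needs to track the multiplicative factor $(1+\eta\theta)$.

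Finally, for strongness we combine the lower rate $1+\eta\tau(1-\zeta/2)$ with the fact that strong neurons satisfy $\|\wsig\|\ge S_t\ge\theta/\sqrt d$, well above the error floor. This gives $\|\wsig^{(t+1)}\|^2\ge S_{t+1}^2$. The sign condition $(\wsig^{(t+1)})^\top\wsig^{(0)}>0$ is preserved because the update to $\wsig$ is small relative to $\wsig$ itself. A union bound over the $m\le d^c$ neurons keeps the failure probability at $d^{-\Omega(1)}$.
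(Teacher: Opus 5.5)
\paragraph{Main gap: the signal-plane model error.} Your decomposition is the paper's: population $L_0$ drift, plus model error $\nabla L_\rho-\nabla L_0$, plus sampling error, projected on $\wsig$, $\wopp$, $\wpe$ and single coordinates. The step that fails is where you absorb the model error in the signal plane, using the bound you actually wrote down.
\begin{itemize}
\item Applying Lemma~\ref{lemma:approxsig} as stated, with its square root, gives a per-coordinate error $|a|\log d\,\theta^{3/2}\|w_{1:2}\|^{1/2}$.
\item For a neuron with $\|w_{1:2}\|\asymp\theta/\sqrt d$ this is $\asymp\theta^3 d^{-1/4}\log d$. Such neurons are typical at initialization, including strong neurons sitting at $S_0=\theta/\sqrt d$.
\item Your room is only $\zeta\|\wsig\|\asymp\zeta\theta d^{-1/2}$, or $\zeta\theta/(\sqrt d\log d)$ at the floor. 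You are off by roughly $d^{1/4}$.
\item This error is a deterministic bias, so it does not average out. It accumulates and is then amplified by the geometric growth.
\end{itemize}
Consequently neither item 1 nor strongness follows. Item 5 fails too, since the same square-root bound is too large at scale $\theta\log d/\sqrt d$.

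What is needed is an error \emph{linear} in the signal scale. The paper invokes such a bound through item 4 of Lemma~\ref{lemma:pop1a}, namely $\|\nabla_{w_i}L_0-\nabla_{w_i}L_\rho\|\le\theta B_t/2$. This does not follow from Lemma~\ref{lemma:approxsig} as written; the paper's derivation drops the square root. You can obtain it by not applying Cauchy--Schwarz on $\{|\ix^\top w|\le|w_ix_i|\}$:
\begin{itemize}
\item use $\ell'_\rho-\ell'_0=\tanh(f_\rho/2)$;
\item bound $|f_\rho(x)|\lesssim\log d\,\mathbb{E}_\rho\|aw\|$ outside a super-polynomially small event, by Gaussian concentration of the Lipschitz map $f_\rho$;
\item compute $\mathbb{E}[|x_i|\mathbf{1}(|\ix^\top w|\le|w_ix_i|)]\lesssim|w_i|/\|w\|$ directly.
\end{itemize}
This gives a model error of order $\theta^2B_t+\theta^2\log d\,\|w_{1:2}\|$ in the plane, and analogously for $i\ge 3$. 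That is $\ll\zeta\|\wsig\|$ for strong neurons. It is also $\ll\zeta$ times the item-1 envelope at the floor, because $B_t$ exceeds that envelope by at most a factor $O(\log^{3/2}d)$.

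\paragraph{Further repairs.}
\begin{itemize}
\item \emph{The ratio $|a|/\|\wpe\|$.} The rate $\eta\tau(1\pm\zeta/2)$ requires $|a|/\|\wpe\|=1\pm O(\zeta)$. Items 3--4 of Definition~\ref{def:controlled2} do not give this: over $T_A\asymp\eta^{-1}\log d$ steps they allow deviations of order $\zeta^{1/4}\log d$. Use instead the near-conservation of $\|w\|^2-a^2$ from items 3--4 of Lemma~\ref{lemma:layer_balance}, starting from $\|w^{(0)}\|=|a^{(0)}|=\theta$. This gives $0\le\|w\|^2-a^2\lesssim\eta\theta^2\log d$. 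With $\|w_{1:2}\|\lesssim\theta\zeta$ it yields $|a|/\|\wpe\|=1+O(\zeta^2+\eta\log d)$, which is $1+o(\zeta)$ since $\eta\asymp\theta$.
\item \emph{Sampling noise against the $(1+\eta\theta)$ factor.} The per-coordinate sampling noise $\eta\theta^{3/2}\sqrt{\log d/d}$ is not absorbed by a $(1+\eta\theta)$ factor. This holds both at scale $\theta\log d/\sqrt d$ (item 5) and at the $\wopp$ floor (item 2).
\begin{itemize}
\item For item 5, plain linear accumulation is exactly what works. We have $T_A\eta\theta^{3/2}\sqrt{\log d/d}\ll\theta\log d/\sqrt d$, and $\|\wpe^{(0)}\|_\infty\lesssim\theta\sqrt{\log d/d}$ leaves a $\sqrt{\log d}$ headroom. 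Your worry about linear accumulation is therefore misplaced.
\item For $\wopp$ you need the $\eta\tau$ contraction, as in the paper's fixed-point argument.
\end{itemize}
\item \emph{The item-4 obstacle.} It is already handled by item 1 of Lemma~\ref{lemma:pop1a}: $\|\nabla_{\wpe}L_0\|\le 2\sqrt{\theta\|w_{1:2}\|}\lesssim\theta\zeta^{1/2}$, which bounds the whole vector. Your radial refinement also works, but only once you note that $\nabla_{\wpe}L_0$ is parallel to $\wpe$ by isotropy of $\xi$.
\end{itemize}
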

To prove Lemma \ref{lemma:inductive1a}, we will use the following lemma, which is an adaptation of Lemma D.17 in \cite{glasgow2023sgd}.

\begin{lemma}[Phase 1a $L_0$ Population Gradients Bounds]\label{lemma:pop1a}
If all neurons in the network are controlled or weakly controlled at some step $t \leq T_A$, then for any controlled neuron $(w, a )$, the followings hold:
\begin{enumerate}
  \setlength{\parskip}{0cm} %
  \setlength{\itemsep}{0cm} %
\item \label{Awpe}
    $\|\nlrz{\wpe}\| \leq 2\sqrt{\theta \|w_{1:2}\|}$.
\item \label{Aa}
   $|\nabla_{a} L_0| \leq  \|w\|\|\nabla_{\wpe} L_0\| + \|w_{1:2}\|\|\nabla_{w_{1:2}} L_0\|.$ 
\item \label{A12} For any $i \in [d]$, $\|\nabla_{w_{i}} L_0\| \leq \frac{|w_i|}{2}$.
\item \label{Awinf} For any $i \in [d]$, $\|\nabla_{w_i} L_0 - \nabla_{w_i} L_{\rho}\| \leq \frac{\theta B_t}{2}
$.
\end{enumerate}

\end{lemma}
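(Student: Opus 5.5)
We prove the four items of Lemma~\ref{lemma:pop1a} separately, using the controlled-neuron bounds of Definition~\ref{def:controlled2} together with the population-gradient computations of Lemma~\ref{lemma:pop1}. Throughout, a controlled neuron satisfies the following: $|a|\approx\theta$, $\|\wpe\|=(1\pm o(1))\theta$ (from item~4 of the definition, since $\eta t\zeta^{1/4}=o(1)$ for $t\le T_A$), $\|w_{1:2}\|\le B_t\le\theta\zeta/\sqrt{\log d}\ll\|\wpe\|$, and $\|\wpe\|_\infty\le\theta\log d/\sqrt d\,(1+\eta\theta)^t$.

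\emph{Item~\ref{Awpe}.} We use the symmetrization identity from the proof of the third statement of Lemma~\ref{lemma:pop1}. Writing $-\nabla_{\wpe}L_0=\frac a2\expec[y\,(\sigma'(w^\top z+w^\top\xi)-\sigma'(w^\top z-w^\top\xi))\,\xi]$, the integrand vanishes unless $|w^\top\xi|\le|w^\top z|$. Project onto a unit vector $v\perp\mathrm{span}(e_1,e_2)$ and decompose $v^\top\xi$ into its component along $\wpe$ and an independent orthogonal part; the orthogonal part averages to zero. This gives
\[
|v^\top\nabla_{\wpe}L_0|\lesssim|a|\,\expec\big[|\xi'|\,\indic(|\wpe^\top\xi|\le|w_{1:2}^\top z|)\big]\lesssim|a|\,\frac{\|w_{1:2}\|}{\|\wpe\|}
\]
by Lemma~\ref{lem:gauss_approx}. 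Since $|a|\le 2\theta$ and $\|\wpe\|\ge\theta/2$, this is $\lesssim\|w_{1:2}\|$. Because $\|w_{1:2}\|\le\theta$, we have $\|w_{1:2}\|\le\sqrt{\theta\|w_{1:2}\|}$, which yields the claim with room to spare for the constant.

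\emph{Item~\ref{Aa}.} This follows from the identity $w^\top\nabla_wL_0=a\nabla_aL_0$, which holds for the population loss exactly as \eqref{eq:id_aw} does. Split $w^\top\nabla_wL_0=\wpe^\top\nabla_{\wpe}L_0+w_{1:2}^\top\nabla_{w_{1:2}}L_0$, apply Cauchy--Schwarz, and divide by $|a|$. The factor $\|w\|/|a|\ge1$ needs care. For controlled neurons, $|a|\in\theta(1\pm t\eta\zeta)$ and $\|w\|=(1+o(1))\theta$, so the ratio is $1+o(1)$. A slightly sharper version of item~\ref{Awpe}, namely a bound of $(1-o(1))$ times the stated one, then absorbs this constant.

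\emph{Item~\ref{A12}.} For $i\in\{1,2\}$, Lemma~\ref{lemma:pop1} gives $-\nabla_{w_{1:2}}L_0=-\frac{2}{\pi\sqrt{2\pi}}\frac{a}{\|\wpe\|}(w_2,w_1)^\top+O(\cdot)$. This is not coordinatewise proportional to $w_i$, so we interpret $|w_i|$ as $\|w_{1:2}\|$ here; the required bound $\frac{2}{\pi\sqrt{2\pi}}\cdot\frac{|a|}{\|\wpe\|}\le\frac{0.26}{1-o(1)}<\frac12$ then holds. For $i\ge3$, the fourth statement of Lemma~\ref{lemma:pop1} applies because controlled neurons satisfy $\|\wpe\|_\infty\ll\|\wpe\|$. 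It gives $|\nabla_{w_i}L_0|\le\frac{4|a|}{\|\wpe\|^3}(\|w_{1:2}\|^2+|w_i|\|w_{1:2}\|)$. The $|w_i|\|w_{1:2}\|$ part is $o(|w_i|)$. The $\|w_{1:2}\|^2$ part is not proportional to $|w_i|$, and this is the delicate step. The plan is to redo the symmetrization bound keeping the factor $|w_i\xi_i|$ inside the interval width. That width is $2|w_i\xi_i|$, so the probability difference is $O(|w_i|\cdot\|w_{1:2}\|/\|\wpe\|^2)$ rather than $O(\|w_{1:2}\|^2)$, which gives $o(|w_i|)$.

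\emph{Item~\ref{Awinf}.} This is the main obstacle. We apply Lemma~\ref{lemma:approxsig}, which requires bounding $\expec_\rho\|aw_i\|$, $\expec_\rho\|aw\|$ and $\expec_x\indic(|\ix^\top w|\le|w_ix_i|)$ under the inductive hypothesis that all neurons are controlled or weakly controlled.
\begin{itemize}
\item For $i\ge3$: $\expec_\rho\|aw_i\|\lesssim\theta\cdot\theta\log d/\sqrt d$, and $\expec_\rho\|aw\|\lesssim\theta^2$. Since $\ix^\top w$ has variance about $\theta^2$ while $|w_ix_i|\lesssim\theta\log d/\sqrt d$ with high probability, Lemma~\ref{lem:gauss_approx} bounds the indicator probability by $\log d/\sqrt d$. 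The total is then $\theta\cdot\theta^2\log^{2}d\,d^{-1/4}+d^{-\Omega(1)}$, which is much smaller than $\theta B_t/2$ because $B_t\ge\theta\sqrt{\log d/d}$.
\item For $i\in\{1,2\}$: $\expec_\rho\|aw_i\|\le\theta B_t$, and the indicator probability is $O(B_t/\theta)$. The middle term is therefore $\log d\,\theta^2\sqrt{B_t/\theta}$, which is not obviously $\le\theta B_t/4$. Here we would instead use Lemma~\ref{lemma:gradapprox1} directly, or exploit cancellation of $f_\rho$ under the symmetric XOR law. If this fails, we fall back to a weaker bound with an extra polylog factor and check that the induction still closes.
\end{itemize}
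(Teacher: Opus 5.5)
Items 2 and 4 do not go through as you propose them; items 1 and 3 are essentially fine.

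\textbf{Item 4.} Keeping the square root of Lemma~\ref{lemma:approxsig} is fatal. It loses a factor $\sqrt{\|\wpe\|/|w_i|}$ compared with a bound linear in the indicator probability. That factor is a power of $d$ for $i\ge3$, and also for $i\in\{1,2\}$ early in Phase~Ia.

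Your conclusion for $i\ge 3$ is an arithmetic slip. A bound of order $\theta^3\,\mathrm{polylog}(d)\,d^{-1/4}$ is not below $\theta B_t/2$, because $\theta B_t$ can be as small as $\theta^2\sqrt{\log d/d}$. For $i\in\{1,2\}$ you hit the same loss, and your fallbacks do not repair it. Lemma~\ref{lemma:gradapprox1} gives $2|a|\,\expec_\rho\|aw\|\asymp\theta^3$, which exceeds $\theta B_t$ whenever $B_t\lesssim\theta^2$. A polylog loss cannot absorb a power of $d$.

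The missing idea is to make the near-boundary term linear in the indicator probability. This is what the paper does: it inserts $\expec_x\indic(|\ix^\top w|\le|w_ix_i|)\le|w_i|/\|w\|$ with no square root and obtains $\lesssim\log d\,\theta B_t\max(\theta,B_t)\le\theta B_t/2$. To justify that step, replace Cauchy--Schwarz in the proof of Lemma~\ref{lemma:approxsig} by an $L^\infty$--$L^1$ bound:
\begin{itemize}
\item $|\ell'_\rho-\ell'_0|\le\tfrac12|f_\rho|$;
\item $f_\rho$ is $\expec_\rho\|aw\|$-Lipschitz with $|\expec_x f_\rho|\le\expec_\rho\|aw\|$;
\item so Gaussian concentration gives $|f_\rho(x)|\lesssim\log d\,\expec_\rho\|aw\|$ outside an event of probability $d^{-\omega(1)}$.
\end{itemize}
The near-boundary piece is then $\lesssim|a|\log d\,\expec_\rho\|aw\|\,\expec_x[\indic(|\ix^\top w|\le|w_ix_i|)|x_i|]\lesssim|a|\log d\,\expec_\rho\|aw\|\,|w_i|/\|\wpe\|$. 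With $|a|\lesssim\theta$ and $\expec_\rho\|aw\|\lesssim\theta^2$, this is $\lesssim\theta^2\log d\,B_t$ for $i\in\{1,2\}$ and $\lesssim\theta^3\log^2 d/\sqrt d$ for $i\ge3$, both $\ll\theta B_t$. (The paper's intermediate bound $\expec_\rho\|aw\|\le\theta\min(\theta,B_t)$ is not right, since $\|aw\|\ge|a|\|\wpe\|\asymp\theta^2$, but the linear version closes with $\theta^2$ anyway.)

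\textbf{Item 2.} After Cauchy--Schwarz and division by $|a|$ you have $|\nabla_aL_0|\le\frac{\|\wpe\|}{|a|}\|\nabla_{\wpe}L_0\|+\frac{\|w_{1:2}\|}{|a|}\|\nabla_{w_{1:2}}L_0\|$. Compared with the target, the mismatch is a factor $1/|a|\asymp\theta^{-1}$ in both terms, not $\|w\|/|a|=1+o(1)$. Sharpening item~1 cannot help, because $\|\nabla_{\wpe}L_0\|$ and $\|\nabla_{w_{1:2}}L_0\|$ themselves appear on the right.

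In fact the inequality as stated is false. Expanding in $w_{1:2}^\top z/\|\wpe\|$ gives
\begin{itemize}
\item $\nabla_aL_0\approx\frac{2w_1w_2}{\pi\sqrt{2\pi}\|\wpe\|}$,
\item $\|\nabla_{\wpe}L_0\|\approx\frac{2|a||w_1w_2|}{\pi\sqrt{2\pi}\|\wpe\|^2}$,
\item $\|\nabla_{w_{1:2}}L_0\|\approx\frac{2|a|\|w_{1:2}\|}{\pi\sqrt{2\pi}\|\wpe\|}$.
\end{itemize}
So for a signal-dominated controlled neuron the right-hand side is about $3|a|\,|\nabla_aL_0|\ll|\nabla_aL_0|$. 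The paper gets item~2 only by writing $|\nabla_aL_0|=|w^\top\nabla_wL_0|$, which drops the $1/|a|$ in \eqref{eq:id_aw}.

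What holds is $|a|\,|\nabla_aL_0|\le\|w\|\|\nabla_{\wpe}L_0\|+\|w_{1:2}\|\|\nabla_{w_{1:2}}L_0\|$. If you need an absolute bound on $|\nabla_aL_0|$ (e.g.\ for $|a^{(t+1)}-a^{(t)}|$), write $\nabla_aL_0=-\expec_z[y\,\phi(w_{1:2}^\top z)]$ with $\phi(u)=\expec_\xi\sigma(u+\wpe^\top\xi)$. Since $0\le\phi''\le(\sqrt{2\pi}\|\wpe\|)^{-1}$ and $\expec_z y=\expec_z[y\,w_{1:2}^\top z]=0$, you get $|\nabla_aL_0|\le\|w_{1:2}\|^2/(2\sqrt{2\pi}\|\wpe\|)$.

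\textbf{Items 1 and 3.} In item~1, the symmetrized integrand $\sigma'(w^\top z+w^\top\xi)-\sigma'(w^\top z-w^\top\xi)$ is supported on $\{|w^\top\xi|\ge|w^\top z|\}$, not on the complement. You first need $\expec_z y=0$ and $y\perp\xi$ to replace $\sigma'(w^\top x)$ by $\sigma'(w^\top x)-\sigma'(w^\top\xi)$, which is the paper's first step. The paper then uses Cauchy--Schwarz to get $|a|\sqrt{\prob(|w^\top\xi|\le|w^\top z|)}\le2\sqrt{\theta\|w_{1:2}\|}$. Your projection onto $\wpe$ gives a smaller bound, and either suffices.

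In item~3, your reading for $i\in\{1,2\}$ is the right one. Flipping $x_1$ or $x_2$ flips $y$, so the paper's symmetrization only covers $i\ge3$, and the coordinatewise bound can fail when $|w_1|\ll|w_2|$. For $i\ge3$ no cancellation is needed: $\frac{|a|}{2}\expec\bigl[|\xi_i|\indic(|w^\top\ix|\le|w_i\xi_i|)\bigr]\le\frac{|a|}{2}\sqrt{2/\pi}\,|w_i|/\|\wpe-w_ie_i\|\le|w_i|/2$, as in the paper.
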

\begin{proof}
We study the first statement.
Recalling that $x = z + \xi$ for $z = x_{1:2}$, we have
\begin{align*}
    \frac{1}{a_w }\nabla_{\wpe} L_0 &= -\mathbb{E}_xy(x)\sigma'(w^\top x)\xi \\
    &=  -\mathbb{E}_x y(z)\sigma'(w^\top \xi)\xi + \mathbb{E}_x y(z)(\sigma'(w^\top \xi)-\sigma'(w^\top x))\xi\\
    &= \mathbb{E}_x y(\sigma'(w^\top \xi)-\sigma'(w^\top x))\xi,
\end{align*}
since $y$ is independent of $\xi$ and $\expec_z y=0$. 
Now consider the norm of $\mathbb{E}_x y(x)(\sigma'(w^\top x)-\sigma'(w^\top \xi))\xi$. We have 
\begin{align*}
    \|\mathbb{E}_x y(x)(\sigma'(w^\top x)-\sigma'(w^\top \xi))\xi\| &= \sup_{v : \|v\| = 1} \mathbb{E}_x y(x)(\sigma'(w^\top x)-\sigma'(w^\top \xi))\xi^\top v \\
    &\leq \sqrt{\mathbb{E}_x (\sigma'(w^\top x)-\sigma'(w^\top \xi))^2}\sqrt{\mathbb{E}_{\xi} (v^\top \xi)^2}\\
    &= \sqrt{\mathbb{E}_x \mathbf{1}(|\xi^\top w| \leq |z^\top w|)}\\
    &\leq \sqrt{\expec_z\mathbb{P}_{\xi}\left(|\xi^\top w| \leq |z^\top w|\right)}\\
    &\leq \sqrt{\expec_z \frac{|z^\top w|}{\norm{\wpe}}}\\
    &\leq \sqrt{\sqrt{\frac{2}{\pi}}\frac{\norm{w_{1:2}}}{\norm{\wpe}}}\\
    & \leq \sqrt{\frac{\norm{w_{1:2}}}{\theta}}.
\end{align*}
Thus we have 
\begin{align*}
    \|\nabla_{\wpe} L_0\| \leq 2\sqrt{\theta \|w_{1:2}\|} \leq 4\min\left(\sqrt{\theta B_t}, \theta \zeta^{1/2}\right),
\end{align*}  since the neuron is controlled: $|a| \leq 2\theta$.

Next, we consider the third statement. %
By the symmetrization argument used in the proof of Lemma~\ref{lemma:pop1}, we have for $i\in [d]$ 
\begin{align*}
    \|\nabla_{w_{i}} L_0\| &\leq \frac{1}{2}|a|\left(\expec_{\xi_i}\mathbb{P}_{\xii}|\xi_i|\mathbf{1}(|w^\top \ix|\leq |w_i\xi_i|)\right)\\
    &\leq \frac{|a|}{2}\left(\expec_{\xi_i}\frac{|w_i||\xi_i|^2}{\|w-w_ie_i\|}\right)\\
    &\leq \frac{1}{2}|w_i| \\
    &\leq \frac{1}{2}\min\left(B_t, \theta \zeta\right),
\end{align*}

Next, we consider the second statement. %
Combining the first and third statements, %
we have
\begin{align*}
    |\nabla_{a} L_0| &= |w^\top  \nabla_{w} L_0| \\
    &\leq |\wpe^\top  \nabla_{\wpe} L_0| + |w_{1:2} \nabla_{w_{1:2}} L_0|\\
    &\leq \|w\|\|\nabla_{\wpe} L_0\| + \|w_{1:2}\|\|\nabla_{w_{1:2}} L_0\| \\
    &\leq 2\|w\|\min\left(\sqrt{\theta B_t}, \theta \zeta^{1/2}\right) + \|w\|\min\left(B_t, \theta \zeta\right) \\
    &\leq 3\theta\min\left(\sqrt{\theta B_t}, \theta \zeta^{1/2}\right).
\end{align*}

Finally, we consider the fourth statement. %
Applying Lemma~\ref{lemma:approxsig} yields
\begin{align}\label{eq:as}
    &\|\nabla_{w_{i}} L_{\rho} - \nabla_{w_{i}} L_0\|\\
    &\qquad \leq  |a_w|\left(4(\mathbb{E}_{\rho}[\|a_w w_{i}\|]) + 2\log(d)\mathbb{E}_{\rho}[\|a_w w\|]\mathbb{E}_{\xi}\mathbb{E}_z\mathbf{1}(|\ix^\top w| \leq |w_i|) + d^{-\omega(1)}\right)\\
    & \qquad \leq 2\theta\left(8 \theta \min(\theta, B_t) + 4\log(d)\theta \min(\theta, B_t)\mathbb{E}_{\xi}\mathbb{E}_z\mathbf{1}(|\ix^\top w| \leq |w_i|) + d^{-\Omega(1)}\right).
\end{align}
Now, $\wpe^{(0)}$ is well-spread, and $\|\wpe^{(t)} - \wpe^{(0)}\| \leq o(1)\|\wpe^{(0)}\|$ by the definition of controlled neurons, so we obtain
\begin{align*}
    \mathbb{E}_{\ix}\mathbb{E}_{x_i}\mathbf{1}(|\ix^\top w| \leq |w_ix_i|) &\leq \mathbb{E}_{x_i}\mathbb{P}_{G \sim \mathcal{N}(0, 1)}\left(|G| \leq \frac{|w_ix_i|}{\|w - e_iw_i\|}\right) \\
    &\leq \frac{|w_i|}{\|w\|} \\
    &\leq \frac{2\min(B_t, \theta \zeta)}{\theta}\\
    &\leq \frac{2B_t}{\max(\theta, B_t)}
\end{align*}
the second to last line follows from the definition of controlled and~\eqref{BQdef}.
Thus we obtain
\begin{align}
    \|\nabla_{w_{i}} L_{\rho} - \nabla_{w_{i}} L_0\| &\leq \Theta(\log(d))\left(\theta B_t \max(\theta, B_t) + \theta B_t \max(\theta, B_t) \right) \leq \theta B_t/2,
\end{align}
since $\max(\theta, B_t) = o(1/\log(d))$ holds.
\end{proof}

\begin{proof}[Proof of Lemma \ref{lemma:inductive1a}]
    Suppose that $(w^{(t)}, a^{(t)})$ is controlled. 
\paragraph{Control of the growth of $|a|$.}
We have with probability at least $1-d^{-\Omega(1)}$
\begin{align}
    |a^{(t + 1)} - a^{(t)}| &=|\eta \nabla \hat{L}_{\rho}a^{(t)}|\nonumber \\
    &\leq  \eta |\nabla_{a} L_0| + \eta |\nabla_{a} L_{\rho} - \nabla_{a} \hat{L}_{\rho}| + \eta|\nabla_{a} L_0 - \nabla_{a} L_{\rho}|\nonumber\\
    &\leq \eta\left(4\theta^2 \zeta^{1/2}  + \|w\|\sqrt{\frac{d\log(d)^2}{\batch}} + 2\|w\|\mathbb{E}_{\rho}[\|a w\|] \right) \tag{by Lemma~\ref{lemma:pop1a}, \ref{lem:conc_grad}, and \ref{lemma:gradapprox1} } \label{eq:grada}\\
    &\leq \eta \theta \zeta. \nonumber
\end{align}
\paragraph{Control of the growth of $\norm{\wpe}$}
Similarly, one can prove with the same arguments that
\begin{align*}
    \|\wpe^{(t + 1)} - \wpe^{(t)}\| &= \|\eta \nabla_{\wpe} \hat{L}_{\rho^{(t)}}\| \\
    &\leq \eta \|\nabla_{\wpe} L_0\| + \eta \|\nabla_{\wpe} L_{\rho^{(t)}} - \nabla_{\wpe} \hat{L}_{\rho^{(t)}}\| + \eta\|\nabla_{\wpe} L_0 - \nabla_{\wpe} L_{\rho}\|\\
    &\leq \eta\left(3\theta \zeta^{1/2} + |a| \sqrt{\frac{d\log(d)^2}{\batch}} + 2|a |\mathbb{E}_{\rho^{(t)}}[\|a w\|]\right) \\
    &\leq \eta \theta \zeta^{1/2}.
\end{align*}

\paragraph{Control of the growth of $\norm{\wopp}$.} W.l.o.g. assume the signal direction is given by $\mu_1$. We have \begin{align*}
    \norm{\wopp^{(t+1)}}^2&= \norm{\wopp^{(t)}}^2-2\eta (\wopp^{(t)})^\top \nabla_w\hat{L}_\rho+\eta^2 \abs{\mu_2^\top \nabla_{w}\hat{L}_\rho}^2\\
    &\leq \norm{\wopp^{(t)}}^2-2\eta (\wopp^{(t)})^\top \nabla_wL_0+\eta \norm{(\wopp^{(t)})^\top(\nabla L_0-\nabla L_\rho)}+\eta^2 \abs{\mu_2^\top \nabla_{w}L_0}^2\\
    &\quad +\eta^2\abs{\mu_2^\top (\nabla_{w}L_0-\nabla_w\hat{L}_\rho)}\\
    &\leq  (1-2\eta \tau (1-\zeta))\norm{\wopp^{(t)}}^2+\eta \theta B_t\tag{by Lemma~\ref{lemma:pop1a}, \ref{lem:conc_grad}, and \ref{lemma:gradapprox1} and the definition of controlled neurons} .
\end{align*} 

Since every sequence $(u_n)$ satisfying $u_{n+1}\leq a u_n+b$ for $0<a<1$ verify $u_n\leq \frac{b}{1-a}$, the previously established relation allows us to conclude.

\paragraph{Control of the growth of $\norm{\wpe}_\infty$.} Let $i\geq 3$.
Observe that 
\begin{itemize}
  \setlength{\parskip}{0cm} %
  \setlength{\itemsep}{0cm} %
    \item $|w_i\nabla_{w_i}L_0| \leq  \zeta \frac{|a||w_i|^2}{\norm{\wpe}}$ by Lemma~\ref{lemma:pop1}.
    \item $\|\nabla_{w_{i}} L_0 - \nabla_{w_{i}} \hat{L}_{\rho}\| \leq \frac{S_t \theta}{\sqrt{d}}$ with probability $1 - d^{-\omega(1)}$. This follows from combining Lemma~\ref{lemma:pop1a} with Lemma~\ref{lem:conc_grad}.
    \item $\|\nabla_{w_i} L_0\| \leq \norm{\wpe}_\infty$ by Lemma~\ref{lemma:pop1a}.
\end{itemize}
This implies that \[ \|\wpe^{(t+1)}\|_\infty\leq \|\wpe^{(t)}\|_\infty(1+C\eta S_t \log d  )\leq \|\wpe^{(t)}\|_\infty(1+\eta \theta ) .\]
The same argument gives the lower-bound. 

\paragraph{Control of the growth of $\norm{\wsig}$.}
We have \begin{itemize}
  \setlength{\parskip}{0cm} %
  \setlength{\itemsep}{0cm} %
    \item $-\wsig^\top \nabla_{w} L_0 = \tau \frac{|a|\| \wsig\|^2}{\| \wpe \|}+ O( \zeta^2\| \wsig\|^2)$ by Lemma~\ref{lemma:pop1} and the fact that the neuron is controlled.
    \item $\|\nabla_{w_{1:2}} L_0 - \nabla_{w_{1:2}} \hat{L}_{\rho}\| \leq \theta S_t$ with probability $1 - d^{-\omega(1)}$. This follows from combining Lemma~\ref{lemma:pop1a} with Lemma~\ref{lem:conc_grad}.
\end{itemize}
Thus, using an argument similar to the one employed for bounding $\norm{\wopp}$, we obtain that for all strong neurons
\begin{align*}
   \norm{\wsig^{(t)}}(1-\eta (\tau+\zeta))\leq \norm{\wsig^{(t+1)}}\leq \norm{\wsig^{(t)}}(1+\eta (\tau+\zeta)).
\end{align*}
For neurons that are not strong, we obtain the following upper bound: \[ \norm{\wsig^{(t)}}\leq \frac{\theta}{\sqrt{d}}(1+\eta \tau )^t.\]

\end{proof}

\subsection{Analysis of the SGD dynamic during Phase Ib}\label{sec:app:dynamicIb}
We are going to prove the following lemma, corresponding to Lemma C.16 in \cite{glasgow2023sgd}. The main difference compared with Lemma \ref{lemma:inductive1a} is that we need other estimates for the gradients due to the growth of $\wsig$. 
\begin{lemma}\label{lemma:inductive1b}
    Assume that for some $t\in [T_{A},T_{B}]$ all the neurons are controlled or weakly controlled. Then with probability at least $1-d^{-\Omega(1)}$, all the neurons are controlled or weakly controlled at time $t+1$. Moreover, strong neurons remain strong.
\end{lemma}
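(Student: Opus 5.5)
The plan is to argue by induction on $t\in[T_A,T_B]$, mirroring the proof of Lemma~\ref{lemma:inductive1a}, but replacing the Phase~Ia population estimates of Lemma~\ref{lemma:pop1} by those of Lemma~\ref{lemma:pop1b}. Fix a time $t$ at which every neuron is controlled or weakly controlled. I split the neurons into two groups.

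The first group consists of neurons that are still controlled and satisfy $\|\wsig^{(t)}\|\le\theta\zeta$. For these, the argument of Lemma~\ref{lemma:inductive1a} applies verbatim, so they remain controlled. The only new case is a neuron whose signal crosses $\theta\zeta$ at this step. Such a neuron already satisfies the weakly controlled conditions, because each controlled bound implies the corresponding weakly controlled bound once $\|\wsig\|\ge\theta\zeta$.

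The second group consists of weakly controlled neurons. For each one I decompose every update as
\[
\text{population } L_0 \text{ term} \;+\; (L_\rho-L_0)\text{ error} \;+\; \text{SGD noise}.
\]
The $(L_\rho-L_0)$ error is controlled by Lemmas~\ref{lemma:gradapprox1} and \ref{lemma:approxsig}, and the SGD noise by Lemma~\ref{lem:conc_grad}, using $\batch\ge d/\theta$ together with a polylog factor. A key step is bounding $\expec_\rho\|aw\|$ during Phase~Ib. Since $|a|\le\|w\|\lesssim B_t$ and $B_t\le\theta\zeta^{-1}\log d$, this quantity is at most $O(\theta^2\zeta^{-2}\log^2 d)$. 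This is $o(1)$, and it is $o(\theta)$ times the relevant signal scale provided $C_\theta$ is large relative to $c_\zeta$. With that bound in hand, I verify each item of Definition~\ref{def:weakcontrol} in turn.

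\textbf{Signal.} Lemma~\ref{lemma:pop1b} items 1--2 give
\[
\|\wsig^{(t+1)}\|\le (1+\eta\sqrt{\pi/2}\,|a|/\min(\|\wpe\|,1)+\text{err})\|\wsig^{(t)}\|,
\]
which is at most $(1+4\eta)^{1/2}\|\wsig^{(t)}\|$ because $|a|\lesssim\|w\|$. This preserves $\|\wsig\|\le B_t$. For strong neurons, the lower bound $c_b|a|\|\wsig\|^2/\|\wpe\|$ (or $c_b'|a|\|\wsig\|$), minus the $\|\wopp\|$ cross term, gives growth by at least $(1+c_b\eta)^{1/2}$. The cross term is small because $\|\wopp\|\le3\theta B_t\ll\|\wsig\|$. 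This yields $\|\wsig^{(t+1)}\|\ge S_{t+1}$ and preserves the sign condition $(\wsig^{(t)})^\top\wsig^{(0)}>0$, since the update is dominated by the positive radial component.

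\textbf{Opposite component.} By Lemma~\ref{lemma:pop1b} item 3, the population drift on $\|\wopp\|$ is quadratic in $\|\wopp\|$, so it is negligible. The noise and approximation errors contribute at most $\eta\theta B_t$ per step. Summing these increments over $T_B-T_A=O(\eta^{-1}\log\log d)$ steps, with $B_t$ growing geometrically, gives $\|\wopp^{(t)}\|\le 3\theta B_t$.

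\textbf{Orthogonal component.} Lemma~\ref{lemma:pop1b} item 4 shows the drift on $\|\wpe\|$ is nonpositive whenever $\|\wopp\|\le\|\wsig\|$, which holds here. Adding the noise term $O(\eta|a|\sqrt{d\log d/\batch})$ over $O(\eta^{-1}\log\log d)$ steps keeps $\|\wpe\|\le3\theta$.

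\textbf{Layer balance.} Lemma~\ref{lemma:layer_balance} items 3--4 give $\|w\|^2-a^2$ increasing by at most $(16/\pi+o(1))\eta^2a^2$ per step. With $a^2\lesssim\theta^2\zeta'^2$-scale accounting this gives the additive $C\eta^2(t-T_A)\zeta'^2\theta^2$ term. The initial gap $\theta^2\zeta^{1/2}$ is inherited from item 3 of the controlled definition at $T_A$.

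Finally, I take a union bound over all neurons, which is possible since $m\le d^{O(1)}$, to get the probability $1-d^{-\Omega(1)}$.

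I expect the main obstacle to be the $\wopp$ bound together with the requirement that $\|\wsig\|$ stays below $B_t$ when $\|\wsig\|\approx\|\wpe\|$. In this regime Lemma~\ref{lemma:pop1b} has no matching upper and lower constants, and the $L_\rho$ versus $L_0$ error is no longer tiny relative to $\theta B_t$. My first attempt would be to bound the coordinatewise error via Lemma~\ref{lemma:approxsig} restricted to $i=1,2$, using $\mathbb{P}(|\ix^\top w|\le|w_ix_i|)\lesssim |w_i|/\|w\|$. If that fails, I would fall back on a cruder bound $\|\wopp\|\le\theta^{1/2}B_t$, checking that it still suffices downstream.
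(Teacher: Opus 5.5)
Your outline follows the paper's route: induction over $[T_A,T_B]$ with Lemma~\ref{lemma:pop1b} in place of Lemma~\ref{lemma:pop1}, separate control of $\wsig,\wopp,\wpe$, layer balance via Lemma~\ref{lemma:layer_balance}, and a union bound. However, the upper-envelope step for $\wsig$ does not go through as written.

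Since $B_{t+1}=(1+4\eta)^{1/2}B_t\approx(1+2\eta)B_t$, you need the relative growth rate of $\|\wsig\|$ to be strictly below $2$, with an explicit constant. So ``$|a|\lesssim\|w\|$'' is not enough. Worse, your displayed rate $\sqrt{\pi/2}\,|a|/\min(\|\wpe\|,1)=\sqrt{\pi/2}\,|a|/\|\wpe\|$ cannot be bounded by a constant once $\|\wsig\|\ge\|\wpe\|$. There $|a|/\|\wpe\|\approx\|\wsig\|/\|\wpe\|$, which is at least of order $\zeta^{-1}$ near $T_B$.

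The fix is to divide items 1--2 of Lemma~\ref{lemma:pop1b} by $\|\wsig\|^2$. In both regimes this gives relative rate $\sqrt{\pi/2}\,|a|/\max(\|\wsig\|,\|\wpe\|)+o(1)$. Then $|a|\le\|w\|\le(\sqrt2+o(1))\max(\|\wsig\|,\|\wpe\|)$, using $\|\wopp\|\le 3\theta B_t=o(\|\wsig\|)$. This bounds the rate by $\sqrt\pi+o(1)<2$. The margin is tight: the cruder bound $|a|\le\sqrt3\,\|\wpe\|$ used in the paper would already overshoot, since $2\sqrt{\pi/2}\sqrt3>4$ in the squared recursion.

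For the lower bound on strong neurons you likewise need two further facts:
\begin{itemize}
\item $|a|/\|\wpe\|$ must be bounded below in the first regime. This follows from weak-control item 3 together with $\|\wpe\|\ge(1-o(1))\theta$. That lower bound holds because $|\wpe^\top\nabla_w L_0|\le\frac12|a|\|\wpe\|$ and $\sum_s\eta|a^{(s)}|=o(1)$.
\item The rate defining $S_t$ must not exceed $\min(c_b,c_b')$.
\end{itemize}

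Two smaller points.

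First, the layer-balance base case is not inherited from the controlled definition. Its items 3--4 only give $\|w\|^2-a^2\lesssim\theta^2\zeta^{1/4}\log d$ at $T_A$, which exceeds $\theta^2\zeta^{1/2}$. Instead, apply item 4 of Lemma~\ref{lemma:layer_balance} from $t=0$, where $\|w^{(0)}\|=|a^{(0)}|$. This gives $\|w\|^2-a^2\lesssim\eta\theta^2\log d\le\theta^2\zeta^{1/2}$ for $\eta\asymp\theta$. Your claim that a neuron crossing $\theta\zeta$ meets item 3 of weak control needs the same argument.

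Second, the obstacle you anticipate does not occur. Lemma~\ref{lemma:gradapprox1} gives $2|a|\,\expec_\rho\|aw\|\lesssim\max(\theta,B_t)^3\ll\theta B_t$ once $\theta\ll\zeta^2/\log^2 d$. Note that you should use $\|w\|\lesssim\max(\theta,B_t)$, not $B_t$, early in Phase Ib. So no fallback is needed, which is fortunate, since $\|\wopp\|\le\theta^{1/2}B_t$ would not prove the stated lemma.

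With these repairs your argument is complete at the paper's level of rigor. Your treatment of $\wpe$ through the sign of the $L_0$ drift is in fact sounder than the paper's. The paper iterates $\|\wpe^{(t+1)}\|^2\le(1+C\eta)\|\wpe^{(t)}\|^2$ over $\Theta(\eta^{-1}\log\log d)$ steps, which only yields a polylogarithmic factor rather than $3\theta$.
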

\begin{proof}[Proof of Lemma~\ref{lemma:inductive1b} (Phase Ib)]
Fix $t\in[T_{A},T_{B}]$ and a neuron $(w^{(t)},a^{(t)})$ that is controlled or weakly controlled at time $t$.
All bounds below hold with probability at least $1-d^{-\Omega(1)}$ by the cited lemmas.

\paragraph{Control of $\|\wsig\|$.}
The projected squared–norm update is
\begin{equation}\label{eq:sq-update-Ib}
\|\wsig^{(t+1)}\|^2-\|\wsig^{(t)}\|^2
= -\,2\eta\,\wsig^{(t)\top}(\mu\mu^\top)\nabla_w \hat L_\rho
\;+\; \eta^2\|(\mu\mu^\top)\nabla_w \hat L_\rho\|^2.
\end{equation}
The deviation satisfies (Lemma~\ref{lemma:pop1a}+\ref{lemma:approxsig}+\ref{lem:conc_grad})
\begin{equation}\label{eq:dev-Ib}
\big|\wsig^\top(\mu\mu^\top)(\nabla_w \hat L_\rho-\nabla_w L_0)\big|
\;\lesssim\; \|\wsig\|\,\max(\theta,\|\wsig\|)\,S_t^2 .
\end{equation}

We consider the two regimes in Lemma~\ref{lemma:pop1b}.

\smallskip
\emph{(i) $\|\wsig\|\le \|\wpe\|$.}
By Lemma~\ref{lemma:pop1b}.1,
\[
-\,\wsig^\top(\mu\mu^\top)\nabla_w L_0
\;\le\; \sqrt{\tfrac{\pi}{2}}\frac{|a|}{\|\wpe\|}\,\|\wsig\|^2
\;+\; \frac{|a|}{\|\wpe\|}\,\|\wopp\|\,\|\wsig\|.
\]
Insert $\nabla_w\hat L_\rho=\nabla_wL_0+(\nabla_w\hat L_\rho-\nabla_wL_0)$ in
\eqref{eq:sq-update-Ib}, use the triangle inequality together with \eqref{eq:dev-Ib}, and note that
$\eta^2\|(\mu\mu^\top)\nabla_w \hat L_\rho\|^2\ge 0$. Using weak control
$\|\wopp\|\le 3\theta B_t$, $\|\wsig\|\le B_t$, and
$|a|/\|\wpe\|\le \sqrt{3}$ (since $\|w\|^2\le 3\|\wpe\|^2$ in Phase Ib and $|a|\le \|w\|$), we get
\[
\|\wsig^{(t+1)}\|^2
\;\le\; \|\wsig^{(t)}\|^2\big(1 + C_1\eta\big)
\;+\; C_2\eta\,B_t^2
\;+\; C_3\eta\,B_t\max(\theta,B_t)\,S_t^2.
\]

\emph{(ii) $\|\wsig\|\ge \|\wpe\|$.}
By Lemma~\ref{lemma:pop1b}.2,
\[
-\,\wsig^\top(\mu\mu^\top)\nabla_w L_0
\;\le\; \sqrt{\tfrac{\pi}{2}}\,|a|\,\|\wsig\| \;+\; |a|\,\|\wopp\|.
\]
In Phase Ib, $|a|\le \|w\|\le \sqrt{ \|\wsig\|^2+\|\wpe\|^2+\|\wopp\|^2}\le 2\|\wsig\|$, and
$\|\wopp\|\le 3\theta B_t$. Using these, \eqref{eq:dev-Ib}, and
$\eta^2\|(\mu\mu^\top)\nabla_w \hat L_\rho\|^2\ge 0$, we obtain
\[
\|\wsig^{(t+1)}\|^2
\;\le\; \|\wsig^{(t)}\|^2\big(1 + C_4\eta\big)
\;+\; C_5\eta\,B_t^2
\;+\; C_6\eta\,B_t\max(\theta,B_t)\,S_t^2.
\]

Combining (i)–(ii) and using $S_t=o(1)$ on Phase Ib, we conclude that
\begin{equation}\label{eq:wsig-upper-step}
\|\wsig^{(t+1)}\|^2
\;\le\; \|\wsig^{(t)}\|^2\big(1 + C\eta + o(\eta)\big) \;+\; o(\eta)\,B_t^2.
\end{equation}
By the definition of the upper envelope $B_t$ on Phase Ib (monotone geometric growth calibrated to
dominate the per–step upper rate in \eqref{eq:wsig-upper-step}), and since $\|\wsig^{(t)}\|\le B_t$,
we get $\|\wsig^{(t+1)}\|\le B_{t+1}$.

Assume the neuron is strong at time $t$, i.e.\ $\|\wsig^{(t)}\|^2\ge S_t^2$.
Using \eqref{eq:sq-update-Ib}, the nonnegativity of the $\eta^2$ term, the deviation bound
\eqref{eq:dev-Ib}, and the lower bounds of Lemma~\ref{lemma:pop1b}:

\smallskip
\emph{(i) $\|\wsig\|\le \|\wpe\|$.}
Lemma~\ref{lemma:pop1b}.1 gives
\[
-\,\wsig^\top(\mu\mu^\top)\nabla_w L_0
\;\ge\; c_b\,\frac{|a|}{\|\wpe\|}\,\|\wsig\|^2
- \frac{|a|}{\|\wpe\|}\,\|\wopp\|\,\|\wsig\|.
\]
As above, $|a|/\|\wpe\|\ge c_->0$ by item~\ref{Wa} and Phase Ib (constants absorbed in $c_-$), and
$\|\wopp\|\le 3\theta B_t\le o(1)\,\|\wsig\|$ within Phase Ib. Therefore
\[
\|\wsig^{(t+1)}\|^2 \;\ge\; \|\wsig^{(t)}\|^2\big(1 + 2\eta\,c_-c_b - o(\eta)\big).
\]

\emph{(ii) $\|\wsig\|\ge \|\wpe\|$.}
Lemma~\ref{lemma:pop1b}.2 gives
\[
-\,\wsig^\top(\mu\mu^\top)\nabla_w L_0
\;\ge\; c_b'|a|\,\|\wsig\| - |a|\,\|\wopp\|.
\]
On Phase Ib, $|a|\ge \tfrac12\|\wsig\|$ (again from item~\ref{Wa} and $\|w\|^2\le 3\|\wsig\|^2$) and
$\|\wopp\|\le o(1)\,\|\wsig\|$, hence
\[
\|\wsig^{(t+1)}\|^2 \;\ge\; \|\wsig^{(t)}\|^2\big(1 + \eta\,c_b' - o(\eta)\big).
\]

In both regimes, since $S_{t+1}^2=S_t^2(1+c_b\eta)$ on Phase Ib and $c_b,c_b'>0$ are absolute,
choosing the constant $c_b$ defining $S_t$ small enough (as per the envelope construction) ensures
\[
\|\wsig^{(t+1)}\|^2 \;\ge\; \|\wsig^{(t)}\|^2(1+c_b\eta) \;\ge\; S_t^2(1+c_b\eta) \;=\; S_{t+1}^2,
\]
so strong neurons remain strong.

\paragraph{Control of $|a|$.}
Let $D_t:=|a^{(t)}|^2-\|w^{(t)}\|^2$. Expanding (updates of $(a,w)$) gives
\begin{equation}\label{eq:D-step-Ib}
D_{t+1}
= D_t - 2\eta\Big(a^{(t)}\nabla_a \hat L_\rho - w^{(t)\top}\nabla_w \hat L_\rho\Big)
+ \eta^2\Big(\|\nabla_a \hat L_\rho\|^2 - \|\nabla_w \hat L_\rho\|^2\Big).
\end{equation}
By Lemma~\ref{lemma:layer_balance}, the linear term cancels up to sampling error, yielding the
one–step lower control
\[
D_{t+1}\;\ge\; D_t - 4\eta^2\,|a^{(t)}|^2.
\]
Using $|a^{(t)}|\le \|w^{(t)}\|$ and the uniform Phase–Ib bound $\|w^{(s)}\|^2\le 2\theta^2\zeta^{-600}$
for $s\in[T_{1a},T_{1b}]$, we obtain
\[
D_{t+1}\;\ge\; D_t - 8\eta^2\,\theta^2\,\zeta^{-600}.
\]
Combining with the inductive lower bound in item~\ref{Wa} at time $t$ gives the lower side of
item~\ref{Wa} at $t+1$. The matching upper side, $D_{t+1}\le 0$, follows by applying the upper
one–step control from Lemma~\ref{lemma:layer_balance} to \eqref{eq:D-step-Ib} and absorbing the same
sampling error as in the inductive hypothesis. Hence item~\ref{Wa} propagates to $t+1$.

\paragraph{Control of $\|\wpe\|$.}
Project the $w$–update onto $\operatorname{span}\{\mu_1,\mu_2\}^\perp$:
\[
\wpe^{(t+1)}=\wpe^{(t)}-\eta P_\perp\nabla_w\hat L_\rho,\qquad
\|\wpe^{(t+1)}\|^2
= \|\wpe^{(t)}\|^2 - 2\eta\,\wpe^{(t)\top}P_\perp\nabla_w\hat L_\rho
+ \eta^2\|P_\perp\nabla_w\hat L_\rho\|^2.
\]
By Lemma~\ref{lemma:pop1b}.5,
$\big|\wpe^\top P_\perp\nabla_w L_0\big|\le 4|a|\,\|w_{1:2}\|^2/\|\wpe\|\le 4B_t^3/\|\wpe\|$.
By Lemma~\ref{lemma:pop1a} and Lemma~\ref{lem:conc_grad},
$\|P_\perp(\nabla_w\hat L_\rho-\nabla_w L_0)\|\le 4\|w\|S_t^2\le 4B_t S_t^2$.
These imply (after a standard $(x+y)^2\le 2x^2+2y^2$ bound on the $\eta^2$ term)
\[
\|\wpe^{(t+1)}\|^2
\;\le\; \|\wpe^{(t)}\|^2
+ \frac{8\eta B_t^3}{\|\wpe^{(t)}\|}
+ 8\eta B_t S_t^2\,\|\wpe^{(t)}\|
+ \eta^2\!\left(\frac{32 B_t^6}{\|\wpe^{(t)}\|^4}+32B_t^2S_t^4\right).
\]
On Phase Ib, $\|w\|^2\le 3\|\wpe\|^2$, thus $B_t\le C\|\wpe^{(t)}\|$, and the denominators cancel:
\[
\|\wpe^{(t+1)}\|^2 \;\le\; \|\wpe^{(t)}\|^2\big(1+C\eta+o(\eta)\big).
\]
Iterating from $T_A$ to $t\le T_{1b}$ yields $\|\wpe^{(t)}\|\le 3\theta$, so item (4) of
weak control propagates.

\paragraph{Control of $\|\wopp\|$.}
Project $w$ onto $\mu_2$:
\[
\|\wopp^{(t+1)}\|^2-\|\wopp^{(t)}\|^2
= -\,2\eta\,\wopp^{(t)\top}\nabla_w \hat L_\rho
\;+\; \eta^2\big(\mu_2^\top\nabla_w \hat L_\rho\big)^2.
\]
Lemma~\ref{lemma:pop1b}.3 gives $|\wopp^\top \nabla_w L_0|\le |a|\|\wopp\|^2$. Moreover,
$\|\nabla_w L_0 - \nabla_w \hat L_\rho\|\le 2|a|B_t^2 + C B_t S_t^2$
(Lemma~\ref{lemma:gradapprox1}+\ref{lem:conc_grad}), hence
$|\wopp^\top(\nabla_w \hat L_\rho-\nabla_w L_0)|\le \|\wopp\|(2|a|B_t^2 + C B_t S_t^2)$, and
\[
\big(\mu_2^\top\nabla_w \hat L_\rho\big)^2
\;\le\; 3|a|^2\|\wopp\|^2 + 3\big(2|a|B_t^2+C B_t S_t^2\big)^2.
\]
Therefore,
\[
\|\wopp^{(t+1)}\|^2
\;\le\; \|\wopp^{(t)}\|^2\big(1+2\eta |a^{(t)}| + 3\eta^2|a^{(t)}|^2+o(\eta)\big).
\]
Since $|a^{(t)}|\le B_t$ and $\eta\sum_{s=T_A}^{T_{1b}-1} B_s = O(1)$ on Phase Ib,
\[
\|\wopp^{(t)}\|\;\le\; C\,\|\wopp^{(T_A)}\|\;\le\; 3\theta B_t,
\]
so item (2) of weak control propagates.

The same argument as in Phase I can be applied to controlled neurons. 
\end{proof}

\subsection{Control of the blocks}\label{app:sec:phase1blocks}

Contrary to the boolean setting studied by \cite{glasgow2023sgd}, in our setting, it is crucial to show that each block of neurons $N_i^{\pm}$ grows approximately at the same rate. A direct approach based on individual neuron dynamics will fail because, after Phase Ia, neurons in the same block can grow at different rates, depending on their initial alignment. To overcome this difficulty, we analyze the block dynamic at a macroscopic level.
\begin{itemize}
    \item First, we consider the ideal case where there is an infinite number of neurons growing independently. The size of the block only depends on the distribution of the neurons conditioned on the initial block. Thanks to the initialization of the block and invariance by rotation of the gradient, one can show that the blocks remain equal over time. 
    \item Then we show by using the law of large numbers that when the number of neurons is finite but updated independently, the blocks remain approximately equal. 
    \item Finally, we show that when one uses empirical gradients to update the neuron's weights, the dynamic of the blocks remains close to the previously studied case.
\end{itemize}

\subsubsection{Ideal setting: infinite width and independent neurons}
 We assume that the width $m$ of our network is infinite and that the neurons $(\tilde{a}^{(t)}, \tilde{w}^{(t)})$ are updated with population gradients $\nabla L_0$. Hence, $N_i^{\pm, (t)}=  \expec_{(\tilde{w}^{(t)}, \tilde{a}^{(t)})|\calN_i^{\pm}}\|\tilde{w}_{\text{sig}}^{(t)}\||\tilde{a}^{(t)}|$. It is clear that at initialization, all the $N_i^{\pm, (0)}$ are equal. We are going to show that this property remains true for each time $t$ by showing that the distribution of $(\tilde{w}^{(t)}, \tilde{a}^{(t)})|\calN_1^{+}$ is the same than $(\tilde{w}^{(t)}, \tilde{a}^{(t)})|\calN_2^{+}$ up to a rotation. The argument for the other blocks' equality is the same. 
 
\begin{lemma}\label{lem:block_rot}
    Assume that $w=Rw'$ and $a=-a'$ where $R$ is a rotation of angle $\pi/2$ that maps $\mu_1$ to $\mu_2$. Then we have \[ \nabla_w L_0 =  R(\nabla_{w'}L_0) \quad \text{and}\quad  \nabla_{a'}L_0 = -\nabla_a L_0.\] 
\end{lemma}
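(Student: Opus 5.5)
The plan is a direct change of variables in the population gradients of the linearized loss, using two facts. The Gaussian law of $x$ is invariant under $R$, and the XOR label flips sign under $R$.

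\textbf{Step 1: fix $R$ and check its action on labels.} Since $\ell_0(x)=\log 2-yf_\rho(x)$, its gradients for a single neuron $(w,a)$ are
\[
\nabla_w L_0=-a\,\expec_x\bigl[y(x)\,\sigma'(w^\top x)\,x\bigr],\qquad \nabla_a L_0=-\expec_x\bigl[y(x)\,\sigma(w^\top x)\bigr],
\]
and neither depends on the other neurons. I take $R$ to act as the rotation $(x_1,x_2)\mapsto(-x_2,x_1)$ on $\mathrm{span}(e_1,e_2)$ and as the identity on its orthogonal complement. This map sends $(1,-1)/\sqrt2$ to $(1,1)/\sqrt2$, i.e.\ $R\mu_1=\mu_2$. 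It is orthogonal, so $Rx\sim\calN(0,I_d)$ whenever $x\sim\calN(0,I_d)$. Moreover the label flips: $y(Rx)=-\sgn((-x_2)x_1)\cdot(-1)$, that is, $y(Rx)=-y(x)$ for every $x$ off the axes, which is a null set.

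\textbf{Step 2: change variables in the $w$-gradient.} With $w=Rw'$ and $a=-a'$, we have $w^\top x=w'^\top R^\top x$. Substituting $x=Ru$ with $u\sim\calN(0,I_d)$ gives
\[
\nabla_w L_0=-a\,\expec_u\bigl[y(Ru)\,\sigma'(w'^\top u)\,Ru\bigr]=-(-a')\,R\,\expec_u\bigl[-y(u)\,\sigma'(w'^\top u)\,u\bigr]=R\,\nabla_{w'}L_0 .
\]
Here the two sign flips (one from $a=-a'$, one from $y(Ru)=-y(u)$) cancel.

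\textbf{Step 3: change variables in the $a$-gradient.} The same substitution gives
\[
\nabla_a L_0=-\expec_u\bigl[y(Ru)\,\sigma(w'^\top u)\bigr]=\expec_u\bigl[y(u)\,\sigma(w'^\top u)\bigr]=-\nabla_{a'}L_0 .
\]

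\textbf{Step 4: consequence for the surrogate dynamics.} The $L_0$ gradient of a single neuron depends only on that neuron's own parameters. By induction on $t$, the surrogate sequence started from $(Rw'^{(0)},-a'^{(0)})$ therefore stays equal to $(R\tilde w'^{(t)},-\tilde a'^{(t)})$ for all $t$. This is exactly the form in which the lemma is used for the block-balance argument.

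\textbf{Main obstacle.} There is no real difficulty here. The only points needing care are the conventions: choosing the orientation of $R$ so that it maps $\mu_1$ to $\mu_2$, the null-set issue with $\sigma'(0)=0$ and with the axes where $y$ is undefined, and keeping track of which neuron each gradient is evaluated at.
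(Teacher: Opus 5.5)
Your proof is correct and follows the paper's route: you substitute $x=Ru$, use rotation invariance of $\calN(0,I_d)$ together with the label flip $y(Ru)=-y(u)$, and let that sign cancel against $a=-a'$; your explicit check that the counterclockwise rotation is the one with $R\mu_1=\mu_2$ is a welcome addition the paper leaves implicit. One small slip: the intermediate expression in Step 1 carries a spurious factor $(-1)$, since $-\sgn((-x_2)x_1)\cdot(-1)$ equals $y(x)$; it should read $y(Rx)=-\sgn((-x_2)x_1)=\sgn(x_1x_2)=-y(x)$, which is the conclusion you then state and use.
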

\begin{proof} 
    We have \begin{align*}
        \nabla_wL_0 &= a\expec_x y(x)\sigma'(w^\top x)x\\
        &=a R\left(\expec_x -y(R^{-1}x)\sigma'((w')^\top R^{-1}x) R^{-1}x\right) \tag{a rotation of angle $\pi/2$ change one cluster to another, so change the sign of $y$}\\
        &= R(a'\expec_x y\sigma'((w')^\top x)x\tag{the law of $x$ is rotationally invariant }\\
        &= R((\nabla_{w'}L_0)).
    \end{align*}
    A similar calculation shows the second result.
\end{proof}
By an immediate recursion, one can show by using Lemma \ref{lem:block_rot} that the distribution of $(\tilde{w}^{(t)}, \tilde{a}^{(t)})|\calN_1^{+}$ corresponds to $(R^{-1}(\tilde{w}^{(t)}), -\tilde{a}^{(t)})|\calN_2^{+}$. Since the quantity $\|\tilde{w}_{\text{sig}}^{(t)}\||\tilde{a}^{(t)}|$ is invariant under these transformations, all the blocks remain the same.

\subsection{Block approximation control}\label{sec:block-approx}

We compare the (idealized) \emph{population} block dynamics
$\{\tilde N_i^{\pm,(t)}\}_t$ (neurons updated independently with population
gradients) to the \emph{empirical} block dynamics $\{N_i^{\pm,(t)}\}_t$
(mini–batch SGD). Recall
\[
\tilde{N}_i^{\pm,(t)}:=\frac{1}{|\mathcal N_i^\pm|}\sum_{(w,a)\in\mathcal N_i^\pm}
\|\tilde w_{\mathrm{sig}}^{(t)}\|\,|\tilde a^{(t)}|,
\qquad
N_i^{\pm,(t)}:=\frac{1}{|\mathcal N_i^\pm|}\sum_{(w,a)\in\mathcal N_i^\pm}
\| w_{\mathrm{sig}}^{(t)}\|\,| a^{(t)}|.
\]

\paragraph{Finite–width, independent neurons (sampling fluctuation).}
In Phase~I we have $\|\tilde w_{\mathrm{sig}}^{(t)}\|\,|\tilde a^{(t)}|\le 1$, hence
Hoeffding’s inequality yields, for any block $i,\pm$ and any $t$,
\[
\Pr\!\left(
\bigl|\tilde{N}_i^{\pm,(t)}-\mathbb E\big[\|\tilde w_{\mathrm{sig}}^{(t)}\|\,
|\tilde a^{(t)}|\big]\bigr|>u
\right)
\;\le\; 2\exp\!\big(-2|\mathcal N_i^\pm|\,u^2\big).
\]
With $|\mathcal N_i^\pm|\asymp m$ and $u=\sqrt{(\log^c d)/m}$, a union bound over
$i,\pm$ and $t\le T_B=O(\eta^{-1}\log(1/\zeta))$ gives, with probability
$1-d^{-\Omega(1)}$,
\begin{equation}\label{eq:ind-conc-block}
\bigl|\tilde{N}_i^{\pm,(t)}-\mathbb E\big[\|\tilde w_{\mathrm{sig}}^{(t)}\|\,
|\tilde a^{(t)}|\big]\bigr|
\;\le\; \frac{1}{\sqrt{d\,\log^{c'} d}},
\qquad \forall\,i,\pm,\ \forall\,t\le T_B,
\end{equation}
for some $c'>0$ (given our choice of $m$).

\paragraph{Empirical vs.\ population dynamics.}
By symmetry, work on the block $\mathcal N_1^+$. Define the block–average
discrepancy
\[
\varepsilon_t
:=\max\!\left\{
\frac{1}{|\mathcal N_1^+|}\!\sum_{(w,a)\in\mathcal N_1^+}\!
\|w_{\mathrm{sig}}^{(t)}-\tilde w_{\mathrm{sig}}^{(t)}\|,
\quad
\frac{1}{|\mathcal N_1^+|}\!\sum_{(w,a)\in\mathcal N_1^+}\!
|a^{(t)}-\tilde a^{(t)}|
\right\}.
\]
From the updates
\[
w^{(t+1)}=w^{(t)}-\eta\nabla_w\hat L_\rho,\qquad
\tilde w^{(t+1)}=\tilde w^{(t)}-\eta\nabla_w L_0,
\qquad
a^{(t+1)}=a^{(t)}-\eta\nabla_a\hat L_\rho,\quad
\tilde a^{(t+1)}=\tilde a^{(t)}-\eta\nabla_a L_0,
\]
projecting to the signal coordinate and using the triangle inequality we obtain
\begin{equation}\label{eq:eps-rec-B}
\varepsilon_{t+1}\;\le\;\varepsilon_t
\;+\; \eta\,\Delta_t^{\mathrm{conc}}
\;+\; \eta\,\Delta_t^{\mathrm{pop}},
\end{equation}
where
\[
\Delta_t^{\mathrm{conc}}
:=\frac{1}{|\mathcal N_1^+|}\!\sum_{(w,a)\in\mathcal N_1^+}
\Big(\big\|\nabla_{w_{\mathrm{sig}}}\hat L_\rho-\nabla_{w_{\mathrm{sig}}}L_\rho\big\|
+\big|\nabla_a\hat L_\rho-\nabla_a L_\rho\big|\Big),
\]
\[
\Delta_t^{\mathrm{pop}}
:=\frac{1}{|\mathcal N_1^+|}\!\sum_{(w,a)\in\mathcal N_1^+}
\Big(\big\|\nabla_{w_{\mathrm{sig}}}L_\rho(w,a)-\nabla_{w_{\mathrm{sig}}}L_0(\tilde w,\tilde a)\big\|
+\big|\nabla_a L_\rho(w,a)-\nabla_a L_0(\tilde w,\tilde a)\big|\Big).
\]

\emph{Sampling (concentration) term.}
By Lemma~\ref{lem:conc_grad} and Lemma~\ref{lemma:gradapprox1} (items \ref{Ga}, \ref{Gw}),
using $|a|\vee\|w\|\le B_t$ for (weakly) controlled neurons,
\begin{equation}\label{eq:conc-block}
\Delta_t^{\mathrm{conc}}
\;\lesssim\;
\begin{cases}
\theta\,N^{(t)} & (t\le T_A),\\[2pt]
\zeta'\,N^{(t)} & (T_A\le t\le T_B),
\end{cases}
\qquad \zeta'=\log^{-c}d,
\end{equation}
where $N^{(t)}:=\frac{1}{m}\sum\|w_{\mathrm{sig}}^{(t)}\||a^{(t)}|$ is the global Phase~I
mass proxy.

\emph{Population term.}
Write $\rho=\|\wsig\|/\|w\|$, $\tilde\rho=\|\tilde\wsig\|/\|\tilde w\|$, and
\[
\nabla_{w_{\mathrm{sig}}} L_0(w,a)
= a\,\mathbb{E}_{(z,\xi)}\!\big[\,y\,z_1\,\sigma'(w^\top \xi+\wsig^\top z)\,\big],
\qquad z_1=\mu^\top z.
\]
Set $F_\rho(x):=\mathbb E_\xi[\sigma'(\rho x+\sqrt{1-\rho^2}\,\xi)]$ so that the integrand is
$y\,z_1\,F_\rho(z_1)$. For ReLU,
\[
\frac{\mathrm d}{\mathrm d\rho}F_\rho(x)
= \frac{1}{\sqrt{2\pi}}\exp\!\Big(\!-\frac{\rho^2 x^2}{2(1-\rho^2)}\Big)\,
\frac{x}{(1-\rho^2)^{3/2}},
\]
hence
\[
\left|\frac{\mathrm d}{\mathrm d\rho}\,
\mathbb E_z\big[y\,z_1\,F_\rho(z_1)\big]\right|
\;\le\; \frac{1}{\sqrt{2\pi}}\,
\mathbb E\!\left[\frac{z_1^2}{(1-\rho^2)^{3/2}}\,
\exp\!\Big(\!-\frac{\rho^2 z_1^2}{2(1-\rho^2)}\Big)\right]
= 1,
\]
uniformly in $\rho\in[0,1]$ (Gaussian integral with $u=z_1/\sqrt{1-\rho^2}$). Therefore,
\begin{equation}\label{eq:pop-Lip-core}
\big\|\nabla_{w_{\mathrm{sig}}}L_0(w,a)-\nabla_{w_{\mathrm{sig}}}L_0(\tilde w,\tilde a)\big\|
\;\lesssim\; |a-\tilde a| + |a|\cdot|\rho-\tilde\rho|.
\end{equation}
Again for ReLU,
\begin{equation}\label{eq:pop-Lip-a}
\big|\nabla_a L_0(w,a)-\nabla_a L_0(\tilde w,\tilde a)\big|
\;\lesssim\; \|w-\tilde w\| + |a-\tilde a|.
\end{equation}
To control $|\rho-\tilde\rho|$, write
\[
|\rho-\tilde\rho|
\;\le\; \frac{\|\wsig-\tilde\wsig\|}{\|w\|}
+\frac{\|\tilde\wsig\|}{\|w\|\,\|\tilde w\|}\,\big|\|w\|-\|\tilde w\|\big|.
\]
By item~\ref{Wa} of weak control,
\[
\big||a|^2-\|w\|^2\big|\;\le\; \theta^2\Gamma_t,
\qquad
\Gamma_t:=\zeta^{1/2}+C\eta^2(t-T_A)\zeta'^2,
\]
hence $|\,\|w\|-|a|\,|\le \theta\sqrt{\Gamma_t}$ and
$\big|\|w\|-\|\tilde w\|\big|\le |a-\tilde a|+2\theta\sqrt{\Gamma_t}$.
Using $|a|\le\|w\|$ then gives
\begin{equation}\label{eq:rho-diff-final-block}
|a|\,|\rho-\tilde\rho|
\;\le\; \|w_{\mathrm{sig}}-\tilde w_{\mathrm{sig}}\| + |a-\tilde a|
+ 2\theta\sqrt{\Gamma_t}.
\end{equation}
Combining \eqref{eq:pop-Lip-core}, \eqref{eq:pop-Lip-a}, \eqref{eq:rho-diff-final-block} and
averaging over the block,
\begin{equation}\label{eq:pop-Lip-avg}
\frac{1}{|\mathcal N_1^+|}\!\sum_{(w,a)}
\Big(\big\|\nabla_{w_{\mathrm{sig}}}L_0(w,a)-\nabla_{w_{\mathrm{sig}}}L_0(\tilde w,\tilde a)\big\|
+\big|\nabla_a L_0(w,a)-\nabla_a L_0(\tilde w,\tilde a)\big|\Big)
\;\lesssim\; \varepsilon_t + \theta\sqrt{\Gamma_t}.
\end{equation}

Finally, the \emph{approximation} gap $L_\rho-L_0$ appears both in the sampling term
(through $\hat L_\rho-L_\rho$) and in the population term. The global bounds of
Lemma~\ref{lemma:gradapprox1} and the coordinate bound of Lemma~\ref{lemma:approxsig}
imply
\[
\frac{1}{|\mathcal N_1^+|}\!\sum_{(w,a)}
\Big(\big\|\nabla_{w_{\mathrm{sig}}}L_\rho-\nabla_{w_{\mathrm{sig}}}L_0\big\|
+\big|\nabla_a L_\rho-\nabla_a L_0\big|\Big)
\;\lesssim\;
\begin{cases}
\theta\,N^{(t)} & (t\le T_A),\\
\zeta'\,N^{(t)} & (T_A\le t\le T_B),
\end{cases}
\]
using $|a|\vee\|w\|\le B_t$ and the batch/scale choices. Together with
\eqref{eq:pop-Lip-avg}, we conclude
\begin{equation}\label{eq:pop-drift-clean-block}
\Delta_t^{\mathrm{pop}}
\;\lesssim\; \varepsilon_t + \theta\sqrt{\Gamma_t}
\quad\text{and}\quad
\Delta_t^{\mathrm{conc}} \text{ as in \eqref{eq:conc-block}}.
\end{equation}

\paragraph{Phase Ia.}
In Phase~Ia, for controlled neurons Lemma~\ref{lemma:pop1} gives
\[
-\,\wsig^\top \nabla_w L_0
= \tau\,\frac{|a|}{\|\wpe\|}\,\|\wsig\|^2 \;+\; o\!\left(\frac{|a|}{\|\wpe\|}\,\|\wsig\|^2\right),
\qquad \frac{|a|}{\|\wpe\|}=1+o(1),
\]
hence the per–step growth matches
\[
\|\wsig^{(t+1)}\| \;=\; \|\wsig^{(t)}\|\big(1+\tau\eta+o(\eta)\big)\qquad\text{uniformly over controlled neurons.}
\]
The negligible set with too small initial correlation contributes $o(1)$ at block
scale and is absorbed by \eqref{eq:ind-conc-block}. Therefore the \emph{same} multiplicative
factor $(1+\tau\eta+o(\eta))$ governs both $N^{(t)}$ and the linearization in \eqref{eq:eps-rec-B}, i.e.
\begin{equation}\label{eq:B34-Ia}
\varepsilon_{t+1}
\;\le\; \big(1+(\tau+o(1))\eta\big)\varepsilon_t
\;+\; \eta\Big(\theta\,N^{(t)} + \theta\sqrt{\Gamma_t}\Big),
\qquad t\le T_A.
\end{equation}
Let $\delta_t:=\varepsilon_t/N^{(t)}$. Since $N^{(t+1)}=(1+\tau\eta+o(\eta))N^{(t)}$,
\eqref{eq:B34-Ia} gives
\[
\delta_{t+1}
\;\le\; \delta_t \;+\; \theta \;+\; \frac{\theta\sqrt{\Gamma_t}}{N^{(t)}}.
\]
Here $\sqrt{\Gamma_t}=O(1)$ (Phase~Ia) and $N^{(t)}$ is increasing, so from $\delta_0=0$,
\[
\delta_{T_A}\;\lesssim\;\theta
\quad\Longrightarrow\quad
 \varepsilon_{T_A}\;\le\; \theta'\,N^{(T_A)}\,,\ \ \theta'=\theta^{0.9}. 
\]
This explicit Phase~Ia handoff uses the \emph{exact} growth rate $\tau$ on both sides, which is
necessary since $T_A=\Theta(\eta^{-1}\log d)$.

\paragraph{Phase Ib .}
On $[T_A,T_B]$, combine \eqref{eq:eps-rec-B}, \eqref{eq:conc-block}, \eqref{eq:pop-drift-clean-block}:
\begin{equation}\label{eq:B34-Ib}
\varepsilon_{t+1}
\;\le\; (1+O(\eta))\,\varepsilon_t \;+\; \eta\,\zeta' N^{(t)} \;+\; \eta\,\theta\sqrt{\Gamma_t}.
\end{equation}
Using $\sqrt{\Gamma_t}\le \zeta^{1/4}+O(\eta(t-T_A)\zeta')$ and the strong–mass lower bound
$N^{(t)}\gtrsim \theta^2\zeta(\log d)^{-1/2}$ on $[T_A,T_B]$, we absorb the last term into
$\zeta' N^{(t)}$ (choose $c$ large in $\zeta'=\log^{-c}d$, recall $\zeta=o(\log^{-2}d)$). Thus,
\[
\varepsilon_{t+1} \;\le\; (1+O(\eta))\,\varepsilon_t \;+\; \eta\,\zeta' N^{(t)}.
\]
Unroll for $T_B-T_A=O(\eta^{-1}\log\log d)$ steps and use $\varepsilon_{T_A}\le \theta' N^{(T_A)}$:
\[
\varepsilon_t
\;\le\;
(1+O(\eta))^{t-T_A}\,\varepsilon_{T_A}
\;+\; \zeta'\,\eta\sum_{s=T_A}^{t-1} (1+O(\eta))^{t-1-s}\,N^{(s)}.
\]
Since $N^{(s)}$ is increasing, the sum is $\lesssim \zeta'\,\log\log d\cdot N^{(t)}$. Furthermore,
$(1+O(\eta))^{T_B-T_A}=\mathrm{polylog}(d)$; taking $\theta'=\theta^{0.9}$ small and $c$ large in $\zeta'$,
\[
\mathrm{polylog}(d)\cdot \theta' \ \ll\ \log^{-c_U}d,
\qquad
\zeta'\log\log d \ \ll\ \log^{-c_U}d.
\]
Therefore, for all $T_A\le t\le T_B$,
\begin{equation}\label{eq:Ib-final}
\ \varepsilon_t \;\le\; \zeta'\,N^{(t)} \;\le\; \log^{-c_U}d\cdot N^{(t)}.
\end{equation}

\paragraph{Conclusion (pre-balanced blocks).}
Combining the Phase~Ia handoff and Phase~Ib accumulation, for all $t\le T_B$,
\[
\varepsilon_t \;\le\;
\begin{cases}
\theta'\,N^{(t)}, & t\le T_A,\\[2pt]
\zeta'\,N^{(t)}, & T_A\le t\le T_B,
\end{cases}
\quad\Longrightarrow\quad
|N_i^{\pm,(t)}-\tilde N_i^{\pm,(t)}|
\;\le\; 2B_t\,\varepsilon_t
\;=\; o(N^{(t)}),
\]
uniformly over blocks $i,\pm$. Together with \eqref{eq:ind-conc-block} this yields, w.h.p., for all
$t\le T_1(\le T_B)$,
\[
U^{(t)} \;=\; \max_{i,\pm}\frac{|N_i^{\pm,(t)}-\tilde N_i^{\pm,(t)}|}{N^{(t)}}
\;+\; \max_{i,\pm}\frac{|\tilde N_i^{\pm,(t)}-\mathbb E[\|\tilde w_{\mathrm{sig}}^{(t)}\||\tilde a^{(t)}|]|}{N^{(t)}}
\;\le\; \log^{-c_U} d,
\]
which is Lemma~\ref{lem:ineq_U}. The approximation bounds (Lemmas~\ref{lemma:gradapprox1}–\ref{lemma:approxsig})
are used both in \eqref{eq:conc-block} and in \eqref{eq:pop-drift-clean-block}, and the only place where
the precise per–step constant matters is Phase~Ia, where it is $\tau+o(1)$ as required.

\subsection{Conclusion: Proof of Lemma \ref{lem:phase1}}\label{sec:app:dynamicIc}We have shown by recursion that after $T_1$ gradient steps, all neurons such that $\norm{\wsig^{(0)}}\geq \theta/\sqrt{d}$ (strong neurons) are weakly controlled and the others are either controlled or weakly controlled. It implies that  \[ \expec_{\rho^{(T_1)}}\norm{\wpe+\wopp}^2\leq 4\theta^2\] and $\theta \zeta^{-1} \leq \norm{\wsig} \leq 1$ for strong neurons.

\section{Analysis of Phase II}\label{sec:app:phase2}

\subsection{Analysis on Clean Gradient with Oracle} \label{sec:clean_gradient_oracle}

The following lemma can control the distance with the clean gradients (it corresponds to Lemma E.5 in \cite{glasgow2023sgd}, we adapted and simplified the argument for the Gaussian case).
\begin{lemma}\label{claim:approxerror}
For any neuron $(a,w) \in \mathcal{N}$, we have the followings:
\begin{enumerate}
  \setlength{\parskip}{0cm} %
  \setlength{\itemsep}{0cm} %
    \item $\|\ncl_w L_{\rho} - \nabla_w L_{\rho}\| \leq |a|\expec_\rho \norm{a\wpe}$.
    \item $\|\ncl_{a} L_{\rho} - \nabla_{a} L_{\rho}\| \leq \|w\|  \expec_\rho \norm{a\wpe}$.
    \item $\|\ncl_w L_{\rho} - \ncl_w L_{\rho, id}\| \leq 4|a|N^{(t)}U^{(t)}$.
    \item $\|\ncl_{a} L_{\rho} - \ncl_{a} L_{\rho, id}\| \leq 4\|w\|N^{(t)} U^{(t)}$.  
\end{enumerate}
\end{lemma}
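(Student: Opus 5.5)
All four bounds come from the same Cauchy--Schwarz estimate already used in the proof of Lemma~\ref{lemma:gradapprox1}. Each difference of gradients has the form $a\,\expec_x[\Delta(x)\sigma'(w^\top x)x]$ or $\expec_x[\Delta(x)\sigma(w^\top x)]$, where $\Delta$ is a difference of two loss derivatives. Hence
\[
\|\cdot\|\le |a|\sqrt{\expec_x\Delta(x)^2}
\qquad\text{or}\qquad
|\cdot|\le \|w\|\sqrt{\expec_x\Delta(x)^2}.
\]
For the $w$-gradient this uses $\sup_{\|v\|=1}\expec_x\langle v,x\rangle^2=1$; for the $a$-gradient it uses $\sigma(w^\top x)=\sigma'(w^\top x)w^\top x$ and $\expec_x(w^\top x)^2=\|w\|^2$. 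The proof therefore reduces to bounding $\expec_x\Delta^2$ in the two cases.

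\textbf{Items 1--2.} Here $\Delta(x)=\ell'_\rho(z)-\ell'_\rho(x)$, with $\ell'_\rho(z)$ read as $\ell'$ evaluated at $f_\rho(z)$. Since $\ell'$ is Lipschitz in $f$, I will bound $|f_\rho(x)-f_\rho(z)|\le \expec_\rho|a|\,|\wpe^\top\xi|$ using $1$-Lipschitzness of $\sigma$ and $w^\top x-w^\top z=\wpe^\top\xi$. Minkowski's inequality, as in Lemma~\ref{lemma:delta}, then gives $\sqrt{\expec_\xi(\expec_\rho|a||\wpe^\top\xi|)^2}\le \expec_\rho|a|\|\wpe\|$, because $\wpe^\top\xi\sim\calN(0,\|\wpe\|^2)$. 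One point needs checking: the logistic loss $2\log(1+e^{-yf})$ has $|\ell''|\le 1/2$, so its derivative is in fact $1/2$-Lipschitz. This is what makes the constant $1$ in the statement achievable, where the cruder "$2$-Lipschitz" bound would only give $2$.

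\textbf{Items 3--4.} Here $\Delta(z)=\ell'_\rho(z)-\ell'_{\rho,\mathrm{id}}(z)$. By the same Lipschitz step it suffices to show $\sqrt{\expec_z(f_\rho(z)-f_{\rho,\mathrm{id}}(z))^2}\lesssim N^{(t)}U^{(t)}$. I would first split $f_\rho(z)$ neuron by neuron as $a\sigma(w^\top z)=a\sigma(\wsig^\top z)+\bigl(a\sigma(w^\top z)-a\sigma(\wsig^\top z)\bigr)$. The first term equals $\pm\|a\wsig\|\sigma(\pm\mu_i^\top z)$, with sign determined by the sign of $a$ and the block the neuron currently points toward. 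Summing over blocks gives $\sum_{i,\pm}(\pm)_i\,\pi_i^\pm N_i^{\pm}\sigma(\pm\mu_i^\top z)$, where $\pi_i^\pm=|\calN_i^\pm|/m\approx 1/4$. Matching this against $f_{\rho,\mathrm{id}}$ term by term leaves differences of the form $|N_i^\pm-N^{(t)}|$, and each of these is at most $N^{(t)}U^{(t)}$ by the definition of $U^{(t)}$, since every block ratio lies in $[1-U,1+U]$. Each $\sigma(\pm\mu_i^\top z)$ has $L^2$ norm at most $1$, so the four terms sum to at most $4N^{(t)}U^{(t)}$, which accounts for the constant $4$. The normalization should be fixed so that block weights $\pi_i^\pm$ times averages reproduce $N^{(t)}$ exactly in the balanced case; any discrepancy in $\pi_i^\pm$ is absorbed into the ratio definition.

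\textbf{Main obstacle.} The remainder terms are the difficulty: the $\wopp$ parts of $w_{1:2}$, the $\wpe^\top\xi$ part (absent since we work in $z$), and neurons whose current signal sign disagrees with their initial block. These are not literally controlled by $U^{(t)}$. I would handle them by writing them as $\expec_\rho|a|\|\wopp\|$ plus a non-heavy mass term. Under signal-heaviness both are $O(\zeta' N^{(t)})$, and I would argue they are absorbed into $N^{(t)}U^{(t)}$, or else state the bound with an additive $\zeta'$ correction. If the bound must hold for every neuron and network exactly as stated, I expect the authors implicitly use the oracle decomposition in which $\wsig$ is by definition the block component. I would adopt that convention and note that the remainder is negligible.
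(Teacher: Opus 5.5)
For items 1--2 your argument is the paper's: Cauchy--Schwarz against $\sigma'(w^\top x)x$ or $\sigma(w^\top x)$, Lipschitzness of $\ell'$ in $f$, and Minkowski applied to $f_\rho(x)-f_\rho(z)$, whose only source is $\wpe^\top\xi$. Your observation that $\ell'$ is $\tfrac12$-Lipschitz in $f$ is the correct justification for the constant $1$. The paper's proof cites ``2-Lipschitz'' and silently drops the resulting factor $4$.

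For items 3--4 the paper does no more than assert the pointwise bound $|f_{\rho,\mathrm{id}}(z)-f_\rho(z)|\le 4\|z\|N^{(t)}U^{(t)}$ and say that ``the same argument'' applies. The obstacle you isolate is therefore genuine, and the paper does not resolve it.

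Without extra hypotheses that pointwise bound is false. At initialization $U^{(0)}=O(m^{-1/2})$. Yet the signs of $a$ are independent and symmetric, so $f_{\rho^{(0)}}(z)$ is of order $\theta^2\|z\|/\sqrt{md}$. Meanwhile $f_{\rho^{(0)},\mathrm{id}}(z)=N^{(0)}\bigl(|\mu_1^\top z|-|\mu_2^\top z|\bigr)$ with $N^{(0)}\asymp\theta^2/\sqrt d$. Balanced block masses cannot see the $\wopp$ components, which at initialization are as large as $\wsig$.

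Your normalization worry is also well founded. With $N_i^{\pm}$ defined as per-block averages and $f_\rho=\expec_\rho[\cdot]$, a perfectly aligned and perfectly balanced network has $f_\rho\approx f_{\rho,\mathrm{id}}/4$. Moreover, the fractions $|\calN_i^\pm|/m$ enter $f_\rho$ but not $U^{(t)}$. They are therefore an additive $O(m^{-1/2})$ error, not something ``absorbed into the ratio''.

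What can actually be proved is the version you sketch. Normalize the oracle consistently and assume the network is $(\zeta',H)$-signal-heavy, with heavy neurons keeping the sign of their initial block as inherited from Phase~I. Then the heavy neurons' $\wopp$ parts cost $O(\zeta' N^{(t)})$. The non-heavy mass also costs $O(\zeta' N^{(t)})$, using $|a|\le\|w\|$ and $\expec_\rho\indic_{\{(w,a)\notin\calS\}}\|w\|^2\le\zeta' N^{(t)}$. This gives items 3--4 with $U^{(t)}$ replaced by $O(U^{(t)}+\zeta'+m^{-1/2})$.

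That suffices downstream. Lemma~\ref{lem:phase2_heavy} only needs these errors to be $o(g_\mu)$, and $\zeta' N^{(t)}\lesssim \log^{-c}d\,\log\log d\ll(\log\log d)^{-3}\lesssim g_\mu$. Drop the fallback of ``adopting the oracle convention'', since that amounts to assuming the conclusion.
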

\begin{proof}
We study the first statement.
    Let us define $\Delta_x := (\ell'_{\rho}(x) - \ell'_{\rho}(z))\sigma'(w^\top x)$. We evaluate the difference of gradients as
\begin{align*}
    \|\ncl_w L_{\rho} - \nabla_w L_{\rho}\| & = |a |\|\mathbb{E}_x \Delta_x x\| \\
    &=|a |\sup_{v : \|v\| = 1} \mathbb{E}_x \Delta_x \langle{v, x\rangle} \\
    &\leq |a |\sup_{v : \|v\| = 1} \sqrt{\mathbb{E}_x \Delta_x^2} \sqrt{\mathbb{E}_x\langle{v, x\rangle}^2}\\
    &= |a |\sqrt{\mathbb{E}_x \Delta_x^2}.
\end{align*}
Now, we obtain
\begin{align*}
    \mathbb{E}_x[\Delta_x^2] &\leq \mathbb{E}_x[(\ell_{\rho}'(x) - \ell'_{\rho}(z))^2]\\
    &\leq \expec_x(f_\rho(x)-f_\rho(z))^2\tag{$\ell_{\rho}$ is $2$-Lipschitz } \\
    &\leq \expec_x(\expec_\rho |a||\xi^\top w|)^2\\
    & \leq \left(\expec_\rho (\expec_\xi |a|^2|\xi^\top w|^2)^{1/2}\right)^2 \tag{by Minkowski's integral inequality}\\
    &\leq \left( \expec_\rho \norm{a\wpe}\right)^2.
\end{align*}

For the other statements, the same argument can be used to prove the other inequalities combined with the fact that \[ \abs{f_{\rho^{(t)}, id}(z)-f_{\rho^{(t)}} (z)}\leq 4 \norm{z}N^{(t)}U^{(t)},\] 
hence we omit the proof.
\end{proof}
\begin{remark}
    In Phase II, we will show that $\expec_\rho \norm{a\wpe}\leq \zeta \expec_\rho \norm{aw}$. 
\end{remark}

\subsection{Population gradients evaluation.}\label{app:phase2:pop_grad}

In addition to the average margin $g_\mu$, we also define 
\begin{align}
    g_\rho = \expec_z|\ell'_{\rho, id}(z)|.
\end{align}

\begin{lemma} \label{lem:pop2_grad}For any neuron $(w,a)\in \calS$ such that $\wsig^\top \mu >0$ holds for some $\mu \in \lbrace \pm \mu_1, \pm \mu_2 \rbrace$, we have \[ \mu^\top \ncl_w L_{\rho, id} = -|a|g_{\mu}(1 \pm \zeta^{0.25}), ~~\mbox{and}~~ -y\ncl_a L_{\rho, id} = (1 \pm \zeta^{0.25})\norm{\wsig}g_\mu.\]
\end{lemma}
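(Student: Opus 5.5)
The plan is to reduce both identities to the definition of $g_\mu$, by replacing the true pre-activation $w^\top x$ with its signal part $\wsig^\top z=\|\wsig\|\,\mu^\top z$ and then controlling the replacement error \emph{relative to} $g_\mu$ rather than in absolute terms. Sign conventions are handled once: $\ell'_{\rho,\mathrm{id}}(z)=-y\,|\ell'_{\rho,\mathrm{id}}(z)|$, and by construction of the blocks, $\mathrm{sgn}(a)$ matches the label $y$ on the cone where $\sigma(\mu^\top z)>0$. This lets $g_\mu$ be read as a positive magnitude and produces the factor $-|a|$ (resp.\ $-y$). Throughout, the neuron is heavy, i.e.\ $\|\wpe\|+\|\wopp\|\le\zeta'\|\wsig\|$, and we assume $\zeta'\le \zeta$.

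\textbf{Step 1 (main term, $w$-gradient).} Write
\[
\mu^\top\ncl_w L_{\rho,\mathrm{id}}=a\,\expec_x\bigl[\ell'_{\rho,\mathrm{id}}(z)\,\sigma'(w^\top x)\,\mu^\top z\bigr].
\]
Replacing $\sigma'(w^\top x)$ by $\sigma'(\wsig^\top z)=\indic\{\mu^\top z>0\}$ gives exactly $a\,\expec_z \ell'_{\rho,\mathrm{id}}(z)\,\sigma(\mu^\top z)=a\,g_\mu$.

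\textbf{Step 2 (error term, $w$-gradient).} The two indicators differ only on the event
\[
E=\bigl\{\,|\mu^\top z|\le |(\wopp+\wpe)^\top x|/\|\wsig\|\,\bigr\}.
\]
Since $\xi$ is independent of $z$ and $\wpe^\top\xi\sim\calN(0,\|\wpe\|^2)$, conditionally on $z$ the event $E$ amounts to $|\mu^\top z|\lesssim \zeta'(|\mu'^\top z|+|G|)$, where $\mu'\perp\mu$ and $G$ is standard Gaussian. The error is therefore at most
\[
|a|\,\expec\bigl[|\ell'_{\rho,\mathrm{id}}(z)|\,|\mu^\top z|\,\indic_E\bigr],
\]
and the weight $|\mu^\top z|\le \zeta'(\cdots)$ already supplies one factor of $\zeta'$. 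The remaining task is to compare this thin-band integral with $g_\mu$.

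\textbf{Step 3 (the hard part: relative comparison with $g_\mu$).} A plain Cauchy--Schwarz bound yields an error of order $\sqrt{\zeta'}$ in absolute terms. That is useless once $g_\mu\asymp N^{-3}$ is small, which is exactly the regime of Lemma~\ref{lem:pop2_grad_main}. I would work in polar coordinates $z=r(\cos\phi,\sin\phi)$ relative to $(\mu_1,\mu_2)$. There $f_{\rho,\mathrm{id}}(z)=N\,r\,h(\phi)$ with $h$ explicit and piecewise linear in $(\cos\phi,\sin\phi)$, so $|\ell'_{\rho,\mathrm{id}}|\asymp e^{-N r|h(\phi)|}$ on the correctly classified side. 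Both $g_\mu$ and the band integral are then integrals of $r^2e^{-r^2/2}e^{-Nr|h(\phi)|}$ against angular weights. The mass of $g_\mu$ sits near the zeros of $h$ (the XOR axes, $\phi=\pm\pi/4$ relative to $\mu$), where $h$ vanishes linearly and the angular weight $\cos\phi$ is of order one. The band $E$ instead sits near $\phi=\pi/2$, where $|h|$ is bounded below. Hence the band integral is at most $O(\zeta')\,\expec[r^2e^{-cNr}]\lesssim \zeta' N^{-3}\lesssim \zeta' g_\mu$, using the lower bound in Lemma~\ref{lem:pop2_grad_main}. The $|G|$ part of $E$ is handled in the same way after integrating $G$ out. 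Altogether the relative error is $O(\zeta')\le\zeta^{0.25}$. This angular localisation is where I expect the real work: one needs uniform constants over $N$ in the range up to $\log\log d$, and must handle the region $r\lesssim 1/N$ where no exponential decay is available.

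\textbf{Step 4 (the $a$-gradient).} Here
\[
\ncl_a L_{\rho,\mathrm{id}}=\expec\bigl[\ell'_{\rho,\mathrm{id}}(z)\,\sigma(w^\top x)\bigr],
\]
and we compare $\sigma(w^\top x)$ with $\|\wsig\|\sigma(\mu^\top z)$. By the $1$-Lipschitz property of $\sigma$, the difference is at most $|(\wopp+\wpe)^\top x|\le \zeta'\|\wsig\|(|\mu'^\top z|+|G|)$. The error is therefore at most $\zeta'\|\wsig\|\,\expec|\ell'_{\rho,\mathrm{id}}(z)|(\|z\|+1)$. The same polar computation bounds this by $\zeta'\|\wsig\|\,O(g_\rho)$, and shows $g_\rho\lesssim N\,g_\mu$: both are dominated by the boundary strip, with $g_\rho\asymp N^{-2}$ against $g_\mu\asymp N^{-3}$. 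Since $N\le \log\log d$ and $\zeta'$ is polylogarithmically small, this gives the factor $(1\pm\zeta^{0.25})$. Multiplying by $-y$ and using the sign convention from the first paragraph yields $-y\,\ncl_a L_{\rho,\mathrm{id}}=(1\pm\zeta^{0.25})\|\wsig\|\,g_\mu$.
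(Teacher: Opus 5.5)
Your argument does reach the statement, but by a more laborious route than the paper's, and your reason for abandoning the simple route is mistaken. The paper does exactly what you dismiss in Step~3. It bounds the replacement error in absolute terms by Cauchy--Schwarz: $\sqrt{\expec(\ell'_{\rho,id})^2}$ times a band term of size $\zeta^{1.5}$, giving $\lesssim\zeta^{0.75}g_{2\rho}^{0.5}$. It then compares this with the lower bound $g_\mu\ge c_1N^{-3}$ of Lemma~\ref{lem:laplace}. This bound is not useless. Phase~II only runs until $N^{(t)}\asymp\log\log d$, while $\zeta=\log^{-c}d$, so any polylogarithmically small absolute error is automatically $\le\zeta^{0.25}(\log\log d)^{-3}\lesssim\zeta^{0.25}g_\mu$. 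This is the same slack you invoke yourself at the end of Step~4. What your angular localisation does buy is an $N$-uniform relative bound for the $\wopp$-part of the band. Write $X_1=\mu^\top z$, $X_2=\mu'^\top z$. On $\{|X_1|\le 2\zeta'|X_2|\}$ one has $\bigl||X_1|-|X_2|\bigr|\ge |X_2|/2$, hence the error there is $\lesssim|a|\zeta'^2\expec[X_2^2e^{-N|X_2|/2}]\lesssim|a|\zeta'^2(1\wedge N^{-3})$, which is $O(\zeta'^2)|a|g_\mu$ for every $N$. Your write-up also keeps the $\wpe^\top\xi$ term in the pre-activation, which the paper's proof drops: it writes $\sigma'(w^\top z)$ and uses only the event $\{|\wopp^\top z|\ge|\wsig^\top z|\}$.

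Two intermediate claims need fixing, though neither breaks the argument.

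\textbf{The $|G|$-part of $E$.} It is not handled ``in the same way''. The strip $\{|X_1|\le 2\zeta'|G|\}$ passes through the origin and crosses the XOR axes, where $h=0$ and $|\ell'_{\rho,id}|\asymp 1$, so there is no exponential factor to exploit. The correct estimate uses that the density of $|X_2|$ is bounded:
$\expec\bigl[|\ell'_{\rho,id}|\,|X_1|\,\indic\{|X_1|\le 2\zeta'|G|\}\bigr]\lesssim\zeta'^2\sup_{x_1}\expec_{X_2}|\ell'_{\rho,id}|\lesssim\zeta'^2(1\wedge N^{-1})$.
Against $g_\mu\gtrsim N^{-3}$ this is a relative error $\lesssim\zeta'^2N^2$. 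That is harmless only because $N\le\log\log d$, so the $N$-uniformity you were aiming for is lost here anyway.

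\textbf{The order of $g_\rho$ in Step~4.} The claim $g_\rho\asymp N^{-2}$ is wrong. Lemma~\ref{lem:laplace} gives $c_1/N\le g_\rho\le c_2\log N/N$, since the strip $\bigl||X_1|-|X_2|\bigr|\lesssim 1/N$ has probability $\asymp 1/N$. So $g_\rho\lesssim N g_\mu$ does not follow from $g_\mu\gtrsim N^{-3}$. What you do get is a relative error $\lesssim\zeta' g_\rho/g_\mu\lesssim\zeta' N^2\log N$. This is still $\le\zeta^{0.25}$ for $N\le\log\log d$, so the conclusion stands.
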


\begin{lemma}\label{lem:laplace} There exist constants $0<c_1< c_2$ such that for all $\rho$ large enough, we have \[ \frac{c_1 }{\rho^3}\leq g_\mu \leq \frac{c_2 }{\rho^3}, ~~ \mbox{and}~~ \frac{c_1 }{\rho}\leq g_\rho \leq \frac{c_2 \log \rho }{\rho}.\]   
\end{lemma}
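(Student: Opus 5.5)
The plan is to reduce both quantities to explicit two-dimensional Laplace-type integrals. Write $\rho$ for the oracle scale $N^{(t)}$ and $s(u):=2/(1+e^{u})$. Use the polar parametrization already used in the proof of Lemma~\ref{lemma:pop1}, taken in the rotated frame $(\mu_1,\mu_2)$: $\mu_1^\top z=r\cos\varphi$, $\mu_2^\top z=r\sin\varphi$, with $r$ Rayleigh and $\varphi$ uniform and independent of $r$. The oracle network becomes
\[
f_{\rho,\mathrm{id}}(z)=\rho\, r\,(|\cos\varphi|-|\sin\varphi|).
\]
Its sign coincides with $y$, so the oracle margin is $y f_{\rho,\mathrm{id}}(z)=\rho\, r\, h(\varphi)$ with $h(\varphi)=\bigl||\cos\varphi|-|\sin\varphi|\bigr|$. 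Hence $|\ell'_{\rho,\mathrm{id}}(z)|=s(\rho r h(\varphi))$ and $\ell'_{\rho,\mathrm{id}}(z)=-y\,s(\rho r h(\varphi))$. By the $\pi/2$-rotation symmetry of Lemma~\ref{lem:block_rot}, every $g_\mu$ with $\mu\in\{\pm\mu_1,\pm\mu_2\}$ has the same absolute value, so it suffices to treat $\mu=\mu_1$. Then $\sigma(\mu_1^\top z)=r\cos\varphi\,\indic_{\cos\varphi>0}$, which restricts $\varphi$ to $(-\pi/2,\pi/2)$. On this range $h$ vanishes only at the boundary angles $\pm\pi/4$, and $h(\varphi)$ is bounded below by a constant away from them.

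\textbf{Step 1 (localization).} On the set $\{|\varphi\mp\pi/4|\ge\delta_0\}$ we have $s(\rho r h)\le 2e^{-c\rho r}$, so its contribution is controlled through $\expec[r e^{-c\rho r}]$. Integrating the Rayleigh density near the origin, which is the same computation as Lemma~\ref{lem:folded_gauss}, this is $O(\rho^{-3})$. The off-boundary part is therefore already of the claimed order, which is why I expect the exponent $3$ in the statement.

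\textbf{Step 2 (boundary layer).} Near $\varphi=\pi/4$, set $\varphi=\pi/4+\delta$. One has the exact identities $h=\sqrt2|\sin\delta|$, $y=\sgn(-\delta)$ and $\cos\varphi=(\cos\delta-\sin\delta)/\sqrt2$. The even part $\cos\delta$ cancels between the two sides $y=\pm1$, and the surviving integrand is proportional to
\[
r^2 e^{-r^2/2}\,|\sin\delta|\, s\bigl(\sqrt2\rho r|\sin\delta|\bigr).
\]
Substituting $u=\rho r|\sin\delta|$ pulls out explicit powers of $\rho r$. I will carry out this bookkeeping exactly, splitting $r\lessgtr\rho^{-1}$, and pair the layers at $\pm\pi/4$ and the two half-planes. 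The claimed $c_1\rho^{-3}\le g_\mu\le c_2\rho^{-3}$ requires the leading $\rho^{-2}$-order terms to cancel across these pairings, leaving an integrand of order $|\sin\delta|^2$ or $r^{-1}$ in the surviving piece.

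\textbf{This is the main obstacle.} A naive count of the single-layer integral gives $\rho^{-2}\expec[r^{-1}]$, so the whole lemma hinges on verifying this second cancellation. If it does not hold with $\rho=N^{(t)}$, the stated exponent can only hold under a different normalization of $\rho$. I would settle this before anything else, by an exact evaluation of the $\delta$-integral using $\int_0^\infty u\, s(u)\,du=\pi^2/6$ and matching the upper and lower bounds term by term. For the lower bound I restrict to $r\in[1,2]$ and $|\sin\delta|\le 1/\rho$, where $s\ge 1/2$.

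\textbf{Step 3 ($g_\rho$).} Here there is no sign cancellation, since
\[
g_\rho=\expec\, s(\rho r h(\varphi)).
\]
For the lower bound, restrict to $\{h\le 1/\rho,\ r\in[1,2]\}$. This set has probability $\asymp\rho^{-1}$ because $h$ vanishes linearly at the boundary, and $s\ge 1/2$ on it, so $g_\rho\ge c_1\rho^{-1}$. For the upper bound, integrate in $h$ first: $\int s(\rho r h)\,dh\lesssim \min(1,(\rho r)^{-1})$. Then take the expectation over $r$. The region $r\ge\rho^{-1}$ contributes $\rho^{-1}\expec[r^{-1}\indic_{r\ge \rho^{-1}}]$. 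The truncation at small $r$, together with the slowly decaying factor of $s$, is where the $\log\rho$ loss enters, giving $g_\rho\le c_2\rho^{-1}\log\rho$.
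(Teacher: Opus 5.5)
\textbf{The gap is in Step~2, and it cannot be closed.} The ``second cancellation'' you need does not occur. After the odd $\cos\delta$ part drops out, the surviving integrand near $\varphi=\pi/4$ is $-\tfrac{r}{\sqrt2}|\sin\delta|\,s(\sqrt2\rho r|\sin\delta|)$ times the Rayleigh density. Now redo the computation at $\varphi=-\pi/4+\delta$, where $y=\sgn(\delta)$ and $\cos\varphi=(\cos\delta+\sin\delta)/\sqrt2$. It gives the identical expression with the same sign. The half-plane $\cos\varphi<0$ contributes nothing, so the two layers add rather than cancel.

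You can see this globally with an exact identity. The reflection $X_1\mapsto -X_1$ preserves the law of $z$ and fixes both $y$ and $D:=\bigl||X_1|-|X_2|\bigr|$. Exchangeability of $A=|X_1|$ and $B=|X_2|$ then gives, with your $s(u)=2/(1+e^u)$,
\[
g_{\mu_1}=-\tfrac12\,\expec\bigl[\sgn(A-B)\,s(\rho D)\,A\bigr]=-\tfrac14\,\expec\bigl[D\,s(\rho D)\bigr].
\]
Here $D$ has a bounded continuous density equal to $2/\sqrt{\pi}$ at $0$, and $e^{-u}\le s(u)\le 2e^{-u}$ for $u\ge 0$. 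So this quantity is of exact order $\rho^{-2}$. Using your $\int_0^\infty u\,s(u)\,du=\pi^2/6$, one gets $|g_\mu|=\tfrac{\pi^{3/2}}{12}\,\rho^{-2}(1+o(1))$. (The paper treats $g_\mu$ as positive; read everything in absolute value.)

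Your own lower-bound region, $r\in[1,2]$ and $|\sin\delta|\le 1/\rho$, already shows $|g_\mu|\gtrsim\rho^{-2}$, precisely because the integrand is one-signed. So with $\rho=N^{(t)}$ the upper bound $g_\mu\le c_2\rho^{-3}$ is false, and no bookkeeping will finish your proposal as a proof of the statement. Only the weaker lower bound $g_\mu\ge c_1\rho^{-3}$ and the two $g_\rho$ bounds are provable.

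\textbf{Comparison with the paper's proof.} The paper reaches $\rho^{-3}$ by making the same first cancellation in polar coordinates (the factor $2\sin\theta$ next to the axis $\theta=0$). It then bounds only on the sector $\theta\in(\pi/8,\pi/4)$, which stays away from the decision boundary, and reads both bounds off integrals of the type $\expec_r[r\,e^{-c\rho r}]$. That covers only your Step~1 region. It omits the layer $\theta\lesssim(\rho r)^{-1}$, where $|\ell'_{\rho,\mathrm{id}}|$ is of order one and which produces the $\rho^{-2}$ term. So your suspicion identifies an error in the stated exponent, not a missing trick; the correct statement is $|g_\mu|\asymp\rho^{-2}$.

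\textbf{Step~3 ($g_\rho$) is sound.} It follows the paper's idea: a set of probability $\asymp\rho^{-1}$ near the boundary for the lower bound, and a split near the boundary for the upper bound. The paper's split at $D\le\log\rho/\rho$ is where its logarithm comes from. Two small corrections:
\begin{itemize}
\item On your lower-bound sets, $s$ is bounded below by an absolute constant (about $0.1$), not by $1/2$.
\item Since $\expec[r^{-1}]=\sqrt{\pi/2}<\infty$ under the Rayleigh law, your upper-bound argument gives $g_\rho\lesssim\rho^{-1}$ with no $\log\rho$ loss at all.
\end{itemize}
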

\begin{proof}[Proof of Lemma \ref{lem:laplace}]
    To control $g_\rho$, it is sufficient to control \[ \expec_z e^{-yN^{(t)}(\sigma(\mu_1^\top z )+\sigma(-\mu_1^\top z )-\sigma(\mu_2^\top z )-\sigma(-\mu_2^\top z ))}.\] It is easy to check that by definition \[ y(\sigma(\mu_1^\top z )+\sigma(-\mu_1^\top z )-\sigma(\mu_2^\top z )-\sigma(-\mu_2^\top z )) = \abs{|X_1|-|X_2|}\] where $X_1=\mu_1^\top z$ and $X_2=\mu_2^\top z$. Since $X_1$ and $X_2$ are independent standard Gaussian r.v., we obtain \begin{align*}
         \expec_z e^{-yN^{(t)}(\sigma(\mu_1^\top z )+\sigma(-\mu_1^\top z )-\sigma(\mu_2^\top z )-\sigma(-\mu_2^\top z ))} &= \expec_{X_1, X_2}e^{-N^{(t)}\abs{|X_1|-|X_2|}}\\
         &\geq \expec_{X_1, X_2}e^{-N^{(t)}\abs{X_1-X_2}}\\
         &\geq \frac{c_1}{N^{(t)}},
    \end{align*} for some constant $c_1\in (0,1)$ by Lemma \ref{lem:folded_gauss}.

    For the upper-bound, notice that \[ \prob(\abs{|X_1|-|X_2|}\leq \frac{\log N^{(t)}}{N^{(t)}})=O(\frac{\log N^{(t)}}{N^{(t)}}),\] and \[ \expec_{X_1, X_2}e^{-N^{(t)}\frac{\log N^{(t)}}{N^{(t)}}}=\frac{1}{N^{(t)}}.\]
     Recall that by construction, $\ell'_{\rho, id}(z)$ is invariant by rotation of angle $\pi/4$. Using the fact, we have \begin{align*}
    g_\mu&= \expec_{z|y=1}|\ell'_{\rho, id}(z)|\sigma'(\mu^\top z)\mu^\top z-\expec_{z|y=-1}|\ell'_{\rho, id}(z)|\sigma'(\mu^\top z)\mu^\top z\\
    &=2\expec_r\expec_{\theta\in (-\pi/2,0)}r|\ell'_{\rho, id}(r,\theta)|(\cos \theta - \sin \theta) -  \expec_r\expec_{\theta\in  (0,\pi/4)}r|\ell'_{\rho, id}(r,\theta)|(\cos \theta - \sin \theta)\\
    &= 2\expec_r\expec_{\theta\in (0,\pi/2)}r|\ell'_{\rho, id}(r,\theta)|( \cos \theta +\sin \theta) - \expec_r\expec_{\theta\in (0,\pi/2)}r|\ell'_{\rho, id}(r,\theta)||\cos \theta - \sin \theta|\\
    &= 2\expec_r\expec_{\theta\in (0,\pi/4)}r|\ell'_{\rho, id}(r,\theta)|2\sin \theta +\expec_r\expec_{\theta\in (\pi/4,\pi/2)}r|\ell'_{\rho, id}(r,\theta)|2\cos\theta.
\end{align*}
Since we have \[ \expec_{\theta\in (0,\pi/4)}|\ell'_{\rho, id}(r,\theta)|2\sin \theta \geq 0.6 \expec_{\theta\in (\pi/8,\pi/4)}|\ell'_{\rho, id}(r,\theta)|,\] it is easy to check that $\expec_{\theta\in (\pi/8,\pi/4)}|\ell'_{\rho, id}(r,\theta)| \geq 0.5 e^{-0.3N^{(t)}r}$ holds. It remains to control integrals of the form $\expec_r re^{-cr}$ where $r$ has density $f(x)=x^2e^{-x^2/2}\indic_{x\geq 0}$. By using the change of variable $u=cx$ and using dominated convergence, one can easily check that \[\frac{1}{c^3} \lesssim \expec_r re^{-cr} \lesssim \frac{1}{c^3}.\]
\end{proof}

\begin{proof}[Proof of Lemma \ref{lem:pop2_grad}]
    W.l.o.g. we can assume that $\mu=\mu_1$. First, we are going to show that $\expec_x\ell'_{\rho, id}(z)\sigma'(w^\top z)\mu^\top z \approx \expec_x \ell'_{\rho, id}(z)\sigma'(\wsig^\top z)\mu^\top z$.

    We have \begin{align*}
        \abs{\expec_z\ell'_{\rho, id}(z)\sigma'(w^\top z)\mu^\top z - \expec_z \ell'_{\rho, id}(z)\sigma'(\wsig^\top z)\mu^\top z} &\leq  \expec_z\ell'_{\rho, id}(z)\indic_{|\wopp ^\top z|\geq |\wsig^\top z|}|\mu^\top z|\\
        &\leq \sqrt{ \expec_z (\ell'_{\rho, id}(z))^2}\sqrt{\expec_z  \indic_{\lbrace |\wopp^\top z| \geq |\wsig^\top z|\rbrace}|\mu^\top z|} \tag{by Cauchy-Schwartz}.
    \end{align*}
    Notice that $\wopp^\top z$ and $\wsig ^\top z$ are orthogonal Gaussian r.v. so we obtain\begin{align*}
        \expec_z  \indic_{\lbrace |\wopp^\top z| \geq |\wsig^\top z|\rbrace}|\mu^\top z| &= \expec_{G\sim \calN(0,1)}|G|\prob_{G'\perp G }(|G'|\geq \frac{\norm{\wsig}}{\norm{\wopp}}|G|)\\
        &\leq \sqrt{\frac{2}{\pi}}\frac{\norm{\wopp}}{\norm{\wsig}}\expec_G e^{-\frac{\norm{\wsig}^2G^2}{2\norm{\wopp}^2}} \tag{since $\prob(|G|\geq t)\leq \frac{1\sqrt{2}}{t\sqrt{\pi}}e^{-t^2/2}$}\\
        &\leq\sqrt{\frac{2}{\pi}}\frac{\norm{\wopp}}{\norm{\wsig}} \frac{1}{\sqrt{1+\frac{\norm{\wsig}^2}{\norm{\wopp}^2}}}\tag{because $\expec e^{-tG^2}=\frac{1}{\sqrt{1+2t}}$}\\
        &\leq \sqrt{\frac{2}{\pi}}\left(\frac{\norm{\wopp}}{\norm{\wsig}}\right)^{1.5}\\
        &\leq \sqrt{\frac{2}{\pi}}\zeta^{1.5} \tag{$(w,a)\in \calS$ }.
    \end{align*}
    Also notice that \[  \expec_z (\ell'_{\rho, id}(z))^2 \leq \expec_z e^{-y f_{2\rho, id}}.\] So, we have \[  \abs{\expec_z\ell'_{\rho, id}(z)\sigma'(w^\top z)\mu^\top z - \expec_z \ell'_{\rho, id}(z)\sigma'(\wsig^\top z)\mu^\top z}\leq \left( \frac{2}{\pi}\right)^{0.25}\zeta^{0.75}g_{2\rho}^{0.5}.\]

By Lemma \ref{lem:laplace} and the choice of $\zeta$ we have $\zeta^{0.75}g_{2\rho}^{0.5}\leq \zeta^{0.25}g_\mu$ for $\rho \lesssim \log \log d$.
In conclusion, we have shown that \[   -|a|g_{\mu}(1+\zeta^{0.25})\geq \mu^\top \ncl_w L_{\rho, id} \geq  -|a|g_{\mu}(1-\zeta^{0.25}).\]
The same calculation leads to the second result of the lemma.
\end{proof}

\subsection{Proof of Lemma \ref{lem:inductive2} and Lemma \ref{lem:inductive3}}\label{app:proof_inductive_lem}
We will first focus on the dynamic of non-heavy neurons and show that they won't grow faster than heavy neurons. Then, we will study the dynamic of heavy neurons in more detail.

\subsubsection{Non-Heavy Neurons Dynamics}
\begin{lemma}\label{lem:non_heavy}For any neuron $(w,a)\not \in \calS$ we have \[ \abs{\ncl_a L}= \abs{w^\top \ncl_w L}\leq \norm{w}g_\mu.\]  
\end{lemma}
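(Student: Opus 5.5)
The equality $\abs{\ncl_a L}=\abs{w^\top \ncl_w L}$ is immediate. Since $\sigma(u)=\sigma'(u)u$ almost everywhere for ReLU,
\[
w^\top \ncl_w L = a\,\expec_x \ell'_\rho(z)\sigma'(w^\top x)\,w^\top x = a\,\expec_x \ell'_\rho(z)\sigma(w^\top x) = a\,\ncl_a L .
\]
This is the clean-gradient analogue of \eqref{eq:id_aw}. The identity carries an extra factor $a$, so the stated equality holds up to $|a|$; with $|a|\le\norm{w}$ from layer balance, it suffices to bound $\abs{\ncl_a L}$. Following the paper's convention, I will work with the oracle loss derivative $\ell'_{\rho,\mathrm{id}}(z)$ and absorb the discrepancy via Lemma~\ref{claim:approxerror}.

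\textbf{Step 1: reduce to the plane.} Write
\[
\ncl_a L=\expec_z\bigl[\ell'_{\rho,\mathrm{id}}(z)\,\expec_\xi\sigma(w_{1:2}^\top z+\wpe^\top\xi)\bigr].
\]
Define $h(s):=\expec_\xi\sigma(s+\wpe^\top\xi)$, a Gaussian smoothing of ReLU. The function $h$ is convex, $1$-Lipschitz and increasing, and satisfies $h(s)-h(-s)=s$. Conditioning on $\xi$ therefore reduces the problem to bounding $\abs{\expec_z \ell'_{\rho,\mathrm{id}}(z)\,h(v^\top z)}$ with $v=w_{1:2}$.

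\textbf{Step 2: symmetry of the oracle.} The function $\ell'_{\rho,\mathrm{id}}(z)=-y(z)\,\phi(\abs{|X_1|-|X_2|})$, where $X_i=\mu_i^\top z$ and $\phi$ is positive and decreasing. It is odd under the rotation by $\pi/2$ (which swaps $X_1\leftrightarrow X_2$ and flips $y$), even under $z\mapsto -z$, and invariant under the reflections $X_i\mapsto -X_i$. Hence $\expec_z \ell'_{\rho,\mathrm{id}}(z)\,k(z)$ vanishes for every $k$ that is even in $z$ and rotation invariant.

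I will decompose $v=\alpha\mu_1+\beta\mu_2$ and compare $h(v^\top z)$ with $h(\alpha X_1)$ and $h(\beta X_2)$. By the reflection symmetries, only the parts of $h(\alpha X_1+\beta X_2)$ that are even in each of $X_1,X_2$ survive.

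\textbf{Step 3: main bound (the hard step).} I want
\[
\abs{\expec_z \ell'_{\rho,\mathrm{id}}\, h(\alpha X_1+\beta X_2)}\le \max(|\alpha|,|\beta|)\,g_\mu \le \norm{w}\,g_\mu .
\]
Since $h$ is $1$-Lipschitz and convex, the even part $\tfrac12[h(u)+h(-u)]$ is a $1$-Lipschitz convex even function of $u$ and can be written as a mixture of $|u|$ and a constant. The constant is annihilated by $\expec\ell'_{\rho,\mathrm{id}}=0$. This reduces the task to bounding $\expec_z \ell'_{\rho,\mathrm{id}}(z)\,\abs{\alpha X_1+\beta X_2}$. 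The extreme cases are $\beta=0$, where the value is exactly $2|\alpha|g_{\mu_1}$ by the definition of $g_\mu$ and the $\pm$ symmetry, and $|\alpha|=|\beta|$, where the contribution cancels by the rotation antisymmetry.

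To interpolate between these cases, I plan to use polar coordinates and the explicit angular computation in the proof of Lemma~\ref{lem:laplace}. The first thing to try is showing that $\theta\mapsto \expec_r\, r\,\ell'_{\rho,\mathrm{id}}(r,\theta)$ integrated against $\abs{\cos(\theta-\theta_v)}$ is maximised at $\theta_v\in\{\pm\mu_1,\pm\mu_2\}$. This should follow from the monotonicity of $\phi$ in the distance to the axes.

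Constant factors such as $2$ and the $1/2$ from symmetrization will need to be tracked against the definition of $g_\mu$. If the sharp constant $1$ fails, I would settle for $O(\norm{w} g_\mu)$, which suffices for the non-heavy neurons' growth comparison.
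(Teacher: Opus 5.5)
The gap is in Step~3. You are right that $w^\top\ncl_w L=a\,\ncl_a L$, so the displayed equality holds only up to the factor $|a|$, and the real content is the bound on $|\ncl_a L|$.

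\textbf{The false reduction.} You assert that a $1$-Lipschitz convex even function is a mixture of $|u|$ and a constant. This is false, and it fails for exactly the function you need. The even part is
\[
\tfrac12\bigl(h(u)+h(-u)\bigr)=\tfrac12\,\expec_\xi|u+\wpe^\top\xi|=\tfrac12\,\expec_\xi\max\bigl(|u|,|\wpe^\top\xi|\bigr).
\]
Whenever $\wpe\neq0$ this is smooth with a quadratic minimum at $u=0$, so it cannot have the form $c_0+c_1|u|$. The correct representation is a mixture of hinges $(|u|-t)_+$ over thresholds $t\ge0$.

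Your reduction to $\expec_z\ell'_{\rho,\mathrm{id}}(z)\,|\alpha X_1+\beta X_2|$ therefore misses every term with $t>0$. For those terms the polar-coordinate plan also breaks down. The function $(r\norm{v}|\cos(\theta-\theta_v)|-t)_+$ is not $1$-homogeneous in $r$, so the radial and angular integrals no longer separate. Finally, even the $t=0$ extremal claim is only conjectured (``should follow from the monotonicity of $\phi$''). The core inequality is never established.

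\textbf{The missing idea.} Use the swap antisymmetry together with Lipschitz continuity, with no extremal problem at all. This is essentially what the paper does: it pairs each $z$ with its rotation by $\pi/2$ (which flips the label), writes $\ncl_a L$ as half the expectation of $\ell'_{\rho,\mathrm{id}}$ against the difference of the two ReLU terms, and bounds that difference after integrating out $\xi$.

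Concretely, let $\bar h$ be the average of $h(\alpha X_1+\beta X_2)$ over the sign flips $X_i\mapsto\pm X_i$. Then
\[
\bar h(x_1,x_2)=\tfrac12\bigl[e(\alpha x_1+\beta x_2)+e(\alpha x_1-\beta x_2)\bigr],\qquad e(u)=\tfrac12\expec_\xi|u+\wpe^\top\xi|,
\]
and $e$ is $\tfrac12$-Lipschitz. As a function of $(X_1,X_2)$, $\ell'_{\rho,\mathrm{id}}$ is invariant under the flips and changes sign under $X_1\leftrightarrow X_2$, so
\[
\expec_z \ell'_{\rho,\mathrm{id}}(z)\,h(v^\top z)=\tfrac12\,\expec_z \ell'_{\rho,\mathrm{id}}(z)\bigl[\bar h(X_1,X_2)-\bar h(X_2,X_1)\bigr].
\]
For $x_1,x_2\ge0$ the bracket is at most $\tfrac14\bigl(|\alpha-\beta|+|\alpha+\beta|\bigr)|x_1-x_2|=\tfrac12\max(|\alpha|,|\beta|)\,\bigl||x_1|-|x_2|\bigr|$ in absolute value. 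Since $\bar h$ depends only on $(|x_1|,|x_2|)$, this extends to all signs.

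On the other side, $\ell'_{\rho,\mathrm{id}}$ has sign $-\sgn(|X_1|-|X_2|)$. The same swap symmetry then gives
\[
g_{\mu_1}=\tfrac14\expec_z\ell'_{\rho,\mathrm{id}}(z)\bigl(|X_1|-|X_2|\bigr)=-\tfrac14\expec_z|\ell'_{\rho,\mathrm{id}}(z)|\,\bigl||X_1|-|X_2|\bigr|,
\]
so $g_\mu$ must be read in absolute value. Combining the two bounds yields $|\ncl_a L_{\rho,\mathrm{id}}|\le\max(|\alpha|,|\beta|)\,|g_\mu|\le\norm{w}\,|g_\mu|$. This holds with the sharp constant and for all thresholds at once, so neither the convex decomposition nor the angular maximisation is needed.
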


\begin{proof}
 W.l.og. we can assume that $\wsig$ is aligned with $\mu_1$. Recall that the distribution of $z|y=1$ is the same as $R(z)|y=-1$ where $R$ is a rotation of angle $-\pi/2$. Hence, we can write \[ g_{\mu_1} =\frac{1}{2} \expec_{z|y=1} \ell'_{\rho, id} (z)(\sigma(\mu_1^\top z)-\sigma(\mu_2^\top z))  \] since $\ell'_{\rho, id} (z)$ is invariant by rotation of angle $-\pi/2$. In particular, when conditioned on $y=1$, $\ell'_{\rho, id} (z)>0$. By using the same decomposition, we can write \begin{align*}
    \ncl_a L&= \expec_z \expec_\xi \ell'_{\rho, id} (z)\sigma(\norm{\wsig}\mu_1^\top z + \norm{\wopp}\mu_2^\top z + \wpe^\top \xi )\\
    &= \frac{1}{2}\expec_{z|y=1}  \ell'_{\rho, id} (z)\expec_\xi\left(\underbrace{\sigma(\norm{\wsig}\mu_1^\top z + \norm{\wopp}\mu_2^\top z + \wpe^\top \xi )-\sigma(\norm{\wsig}\mu_2^\top z - \norm{\wopp}\mu_1^\top z + \wpe^\top \xi )}_{I}\right)
\end{align*}
To simplify the notation, let us denote $a= \norm{\wsig}\mu_1^\top z + \norm{\wopp}\mu_2^\top z $, $b=\norm{\wsig}\mu_2^\top z - \norm{\wopp}\mu_1^\top z $ and $c= \wpe^\top \xi$.

\paragraph{Case $a\geq b$.} It is easy to check that when $c<-a$, $I=0$, when $c\geq -b $, $I= a-b$ and when $-a\leq c \leq -b $ we have $I= a+ c$. By integrating over $\xi$, we obtain \begin{align*}
    \expec_\xi I &= (a-b)\prob(\wpe^\top \xi \geq -b)+\expec_\xi (\wpe^\top \xi+a)\indic_{-a\leq \wpe^\top \xi \leq -b }\\
    &\leq (a-b)\prob(\wpe^\top \xi \geq -b)+(a-b) \prob(-a\leq \wpe^\top \xi \leq -b )
    &\leq (a-b)\prob(-a\leq \wpe^\top \xi)\leq a-b.
\end{align*} 

\paragraph{Case $a< b$.} When $c < -b$, $I=0$, when $c\geq -a$, $I=(a-b)$ and when $-b \leq c \leq -a$ we have $I=-b-c$. By integrating over $\xi$, we obtain \begin{align*}
    \expec_\xi I&= (a-b)\prob(\wpe^\top \xi \geq -a)-\expec_\xi (\wpe^\top \xi+b)\indic_{-b\leq \wpe^\top \xi \leq -a }\\
    &\leq (a-b)\prob(\wpe^\top \xi \geq -a).
\end{align*}
By consequence $\expec_\xi I \leq (a-b)$. But \[ a-b = \mu_1^\top z(\norm{\wsig}+\norm{\wopp})+\mu_2^\top z(\norm{\wopp}-\norm{\wsig}) \leq (\norm{\wsig}+\norm{\wopp})(\sigma(\mu_1^\top z)+\sigma(\mu_2^\top z)).\]
 Since $g_\mu\geq 0$ we have $\expec_{z|y=1}\ell'_{\rho, id} (z)\sigma(\mu_2^\top z) \leq \expec_{z|y=1}\ell'_{\rho, id} (z)\sigma(\mu_1^\top z)$. By consequence, we obtain \[ \ncl_a L\leq \frac{(\norm{\wsig}+\norm{\wopp})}{2}g_{\mu_1}\leq \norm{w}g_{\mu_1}. \] One can derive a similar lower bound with the same argument. 
\end{proof}
By using the same argument as in Lemma E.11 in \cite{glasgow2023sgd} we obtain the following corollary.
\begin{corollary}\label{cor:ng}
    For every neuron, w.h.p. \[ \norm{w^{(t+1)}}^2\leq \norm{w^{(t)}}^2(1+2\eta(1+2\zeta H) g_\mu).\]
\end{corollary}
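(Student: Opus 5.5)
The statement to prove is Corollary~\ref{cor:ng}: every neuron satisfies $\|w^{(t+1)}\|^2\le\|w^{(t)}\|^2(1+2\eta(1+2\zeta H)g_\mu)$ with high probability. The plan is to expand one SGD step exactly and bound each term with tools already available, following Lemma E.11 of \cite{glasgow2023sgd}.

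\textbf{Step 1 (exact expansion).} Write
\[
\|w^{(t+1)}\|^2=\|w^{(t)}\|^2-2\eta\,w^\top\nabla_w\hat L_\rho+\eta^2\|\nabla_w\hat L_\rho\|^2 .
\]
By the ReLU identity \eqref{eq:id_aw}, $w^\top\nabla_w\hat L_\rho=a\nabla_a\hat L_\rho$. So the first-order term is governed by the scalar $-a\nabla_a\hat L_\rho$.

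\textbf{Step 2 (reduce the empirical gradient to the oracle gradient).} Split
\[
\nabla_a\hat L_\rho=\ncl_aL_{\rho,\mathrm{id}}+(\ncl_aL_\rho-\ncl_aL_{\rho,\mathrm{id}})+(\nabla_aL_\rho-\ncl_aL_\rho)+(\nabla_a\hat L_\rho-\nabla_aL_\rho).
\]
The three error terms are bounded as follows.
\begin{itemize}
\item The oracle term: Lemma~\ref{claim:approxerror} gives at most $4\|w\|N^{(t)}U^{(t)}$.
\item The clean-gradient term: Lemma~\ref{claim:approxerror} gives at most $\|w\|\expec_\rho\|a\wpe\|$.
\item The sampling term: Lemma~\ref{lem:conc_grad} gives at most $C\|w\|\sqrt{\log d/\batch}$.
\end{itemize}

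\textbf{Step 3 (main term).} For the oracle term, use $|a|\le\|w\|$ (signal-heavy, item 3).
\begin{itemize}
\item For $(w,a)\notin\calS$, Lemma~\ref{lem:non_heavy} gives $|a\,\ncl_aL_{\rho,\mathrm{id}}|\le\|w\|^2g_\mu$.
\item For heavy neurons, Lemma~\ref{lem:pop2_grad} gives $|\ncl_aL_{\rho,\mathrm{id}}|\le(1+\zeta^{0.25})\|\wsig\|g_\mu\le(1+\zeta^{0.25})\|w\|g_\mu$.
\end{itemize}
Either way, $-2\eta a\nabla_a\hat L_\rho\le 2\eta\|w\|^2g_\mu(1+o(1))$ plus the error contributions.

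\textbf{Step 4 (absorb errors).} Each error term must be shown to be $\lesssim \zeta H\,g_\mu$, up to the factor $\|w\|$. This step is the main obstacle, because $g_\mu$ can be as small as $(N^{(t)})^{-3}$ by Lemma~\ref{lem:laplace}.
\begin{itemize}
\item For the signal-heavy noise term, I would use $\expec_\rho\|a\wpe\|\le\zeta'\expec_\rho\|aw\|\lesssim\zeta' H$. The bound $\expec_\rho\|aw\|\lesssim H$ follows from item 3 of signal-heaviness, which gives $\expec_\rho\|w\|^2\le\expec_\rho a^2+\zeta'H$.
\item The $U^{(t)}$ term is handled by Lemma~\ref{lem:ineq_U}, which makes $U^{(t)}$ polylogarithmically small.
\item The sampling term is handled by $\batch\ge d/\theta$.
\end{itemize}
Together these give errors $O((\log d)^{-c})$. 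Since $N^{(t)}\lesssim\log\log d$ throughout Phase~II, we have $g_\mu\gtrsim(\log\log d)^{-3}$. Choosing the constants $c_U$ and $c$ large then makes all errors at most $2\zeta H g_\mu$.

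\textbf{Step 5 (second-order term).} By Lemma~\ref{lemma:layer_balance}, $\eta^2\|\nabla_w\hat L_\rho\|^2\lesssim\eta^2a^2\le\eta^2\|w\|^2$. Since $\eta\asymp\theta$ is polylogarithmically small, this is $o(\eta g_\mu)\|w\|^2$ and is absorbed the same way. A union bound over neurons via Lemma~\ref{lem:conc_grad} gives the high-probability statement.
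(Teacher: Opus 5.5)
Your Steps~1, 3 (for $(w,a)\notin\calS$) and~5 follow the skeleton the paper intends. Its proof is only a pointer to Lemma~E.11 of \cite{glasgow2023sgd} combined with Lemma~\ref{lem:non_heavy}: expand $\|w^{(t+1)}\|^2$, use $w^\top\nabla_w\hat L_\rho=a\nabla_a\hat L_\rho$, and bound the oracle term by $|a|\,\|w\|\,g_\mu\le\|w\|^2g_\mu$.

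The gap is in Step~4. Your error bounds are additive, and they cannot fit in the slack $4\eta\zeta Hg_\mu\|w\|^2$ that the statement allows. Three errors fail to fit:
\begin{enumerate}
\item \emph{Clean-gradient error.} The $\zeta$ in the statement \emph{is} the signal-heaviness parameter. The error $2\eta|a|\,\|w\|\,\expec_\rho\|a\wpe\|$ is bounded by signal-heaviness only by a multiple of $\eta\|w\|^2\zeta N^{(t)}$. This is of the same order in $\zeta$ as the slack, so choosing $c$ larger shrinks both sides alike. What you actually need is $N^{(t)}\lesssim Hg_\mu$. That fails once the blocks grow, since $g_\mu\asymp (N^{(t)})^{-3}$ for $N^{(t)}\ge 1$ (Lemma~\ref{lem:pop2_grad_main}).
\item \emph{Your mass bound.} The shortcut $\expec_\rho\|aw\|\lesssim H$ does not follow from item~3, which is a balance condition, not a mass bound. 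In fact $\expec_\rho\|aw\|\asymp N^{(t)}$, which reaches order $\log\log d$. Even granting the shortcut, you would still need $g_\mu$ bounded below by a constant. The bound $g_\mu\gtrsim(\log\log d)^{-3}$ only beats errors polylogarithmically smaller than $\zeta$, and this one is not.
\item \emph{Oracle error.} Lemma~\ref{lem:ineq_U} holds only for $t\le T_1$. In Phase~II the available control is item~3 of Lemma~\ref{lem:phase2_heavy}, which allows $U^{(t)}$ of order $\zeta^{0.25}(\log\log d)^4$. So the error $4\|w\|N^{(t)}U^{(t)}$ is also far above $\zeta Hg_\mu\|w\|$.
\end{enumerate}
Step~3 has a related problem for heavy neurons: Lemma~\ref{lem:pop2_grad} gives the factor $1+\zeta^{0.25}$, already larger than $1+2\zeta H$. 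There you should reuse Lemma~\ref{lem:non_heavy}, whose proof never uses $(w,a)\notin\calS$.

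As written, your argument yields only $\|w^{(t+1)}\|^2\le\|w^{(t)}\|^2\bigl(1+2\eta(1+\epsilon_t)g_\mu\bigr)$, with $\epsilon_t$ of order $(\zeta N^{(t)}+N^{(t)}U^{(t)})/g_\mu$ plus negligible sampling and $\eta^2$ terms. This $\epsilon_t$ is $o(1)$ while $N^{(t)}\lesssim\log\log d$, but it is not $O(\zeta H)$. To reach the stated factor you need approximation errors that scale with $g_\mu$ itself, not the additive bounds of Lemma~\ref{claim:approxerror}. That is what the paper's appeal to Glasgow's argument is meant to supply, and your proposal has no such ingredient.
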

\subsubsection{Heavy Neurons Dynamics}

We develop a proof of several lemmata for the inductive properties, such as Lemma \ref{lem:inductive2} and Lemma \ref{lem:inductive3}.
\paragraph{Initialization.}
Let us define \begin{align*}
    \tilde{\calN}_1^+ & = \lbrace (w,a)\in \calN: a>0, w^\top \mu_1>\theta/\sqrt{d} \rbrace\\
    \tilde{\calN}_1^- & = \lbrace (w,a)\in \calN: a>0, w^\top \mu_1<-\theta/\sqrt{d} \rbrace\\
    \tilde{\calN}_2^+ & = \lbrace (w,a)\in \calN: a<0, w^\top \mu_2>\theta/\sqrt{d} \rbrace\\
    \tilde{\calN}_2^- & = \lbrace (w,a)\in \calN: a<0, w^\top \mu_2<-\theta/\sqrt{d} \rbrace.
\end{align*}
The result of Phase I shows that for any block $B\in \lbrace \tilde{\calN}_i^+, \tilde{\calN}_i^-\rbrace_{i=1}^2$ \[ \expec_\rho \indic_{(w,a)\in B}\norm{aw} \geq \zeta^{-2}\theta ^2\]
and all the properties of a signal-heavy network are satisfied.

\paragraph{Inductive step.}
Recall that $N^{(t)}$ corresponds to the mass of each block of the ideal oracle network. In order to establish Lemma \ref{lem:inductive2} and Lemma \ref{lem:inductive3} we will first prove the following lemma.
\begin{lemma}\label{lem:phase2_heavy}
    Assume that at time $t$, the network is $(\zeta, H)$ signal-heavy and $\eta \leq \zeta^3$. Then with probability $1-d^{-\Omega(1)}$ we have \begin{enumerate}
        \item $ N^{(t+1)} = (1+2\eta g_\mu^{(t)}(1\pm o(1)))N^{(t)}$,
        \item $\norm{\wpe^{(t+1)}}+\norm{\wopp^{(t+1)}}\leq \zeta(1+O(\eta \zeta^{0.25})) \norm{\wsig^{(t+1)}}$,
        \item $U^{(t+1)}\leq  (1+\eta g_\mu)U^{(t)}+5\eta \zeta ^{0.25}g_\mu$ and $U^{(T_1+t)}\leq 2U^{(T_1)}+t\eta \zeta^{0.25}$.
    \end{enumerate}
\end{lemma}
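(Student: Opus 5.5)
We prove the three items by decomposing each one-step SGD update of a heavy neuron into four pieces: the oracle clean gradient $\ncl L_{\rho,\mathrm{id}}$, the clean-versus-true correction, the oracle-versus-clean correction, and the sampling error. The first piece is controlled by Lemma~\ref{lem:pop2_grad}. The second and third pieces are controlled by Lemma~\ref{claim:approxerror}, which gives errors of order $|a|\expec_\rho\norm{a\wpe}$ and $|a|N^{(t)}U^{(t)}$. The sampling error is controlled by Lemma~\ref{lem:conc_grad} with $\batch\ge d/\theta$. Signal-heaviness gives $\expec_\rho\norm{a\wpe}\le\zeta\,\expec_\rho\norm{aw}\lesssim \zeta N^{(t)}$. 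By Lemma~\ref{lem:laplace}, $g_\mu\gtrsim (N^{(t)})^{-3}$ and $N^{(t)}\lesssim\log\log d$. Hence all three corrections are $o(g_\mu)$ relative to the main term, once we choose $\zeta$ polylogarithmically small and $U^{(t)}\le \zeta^{0.25}$.

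For item 1, fix a heavy neuron in block $\calN_1^+$ and write $s=\mu^\top w$. The update is
\[
s^{(t+1)}=s^{(t)}+\eta|a|g_\mu(1\pm o(1)),\qquad |a^{(t+1)}|=|a^{(t)}|+\eta\norm{\wsig}g_\mu(1\pm o(1)).
\]
Multiplying, and using $|a|=\norm{\wsig}(1\pm O(\zeta))$ from layer balance (item 3 of the signal-heavy definition together with Lemma~\ref{lemma:layer_balance}), we get
\[
|a^{(t+1)}|\norm{\wsig^{(t+1)}}=|a^{(t)}|\norm{\wsig^{(t)}}\bigl(1+2\eta g_\mu(1\pm o(1))\bigr)+O(\eta^2 g_\mu^2\norm{\wsig}^2).
\]
Averaging over the block gives the claim for the heavy part of each $N_i^{\pm}$. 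For non-heavy neurons we use Corollary~\ref{cor:ng}: their total mass is at most $\zeta N^{(t)}$ and grows at rate at most $2\eta(1+2\zeta H)g_\mu$, so it perturbs the rate only by $o(1)$. The symmetric structure of $\ell'_{\rho,\mathrm{id}}$ makes $g_\mu$ the same for all four $\mu$, so the same factor applies to every block and hence to the average $N^{(t)}$.

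For item 2, we track $\norm{\wpe}$ and $\norm{\wopp}$ of a heavy neuron. Projecting the oracle clean gradient onto $\wopp$ essentially gives zero, by the same conditioning argument used for $\wopp^\top\nabla L_0$ in Lemma~\ref{lemma:pop1b}. Projecting onto $\wpe$ is at most $|a|\,g_\mu\norm{\wpe}$ in magnitude, via a Cauchy--Schwarz bound as in the proof of Lemma~\ref{lem:pop2_grad}. The error terms contribute at most $\eta\zeta^{0.25}g_\mu\norm{\wsig}\cdot\zeta$. Meanwhile $\norm{\wsig}$ grows by the factor $1+\eta g_\mu(1\pm o(1))$. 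Therefore the ratio $(\norm{\wpe}+\norm{\wopp})/\norm{\wsig}$ increases by at most a factor $1+O(\eta\zeta^{0.25})$.

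For item 3, we compare the blockwise growth factors. Block $i,\pm$ grows by $1+2\eta g_\mu+\eta\,\epsilon_i$. Here $\epsilon_i$ collects the oracle-mismatch error, which is $\lesssim g_\mu U^{(t)}$ since $\ncl L_\rho$ and $\ncl L_{\rho,\mathrm{id}}$ differ by $N^{(t)}U^{(t)}$ and this is comparable to $g_\mu U^{(t)}$ in relative terms. It also collects the remaining errors, of order $\zeta^{0.25}g_\mu$. Taking ratios of two blocks yields the recursion $U^{(t+1)}\le(1+\eta g_\mu)U^{(t)}+5\eta\zeta^{0.25}g_\mu$. Unrolling it and using $\eta\sum_s g_\mu^{(s)}\lesssim \log N^{(T)}=O(\log\log\log d)$ (from item 1) gives the stated bound.

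The main obstacle is item 3's oracle-mismatch term. We must show that it is proportional to $g_\mu U^{(t)}$, not to $N^{(t)}U^{(t)}$. The plan is to bound it by $\expec_z\,|\ell''|\,|f_\rho-f_{\rho,\mathrm{id}}|\,\sigma(\mu^\top z)$, whose Laplace-type weight scales like $g_\mu$, as in Lemma~\ref{lem:laplace}.
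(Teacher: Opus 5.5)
Your items 1 and 2 follow the paper's decomposition: the oracle clean gradient via Lemma~\ref{lem:pop2_grad}, the corrections via Lemmas~\ref{claim:approxerror} and~\ref{lem:conc_grad}, the non-heavy mass via Corollary~\ref{cor:ng}, and symmetry killing the main $\wopp$/$\wpe$ terms. Item 3, however, has a genuine gap in two places.

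\textbf{The unrolling.} It cannot give the stated bound. With a per-step factor $1+\eta g_\mu$, item 1 gives $\prod_s(1+\eta g_\mu^{(s)})\approx\exp(\eta\sum_s g_\mu^{(s)})\approx (N^{(T)}/N^{(T_1)})^{1/2}$. Since $N^{(T)}\asymp\log\log d$ while $N^{(T_1)}$ is polylogarithmically small, this factor grows like a power of $\log d$. Your estimate $\eta\sum_s g_\mu^{(s)}\lesssim\log N^{(T)}=O(\log\log\log d)$ drops the $-\log N^{(T_1)}\asymp\log\log d$ contribution. Even granting it, a factor $(\log\log d)^{O(1)}$ is not $\le 2$. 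So $U^{(T_1+t)}\le 2U^{(T_1)}+t\eta\zeta^{0.25}$ is not a consequence of the first inequality of item 3.

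The paper instead asserts the stronger recursion $U^{(t+1)}\le(1+2\eta\zeta^{0.25}g_\mu^{(t)})U^{(t)}+5\eta\zeta^{0.25}g_\mu^{(t)}$, justified by analogy with the Phase I comparison to the exactly rotation-symmetric dynamics. There, only the $O(\zeta^{0.25}g_\mu)$ error terms feed the imbalance, and $(1+2\eta\zeta^{0.25})^{t}=1+o(1)$ for $t\le T_2$ yields the factor $2$.

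\textbf{The oracle-mismatch term.} This is the more fundamental problem: your plan cannot produce $g_\mu U^{(t)}$. Lemma~\ref{claim:approxerror} alone gives $|a|N^{(t)}U^{(t)}$, as you note, but your refinement only improves this to $|a|U^{(t)}$. The oracle weight $|\ell''|\asymp\exp\bigl(-N^{(t)}\bigl||\mu_1^\top z|-|\mu_2^\top z|\bigr|\bigr)$ is nonnegative and concentrated on a band of width $\asymp 1/N^{(t)}$ around the decision boundary. Hence $\expec_z|\ell''|\,\norm{z}\,\sigma(\mu^\top z)\gtrsim 1/N^{(t)}$, which is the scaling of $g_\rho$ in Lemma~\ref{lem:laplace}, not of $g_\mu\asymp (N^{(t)})^{-3}$. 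The extra $N^{-2}$ in $g_\mu$ comes from cancellation between the two sides of the boundary (the $\cos\theta+\sin\theta$ versus $|\cos\theta-\sin\theta|$ terms in that proof), and absolute values destroy it.

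Combined with $|f_\rho-f_{\rho,\mathrm{id}}|\lesssim N^{(t)}U^{(t)}\norm{z}$, this gives a mismatch of order $|a|U^{(t)}$. Linearizing $\ell'$ around $f_{\rho,\mathrm{id}}$ shows this is its true size, not slack in the bound. Fed into your block comparison, it yields $U^{(t+1)}\le(1+C\eta)U^{(t)}+\dots$. Over the $\asymp\eta^{-1}(\log\log d)^3$ steps in which $N$ grows from $O(1)$ to $\log\log d$ (because $(N^{(t)})^3$ grows linearly in $\eta t$ when $g_\mu\asymp N^{-3}$), this amplifies $U$ by $e^{C(\log\log d)^3}$, which swamps $U^{(T_1)}\le\log^{-c_U}d$.

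What controls this term is its sign, not its size. A block's growth is driven by $\expec_z\,\sgn(a)\,y\,|\ell'_\rho(z)|\,\sigma(\mu^\top z)$, and $|\ell'|$ is decreasing in $yf$. So excess mass in that block lowers this quantity on both label classes of its half-plane, while excess in the two adjacent blocks raises it. This restoring, Laplacian-type coupling does not increase $U$, leaving only the $O(\zeta^{0.25}g_\mu)$ error terms, which is what the paper's recursion encodes.
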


\begin{proof}
    W.l.o.g., one can assume that $\wsig$ is aligned with $\mu_1$.  
    \paragraph{Evolution of $N^{(t)}$.}
    For each heavy neuron, we have \begin{align*}
        a^{(t+1)}\norm{\wsig^{(t+1)}} &= (a^{(t)}-\eta\nabla_a \hat{L}_\rho)(\norm{\wsig^{(t)}}-\eta\mu_1^\top \nabla_w \hat{L}_\rho)\\
        &= (a^{(t)}-\eta\ncl_a L_{\rho,id})(\norm{\wsig^{(t)}}-\eta\mu_1^\top \nabla_w L_{\rho,id}) \\
        & \qquad +\underbrace{\eta (|a| \norm{ \nabla_w \hat{L}_\rho- \ncl_w L_{\rho,id}}+\norm{\wsig} \norm{ \nabla_a \hat{L}_\rho- \ncl_a L_{\rho,id}})}_{E_1}\\
        & \qquad +\underbrace{\eta^2 \norm{ \nabla_a \hat{L}_\rho- \ncl_a L_{\rho,id}} \norm{ \nabla_w \hat{L}_\rho- \ncl_w L_{\rho,id}}}_{E_2}.
    \end{align*} 
    By Lemma \ref{lem:pop2_grad} we have \[ \left(a^{(t)}-\eta\ncl_a L_{\rho,id}\right)\left(\norm{\wsig^{(t)}}-\eta\mu_1^\top \ncl_w L_{\rho,id}\right) = a^{(t)}\norm{\wsig^{(t)}}\left(1+2(1+o(1))\eta g_\mu\right)+O\left(\eta^2 a^2 \norm{\wsig^{(t)}}^2\right).\] 
    Lemma \ref{lem:conc_grad} and Lemma \ref{claim:approxerror} show that \[ E_1\leq \eta a^2\left((\log d^{-c})+\expec_\rho \norm{a\wpe}+4N^{(t)}U^{(t)}\right)+\norm{w}^2\left(\log d^{-c}+\expec_\rho \norm{a\wpe}+4N^{(t)}U^{(t)}\right)\] 
    Beside, the definition of heavy network implies that \[ \expec_\rho \norm{a\wpe}\leq \zeta N^{(t)}= o(g_\mu) \text{ and } N^{(t)}U^{(t)}=o(g_\mu).\]
    $E_2$ can be controlled in a similar way: Lemma \ref{lem:conc_grad} and Lemma \ref{claim:approxerror} gives \[ E_2\leq \eta^2 |a|\norm{w^{(t)}}\left(\log^{-c}d+\expec_\rho \norm{a\wpe}+4N^{(t)}U^{(t)} \right)^2.\]
 For non-heavy neurons, Corollary \ref{cor:ng} implies that \begin{align*}
      m^{-1}\sum_{(a,w)\notin \calS}\norm{a^{(t+1)}w^{(t+1)}}&\leq m^{-1}\sum_{(a,w)\notin \calS}\norm{w^{(t+1)}}^2\\
      &\leq  m^{-1}\sum_{(a,w)\notin \calS}\norm{w^{(t)}}^2\left(1+2\eta(1+2\zeta H) g_\mu\right)\\
      &\leq \zeta \left(1+2\eta(1+2\zeta H) g_\mu\right) N^{(t)}. 
 \end{align*}
 For heavy neurons, we have \[ m^{-1}\sum_{(a,w)\in \calS} \norm{a^{(t+1)}w^{(t+1)}}=\left(1+2\eta (1+o(1))g_\mu\right)m^{-1}\sum_{(a,w)\in \calS} \norm{a^{(t)}w^{(t)}}. \]
The stated result follows.
\paragraph{Control of $U^{(t+1)}$.}  One can show similarly to Phase I that \[ U^{(t+1)} \leq (1+2\eta \zeta^{0.25}g_\mu^{(t)})U^{(t)}+5 \eta \zeta^{0.25}g_\mu^{(t)}.\] 

It is easy to show by recursion that if a sequence $(u_n)$ is such that $u_{n+1}\leq au_n+b$ then $u_n\leq a^n+b(\sum_{k=0}^{n-1}a^k)$.
By applying this result with $a=1+2\eta \zeta^{0.25}$ and $b=5\eta \zeta^{0.25}$ and noting that \[ \frac{1-a^{t}}{1-a}= \frac{(1+2\eta \zeta^{0.25})^t-1}{2\eta \zeta^{0.25}}\leq 2t \] since $2\eta \zeta^{0.25}=o(1)$ we obtain \[ U^{T_1+t}\leq (1+2\eta \zeta^{0.25})^tU^{(T_1)}+10t\eta \zeta^{0.25}.\] But by choice of $t\leq T_2=O(\log \log d/\eta)$ we have $(1+2\eta \zeta^{0.25})^t=1+o(1)$.

    \paragraph{Control of $\norm{\wpe^{(t+1)}}$ for heavy neurons.} %
     W.l.o.g., assume that $a>0$, i.e. $ \wsig$ is aligned with $\mu_1$.
     First, we control the approximation error \[ F_1 = \abs{\wopp^\top \nabla L_{\rho, id}-\expec \ell'_{\rho, id}(z)\sigma'(\wsig^\top z)w^\top \xi}.\]
     
     Let us denote $X_1=\mu_1^\top z$ and $X_2=\mu_2^\top z$. We have \begin{align*}
         F_1&\leq \expec \abs{\ell'_{\rho, id}(z)}\indic_{|X_1|\leq \frac{|\wpe^\top \xi|+\norm{\wopp}|X_2|}{\norm{\wsig}}}|\wpe^\top \xi|\\
         &\leq \sqrt{\expec \abs{\ell'_{\rho, id}(z)}^2}\sqrt{\expec \indic_{|X_1|\leq \frac{|\wpe^\top \xi|+\norm{\wopp}|X_2|}{\norm{\wsig}}}|\wpe^\top \xi|^2}\tag{by Cauchy-Schwartz}\\
         &\lesssim \sqrt{g_{2\rho}}\norm{\wpe}\sqrt{\frac{\norm{\wopp}+\norm{\wpe}}{\norm{\wsig}}}\\
         &\leq \sqrt{g_{2\rho}}\norm{\wpe} \zeta^{0.5} \ll g_\mu \norm{\wpe}.
     \end{align*}
     
   Since \[ \expec_{\xi}\ell'_{\rho, id}(z)\sigma'(\wsig^\top z)w^\top \xi=0,\] only the approximation term contributes to the gradient. 

  \paragraph{Control of $\norm{\wopp^{(t+1)}}$ for heavy neurons.}
     Similarly to Lemma \ref{lemma:pop1b}, let us denote $X_1=\mu_1^\top z$ and $X_2=\mu_2^\top z$. We have \begin{align*}
         \expec_x& \ell'_{\rho, id}(z)\sigma'(w^\top z)\mu_2^\top z\\
         &=\expec_{X_1, X_2}(\indic_{|X_1|\geq |X_2|}-\indic_{|X_2|\geq |X_1|})e^{-N^{(t)}\abs{|X_1|-|X_2|}}\sigma'(\norm{\wsig}X_1+\norm{\wopp} X_2)X_2\\
          &= \expec_{X_1, X_2}(\indic_{|X_1|\geq |X_2|}-\indic_{|X_2|\geq |X_1|})e^{-N^{(t)}\abs{|X_1|-|X_2|}}\sigma'(\norm{\wsig}X_1)X_2+E
     \end{align*} where \[ E=\expec_{X_1, X_2}(\indic_{|X_1|\geq |X_2|}-\indic_{|X_2|\geq |X_1|})e^{-N^{(t)}\abs{|X_1|-|X_2|}}(\sigma'(\norm{\wsig}X_1+\norm{\wopp} X_2)-\sigma'(\norm{\wsig}X_1))X_2. \]
     By using the same argument as in Lemma \ref{lem:pop2_grad}, one can show that \[ |E|\lesssim \zeta^{0.75}g_{2\rho}^{0.5}. \]
     Furthermore, by using the symmetry of $X_2$, one obtain that \[ \expec_{X_2}\indic_{|X_1|\geq |X_2|}e^{-N^{(t)}\abs{|X_1|-|X_2|}}\sigma'(\norm{\wsig}X_1)X_2=0.\]

    We can conclude that \begin{align*}
        \norm{\wopp^{(t+1)}}+\norm{\wpe^{(t+1)}}&\leq (\norm{\wopp^{(t)}}+\norm{\wpe^{(t)}})(1+\eta \sqrt{g_{2\rho}}\zeta^{0.5})\\
        &\leq \zeta (1+\eta \sqrt{g_{2\rho}}\zeta^{0.5})\norm{\wsig^{(t)}}\\
        &\leq  \zeta (1+\eta \sqrt{g_{2\rho}}\zeta^{0.5})\norm{\wsig^{(t+1)}}.
    \end{align*}
\end{proof}

To show that after one gradient iteration, the network is still heavy, it remains to show that the layer weights balance condition is satisfied. This can be done as in \cite{glasgow2023sgd} by using Lemma \ref{lemma:layer_balance}.

\subsection{Conclusion: Proof of Theorem \ref{thm:main}}

To complete the proof of Theorem \ref{thm:main}, we need to establish a connection between \( N^{(T)} \) and \( \mathbb{E}_x \ell_{\rho^{(T)}}(x) \). This proof proceeds in three main steps: 

(i) By contradiction, we show that there exists a stopping time $T$ such that \( N^{(T)} \) is of order \( \log \log d \).  
(ii) We upper-bound \( \mathbb{E}_z \ell_{\rho^{(T)}, \text{id}}(z) \) by separately analyzing the regions where \( z \) is close to and far from the decision boundary.  
(iii) We control the approximation \( \mathbb{E}_z \ell_{\rho^{(T)}, \text{id}}(z) \approx \mathbb{E}_x \ell_{\rho^{(T)}}(x) \).

\begin{proof}[Proof of Theorem \ref{thm:main}] 
    \textbf{Step (i):} The analysis of Phase I (Lemma \ref{lem:phase1}) shows that the resulting network is $(\zeta, H)$-signal-heavy with $\zeta=\log ^{-c} d$  for some constant $c>0$ large enough and  $N^{(T_1)}\gtrsim (\zeta^{-1}\theta)^2=\log^{-c'}d $ for some constant $c'>0$. Let us define the stopping time $T_2= c''\eta^{-1}(\log \log d)^4$ for some constant $c''>0$ that will be chosen later.

Assume that $N^{(t)}$ doesn't exceed $C_1\log \log d$ for $t\in [T_1, T_2]$ and some constant $C_1>0$ small enough. Then for $d$ large enough, we obtain a contradiction as follows \begin{align*}
    N^{(T_1+T_2)}&\geq N^{(T_1)}\left( 1+(2-o(1))\eta \min_{t\in [T_1,T_2]}g_\mu^{{(t)}} \right)^{T_2} \tag{by Lemma \ref{lem:inductive3}}\\
    &\gtrsim \frac{1}{\log^{-c'}d} \left( 1+(2-o(1))\eta c_1 \frac{1}{(\log \log d)^3} \right)^{T_2}\tag{by Lemma \ref{lem:pop2_grad_main}} \\
    &\gtrsim \frac{1}{\log^{C-c}d}e^{\Theta(\log\log d) }\\
    &\gg \log \log d. 
\end{align*}
We will denote by $T\leq T_2$ the first time $N^{(T)}$ exceeds $C_1\log \log d$.

\textbf{Step (ii):}
We overcome the absence of a strict margin between the clusters by splitting the input data into two classes: 1) the one for which the points are close to the boundary decision and hence the associated network evaluation in small, and the other 2) that are enough separated from the boundary decision and the sign of the network corresponds to the label. 

Recall the polar decomposition of $z=r(\cos \theta, \sin \theta)$ where $r^2$ follows a chi-square distribution with two degrees of freedom, and $\theta$ is uniform over $[0,2\pi)$. We have by definition \[ -yf_{\rho^{(T)}, id}(z) = \sqrt{2}N^{(T)}r( \abs{\sin \theta }\wedge \abs{\cos \theta} ).\]
Let us define the set
\[
\calC_\epsilon=\Big\{(r,\theta)\in\R^+\times[0,2\pi): r\big(|\sin\theta|\wedge|\cos\theta|\big)\le \epsilon\Big\}.
\]
When $X\sim\calN(0,I_2)$ and we write $X=r(\cos\theta,\sin\theta)$, we have
$r|\cos\theta|=|X_1|$ and $r|\sin\theta|=|X_2|$, hence
\[
\{X\in\calC_\epsilon\}=\{\min(|X_1|,|X_2|)\le \epsilon\}.
\]
Therefore, by a union bound,
\[
\prob(X\in\calC_\epsilon)\le \prob(|X_1|\le\epsilon)+\prob(|X_2|\le\epsilon)
=2\prob(|Z|\le\epsilon)=O(\epsilon),
\]
where $Z\sim\calN(0,1)$.
Choose $\epsilon=(\log\log d)^{-1/2}$. We have
\[
\expec_z(\ell_{\rho^{(T)},id}(z))
\le \expec_{z\notin\calC_\epsilon}(\ell_{\rho^{(T)},id}(z))
+\log(2)\prob(\calC_\epsilon)
\le e^{-\epsilon N^{(T)}}+O(\epsilon)=O(\epsilon),
\]
since $\epsilon N^{(T)}=\Theta(\sqrt{\log\log d})$.

\textbf{Step (iii): }
 We now bound the risk $\expec_x\ell_{\rho^{(T)}}(x)$. Recall that for all $z$ we have \[ |f_{\rho^{(T)}}(z)-f_{\rho^{(T)}, id}(z)|\leq N^{(T)}U^{(T)} \norm{z}. \]
 Since $\ell$ is 1-Lipschitz, we obtain that for some constant $c>0$ 
 \begin{align*}
     \expec_x\ell_{\rho^{(T)}}(x)&\leq \abs{\expec_x \ell_{\rho^{(T)}}(x)-\ell_{\rho^{(T)}}(z)}+ \abs{\expec_x \ell_{\rho^{(T)}}(z)-\ell_{\rho^{(T)}, id}(z)}+\expec_z(\ell_{\rho^{(T)}, id}(z))\\
     &\leq \expec_x\abs{f_{\rho^{(T)}}(x)-f_{\rho^{(T)}}(z) } + \expec_x\abs{f_{\rho^{(T)}}(z)-f_{\rho^{(T)},id}(z) }+ \expec_z(\ell_{\rho^{(T)}, id}(z))\\
     &\leq \expec_x\expec_{\rho^{(T)}} |a\xi^\top \wpe|+2 N^{(T)}U^{(T)} + \expec_z(\ell_{\rho^{(T)}, id}(z))\\
     &\leq \expec_{\rho^{(T)}} \norm{a\wpe}+O(\log^{-c}d) + \expec_z(\ell_{\rho^{(T)}, id}(z)) \tag{by the proof of Lemma \ref{lem:phase2_heavy}}\\
     &\leq \zeta^{(T)}N^{(T)}+O(\log^{-c}d)+\expec_z(\ell_{\rho^{(T)}, id}(z)) \tag{by the definition of signal-heavy network}\\
     &\lesssim \epsilon \tag{by Step (ii) and the fact $\zeta^{(T)}N^{(T)}\ll \epsilon $}
 \end{align*}     
\end{proof}

\subsection{Proof of Corollary~\ref{cor:classification}}

Fix $\epsilon \in (0,1)$ and let $x = z+\xi$ be as in the statement, with
$z\notin \mathcal C_\epsilon$ and $\xi\sim\mathcal N(0,I_{d-2})$
independent Gaussian noise in the orthogonal subspace. Let
$y=f^*(z)\in\{-1,1\}$ be the label of $x$ (which depends only on the
projection $z$).

\paragraph{Step 1: Margin at $z$ for the oracle and the trained network.}
By the explicit expression of the oracle network, we have for all
$z\notin\mathcal C_\epsilon$
\[
    y f_{\rho^{(T)},\mathrm{id}}(z)
    \;\ge\; c_\epsilon N^{(T)},
\]
for some constant $c_\epsilon>0$ depending only on~$\epsilon$. On the
other hand, the block-balance control used in the proof of
Theorem~\ref{thm:main} yields
\[
    \bigl|f_{\rho^{(T)}}(z) - f_{\rho^{(T)},\mathrm{id}}(z)\bigr|
    \;\le\; N^{(T)} U^{(T)} \|z\|,
\]
where $U^{(T)}\le \log^{-c_1}d$ for some $c_1>0$. Since
$N^{(T)} \asymp \log\log d$, we have
$N^{(T)} U^{(T)} \|z\| = o\bigl(N^{(T)}\bigr)$. Hence, for $d$ large
enough,
\[
    y f_{\rho^{(T)}}(z)
    = y f_{\rho^{(T)},\mathrm{id}}(z)
      + y\bigl(f_{\rho^{(T)}}(z) - f_{\rho^{(T)},\mathrm{id}}(z)\bigr)
    \;\ge\; \tfrac{c_\epsilon}{2} N^{(T)}.
\]
Set $c := c_\epsilon/2 > 0$ and denote
\[
    h(\xi) := y f_{\rho^{(T)}}(z+\xi).
\]
Then $h(0) = y f_{\rho^{(T)}}(z) \ge c N^{(T)}$.

\paragraph{Step 2: Lipschitz constant of $h$ in the orthogonal noise.}
We view $h$ as a function of $\xi$ in the orthogonal subspace. We have for any $\xi,\xi'$
\begin{align*}
    |h(\xi) - h(\xi')|
    &= \bigl|y f_{\rho^{(T)}}(z+\xi) - y f_{\rho^{(T)}}(z+\xi')\bigr| \\
    &\le \mathbb E_{\rho^{(T)}}\Bigl[
        |a|\,\bigl|\sigma(w^\top(z+\xi))-\sigma(w^\top(z+\xi'))\bigr|
    \Bigr].
\end{align*}
Since ReLU is $1$-Lipschitz and $\xi,\xi'$ lie in the orthogonal
subspace, only the orthogonal component $w_\perp$ contributes, and
\[
    \bigl|\sigma(w^\top(z+\xi))-\sigma(w^\top(z+\xi'))\bigr|
    \le |w_\perp^\top(\xi-\xi')|
    \le \|w_\perp\|\,\|\xi-\xi'\|.
\]
Therefore
\[
    |h(\xi) - h(\xi')|
    \;\le\;
    \mathbb E_{\rho^{(T)}}\bigl[|a|\,\|w_\perp\|\bigr]\;\|\xi-\xi'\|.
\]
Define
\[
    L := \mathbb E_{\rho^{(T)}}\|a w_\perp\|.
\]
By the signal-heavy property at time $T$, we have
$L \le \zeta^{(T)} N^{(T)}$ with $\zeta^{(T)} = \log^{-c_2}d$ for some
$c_2>0$. Thus $h$ is $L$-Lipschitz in~$\xi$.

\paragraph{Step 3: Mean perturbation in the orthogonal direction.}
We next control the mean shift $h(\xi)-h(0)$ when $\xi$ is Gaussian.
Using the one-dimensional Gaussian computation for each neuron and
averaging (see the proof of Theorem~\ref{thm:main}), one obtains
\[
    \bigl|\mathbb E_\xi\bigl(h(\xi) - h(0)\bigr)\bigr|
    \le \mathbb E_{\rho^{(T)}}\|a w_\perp\|
    = L
    \le \zeta^{(T)} N^{(T)}.
\]
Since $\zeta^{(T)} = \log^{-c_2}d$ and $N^{(T)}\asymp \log\log d$, we
have $\zeta^{(T)} N^{(T)} = o\bigl(N^{(T)}\bigr)$. In particular, for
$d$ sufficiently large,
\[
    \mathbb E_\xi h(\xi)
    = h(0) + \mathbb E_\xi\bigl(h(\xi)-h(0)\bigr)
    \ge c N^{(T)} - \zeta^{(T)} N^{(T)}
    \ge \tfrac{c}{2} N^{(T)}.
\]

\paragraph{Step 4: Gaussian concentration and misclassification probability.}
Since $h$ is $L$-Lipschitz and $\xi\sim\mathcal N(0,I_{d-2})$, the
Gaussian concentration inequality yields, for all $t>0$,
\[
    \mathbb P_\xi\bigl(|h(\xi) - \mathbb E_\xi h(\xi)| \ge t\bigr)
    \le 2\exp\!\bigl(-t^2/(2L^2)\bigr).
\]
Misclassification corresponds to $h(\xi)\le 0$. On the event
$\{h(\xi)\le 0\}$ we must have
\[
    |h(\xi) - \mathbb E_\xi h(\xi)|
    \;\ge\; \mathbb E_\xi h(\xi)
    \;\ge\; \tfrac{c}{2} N^{(T)},
\]
for $d$ large enough. Thus
\[
    \mathbb P_\xi\bigl(h(\xi)\le 0\bigr)
    \;\le\;
    \mathbb P_\xi\Bigl(
        |h(\xi) - \mathbb E_\xi h(\xi)|
        \ge \tfrac{c}{2} N^{(T)}
    \Bigr)
    \;\le\;
    2\exp\!\left(
        -\frac{(c N^{(T)}/2)^2}{2L^2}
    \right).
\]
Using $L\le \zeta^{(T)} N^{(T)}$ and $\zeta^{(T)} = \log^{-c_2}d$, we
get
\[
    \frac{(c N^{(T)}/2)^2}{2L^2}
    \;\ge\;
    \frac{c^2}{8\,(\zeta^{(T)})^2}
    \;=\; C (\log d)^{2c_2},
\]
for some constant $C>0$. Absorbing the leading factor $2$ into the
constant and renaming $2c_2$ as $c>0$, we obtain
\[
    \mathbb P_\xi\bigl(h(\xi)\le 0\bigr)
    \;\le\;
    \exp\!\bigl(-C(\log d)^c\bigr),
\]
as claimed. This shows that, for every $z\notin\mathcal C_\epsilon$, the
probability (over the orthogonal Gaussian noise~$\xi$) that the trained
network misclassifies $x=z+\xi$ is at most
$\exp(-C(\log d)^c)$. In particular, such points are correctly
classified with overwhelming probability, which proves the corollary.

\section{Additional Experiments}\label{app:xp}
We report here the full details and figures complementing Section~\ref{sec:xp}. 
Unless otherwise stated, we set $d=120$, $m=100$, $\eta=0.05$, $\theta=0.01$, $M=6400$ 
and train up to $10^6$ epochs. 

\subsection{Beyond Gaussian Inputs}
We consider two non-Gaussian input distributions: 
(a) a uniform distribution on $[-1,1]^d$, and 
(b) a Gaussian XOR distribution inspired by \cite{xu2023benign}.  
Both yield successful training as long as the distribution is symmetric 
with respect to $\pm \mu_1, \pm \mu_2$.  
Figures~\ref{fig:xp2loss}--\ref{fig:xp2subfig4} show the test loss and the block evolution.

\begin{figure}[!ht]
\label{xp2loss_weights}
    \centering
   \begin{subfigure}{0.48\textwidth}
    \includegraphics[width=\textwidth]{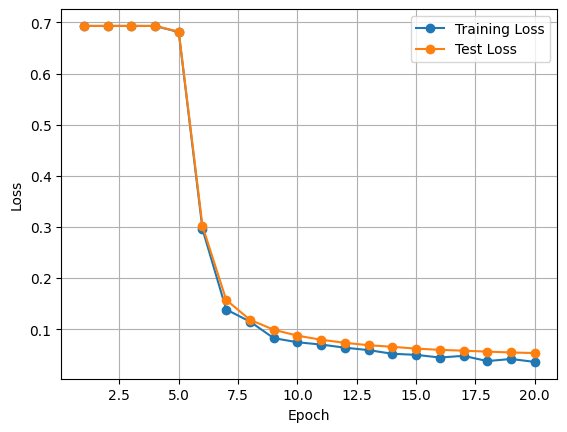}
    \caption{Test loss with uniform inputs}
    \label{fig:xp2loss}
\end{subfigure}
  \hfill
  \begin{subfigure}{0.48\textwidth}
    \includegraphics[width=\textwidth]{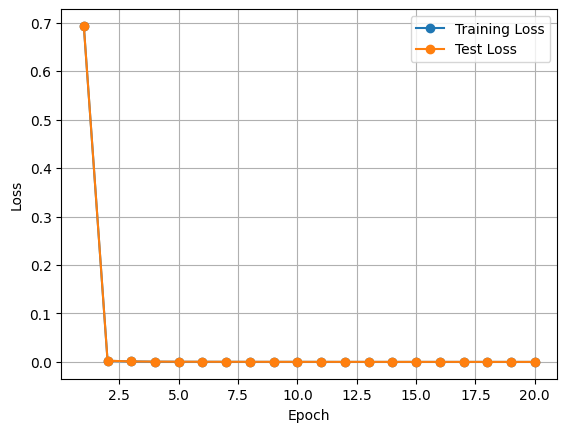}
    \caption{Test loss with Gaussian XOR inputs}
    \label{fig:xp2subfig2}
\end{subfigure}
\\
  \begin{subfigure}{0.48\textwidth}
    \includegraphics[width=\textwidth]{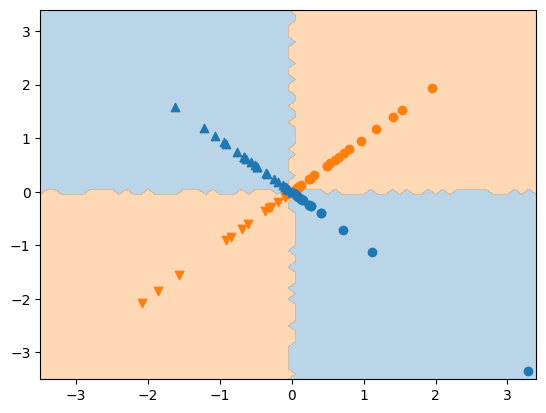}
    \caption{$T=2\times 10^4$, uniform inputs}
    \label{fig:xp2weights}
\end{subfigure}
  \hfill
  \begin{subfigure}{0.48\textwidth}
    \includegraphics[width=\textwidth]{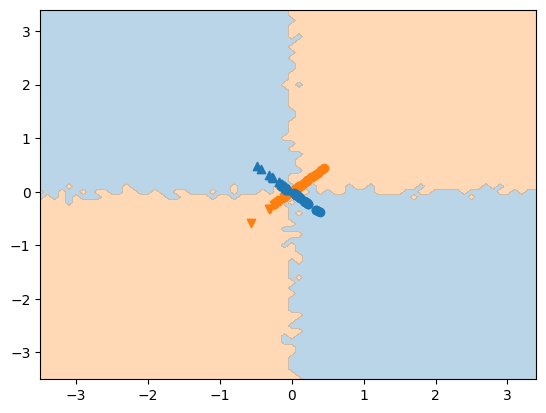}
    \caption{$T=2\times 10^4$, Gaussian XOR inputs}
    \label{fig:xp2subfig4}
\end{subfigure}
  
  \caption{Evolution of the test loss and the oracle network blocks size.}
\end{figure}

\subsection{Sensitivity to Label Noise}

We introduce label noise by flipping the labels of $5\%$ of the data points uniformly at random. 
As shown in Figure~\ref{fig:label_noise}, this modification leads to a clear degradation of the test loss. 
The corresponding decision boundary becomes biased due to the corrupted labels, and the weight vectors fail to maintain the orthogonal growth observed in the noise-free setting.

\begin{figure}[h]
\centering
\begin{subfigure}{0.48\columnwidth}
    \centering
    \includegraphics[width=\linewidth]{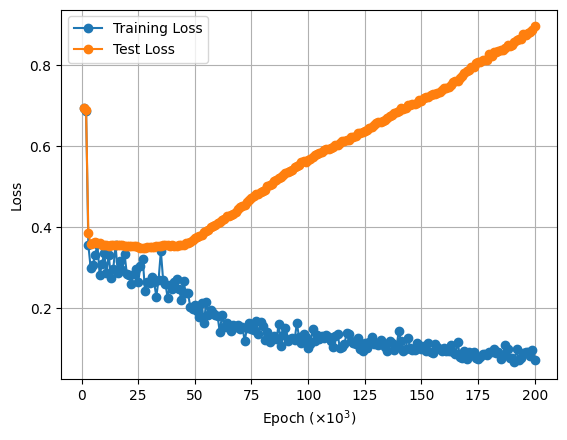}
    \caption{Test loss}
\end{subfigure}
\hfill
\begin{subfigure}{0.48\columnwidth}
    \centering
    \includegraphics[width=\linewidth]{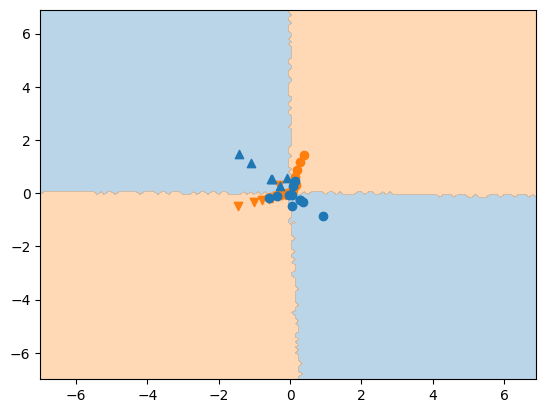}
    \caption{Boundary}
\end{subfigure}
\caption{Effect of label noise on training dynamics.}
\label{fig:label_noise}
\end{figure}

\subsection{Non-isotropic Gaussian inputs} 
We next consider inputs with covariance 
$\Gamma = 5\,\mu_1 \mu_1^\top + 1\,\mu_2 \mu_2^\top$. 
As shown in Figure~\ref{fig:anisotropy}, the decision boundary is still successfully recovered. 
However, the neuron orientations become disordered compared to the isotropic setting, reflecting the influence of the anisotropic covariance.

\begin{figure}[h]
\centering
\begin{subfigure}{0.48\columnwidth}
    \centering
    \includegraphics[width=\linewidth]{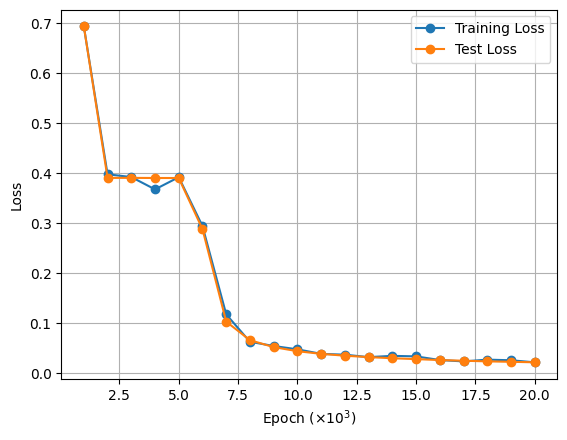}
    \caption{Test loss}
\end{subfigure}
\hfill
\begin{subfigure}{0.48\columnwidth}
    \centering
    \includegraphics[width=\linewidth]{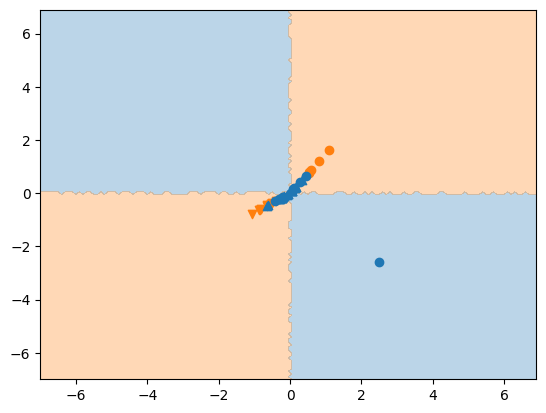}
    \caption{Boundary}
\end{subfigure}
\caption{Effect of anisotropic inputs on training dynamics.}
\label{fig:anisotropy}
\end{figure}

\subsection{Nonlinear decision boundaries} 
We next test the target function $f(x) = \mathrm{sgn}(x_2 - \sin(x_1))$ with Gaussian inputs. 
As shown in Figure~\ref{fig:nonlinear}, the learned boundary is approximately linear, 
and the weight vectors align within $\mathrm{span}\{e_1,e_2\}$. 
Training progresses for several thousand epochs before plateauing around epoch~5000.

\begin{figure}[h]
\centering
\begin{subfigure}{0.32\columnwidth}
    \centering
    \includegraphics[width=\linewidth]{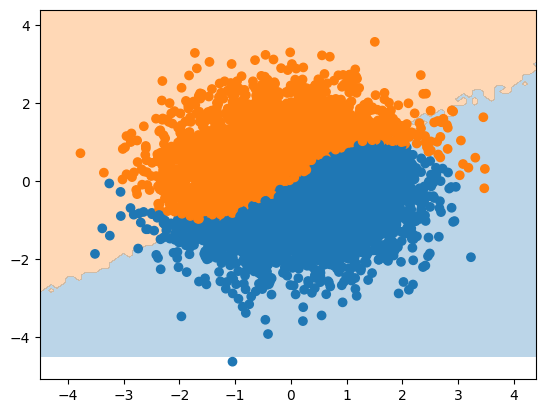}
    \caption{Learned boundary}
\end{subfigure}
\hfill
\begin{subfigure}{0.32\columnwidth}
    \centering
    \includegraphics[width=\linewidth]{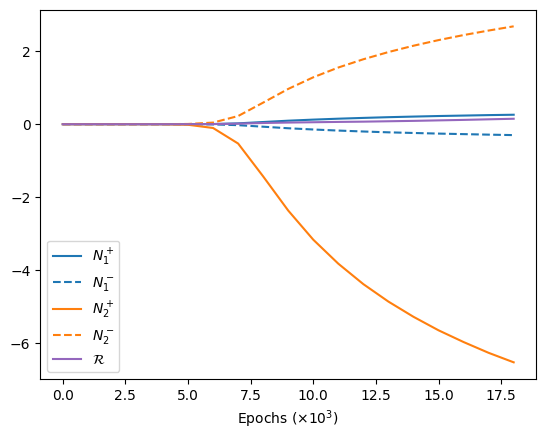}
    \caption{Weight alignment}
\end{subfigure}
\hfill
\begin{subfigure}{0.32\columnwidth}
    \centering
    \includegraphics[width=\linewidth]{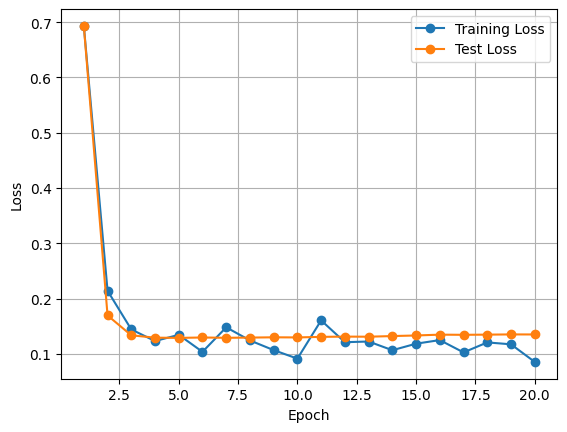}
    \caption{Training loss}
\end{subfigure}
\caption{Learning with nonlinear target $f(x) = \mathrm{sgn}(x_2 - \sin(x_1))$. 
The network recovers an approximately linear boundary, 
with weights restricted to $\mathrm{span}\{e_1,e_2\}$, 
and training plateaus after about 5000 epochs.}
\label{fig:nonlinear}
\end{figure}

\end{document}